\pdfoutput=1
\documentclass{article}

\usepackage[utf8]{inputenc}
\usepackage[T1]{fontenc}
\usepackage[letterpaper,margin=1in]{geometry}
\usepackage{microtype}
\usepackage{graphicx}
\usepackage{booktabs}

\usepackage{amsmath}
\usepackage{amssymb}
\usepackage{amsthm}
\usepackage{mathtools}

\usepackage[round,semicolon]{natbib}
\usepackage{hyperref}
\usepackage[capitalize,noabbrev]{cleveref}

\newcommand{\norm}[1]{\left\|#1\right\|}
\newtheorem{theorem}{Theorem}[section]

\newtheorem{definition}[theorem]{Definition}
\newtheorem{lemma}[theorem]{Lemma}

\newtheorem{corollary}[theorem]{Corollary}

\newtheorem{remark}[theorem]{Remark}
\newtheorem{proposition}[theorem]{Proposition}

\crefname{condition}{Condition}{Conditions}
\crefname{fact}{Fact}{Facts}
\crefname{claim}{Claim}{Claims}
\crefname{assumption}{Assumption}{Assumptions}

\newcommand{\simplex}{\Delta_d}
\newcommand{\mumax}{\mu_{\max}}

\newcommand{\cV}{\mathcal{V}}
\newcommand{\cC}{\mathcal{C}}
\newcommand{\cG}{\mathcal{G}}
\newcommand{\cGt}{\widetilde{\mathcal{G}}}
\newcommand{\Dl}{D}
\newcommand{\Cl}{\Phi}
\newcommand{\cA}{\mathcal{A}}
\newcommand{\cP}{\mathcal{P}}
\newcommand{\R}{\mathbb{R}}
\newcommand{\T}{\mathbb{T}}
\newcommand{\E}{\mathbb{E}}
\newcommand{\abs}[1]{\lvert #1 \rvert}
\newcommand{\Lip}{\mathrm{Lip}}
\newcommand{\supp}{\mathrm{supp}}
\newcommand{\diam}{\mathrm{diam}}
\newcommand{\cK}{\mathcal{K}}
\newcommand{\cM}{\mathcal{M}}
\newcommand{\cS}{\mathcal{S}}
\newcommand{\cL}{\mathcal{L}}
\newcommand{\cGw}{\mathcal{G}^{w}}
\newcommand{\Mw}{M_{w}}
\newcommand{\Dt}{\widetilde{D}}
\newcommand{\vr}{\varrho}
\newcommand{\Lipo}{\mathrm{Lip}_0}
\newcommand{\cost}{\mathrm{cost}}
\newcommand{\Wone}{W_1}
\newcommand{\Sh}{\mathsf{Sh}}
\newcommand{\Ball}{\mathsf{B}}

\newcommand{\cX}{\mathcal{X}}

\newcommand{\Wt}{\widetilde{W}_1}
\newcommand{\Nt}{\widetilde{N}}
\newcommand{\Rad}{\widehat{\mathcal{R}}}
\newtheorem{convention}[theorem]{Convention}
\crefname{convention}{Convention}{Conventions}

\title{Learning Graded Sparse Mechanisms:\\ A Representation Theorem\\ and Certified Generalization}

\author{
  Mark Kozdoba$^{1}$\\
  Technion, IIT
  \and
  Shie Mannor\\
  Technion, IIT and NVIDIA
}
\date{}

\begin{document}

\maketitle
\footnotetext[1]{\texttt{markk@technion.ac.il}}

\begin{abstract}

In a variety of multi-agent and physical environments, an element's update
depends strongly on a few other elements and weakly on many, and both the
identity and the number of the elements that matter change with the state
of the system. Such graded sparse
mechanisms are common, for instance, in flocks, crowds, learned particle simulators and the neighborhood aggregation of graph networks,
yet they have received little
systematic study as a class of functions with the grading given. In this paper we study the supervised
learning of the mechanism when the grading is provided by the environment,
as a weight on every element. First, we define the class by axioms, where each
element's influence is capped by its weight, the dependence on the weights
is regular, and no element is distinguished, and prove a representation
theorem: the class is exactly the Lipschitz ball of a transport metric on
weighted multisets, and the metric is forced by the axioms. Next, the geometry
of the representation gives generalization bounds that depend on the data in two ways. One is via the sparsity profile of the data---the law of how many elements matter and by how much---which bounds the rate over all designs with that profile, with an exponent from the profile and no dependence on the number of elements. In a number of natural cases we also show matching lower bounds, up to constants in the exponent.
The other is empirical: due to the representation result, classical empirical
discrepancy bounds apply to the class as they stand, and the resulting
bound is the transport distance between the two halves of the sample in
the class's own metric, computed exactly by a matching, with an
additional concentration term.
Finally, we study a simulated Vicsek flock
under four interaction rules at two noise levels, and show that the
certificate is non-vacuous, a small fraction of the trivial value on
every rule.
\end{abstract}

\section{Introduction}\label{sec:intro}

In models of complex systems, a property of an element depends on its own
state and on the states of some other elements. The dependence is often
\emph{sparse} and \emph{graded}: a few elements have a strong influence,
and the rest a weaker and decreasing influence, or none at all. It is
often \emph{changing}: as the system evolves the grading changes, so the
sparse set varies, and varies in size. And it is
often \emph{anonymous}: the elements form a set, with no enumeration that
the mechanism reads.
One mechanism is applied at every state of the
system to the elements the state selects, and we call the environment's rule that assigns the weights from
the state, which elements count and by how much, the \emph{selector}.

The examples of this paper come from multi-agent and physical systems, and
in each of them the selector is the environment's, a rule with at most a
few tuned parameters that reads distance, speed, visibility or rank. The
running example is the Vicsek flock \citep{vicsek1995novel}. Its particles
move at a common speed on a periodic square, and at every step each
particle turns toward the mean heading of the particles within a fixed
distance of itself, up to noise. The selector is the ball: the particles
inside it count with equal weight and the rest not at all, and how many
are inside varies with the local crowding. Graded variants of the rule weight the neighbors by a
kernel of the distance instead, so that the influence fades rather than
stops. The selectors of the other examples are of the same kind: distance
and speed thresholds in drone flocks \citep{vasarhelyi2018optimized}, the
visible neighbors in crowds \citep{wirth2023neighborhood}, the six or seven
nearest in starling flocks \citep{ballerini2008interaction} and fish
schools \citep{gautrais2012deciphering}, a view cone or a sensing radius
in driving simulators \citep{vinitsky2022nocturne,kazemkhani2025gpudrive}
and navigation policies
\citep{huttenrauch2019deep,nayak2023informarl,zhang2025gcbfplus}, and a
cutoff in learned particle simulators
\citep{sanchezgonzalez2020learning,musaelian2023allegro}.

The learning problem is this. A configuration is a pair $(x, \mu)$, the
values $x$ of the $d$ elements and the weights $\mu$ the selector gives
them, a probability vector: the selector is the environment's and is
given, not learned, so the weights are part of the input, and the unknown
is the mechanism $f(x, \mu)$, a member of the class $\cG_L$ of
\Cref{sec:rep}. A sample is $n$ configurations drawn from the
\emph{design}, the law of $(x, \mu)$, with the labels $f(x, \mu)$, under
the absolute loss, and what we bound is the uniform deviation of the class
on the design, the generalization gap of every empirical risk minimizer.

While such systems are classical, the graded sparse mechanism as a class of
functions, and the complexity of learning one with the selector given,
have received little study beyond hard sparsity at a fixed count
(\Cref{sec:lit}), and this paper develops first results toward both.
Taking the selector as given
is the decision the graph-network bounds make in taking the graph as
given. It is also given in simulators, and
in the animal groups some aspects of it are inferred in some cases
(\Cref{sec:lit}).

\paragraph{Axioms and representation.} We propose an axiomatic definition
of the class. The alternative, a function of a fixed number $k_0$ of
arguments fed a different selection of inputs at each configuration, is
the form of neighborhood aggregation in the generalization bounds for
graph networks, stated under a maximum degree $k_0$
\citep{garg2020generalization,liao2021pacbayes}, a worst-case parameter
that sees neither the average sparsity across configurations nor a grading
inside the $k_0$. We define the class instead by three axioms on a
mechanism, a function of the elements' values and of a weight on each
element, so that the number of arguments changes with the configuration,
and changes gradually.
An element's influence is capped by its weight, none at weight zero. The
dependence on the weights is regular: shifting weight between elements
changes the output by no more than the same shift can change a weighted
mean of the values, the simplest mechanism, which the class must contain.
And no
element is distinguished, so elements of equal weight are interchangeable. The axioms concern the mechanism alone: the selector enters only through
the data distribution, and it may be discontinuous, as a hard radius is.

The examples come in two semantics of influence, a mean, fixed in total
and diluted by more participants, and a sum, growing with their number,
and the framework has a class for each, \emph{conservative} and
\emph{accumulative}. The main text works with the conservative class. The
accumulative class has the same representation and rates of the same
exponents, at the price of a factor of the count (\Cref{sec:axioms},
\Cref{app:accumulative}).

The axioms are a list of inequalities, and the structure of the class is
not visible in them. We show that the class is exactly the set of Lipschitz
functions on a metric space of weighted configurations (\Cref{thm:rep}):
all of the locality structure is in the metric, above it the class is a
generic Lipschitz class, and this is what makes the rest of the analysis
possible. The metric is the class's own, each of its terms from one axiom,
a transport distance between weighted configurations read as multisets of
points on a cone over the value space.
We note that the permutation-invariant representation results, Deep Sets
\citep{weyl1939classical,macdonald1995symmetric,zaheer2017deepsets,
wagstaff2022universal}, have a similar structure: there the sum embedding
encodes the invariance, with a generic class above it. There is also an
exact connection: permutation-invariant sum networks as above, with a
Lipschitz, bounded per-element map and our weights on the sum, $\sum_i
\mu_i\varphi(x_i)$, are readouts of finitely many linear functionals of
our embedding, thus constituting a strict subset of our class.

\paragraph{Complexity.} Sparsity is a property of the data, and we prove
generalization bounds for the class that are data dependent in two ways,
both resting on the representation. The first is through the
\emph{sparsity profile} of the design, the law of the weight that a
configuration leaves beyond its $k$ heaviest elements: \Cref{fig:certdim}(a)
is the profile of the flock, the mean weight beyond the $k$ heaviest
neighbors under the ball and under three graded kernels at its scale. The
bounds have an exponent set by the profile, the worst case over the designs
with that profile, and no dependence on the number of elements, and lower
bounds in the profile show that the dependence is the right one
(\Cref{sec:complexity}).
The second is through the sample
itself. It gives no rate, but a bound at the actual sample size, and
often a better one. The bound is the classical maximum discrepancy of
\citet{bartlett2002model}, the largest difference, over the class, between
the averages of a function on the two halves of the sample. For a
Lipschitz class this is a transport distance \citep{luxburg2004distance},
and here it is computed exactly by a matching.
The result is an \emph{empirical generalization certificate}, the value of
the bound on the data with one concentration term, which on the
flock is $0.034\kappa$ on the hard ball, with $\kappa$ the value at which
the bound says nothing (\Cref{sec:flocking}).

To summarize, the contributions in this paper are as follows.
\textbf{(i)} \emph{Axioms and representation.} We define the class of
graded sparse mechanisms by axioms and prove that it is the Lipschitz ball
of a specific metric space, whose metric encodes the locality
(\Cref{thm:rep}).
\textbf{(ii)} \emph{Rates from the profile.} We define the sparsity profile
of the design and prove generalization bounds with exponent determined by
the profile, with no dependence on the number of elements
(\Cref{thm:gradedmain,cor:sparse,cor:exp}), and lower bounds matching up
to constants in the exponent in several cases
(\Cref{cor:sparse,cor:exp,thm:learnlower}).
\textbf{(iii)} \emph{The certificate.} We show that the generalization gap
of the class can be certified from the sample alone, by a matching between
its two halves (\Cref{thm:twosample}).
\textbf{(iv)} \emph{The flock.} We study the Vicsek flock under four
interaction rules at two noise levels, measure their sparsity profiles,
compute the certificate for every rule, a thirtieth to a sixth of the
trivial value, and read it against a covering certificate and against
the gap of a member with zero training risk (\Cref{sec:examples}).

\section{Literature}\label{sec:lit}

\paragraph{Graph networks.} A message-passing layer is our object with a
hard selector, the neighborhood of each node, and it is of the Deep Sets
form, a per-neighbor map summed and read out, one readout of finitely many
linear sums of our embedding (\Cref{sec:intro}), at a fixed width, a
bounded degree and no weighting by the environment. Graph networks gain
their expressivity from depth, where their sparsity lives: the receptive
field is set by the graph and the number of layers, and the bounds of
\citet{garg2020generalization} and \citet{liao2021pacbayes} pay for it
with the maximum degree $k_0$ per layer, compounded over the depth,
\citet{ju2023generalization} with its spectral relaxation. Here the
sparsity is explicit, the weights of the selector, and the bound reads it
as the profile: the number of neighbors enters through its law, with rates
where there is no nominal sparsity at all and no dependence on the maximum
count where there is one (\Cref{sec:profiles}, \Cref{rem:gnncurrency}).
Whether any finer statistic of the data than the maximum degree can be
used was posed as a question in that literature
\citep{liao2021pacbayes,morris2023wl,morris2024position,vasileiou2025survey}. The two
kinds of bound also bound different things: theirs hold over a parametric
family at a norm budget and, like our rates, are statements about scaling
(\Cref{app:related}), and the class adds a
bound of a different kind, the certificate, which holds for the whole
class at once and is computed from the sample by a matching
(\Cref{sec:certificate}).
A second line puts a metric on
the graphs under which message-passing networks are Lipschitz
\citep{chuang2022tree,levie2023graphon,boker2023fine,rauchwerger2025generalization,amran2026graphop},
and covers the graph space at one scale, or with the sample's cell
proportions at one radius \citep{carrasco2025rademacher}, Covered Forest
\citep{vasileiou2025covered} being its latest form. We differ in
the object, a general weighted class whose metric the class itself
implies (\Cref{thm:rep}) rather than one metric among several under which
a model is Lipschitz, and in the data dependence: rates through the
profile (\Cref{sec:profiles}) and, on the data, the two-sample transport
in place of a cover (\Cref{sec:certificate}).

\paragraph{The matching distance.} The metric of \Cref{thm:rep} is
produced by the axioms, which determine it in closed form, a matching on a
cone: the identification is new, and the closed form is what makes the
rest computable. The metric space itself,
a cone with a transport that may leave a point unmatched at a price, is
known: unbalanced transport \citep{chizat2018unbalanced,liero2018optimal},
the matching distances of persistence diagrams
\citep{cohensteiner2010lipschitz,turner2014frechet}, of multisets
\citep{chuang2022tree,davidson2025holder,nikolentzos2026lipschitz} and of
point patterns
\citep{schuhmacher2008new,schuhmacher2008consistent,huang2026wasserstein},
none of them with the correspondence the theorem establishes, between an
element's weight and the mechanism's continuity in it.

\paragraph{Rates from a dimension.} The dimensions of a law that bound the
transport distance between an empirical measure and its law, the upper Wasserstein dimension of \citet{weed2019sharp} and the
entropic dimension of \citet{chakraborty2025gan}, rest on one bulk covered
at fixed dimension, the rest of the mass charged at the diameter. Our
bounds stratify the design by its weight tail, a device with earlier
variants \citep{fournier2015rate,lei2020convergence,huang2026wasserstein},
made to work on configurations that are never exactly sparse.
Quasi-polynomial rates of the shape of \Cref{cor:exp} appear before in
two other settings, empirical measures in a Hilbert space under an
exponential-ellipsoid moment condition \citep{lei2020convergence} and
point processes with a geometric count law \citep{huang2026wasserstein}. Ours is for
the mean weight tail of a configuration, with free weights and values,
where the decay is inside each sample and there is no count to stratify
by, and on fixed, geometrically decaying weights the class's metric
resembles Lei's ellipsoid (\Cref{app:lowerread}).

\paragraph{Interaction kernels.} The closest setting to the flock is the
learning of interaction laws from trajectories
\citep{lu2019nonparametric,lu2021learning}, where the mechanism is a
mean-field sum with an unknown scalar kernel, a linear sum in the terms of
\Cref{sec:axioms}, learned at the one-dimensional rate with no profile in
play (\Cref{app:related}).
On real and simulated flocks, attention networks learn the update with
the weights learned and read off the neighbors that matter
\citep{heras2019deep,ha2021unraveling}, and relational inference learns
the interaction graph \citep{kipf2018neural}: the selector is the object
learned there, and it is given here.

The nearest results in learning theory are on transformers: a target on
a few tokens at given positions, a hard selection at a fixed count in a
parametric class
\citep{mousavihosseini2025transformers,edelman2022inductive,sanford2023representational,trauger2024sequence},
an input-dependent importance of the coordinates
\citep{takakura2023approximation} or a per-sample sensitivity cap
\citep{nagler2023statistical}, and further remarks on the literature are
in \Cref{app:related}.

\section{Locality Axioms and Representation}\label{sec:rep}

This section defines the class by its axioms and proves its
representation.

\subsection{Axioms}\label{sec:axioms}

\paragraph{Mechanism and selector.} Throughout, $(\cV, \rho)$ is a compact
metric space with at least two points, the \emph{value space}, and
\[
  R \;:=\; \tfrac12\,\diam\cV
\]
is the \emph{value scale}, so that $\rho \le 2R$ with equality attained. A
\emph{configuration} is a pair $(x, \mu) \in \cC := \cV^d \times \simplex$:
a value $x_i \in \cV$ on each of $d$ coordinates and a weight vector $\mu$ in
the simplex. A \emph{mechanism} is a function $f : \cC \to \R$. A
\emph{selector} produces the weights from the state of the system: each
element carries a state $\omega_i \in \Omega$, of which its value $x_i =
\pi(\omega_i)$ is the part the mechanism reads, for a fixed map $\pi : \Omega \to \cV$. The selector is a map $c : \Omega^d \to \simplex$. The
object of the examples is the composite
\begin{equation}\label{eq:composite}
  g(\omega) \;=\; f\bigl(\pi(\omega), c(\omega)\bigr), \qquad \pi(\omega)
  := (\pi(\omega_1), \dots, \pi(\omega_d)),
\end{equation}
one mechanism applied at every state to the coordinates the selector
weights. In the flock the state of a particle is its position and heading,
the values are the headings, the selector is uniform on the particles
within the interaction radius, and the mechanism is the mean heading. The
axioms concern $f$ alone. The selector enters only through the
\emph{design}, the law $Q$ of $(\pi(\omega), c(\omega))$ under the law of
the state $\omega$, and nothing is assumed about it:
 it may be discontinuous, as a hard radius is, and in the
examples it usually is. Throughout, the selector is given: the learner
observes the configuration $(x, \mu)$, weights included, and learns $f$. Which coordinates of the state are the values, and in which semantics the
mechanism is read, are part of the specification (\Cref{app:flocking},
\Cref{cor:fiber}). Permutations $\sigma$ of
$[d]$ act on both
arguments, $(\sigma x)_i = x_{\sigma^{-1}(i)}$ and likewise for $\mu$. Write
$\supp\mu = \{i : \mu_i > 0\}$ and $\mumax = \max_i \mu_i$.

\paragraph{Cap weights.} The dependence allowed on a coordinate is a
function of its weight, and the axioms are stated for a generic such
function. A \emph{cap weight} is a continuous map $w : \simplex \to
[0,1]^d$ that commutes with permutations, is injective, and has $w_i(\mu) =
0$ exactly when $\mu_i = 0$. The two instances of this paper are
\[
  w^{\mathrm c}(\mu) \;=\; \mu
  \qquad\text{and}\qquad
  w^{\mathrm a}(\mu) \;=\; \mu/\mumax ,
\]
the weights themselves, the \emph{conservative} cap of the mean semantics,
and the weights relative to the largest one, the \emph{shares}, the
\emph{accumulative} cap of the sum semantics. On weights uniform over $k$
coordinates the first allots influence $L/k$ to each selected coordinate
and the second $L$, a fixed budget that more participants dilute against
a total that accumulates with the count. The \emph{conservative} class
$\cG_L := \cG^{w^{\mathrm c}}_L$ is the class of the main text. The
\emph{accumulative} class $\cGt_L := \cG^{w^{\mathrm a}}_L$ has the same
representation, \Cref{thm:rep} below holding for every cap weight, and
rates of the same exponents at the price of a factor of the count
(\Cref{app:instances}, \Cref{app:accumulative,app:capprice}).
\Cref{tab:mechanisms} in \Cref{app:instances} sorts the common
aggregations between the two classes, with their constants.

\begin{definition}[The class $\cGw_L$]\label{def:class}
For a cap weight $w$ and $L \ge 0$, $\cGw_L$ is the set of mechanisms
$f : \cC \to \R$ such that
\begin{enumerate}
\item[(V)] \emph{(value cap)} if $x, y \in \cV^d$ differ in coordinate $i$
only, then for every $\mu$,
$\abs{f(x, \mu) - f(y, \mu)} \le L\, w_i(\mu)\, \rho(x_i, y_i)$.
\item[(W)] \emph{(weight regularity)} for every $x \in \cV^d$ and all
$\mu, \nu \in \simplex$,
$\abs{f(x, \mu) - f(x, \nu)} \le L\, R\, \norm{w(\mu) - w(\nu)}_1$.
\item[(A)] \emph{(anonymity)} $f(\sigma x, \sigma\mu) = f(x, \mu)$ for every
permutation $\sigma$.
\end{enumerate}
\end{definition}

(V) is the locality and (A) the anonymity of the introduction, and together
they make $f$ a function of the multiset of pairs $\{(x_i, \mu_i) : \mu_i >
0\}$ (\Cref{lem:basic}). On an interval and for differentiable $f$ the
axioms are bounds on partial derivatives (\Cref{rem:derivative}). The
weight axiom (W) is the one that needs a justification.

\paragraph{Where the weight axiom comes from.} The class is meant to
contain the weighted sums of a score, $\sum_i w_i(\mu)\,\varphi(x_i)$ with
$\varphi$ $1$-Lipschitz and bounded by $R$, the \emph{linear sums} of
\Cref{app:basic}. (W) is exactly their sensitivity to the weights: a move
of weight from $\mu$ to $\nu$ changes a linear sum by at most
$R\,\norm{w(\mu) - w(\nu)}_1$, and some score attains this
(\Cref{prop:modulus}), so (W) is the tightest weight axiom under
which every linear sum is still a member at constant $1$. The axioms are
chosen so that the simplest mechanisms have constant $1$, and what
\Cref{thm:rep} adds is that the metric they induce is the matching
distance at the same constant (the coordinatewise form of the axiom and
the readouts it excludes are in \Cref{app:basic}).
The modulus in (W) is stated in the weights alone, and this is a choice:
pricing a weight move by the values it moves between gives the flat metric
on the weighted value measure $\sum_i\mu_i\delta_{x_i}$, a smaller class in
which the weights are mass rather than individuals, that of distribution
regression and the measure-input networks (\Cref{app:related}), which
excludes the selected maximum and the top weight, members here
(\Cref{rem:whyuniform}).
A mechanism may also read a distinguished input of its own, the state of
the agent evaluating it, as in \Cref{sec:examples}, and everything below
then holds fiberwise in it (\Cref{rem:slots}).

\subsection{Representation}\label{sec:representation}

\paragraph{The cone and the lift.} The \emph{cone} over $\cV$ is
$\cK := (\cV \times [0,1]) / (\cV \times \{0\})$, the bottom edge collapsed
to one point, the \emph{apex} $o$. A point other than the apex is a pair
$(x, u)$ of a value and a \emph{height} $u \in (0, 1]$, and the \emph{cone
cost} is
\begin{equation}\label{eq:conecost}
  \vr\bigl((x,u), (y,v)\bigr) \;:=\; \min(u,v)\,\rho(x,y) + R\,\abs{u - v},
  \qquad
  \vr\bigl((x,u), o\bigr) \;:=\; R\,u ,
\end{equation}
a metric on $\cK$ (\Cref{lem:cone}). The \emph{lift} of a configuration is the counting measure on the cone
\begin{equation}\label{eq:lift}
  \Cl(x, \mu) \;:=\; \sum_{i \in \supp\mu} \delta_{(x_i,\, w_i(\mu))} ,
\end{equation}
one atom per coordinate of positive weight, at its value and at the height
of its cap weight.
Write $\cM_d := \Cl(\cC)$, and $n_P$ for the number of
atoms of $P \in \cM_d$. Two configurations have the same lift exactly when
their multisets of (value, weight) pairs coincide, so $\Cl$ identifies $\cC$
modulo permutations and modulo the values at zero-weight coordinates.
Every member of $\cGw_L$ factors through it (\Cref{lem:factor}).

\paragraph{The matching distance.} For $P, Q \in \cM_d$, a \emph{partial
matching} is a bijection $\sigma$ between a sub-multiset of the atoms of $P$
and a sub-multiset of the atoms of $Q$. Its cost is the cone cost summed
over the matched pairs plus $R$ times the height of every unmatched atom on
either side, and the \emph{matching distance} is the least cost,
\begin{equation}\label{eq:matching}
  \Mw(P, Q) \;:=\; \min_{\sigma}\Bigl[\,
  \sum_{p \text{ matched}} \vr\bigl(p, \sigma(p)\bigr)
  + \sum_{p \in P \text{ unmatched}} \vr(p, o)
  + \sum_{q \in Q \text{ unmatched}} \vr(q, o) \Bigr] .
\end{equation}
It is a metric on $\cM_d$, and it is a transport distance: padding each side
with the other's number of atoms at the apex,
\begin{equation}\label{eq:padding}
  \Mw(P, Q) \;=\; \Wone^{\vr}\bigl(P + n_Q\,\delta_o,\; Q + n_P\,\delta_o\bigr),
\end{equation}
the Kantorovich distance with ground cost $\vr$ between two measures of
equal mass, whose optimal plans are matchings because the masses are
integers (\Cref{lem:metric}). We write $\Dl := M_{w^{\mathrm c}}$ for the
conservative instance, heights equal to the weights, and $\Dt := M_{w^{\mathrm a}}$ for the accumulative one, heights equal to the
shares.

\begin{theorem}[Representation]\label{thm:rep}
Let $w$ be a cap weight and $L \ge 0$.
\begin{enumerate}
\item[(i)] A mechanism $f$ belongs to $\cGw_L$ if and only if $f = F \circ
\Cl$ for a function $F : \cM_d \to \R$ with $\abs{F(P) - F(Q)} \le L\,\Mw(P,
Q)$ for all $P, Q \in \cM_d$. The constant is $1$ in both directions, for
all configurations.
\item[(ii)] For all $P, Q \in \cM_d$,
\[
  \Mw(P, Q) \;=\; \sup\bigl\{\, f(z) - f(z') \;:\; f \in \cGw_1,\
  \Cl(z) = P,\ \Cl(z') = Q \,\bigr\} .
\]
Consequently, as for any metric and its Lipschitz ball, every metric on
$\cM_d$ for which the unit ball $\cGw_1$ is $1$-Lipschitz dominates $\Mw$,
and $\Mw$ is the unique metric whose Lipschitz balls, composed with $\Cl$,
are the classes $\cGw_L$.
\end{enumerate}
\end{theorem}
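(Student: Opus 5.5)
The plan is to prove (i) in two directions and then obtain (ii) from (i) by a McShane extension. For the easy direction of (i), let $F$ be $L$-Lipschitz for $\Mw$ on $\cM_d$ and set $f = F\circ\Cl$. Each axiom follows from a single explicit matching. Anonymity (A) holds because $\Cl(\sigma x,\sigma\mu)=\Cl(x,\mu)$, as $w$ commutes with permutations. For (V), changing $x_i$ to $y_i$ moves one atom from $(x_i,w_i)$ to $(y_i,w_i)$. The identity matching then costs $\vr = w_i(\mu)\rho(x_i,y_i)$, or $0$ if $\mu_i=0$, when the atom is absent. For (W), with $x$ fixed, match atom $i$ of $\Cl(x,\mu)$ to atom $i$ of $\Cl(x,\nu)$. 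The pair costs $\min(u,v)\cdot 0 + R\abs{w_i(\mu)-w_i(\nu)}$, and a coordinate in only one support is unmatched at cost $R\,w_i$, which is again $R\abs{w_i(\mu)-w_i(\nu)}$. Summing gives $\Mw\le R\norm{w(\mu)-w(\nu)}_1$.

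For the hard direction of (i), take $f\in\cGw_L$. By \Cref{lem:factor}, $f=F\circ\Cl$ for some $F$, so it remains to show $\abs{f(z)-f(z')}\le L\,\Mw(\Cl z,\Cl z')$. I would fix an optimal partial matching $\sigma$ (\Cref{lem:metric}) and realize it by a path of configurations whose increments are each controlled by one axiom. Using (A), permute so that matched pairs sit on common coordinates, with $p=(x_i,u_i)$ and $q=(y_i,v_i)$. Unmatched atoms of $P$ go to coordinates where $z'$ has weight zero, and unmatched atoms of $Q$ to coordinates where $z$ has weight zero. This fits in $d$ coordinates because $n_P,n_Q\le d$ and the matching uses one coordinate per pair. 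The cost to beat is $\sum_i[\min(u_i,v_i)\rho(x_i,y_i)+R\abs{u_i-v_i}]$, where unmatched atoms count as height $0$ on the other side. Values at zero-weight coordinates are free, so set them to match the other side. Then change values coordinate by coordinate from $x$ to $y$, and change weights $\mu\to\nu$ via (W). The subtle point is to pay only $\min(u_i,v_i)\rho$ rather than $u_i\rho$, which requires doing the value changes at the smaller of the two heights.

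I would handle this with a three-stage path $\mu \to \lambda \to \lambda \to \nu$. Here $\lambda$ is a weight vector with $w_i(\lambda)\le\min(w_i(\mu),w_i(\nu))$ for every $i$ and $\norm{w(\mu)-w(\lambda)}_1+\norm{w(\lambda)-w(\nu)}_1 = \norm{w(\mu)-w(\nu)}_1$. Values are changed under $\lambda$ using (V), at cost $\le L\sum_i w_i(\lambda)\rho(x_i,y_i)$. This is the main obstacle. The image $w(\simplex)$ is not a lattice, and for $w^{\mathrm c}$ the coordinatewise minimum of two probability vectors is not a probability vector. I would first try to avoid the intermediate $\lambda$: change the values one coordinate at a time and interleave single-coordinate weight moves, ordered so that at each value change the current weight on coordinate $i$ is $\le\min(u_i,v_i)$. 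Total mass can be parked on a coordinate whose value is already final, or on one not yet touched, if mass must go somewhere. Failing that, I would use continuity of $w$ and a limiting argument, or an inductive argument on the number of matched pairs with a telescoping bound, so that for each $i$ the weight increments and the value increment are charged separately.

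For (ii), the direction $\ge$ follows from McShane: $F(P')=\Mw(P',Q)$ is $1$-Lipschitz, so by (i) $F\circ\Cl\in\cGw_1$, and it attains $\Mw(P,Q)$. The direction $\le$ is the inequality from (i). The domination and uniqueness statements are then formal: if the unit ball is $1$-Lipschitz for a metric $m$, then $m(P,Q)\ge\sup_f(f(z)-f(z'))=\Mw(P,Q)$. Two metrics with the same Lipschitz balls have the same supremum formula, so they coincide.
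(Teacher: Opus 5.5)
Your easy direction of (i) and your derivation of (ii) from (i) are correct and coincide with the paper's. The gap is in the routing direction, at exactly the point you flag, and none of your routes closes it.

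\textbf{The missing step.} The three-stage path through $\lambda$ cannot exist. Equality in $\norm{w(\mu)-w(\lambda)}_1+\norm{w(\lambda)-w(\nu)}_1=\norm{w(\mu)-w(\nu)}_1$ together with $w(\lambda)\le\min(w(\mu),w(\nu))$ forces $w(\lambda)$ to be the coordinatewise minimum. For the conservative cap even the inequality alone is impossible unless $\mu=\nu$: $\sum_i\lambda_i=1$, while $\sum_i\min(\mu_i,\nu_i)=1-\tfrac12\norm{\mu-\nu}_1$. Your fallbacks (single-coordinate weight moves, which leave $\simplex$; a limit; an induction) are not carried out.

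The idea you are missing is that no intermediate weight is needed, because the two endpoints already supply the smaller height coordinate by coordinate. After using (A) to put matched pairs on common coordinates, with $(y',\nu')$ the permuted target, put $I^\uparrow:=\{i : w_i(\mu)\le w_i(\nu')\}$ and let $I^\downarrow$ be its complement among the matched coordinates. The route has three steps:
\begin{enumerate}
\item[1.] Move the values on $I^\uparrow$, and for free those on coordinates where $\mu$ vanishes, while the weights are still $\mu$. By \Cref{lem:basic}(iii) this costs $L\sum_{I^\uparrow}w_i(\mu)\rho(x_i,y'_i)$.
\item[2.] Make one (W) move $\mu\to\nu'$ at the resulting mixed value vector $z$. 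This is legitimate because (W) holds at every value vector, and it costs $LR\norm{w(\mu)-w(\nu')}_1$.
\item[3.] Move the values on $I^\downarrow$ at weights $\nu'$, at cost $L\sum_{I^\downarrow}w_i(\nu')\rho(x_i,y'_i)$.
\end{enumerate}
Every value is then moved at height $\min(w_i(\mu),w_i(\nu'))$. The (W) term is $R\sum_{\text{matched}}\abs{w_i(\mu)-w_i(\nu')}$ plus $R$ times the heights of the unmatched atoms. So the three costs add to exactly $L$ times the cost of the matching. Your interleaving idea, made concrete, collapses to this: all rising coordinates first, one weight move, all falling coordinates last.

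\textbf{A smaller gap: the room count.} Your justification that the relabeling fits in $d$ coordinates is wrong. Placing $m$ matched pairs and every unmatched atom on its own coordinate uses $n_P+n_Q-m$ coordinates, and this can exceed $d$ for an optimal matching. Take $d=2$, conservative cap, two atoms at $(a,\tfrac12)$ against two at $(b,\tfrac12)$ with $\rho(a,b)=2R$. Every partial matching costs $2R$, so the empty one is optimal, and it needs four coordinates.

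You must take the optimal matching to be maximal. This costs nothing: pairing an unmatched atom of $P$ with an unmatched atom of $Q$ never raises the cost, by the triangle inequality of $\vr$ through the apex. Then one side is fully matched, so $n_P+n_Q-m=\max(n_P,n_Q)\le d$.
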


Part (i) is the representation: the class is the Lipschitz ball of $(\cM_d,
\Mw)$ in the axioms' own constant, with nothing lost. Its proof is a
routing: the axioms bound $f$ along three moves, one value at fixed
weights, one weight at fixed values, a permutation, so $f$ is Lipschitz for
the path metric the moves generate, and the theorem computes that metric
in closed form as \eqref{eq:matching} by moving each matched value at the
smaller of its two heights, before the weight move where the height rises
and after it where it falls, at exactly the matching's cost
(\Cref{app:thmproof}; compare \citealp{balan2022permutation,cahill2024groupinvariant}). Part (ii) is a
corollary of (i), a metric being the supremum of the differences of its
own $1$-Lipschitz functions, and it is the sense in which the metric is
the class's own, each of its ingredients one of the axioms (the price $R$
of a unit of height is the modulus of \Cref{prop:modulus}). The class is convex, closed under Lipschitz readouts of finitely many
members, and contains the vector-valued linear sums, so attention pooling
and the flocking resultant are members
(\Cref{app:thmproof}, \Cref{lem:familycap}). On a fixed hard selection of
$k$ coordinates the class is the symmetric Lipschitz functions of the
selected values with the constant $L/k$, the factor $k$ attained by the
selected maximum (\Cref{cor:fiber}).

\section{Complexity}\label{sec:complexity}

A \emph{design} is a Borel probability measure $Q$ on $\cC$, the law of
the configurations, in the composite \eqref{eq:composite} the law of
$(\pi(\omega), c(\omega))$, and its
\emph{lifted design} is $\Gamma := \Cl_\# Q$, a probability measure on
$(\cM_d, \Dl)$. The data are $Z_1, \dots, Z_n$ i.i.d.\ $\sim Q$, with
lifts $P_i = \Cl(Z_i)$ and empirical lifted design $\hat\Gamma_n :=
n^{-1}\sum_{i \le n}\delta_{P_i}$, and $\kappa := 2R$ bounds the diameter
of $(\cM_d, \Dl)$ (\Cref{prop:qual}). The unit ball $\cG_1$ is the set of $F \circ \Cl$ with $F$ $1$-Lipschitz on
$(\cM_d, \Dl)$ (\Cref{thm:rep}), so the generalization gap of the class
on a sample is a transport,
\[
  \Delta_S \;:=\; \sup_{f \in \cG_1}\Bigl(\E_Q f - \frac1n\sum_{i=1}^n
  f(Z_i)\Bigr) \;=\; \Wone(\hat\Gamma_n, \Gamma) ,
\]
by Kantorovich--Rubinstein duality \citep{sriperumbudur2012empirical}.
Every bound of this section bounds the mean $\Delta_n(Q) := \E\,\Delta_S$
or its value at the sample at hand, a statement about the design's
geometry: the class itself is never covered.

\subsection{Rates from the sparsity profile}\label{sec:profiles}

\begin{definition}[Tail mass and profile]\label{def:profile}
For $(x, \mu) \in \cC$ let $\mu_{(1)} \ge \mu_{(2)} \ge \cdots$ be the
weights in decreasing order and, for $0 \le k \le d$, $T_k(x, \mu) :=
\sum_{i > k}\mu_{(i)}$ the weight beyond the $k$ heaviest coordinates, a
continuous function of the lift alone (\Cref{lem:descend}). The
\emph{profile} of the design is $G_k(t) := Q(T_k > t)$ for $0 \le k \le d$
and $t \in [0, 1)$, nonincreasing in $k$ and in $t$.
\end{definition}

\begin{definition}[Sparsity profiles]\label{def:profiles}
A design is \emph{exactly $k_0$-sparse} if $Q(T_{k_0} = 0) = 1$, a
selector that never reads more than $k_0$ coordinates. It has an
\emph{exponential profile} at rate $\beta > 0$ if $\E_Q[T_k] \le A
e^{-\beta k}$ for some $A \ge 1$ and all $k \ge 0$, a statement about the mean tail (\Cref{fig:certdim}(a)).
\end{definition}

The profile decays in $t$ at fixed $k$, the weights of one configuration,
and in $k$ at fixed $t$, the law of how many coordinates matter. The rates
are stated through the
\emph{stratum exponent} $e(k)$, the covering exponent of the lifts
carried by at most $k$ atoms, for a value space of metric dimension $\dim\cV$, $N(\varepsilon, \cV, \rho) \le (C_\cV R/\varepsilon)^{\dim\cV}$. On interval values $e(k) = 2k - 1$, one
value and one free weight per atom. On a \emph{selector design}, one whose
cropped weights are, at every crop level, an $L_c$-Lipschitz function of
the selected values or are uniform on the support, the ball of
\Cref{sec:flocking} being one, $e(k) = k\dim\cV$ (\Cref{app:strata}).
No upper bound below depends
on $d$. The stratum of $k$ atoms has covering count $N_k(\varepsilon) \le
(C_1R/\varepsilon)^{e(k)}$ at resolution $\varepsilon \le \kappa$, by balls of
radius $\varepsilon/2$, with $C_1$
depending on the covering constant of $\cV$, and on $L_c$ on a selector
design (\Cref{app:strata}).
\begin{theorem}[Generalization over the class]\label{thm:gradedmain}
Fix $K \le d$ and resolutions $0 < \varepsilon_1 \le \cdots \le
\varepsilon_K$ with thresholds $\theta_k := \varepsilon_k/(2\kappa) < 1$,
and let $w_k := Q(T_k \le \theta_k,\ T_{k-1} > \theta_{k-1})$, with
$\theta_0 := \theta_1$, be the mass of the configurations that $k$ atoms
resolve at $\varepsilon_k$ and $k - 1$ do not at $\varepsilon_{k-1}$. Then
for every design $Q$,
\begin{equation}\label{eq:gradedmain}
  \Delta_n(Q) \;\le\;
  2\sum_{k=1}^{K} w_k\,\varepsilon_k
  \;+\; \frac{\kappa}{2}\sum_{k=1}^{K}\sqrt{\frac{w_k\,N_k(\varepsilon_k)}{n}}
  \;+\; \kappa\,G_K(\theta_K) .
\end{equation}
\end{theorem}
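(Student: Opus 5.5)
The plan is to bound $\Delta_n(Q) = \E\,\Wone(\hat\Gamma_n,\Gamma)$ by a stratified Dudley-type argument in the form of Fournier--Guillin / Weed--Bach. We partition $\cM_d$ by the strata of the statement. Stratum $k \le K$ is $S_k := \{T_k \le \theta_k,\ T_{k-1} > \theta_{k-1}\}$, with $\Gamma(S_k) = w_k$. The remainder $S_\infty := \{T_K > \theta_K\}$ has mass at most $G_K(\theta_K)$. The first thing to check is that these sets cover $\cM_d$. If $T_K \le \theta_K$, take the least $k$ with $T_k \le \theta_k$; then $T_{k-1} > \theta_{k-1}$ (using $\theta_0 = \theta_1$ and that $T_0 = 1 > \theta_1$), so the configuration lies in $S_k$. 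The sets may overlap, but assigning each point its least $k$ makes them disjoint, and the bound only uses $\Gamma(S_k) \le w_k$. Each $S_k$ is Borel because $T_k$ is continuous on the lift (\Cref{lem:descend}).

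The key geometric step is to show that inside $S_k$ every lift lies within $\varepsilon_k/2$ of the stratum of lifts with at most $k$ atoms. Delete the atoms beyond the $k$ heaviest and renormalize, or in the conservative instance simply leave the deleted atoms unmatched. In $\Dl$, unmatched atoms cost $R$ times their height, so the deletion costs $R\,T_k \le R\theta_k = \varepsilon_k/4$. Renormalizing the kept weights onto the simplex costs at most another $R\,T_k$ by (W)'s $\ell_1$ price, giving a total of $\le \varepsilon_k/2$. Composing with a cover of the $k$-atom stratum by $N_k(\varepsilon_k)$ balls of radius $\varepsilon_k/2$ then partitions $S_k$ into at most $N_k(\varepsilon_k)$ cells, each of $\Dl$-diameter at most $2\varepsilon_k$ (twice the radius $\varepsilon_k/2$ plus twice the $\varepsilon_k/2$ projection). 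This projection-and-cover step is where I expect the real care to be needed. The worry is that the tail-cut lift may not lie in the covered stratum unless it is renormalized, and the constants must come out exactly as $2\varepsilon_k$. If renormalization is awkward, the fallback is to define the $k$-atom stratum as subprobability lifts, so that no renormalization is needed.

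With this partition $\{A_{k,j}\}$ in hand, use the standard one-level transport bound. Transport $\hat\Gamma_n$ to $\Gamma$ by moving mass within each cell where possible, at cost at most the cell diameter times the common mass. The excess mass $\sum_{k,j}\abs{\hat\Gamma_n(A_{k,j}) - \Gamma(A_{k,j})}/2$ is moved anywhere at cost $\kappa$. The remainder stratum is bounded crudely by $\kappa\,\Gamma(S_\infty)$ together with its share of the mismatch. This gives
\[
\Wone(\hat\Gamma_n,\Gamma) \le \sum_k 2\varepsilon_k\,\hat\Gamma_n(S_k) \wedge \Gamma(S_k)\text{-type terms} + \frac{\kappa}{2}\sum_{k,j}\abs{\hat\Gamma_n(A_{k,j})-\Gamma(A_{k,j})} + \kappa\,\Gamma(S_\infty).
\]
Taking expectations, the first term is at most $2\sum_k w_k\varepsilon_k$. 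For the middle term, $\E\abs{\hat\Gamma_n(A)-\Gamma(A)} \le \sqrt{\Gamma(A)/n}$, and Cauchy--Schwarz over the $N_k(\varepsilon_k)$ cells of $S_k$ gives $\sum_j \sqrt{\Gamma(A_{k,j})/n} \le \sqrt{w_k N_k(\varepsilon_k)/n}$, which is the second term of \eqref{eq:gradedmain}. The last term is $\kappa\,G_K(\theta_K)$.

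The remaining bookkeeping is to ensure that the mismatch on $S_\infty$ is absorbed without doubling the last term. For this, send all sample and population mass of $S_\infty$ directly, at cost $\le \kappa\max(\hat\Gamma_n(S_\infty),\Gamma(S_\infty))$. Then $\E$ of the sample part equals $\Gamma(S_\infty)$, and the coupling is arranged so that the two are charged jointly by $\kappa\,G_K(\theta_K)$ through the half-sum $\frac{\kappa}{2}(\hat\Gamma_n(S_\infty)+\Gamma(S_\infty))$.
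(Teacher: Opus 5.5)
Your proposal is correct and follows essentially the paper's own proof: the shells $\Sh_k$, each celled by the crop (within $\kappa T_k \le \varepsilon_k/2$ of $\cA_k$, \Cref{prop:crop}) composed with an $\varepsilon_k/2$-net of $\cA_k$ (\Cref{lem:tube}), followed by the single-partition coupling of \Cref{prop:coupling} with binomial cell fluctuations and Cauchy--Schwarz within each shell. Two small clean-ups: for nondecreasing resolutions the sets $S_k$ are already disjoint, since the sets $\{T_k \le \theta_k\}$ are nested; and the last term $\kappa\,\Gamma(S_\infty)$ of your display holds only in expectation, because the correct deterministic term is $\tfrac{\kappa}{2}\bigl(\hat\Gamma_n(S_\infty) + \Gamma(S_\infty)\bigr)$, which comes from counting the mass left unpaired in the cells and is the half-sum your final paragraph reaches, not the $\max$.
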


The three terms are each shell's resolution at its mass, its cell-frequency
fluctuation at its count, and the trim beyond the last shell at the
diameter, or twice its own tail if cropped (\Cref{thm:graded}). The two
theorems below follow from it and a multiscale refinement of its coupling
(\Cref{prop:multiscale}), each with its converse.

\begin{theorem}[A single sparsity level]\label{cor:sparse}
For every design and every $K \le d$ with $e(K) \ge 3$,
\[
  \Delta_n(Q) \;\le\; 2\kappa\,\E_Q T_K \;+\; C R\; n^{-\frac{1}{e(K)}} ,
\]
with $C$ depending on the covering constant of $\cV$ only, and on $L_c$
on a selector design, uniformly in $K$ and in $d$ (the case $e(K) \le 2$
is in \Cref{app:sparseproof}): an exactly $k_0$-sparse design learns at
$n^{-1/e(k_0)}$ once $e(k_0) \ge 3$, which is $n^{-1/(2k_0-1)}$ on
interval values and $n^{-1/(k_0\dim\cV)}$ on a selector design.
Conversely, on interval values there is an exactly $k_0$-sparse design
$Q_{k_0}$, with weights bounded below, such that every $n$-atom measure
$\nu_n$ has $\Wone(\nu_n, \Gamma_{k_0}) \ge c_{k_0}\,n^{-1/(2k_0-1)}$, for
a constant $c_{k_0}$: on interval values the worst-case exponent of a
sparsity level $k_0 \ge 2$ is $1/(2k_0-1)$ from both sides.
\end{theorem}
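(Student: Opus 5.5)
The upper bound comes from coupling $\hat\Gamma_n$ and $\Gamma$ through a crop map onto the stratum of at most $K$ atoms, followed by a standard multiscale (Dudley-type) bound on that stratum; the converse comes from a volume argument on a $(2k_0-1)$-dimensional stratum that embeds bi-Lipschitzly into $(\cM_d,\Dl)$.

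\emph{Crop.} Define the crop $\pi_K$ that keeps the $K$ heaviest atoms of a lift $P$ and deletes the rest. Deleting atoms is an admissible partial matching in \eqref{eq:matching}: the deleted atoms are unmatched and cost $R$ times their heights. With heights equal to weights, this gives $\Dl(P,\pi_K P)\le R\,T_K = \tfrac{\kappa}{2}T_K$. By the triangle inequality for $\Wone$,
\[
\Wone(\hat\Gamma_n,\Gamma)\le \Wone(\hat\Gamma_n,\pi_{K\#}\hat\Gamma_n)+\Wone(\pi_{K\#}\hat\Gamma_n,\pi_{K\#}\Gamma)+\Wone(\pi_{K\#}\Gamma,\Gamma).
\]
In expectation the two outer terms are each at most $\tfrac{\kappa}{2}\E_Q T_K$, so together they give $\kappa\,\E_Q T_K\le 2\kappa\,\E_Q T_K$. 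The middle term is the empirical transport error of the law $\pi_{K\#}\Gamma$, which is supported on $\cS_K$, the lifts with at most $K$ atoms.

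\emph{Stratum bound.} By \Cref{app:strata}, $\cS_K$ has covering numbers $N_K(\varepsilon)\le (C_1R/\varepsilon)^{e(K)}$ for $\varepsilon\le\kappa$, and its diameter is at most $\kappa$. I would then apply the dyadic multiscale bound of \Cref{prop:multiscale}:
\[
\E\,\Wone\le C\Bigl(\varepsilon_0+\sum_j 2^{-j}\kappa\sqrt{N(2^{-j}\kappa)/n}\Bigr),
\]
summed down to scale $\varepsilon_0$. When $e(K)\ge3$ the sum is dominated by the finest scale. Choosing $\varepsilon_0\asymp R\,n^{-1/e(K)}$ gives $CR\,n^{-1/e(K)}$.

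\emph{Uniformity.} The constant must not depend on $K$ or $d$. The geometric sum has ratio $2^{e/2-1}\ge 2^{1/2}$, so it is bounded by a constant times its last term uniformly once $e\ge3$. The factor $C_1^{e/2}$ must be absorbed at $\varepsilon_0=C_1R\,n^{-1/e}$, which gives a prefactor $C_1$ that is independent of $e$. I expect this bookkeeping to be the main obstacle on the upper-bound side: making $C$ uniform in $e(K)$ requires the resolution to scale with $C_1$ rather than with a $K$-dependent constant. For an exactly $k_0$-sparse design, $T_{k_0}=0$ almost surely, which gives the stated rates.

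\emph{Converse.} Take $\cV=[0,2R]$ and let $Q_{k_0}$ put $k_0$ atoms with values $x_1<\dots<x_{k_0}$ well separated and weights in a box $\mu_i\in[1/(2k_0),3/(2k_0)]$ summing to one. This is a $(2k_0-1)$-dimensional parameter region; take the uniform law on it. On this region the matching distance $\Dl$ is bi-Lipschitz to the $\ell_1$ distance between parameter vectors, with constants depending on $k_0$. The upper direction comes from matching atoms in order. For the lower direction, separation forces the optimal matching to be the ordered one, since swapping or leaving atoms unmatched costs at least a constant times $1/k_0$; the cone cost then dominates $c(\abs{x_i-y_i}+\abs{\mu_i-\nu_i})$ because heights are bounded below. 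Now take any $n$-atom $\nu_n$. The $\Dl$-balls of radius $r$ around its atoms have total $\Gamma_{k_0}$-mass at most $n\,C r^{2k_0-1}$. Choosing $r=c\,n^{-1/(2k_0-1)}$ makes this at most $1/2$, so half the mass lies at distance at least $r$ from the support of $\nu_n$. This gives $\Wone\ge r/2$. The delicate point is the lower bi-Lipschitz estimate near the boundary of the region; I would handle it by shrinking the region by a factor so that unmatched atoms and cross-matchings are always strictly more expensive.
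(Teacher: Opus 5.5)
The gap is in the crop of your upper bound. Your argument is fine for exactly $K$-sparse designs, where the crop is the identity, but not for the general statement.

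Your $\pi_K$ deletes the tail atoms without renormalizing. So $\pi_K P$ has total height $1 - T_K(P)$ and is not the lift of any configuration: it lies outside $\cM_d$, and in particular outside the stratum $\cA_K$ whose covering numbers \Cref{thm:covcons} provides. The triangle inequality survives if you read the matching distance on arbitrary finite multisets of cone points. The middle term does not. The law $\pi_{K\#}\Gamma$ lives on multisets of at most $K$ atoms whose heights range over the solid simplex $\{a \in [0,1]^K : \sum_i a_i \le 1\}$ rather than over $\Delta_K$. The total height is therefore one more free parameter, and the covering exponent is $e(K)+1$, that is $2K$ on interval values.

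This is not slack in the count. Take a design on $K+1$ atoms with uniform values and weights uniform on a small box of separated weights in $\Delta_{K+1}$. Its truncated lifts fill a $2K$-dimensional region on which the local estimate of your converse holds, so $\Wone(\pi_{K\#}\hat\Gamma_n, \pi_{K\#}\Gamma) \gtrsim n^{-1/(2K)}$, not $n^{-1/(2K-1)}$. The selector clause fails the same way, because its hypothesis is on the renormalized cropped weights: truncated ball lifts sit at heights $1/(m+1)$ and still carry the count $m$.

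The fix is the paper's crop $\mathrm{cr}_K$. It keeps the $K$ heaviest atoms and rescales their weights by $1/(1-T_K)$, so that it lands in $\cA_K$. The rescaling moves the kept heights by $T_K$ in total, on top of the $RT_K$ for the deleted tail. Hence $\Dl(P, \mathrm{cr}_K P) \le 2RT_K = \kappa T_K$ (\Cref{prop:crop}), and the two outer terms give exactly $2\kappa\,\E_Q T_K$. The factor $2$ you found to be slack is the price of staying on the stratum.

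With that change your multiscale step is the paper's, with one adjustment: \Cref{prop:multiscale} runs on the triadic scales $3^{-j}\kappa$, not dyadic ones. That is what makes the nested partitions of \Cref{lem:nested} exist in a general metric space, and it gives $A = 3C_1R$ and $C = 51C_1$.

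Your converse is correct and differs from \Cref{def:region} only in where the separation sits. The paper separates the weights, lets the values fill $[-R,R]^{k}$, and excludes cross-matchings by the height term (\Cref{lem:bilip}). You separate the values and exclude cross-matchings by $\min(\mu_i,\nu_j)\rho(x_i,y_j)$. The two are interchangeable up to constants.

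The boundary of the region is not the delicate point, since the local estimate holds for every pair below the threshold. What you do need is the mass bound for balls centered at arbitrary lifts, because the atoms of $\nu_n$ can lie anywhere in $\cM_d$. This follows from $\Ball(P,r) \subseteq \Ball(P',2r)$ for a point $P'$ of the support in $\Ball(P,r)$, together with a minimum against the threshold scale at small $n$, as in \Cref{lem:massbound}.
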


The first term is the mean tail paid as a bias, the cost of cropping
every configuration to $K$ atoms, and the second is the rate of the
exactly $K$-sparse design so obtained, by a multiscale coupling on the
cropped stratum (\Cref{prop:multiscale}), whose exponent on interval
values is the converse's (\Cref{app:lowerread}), and on the converse's
design no learner, from any number of noiseless labels, has risk below
$c\,n^{-1/(2k_0-1)}$ uniformly over the class (\Cref{thm:learnlower}).
The simplest members escape the rate: the
conservative linear sums \eqref{eq:linearsums}, the shape of a sum
network, learn at $n^{-1/(\dim\cV+2)}$ whatever the count and the profile
(\Cref{prop:countfree}), and the class pays the profile for everything
beyond Lipschitz readouts of finitely many of them.

\begin{theorem}[Exponential profiles]\label{cor:exp}
Let $\cV = [-R, R]$ or a circle of diameter $2R$.
Under an exponential profile at rate $\beta$, for every $n \ge 1$,
\[
  \Delta_n(Q) \;\le\; C\, e^{-\sqrt{\beta\ln n/2}} ,
\]
and $\Delta_n(Q) \le C\,e^{-\sqrt{\beta\ln n}}$ on a selector design,
with $C$ depending on $(\beta, A, R)$, and on $L_c$ on a selector design.
Conversely there is a design
$Q^\beta$ with $T_k \le e^{-\beta k}$ at every configuration such that,
for every $n$ with $d \ge \sqrt{2\ln(2n)/\beta} + 1$, every $n$-atom
measure $\nu_n$ has $\Wone(\nu_n, \Gamma^\beta) \ge c\,
e^{-\sqrt{2\beta\ln(2n)}}$, with $c$ depending on $(\beta, R)$.
\end{theorem}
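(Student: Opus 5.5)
The upper bound follows from \Cref{cor:sparse}, optimized over the sparsity level $K$; the converse is a product-of-strata packing argument.

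\textbf{Upper bound.} On interval or circle values $\dim\cV = 1$, and the stratum exponent is $e(K) = 2K-1$ in general and $e(K) = K$ on a selector design. Under the exponential profile, $\E_Q T_K \le A e^{-\beta K}$. So \Cref{cor:sparse} gives, for every $K$ with $e(K)\ge 3$,
\[
  \Delta_n(Q) \;\le\; 4RA\,e^{-\beta K} \;+\; C R\, n^{-1/(2K-1)} ,
\]
and, since the constant $C$ is uniform in $K$ and $d$, we may choose $K$ depending on $n$. We balance $\beta K$ against $\ln n/(2K)$, which gives $K \approx \sqrt{\ln n/(2\beta)}$. Both terms are then of order $e^{-\sqrt{\beta \ln n/2}}$, up to the rounding of $K$ to an integer. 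The rounding, the difference between $2K-1$ and $2K$, and the small-$n$ range where $e(K)<3$ (covered by the case in \Cref{app:sparseproof}, or by the trivial bound $\kappa$) cost only a constant factor depending on $(\beta,A,R)$. On a selector design the same balance of $\beta K$ against $\ln n/K$ gives $K\approx\sqrt{\ln n/\beta}$ and the rate $e^{-\sqrt{\beta\ln n}}$.

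The one point to check is that the rounding really stays inside the constant. With $K=\lceil\sqrt{\ln n/(2\beta)}\rceil$ we have $e^{-\beta K}\le e^{-\sqrt{\beta\ln n/2}}$. We also have $n^{-1/(2K-1)} \le n^{-1/(2K)}$, and $\ln n/(2K) \ge \sqrt{\beta\ln n/2} - O(1)$, so this term is at most $e^{O(1)}\,e^{-\sqrt{\beta\ln n/2}}$, where the $O(1)$ depends only on $\beta$. Neither estimate depends on $d$. If $d<K$, then $T_d=0$ and we take $K=d$ instead, which only improves the bound.

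\textbf{Lower bound.} We build $Q^\beta$ as a mixture over strata, with fixed geometrically decaying weights $\mu_{(i)} \propto e^{-\beta i}$ (normalized). Then $T_k \le e^{-\beta k}$ holds deterministically, possibly after adjusting the constant in the rate by a shift of $k$. The values $x_i$ are free, uniform on $[-R,R]$.

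Under $\Dl$, moving the value of atom $i$ costs at least $\mu_{(i)}\,\rho$. The lifted design therefore contains a product-like set with side lengths of order $e^{-\beta i}$, the ellipsoid structure mentioned in \Cref{app:lowerread}. Fix a resolution $\varepsilon$ and count the $\varepsilon$-separated points in the first $m$ coordinates, where $e^{-\beta m}\approx\varepsilon$. The count is $\prod_{i\le m} (R e^{-\beta i}/\varepsilon) = \exp\bigl(\sum_{i\le m}\beta(m-i)\bigr)$, roughly $\exp(\beta m^2/2)$, and the packed points carry roughly equal mass. A standard argument then applies: an $n$-atom measure leaves at least half of the mass at distance at least $\varepsilon/2$ from its atoms once this count exceeds $2n$. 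That is, it suffices that $\beta m^2/2 \ge \ln(2n)$, so $m=\sqrt{2\ln(2n)/\beta}$ and $\varepsilon = e^{-\beta m} = e^{-\sqrt{2\beta\ln(2n)}}$. This matches the stated bound, and the requirement $d\ge m+1$ is exactly the hypothesis.

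\textbf{Main obstacle.} The hardest step is the separation claim: the matching distance $\Dl$ between two lifts differing in values must be bounded below by roughly $\max_i \mu_{(i)}\abs{x_i-y_i}$, or by a sum of such terms. The matching may permute atoms, and leaving atoms unmatched costs height. We plan to handle this by using distinct heights with gaps. Matching atoms of different heights costs $R$ times the height difference, which is at least of order $e^{-\beta i}$ and so exceeds $\varepsilon$ for $i\le m$ once $\varepsilon$ is small relative to the gaps. The optimal matching is then forced to be the identity on the first $m$ atoms, and the cost reduces to the weighted value distances. Working with the cone metric on the circle changes only the constants.
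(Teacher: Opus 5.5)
Your proposal is correct and follows the paper's proof. The upper bound is \Cref{cor:sparse} at the cutoff $K \approx \sqrt{\ln n/(2\beta)}$ (resp.\ $\sqrt{\ln n/\beta}$ on a selector design), with the small-$n$ and $K > d$ cases absorbed as you describe. The converse uses the paper's design $Q^\beta$: a single fiber $\mu^\beta_j \propto e^{-\beta j}$ with uniform values (not a mixture over strata), where height gaps force the identity matching and each lifted $\varepsilon$-ball is an anisotropic box of mass about $e^{-\beta m^2/2}$.

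The one step to state more carefully is the ``count''. What the argument needs is an upper bound on the $\Gamma^\beta$-mass of every $\varepsilon$-ball, including balls centred off the fiber, handled by $\Ball(P,\varepsilon) \subseteq \Ball(P',2\varepsilon)$ for some $P'$ in the fiber. Your local separation argument supplies this. The paper obtains it instead from the global bound $\Dl \ge \tfrac{\gamma}{2}\max_j \mu_j \abs{x_j - y_j}$ of \Cref{lem:fiber}.
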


The rate is slower than every polynomial in $n$ and faster than every
power of $\ln n$, and it is the problem's own, the constant in the
exponent lying between $1/\sqrt2$ and $\sqrt2$. The optimizing cutoff in
\Cref{cor:sparse} is $K_n \asymp \sqrt{\ln n/\beta}$, so the dimension
the rate sees grows without bound, until it reaches $d$, and the profile costs nothing for being
a statement about the mean tail, the crop paying each configuration its
own tail (\Cref{rem:expconstants}). The circle is the value space of the
flock (\Cref{app:flocking}).

\subsection{The certificate}\label{sec:certificate}

\begin{theorem}[The two-sample certificate]\label{thm:twosample}
Let $n$ be even and let $S$ be split, by a rule fixed in advance, into
halves $A$ and $B$ of $n/2$ points each, with empirical lifted designs
$\hat\Gamma_A$ and $\hat\Gamma_B$. With probability at least $1 -
\delta$,
\[
  \Wone(\hat\Gamma_A, \Gamma) \;\le\; \Wone(\hat\Gamma_A, \hat\Gamma_B)
  + \kappa\sqrt{\frac{\ln(1/\delta)}{n}} ,
\]
and $\Wone(\hat\Gamma_A, \hat\Gamma_B) = \min_\pi\,\frac{2}{n}\sum_{i \le
n/2}\Dl(P_i, P'_{\pi(i)})$, the minimum over the bijections $\pi$ between
the lifted points of the two halves: an assignment problem on $n/2$
points, each of whose costs is itself an assignment between the atoms of
two lifts (\Cref{lem:Dassign}).
\end{theorem}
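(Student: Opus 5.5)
The bound has two parts: a probabilistic inequality for $\Wone(\hat\Gamma_A, \Gamma)$, and the combinatorial identity expressing $\Wone(\hat\Gamma_A, \hat\Gamma_B)$ as an assignment. I will prove them separately.

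\emph{Probabilistic part.} Write $\Wone(\hat\Gamma_A, \Gamma) = \sup_F(\E_\Gamma F - \E_{\hat\Gamma_A} F)$ over $1$-Lipschitz $F$ on $(\cM_d, \Dl)$. This is Kantorovich--Rubinstein duality, as in the display for $\Delta_S$, together with \Cref{thm:rep}. Since $\kappa$ bounds the diameter (\Cref{prop:qual}), we may fix a base point and restrict to $F$ taking values in an interval of length $\kappa$.

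Because the split rule is fixed in advance, the two halves are independent i.i.d.\ samples of size $n/2$ from $\Gamma$. I then use the Bartlett--Mendelson maximum-discrepancy argument.

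First, $\E_B[\E_{\hat\Gamma_B} F] = \E_\Gamma F$, so
\[
\sup_F(\E_\Gamma F - \E_{\hat\Gamma_A}F) \le \E_B \sup_F(\E_{\hat\Gamma_B}F - \E_{\hat\Gamma_A}F) = \E_B\,\Wone(\hat\Gamma_A, \hat\Gamma_B),
\]
by Jensen's inequality applied to the supremum.

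Second, consider $\phi(S) := \E_B\Wone(\hat\Gamma_A,\hat\Gamma_B)$ where $A$ is fixed, or more directly the difference $\Wone(\hat\Gamma_A,\Gamma) - \Wone(\hat\Gamma_A,\hat\Gamma_B)$ as a function of all $n$ points. Replacing one point changes each $\Wone$ term by at most $2\kappa/n$, because that point carries mass $2/n$ in its half and the transport cost of moving it is at most $\kappa$. McDiarmid's inequality then gives a deviation bound. The constant needs care: a naive bound on the difference gives a change of up to $4\kappa/n$ for points in $A$ (both terms move) and $2\kappa/n$ for points in $B$. That yields $\sum c_i^2 = (n/2)(16+4)\kappa^2/n^2 = 10\kappa^2/n$, which is too large.

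The fix is to observe that for a point in $A$, both terms move under the same $F$-perturbation and so partially cancel. For a fixed $F$, the function $\E_\Gamma F - \E_{\hat\Gamma_A}F - (\E_{\hat\Gamma_B}F - \E_{\hat\Gamma_A}F) = \E_\Gamma F - \E_{\hat\Gamma_B}F$ does not depend on $A$ at all. Taking suprema, $\sup_F a_F - \sup_F b_F \le \sup_F(a_F - b_F)$, so the difference of the two Wasserstein terms is at most $\sup_F(\E_\Gamma F - \E_{\hat\Gamma_B}F) = \Wone(\hat\Gamma_B, \Gamma)$. That quantity depends only on $B$, with bounded differences $2\kappa/n$ over $n/2$ points, so $\sum c_i^2 = 2\kappa^2/n$.

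This gives deviation $\kappa\sqrt{\ln(1/\delta)/n}$ above the mean $\E\,\Wone(\hat\Gamma_B,\Gamma)$. That mean still has to be absorbed. I would handle it by applying McDiarmid to the full difference $\Wone(\hat\Gamma_A,\Gamma) - \Wone(\hat\Gamma_A,\hat\Gamma_B)$ instead, whose mean is $\le 0$ by the symmetrization step. I would then recount the bounded differences per point, aiming for $c_i = 2\kappa/n$ for every point. The argument is that swapping a point changes the combined expression by at most the transport cost of that point's mass, since the $A$-term enters both Wasserstein terms with opposite signs in the fixed-$F$ expansion. Exponent $2t^2/\sum c_i^2 = 2t^2/(4\kappa^2/n)$ then gives $t = \kappa\sqrt{2\ln(1/\delta)/n}$ up to this constant.

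Matching the stated $\kappa\sqrt{\ln(1/\delta)/n}$ exactly is the step I expect to be delicate. It requires the sharper per-point difference, obtained by replacing $F$ with $F$ minus the midrange so that $|F|\le\kappa/2$, which makes each change $\kappa/n$ per unit of $2/n$ mass.

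\emph{Combinatorial part.} Both empirical measures are uniform on $n/2$ atoms of equal mass $2/n$. By Birkhoff--von Neumann, the transport polytope between two uniform measures on the same number of atoms has permutation matrices as its extreme points. The linear cost is therefore minimized at a bijection, giving $\min_\pi \frac2n\sum_i \Dl(P_i, P'_{\pi(i)})$. Each cost $\Dl(P_i, P'_j)$ is in turn an assignment between atoms padded with apex copies, by \eqref{eq:padding} and \Cref{lem:Dassign}.
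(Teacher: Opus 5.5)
Your symmetrization step and your assignment identity are both correct, and both are the paper's. The concentration step, however, has a genuine gap, and the route you settle on does not give the stated constant.

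The claim that replacing a point of $A$ moves $D := \Wone(\hat\Gamma_A,\Gamma) - \Wone(\hat\Gamma_A,\hat\Gamma_B)$ by at most $2\kappa/n$ is false. The fixed-$F$ cancellation does not survive the two suprema, because their maximizers differ. Here is a counterexample. Take two lifts at distance $\kappa$, let $p$ be the $\Gamma$-mass of one of them, and let $\alpha$ and $\beta$ be its frequencies in $A$ and $B$. Then $D = \kappa(\abs{\alpha - p} - \abs{\alpha - \beta})$. Moving one point of $A$ onto that lift raises $\alpha$ by $2/n$, and it changes $D$ by $4\kappa/n$ whenever $p \le \alpha$ and $\alpha + 2/n \le \beta$. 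So McDiarmid over all $n$ points gives only $\kappa\sqrt{5\ln(1/\delta)/n}$.

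The midrange device does not help either. Replacing an atom $b$ of mass $2/n$ by $b'$ changes $\int F\,d\hat\Gamma_B$ by $\frac2n(F(b') - F(b))$. That is controlled by the oscillation of $F$, which centering leaves at $\kappa$. Finally, the triangle-inequality detour leaves $\E\,\Wone(\hat\Gamma_B,\Gamma)$ unabsorbed, as you note, and that is the very quantity being bounded.

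The missing idea is to condition on $A$ and never concentrate over the points of $A$ at all. Given $A$, $\Wone(\hat\Gamma_A,\Gamma)$ is a constant, and your first step already shows it is at most $\E[\Wone(\hat\Gamma_A,\hat\Gamma_B)\mid A]$. The map $B \mapsto \Wone(\hat\Gamma_A,\hat\Gamma_B)$ has $n/2$ independent arguments with differences $2\kappa/n$. So $\sum c_i^2 = 2\kappa^2/n$; this is exactly the count you computed, but you applied it to $\Wone(\hat\Gamma_B,\Gamma)$ instead of this function.

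The lower-tail McDiarmid bound then gives $\E[\Wone(\hat\Gamma_A,\hat\Gamma_B)\mid A] \le \Wone(\hat\Gamma_A,\hat\Gamma_B) + \kappa\sqrt{\ln(1/\delta)/n}$ with conditional probability at least $1-\delta$. Integrating over $A$ gives the same bound unconditionally. This is the paper's argument, and it yields the stated constant exactly, with neither the $\sqrt5$ nor the $\sqrt2$ loss.
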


What is ours is the object: on the class the maximum discrepancy
\citep{bartlett2002model} is a transport by duality
\citep[eq.~(11)]{luxburg2004distance}, a matching computes it, and the
symmetrization of \citet{vapnik1971uniform} carries it to the population
with one term (\Cref{thm:twosamplefull},
\Cref{rem:onesplit,rem:rademacher,rem:certread}). The matching is the
least of the certificates on the halves, every covering-type bound on them
being one feasible transport (\Cref{prop:sampleshells}): on the flock the
cell certificate is twenty times the matching on the ball and trivial on
every graded kernel (\Cref{sec:flocking}).

\section{Experiments}\label{sec:examples}

\subsection{Flocking}\label{sec:flocking}

The Vicsek flock of the introduction \citep{vicsek1995novel} is the
reference model of collective motion \citep{vicsek2012collective,
ginelli2016physics}, and its update is a mechanism of the class on the
ordered region (\Cref{app:flocking}): the configuration is the set of headings, the
selector the ball, uniform on the $m + 1$ particles within distance $r$ of
the focal one, itself included, and the \emph{update} the direction of
the mean heading vector, up to an angular noise of amplitude $\eta$.
We simulate the
standard setting, $3200$ particles at density $2$ with $r = 1$, at noise
$\eta = 1.2$ in the ordered phase \citep{chate2008collective}. Alongside
the ball rule we simulate three flocks whose rules are graded, a bump, a
Gaussian and a Cauchy kernel, each at the range at which its mean tail
matches the ball's, at a finite cutoff, and each read through the
selector of its own family with its lift cropped at a priced tail
(\Cref{app:flocking}).
A \emph{sample} is one focal particle of one replica at one instant, one
sample per replica, so the samples are independent: $64{,}000$ replicas
of each flock, each started at random and read once at step $3000$, after
the flock has ordered, the view's headings rotated by an independent
uniform angle: the design is the law of that view.
The regime $\eta = 2.4$ and the kernels' parameters are in \Cref{app:flocking}.

\paragraph{The profile.} \Cref{fig:certdim}(a) is the mean tail of each selector, and \Cref{fig:count} in
\Cref{app:flocking} the count law of the ball flock. On the replicas
the ball holds $11$ particles on average and $67$ at most,
other than the self, from $2$ to $23$ between the $5\%$ and $95\%$ samples. Every sample is exactly sparse
with uniform weights on its $m + 1$ atoms, so the mean tail is a
functional of the count law. It is geometric beyond the knee at the mean
support because the count law is, at a decay rate of $0.23$ per atom. The
graded kernels are the other case, a decay inside each sample at a rate set
by the local crowding, the Gaussian's geometric, the average-exponential
profile of \Cref{cor:exp} on a design whose weights fade rather than stop,
and the Cauchy's polynomial. The profile bounds give the rate and the
certificate gives the number: the ball design's support reaches $68$ atoms
and a kernel's is every particle within its cutoff, so a bound in the
dimension of the support says nothing at these $n$, where the certificate does.
The rate
the sample shows is the design's own: over the last three doublings the
ball's transport decays with the exponent of a three-dimensional sample and
the graded kernels' with that of thirteen to fifteen, rising by a
dimension per doubling as the sample resolves more atoms of the weight
vector, both far below the profile's worst-case exponent, eleven at the
ball's mean count (\Cref{app:flocking}): the ordered flock's headings are
nearly collinear, and the profile bound is over every design with that
profile.
That number bounds the gap of every Lipschitz function of the local
configuration, on a design of strata of dimension one to sixty-eight,
twelve on average (\Cref{fig:count}), the $3200$ headings of the flock's
state nowhere in it.

\begin{figure}[t]
\centering
\raisebox{-\height}{\includegraphics[width=0.29\textwidth]{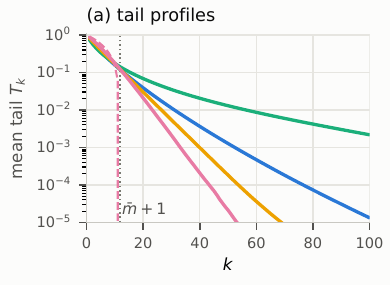}}\hspace{0.018\textwidth}%
\raisebox{-\height}{\includegraphics[width=0.28\textwidth]{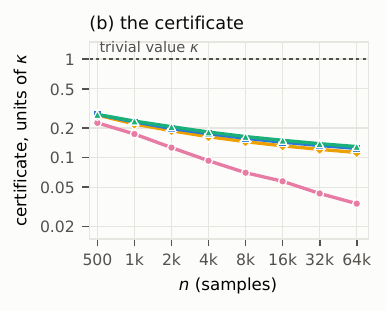}}\\[-2pt]
\includegraphics[width=0.56\textwidth]{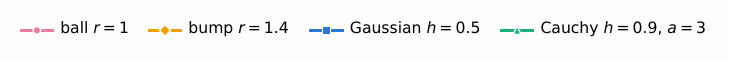}
\caption{The flock at $\eta = 1.2$, $64{,}000$ independent replicas per
flock, one sample each. \emph{(a)} The profile: the mean tail $T_k$, the
weight beyond the $k$ heaviest atoms, under the four selectors, the
dashed line the tail of one sample at the mean count $\bar m + 1$.
\emph{(b)} The certificate of \Cref{thm:twosample} for half $A$ at
$\delta = 0.05$, in units of $\kappa$, on samples after the burn-in of
$3000$ steps, by which the flock has ordered.}
\label{fig:certdim}
\end{figure}

\paragraph{The certificate.} \Cref{fig:certdim}(b) is \Cref{thm:twosample}
computed on the split by replicas, half $A$ the first half of the replicas
and half $B$ the second, fixed in advance. The transport is one
assignment on the halves (\Cref{lem:Dassign}), $32{,}000$ samples per half
at the largest $n$. At $\eta = 1.2$ the certificate is $0.034\kappa$ on
the ball and $0.11$ to $0.13\kappa$ on the graded kernels, where the mean
distance between two random samples is $0.61$ to $0.68\kappa$ and $\kappa
= \pi$: a component of the local mean velocity, of range $2$, deviates by
at most $0.11$. Every step here is a theorem on the data:
the samples are independent replicas, so the i.i.d.\ hypothesis of
\Cref{thm:twosample} holds as stated, and the concentration term is its
$0.007\kappa$ at $n = 64{,}000$.

\paragraph{What the certificate bounds.} The certificate is the exact
uniform deviation of the class on the two halves, $\Wone(\hat\Gamma_A,
\hat\Gamma_B) = \max_{f \in \cG_1}(\E_A f - \E_B f)$, so no bound uniform
over the class on these halves is smaller, and by \Cref{rem:regression} it bounds the generalization gap
of every empirical risk minimizer of $\cG_L$ on every target in $\cG_L$ by
$2L$ times its value. On a member it is nearly attained: for the target
$L\,\Dl(\cdot, A)$, zero on half $A$, the lower Lipschitz extension
$-L\,\Dl(\cdot, A)$ is a member of $\cG_L$ with zero training risk whose
gap on half $B$ is $2L$ times the mean nearest-neighbor distance from $B$
to $A$, $0.038\kappa$ against the certified $0.068\kappa$ at $L = 1$ on the
ball, and within a factor $1.13$ of the certificate on every graded kernel
(\Cref{app:flocking}). For one chosen learner on one target a holdout
estimate on half $B$ is tighter by definition, and the certificate does
not compete with it: what it gives is uniformity at no labels, over every
target on the design and every predictor in the class at once, in hand
before a learner is chosen or a label seen, so that a choice made after
seeing the data costs nothing (\Cref{thm:twosamplefull}(ii)). Against the
covering certificates on the same halves it is the smaller by far: the
cell certificate of \Cref{prop:sampleshells}, optimized on the data, is
$0.68\kappa$ on the ball, twenty times the matching, and at or above
$\kappa$ on every graded kernel at every $n$ (\Cref{app:flocking}).

\section{Conclusions and Future Work}\label{sec:conclusion}

We defined graded sparse mechanisms by axioms, showed that they are the
Lipschitz ball of a transport metric on weighted configurations, and used
that geometry for rates from the design's sparsity profile and for a
certificate computed from the sample by a matching, with non-vacuous
bounds on the flock. So although the class, the full Lipschitz ball
of the lift, is prohibitively large for any data-independent bound, it is
learnable, and the design alone can bound its rate.

We pose two questions. The first is the learning of the selector.
Although in some situations it is dictated fully by the environment, and
this is the setting we assume here, in other situations some aspects of
it can be learned
\citep{heras2019deep,ha2021unraveling,kipf2018neural}. What makes the
problem interesting is its separation: the selector is the environment's,
one rule common to all agents, while the mechanisms of the agents may
differ.
Second, empirically in this paper we concentrated on various rules on the
flock, with the conservative cap. We believe the next natural step would
be to evaluate additional scenarios, in terms of the caps, the observed
sparsity profiles, and the implied empirical complexity.

\bibliography{refs}
\bibliographystyle{plainnat}

\clearpage
\appendix
\section{Proofs for \Cref{sec:rep}}\label{app:repproofs}

\subsection{Basic lemmas, and the weight axiom}\label{app:basic}

Two remarks on \Cref{def:class} come first.

\begin{remark}[Differential form]\label{rem:derivative}
On an interval $\cV = [-R, R]$ and for $f$ differentiable, (V) is the bound
$\abs{\partial f/\partial x_i} \le L\, w_i(\mu)$ at every configuration, and
for the weight $w = \mu$, (W) is the bound
$\max_i \partial f/\partial\mu_i - \min_i \partial f/\partial\mu_i \le 2LR$
along the simplex, since a tangent direction $h$ of the simplex has
$\sum_i h_i = 0$ and $\sup_{\norm{h}_1 \le 1}\langle \nabla_\mu f, h\rangle$
is half the spread of the partial derivatives.
\end{remark}

\begin{remark}[Distinguished inputs]\label{rem:slots}
A mechanism may read, besides the anonymous coordinates, inputs of its own
that no permutation moves: the state of the agent that evaluates it, or of
its environment, as in \Cref{sec:examples}. Such an input is one
distinguished variable $q$ in a metric space, and the class becomes the set
of $f(q;\, x, \mu)$ that lie in $\cGw_L$ for every $q$ and are Lipschitz in
$q$. Everything in the paper holds fiberwise in $q$, the metric of
\Cref{sec:representation} becomes a product metric, and the covering numbers
of \Cref{app:strata} multiply by that of the space of $q$: with the metric
$d_q(q, q') + \Mw(P, P')$ the class is its $L$-Lipschitz ball, the covering
number at $\varepsilon$ is at most the product of the two covering numbers
at $\varepsilon/2$, and the diameter of \Cref{thm:twosample} is the sum of
the two. We keep the notation without the slot.
\end{remark}

\begin{lemma}[Blindness, factorization, telescope]\label{lem:basic}
Let $f \in \cGw_L$.
\begin{enumerate}
\item[(i)] If $\mu_i = 0$ then $f(x, \mu)$ does not depend on $x_i$.
\item[(ii)] If the multisets $\{(x_i, \mu_i) : \mu_i > 0\}$ and
$\{(y_j, \nu_j) : \nu_j > 0\}$ coincide, then $f(x, \mu) = f(y, \nu)$.
\item[(iii)] For all $x, y \in \cV^d$ and $\mu$:
$\abs{f(x, \mu) - f(y, \mu)} \le L \sum_i w_i(\mu)\, \rho(x_i, y_i)$.
\end{enumerate}
\end{lemma}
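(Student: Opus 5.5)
The three parts are proved in the order (i), (iii), (ii), each directly from the axioms of \Cref{def:class} and the defining properties of a cap weight. Only (ii) takes real work.

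For (i), take $\mu_i = 0$ and let $x, y$ differ only in coordinate $i$. Since a cap weight has $w_i(\mu) = 0$ exactly when $\mu_i = 0$, axiom (V) gives $\abs{f(x,\mu) - f(y,\mu)} \le L \cdot 0 \cdot \rho(x_i,y_i) = 0$. So $f(x,\mu)$ does not depend on $x_i$.

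For (iii), change the coordinates of $x$ into those of $y$ one at a time. This gives a chain $x = z^0, z^1, \dots, z^d = y$ in which $z^{j-1}$ and $z^j$ differ only in coordinate $j$, always at the same $\mu$. Applying (V) to each link and using the triangle inequality gives
\[
\abs{f(x,\mu)-f(y,\mu)} \le L\sum_j w_j(\mu)\rho(x_j,y_j).
\]
The weight $\mu$, and hence every $w_j(\mu)$, is the same along the whole chain, so each link is bounded with the correct cap.

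For (ii), suppose the two multisets of positive-weight pairs coincide. Then the supports have the same size $s$, and there is a bijection $\tau : \supp\mu \to \supp\nu$ with $(y_{\tau(i)}, \nu_{\tau(i)}) = (x_i, \mu_i)$. The zero-weight coordinates of the two sides also have the same number $d-s$, so $\tau$ extends to a permutation $\sigma$ of $[d]$ that sends the zero-weight coordinates of $\mu$ onto those of $\nu$.

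With this $\sigma$, the permuted weights satisfy $\sigma\mu = \nu$ exactly. The permuted values $\sigma x$ agree with $y$ on $\supp\nu$, but may differ from $y$ on the zero-weight coordinates. The argument then has two steps:
\begin{itemize}
\item By (A), $f(x,\mu) = f(\sigma x, \sigma\mu) = f(\sigma x, \nu)$.
\item By (i), applied one zero-weight coordinate at a time, $f(\sigma x, \nu) = f(y, \nu)$, since these coordinates are the only places where $\sigma x$ and $y$ differ.
\end{itemize}

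The main obstacle is the convention in (A) that a permutation acts on both arguments as $(\sigma x)_i = x_{\sigma^{-1}(i)}$. I have to check that the permutation built from $\tau$ carries $\mu$ onto $\nu$ under this convention, and not onto its inverse image. If the direction comes out reversed, I will use $\sigma^{-1}$ instead. Either choice works, because (A) holds for every permutation.
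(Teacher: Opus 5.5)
Your proof is correct and follows the paper's argument. Part (i) is (V) at $w_i(\mu) = 0$, (iii) is the coordinate-by-coordinate telescope at fixed $\mu$, and (ii) uses a permutation extending the matching of positive-weight pairs, then (A), then (i) to free the zero-weight values; the paper permutes $(y,\nu)$ onto $(x,\mu)$ rather than the reverse, which is immaterial. Your convention worry resolves without switching to $\sigma^{-1}$: with $\sigma = \tau$ on $\supp\mu$, one has $(\sigma\mu)_{\tau(i)} = \mu_{\sigma^{-1}(\tau(i))} = \mu_i = \nu_{\tau(i)}$ and $(\sigma x)_{\tau(i)} = x_i = y_{\tau(i)}$.
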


\begin{proof}
(i) is (V) with $w_i(\mu) = 0$. (ii) Choose a permutation $\sigma$ matching
the positive-weight coordinates of $(y,\nu)$ to those of $(x,\mu)$ pairwise.
Then $(\sigma y, \sigma\nu)$ and $(x,\mu)$ agree on every positive-weight
coordinate and on the weight vector, and differ at most in values at
zero-weight coordinates, which are free by (i). Conclude by (A). (iii) Change
the coordinates one at a time at fixed $\mu$. Each step is a (V) move and
every intermediate point is a configuration.
\end{proof}

\paragraph{Where the weight axiom comes from.} The principle, one mechanism
applied to every weighted coordinate, has a formula: a \emph{score}
$\varphi : \cV \to \R$ applied to every value and summed with the cap
weights. Let $\cS := \{\varphi : \cV \to \R \;:\; \Lip(\varphi) \le 1,\
\abs{\varphi} \le R\}$ be the score ball, which contains a translate of every
$1$-Lipschitz score because $R = \frac12\diam\cV$, and let
\begin{equation}\label{eq:linearsums}
  \cL^{w} \;:=\; \Bigl\{\, f_\varphi(x, \mu) = \sum_{i=1}^d w_i(\mu)\,
  \varphi(x_i) \;:\; \varphi \in \cS \,\Bigr\}
\end{equation}
be the \emph{linear sums} of the cap weight. They contain the weighted means
and value aggregations $\sum_i \mu_i\varphi(x_i)$ at $w = \mu$, the
unnormalized sums $\sum_i (\mu_i/\mumax)\varphi(x_i)$ at $w = \mu/\mumax$
and, with $\varphi \equiv R$, the count $R/\mumax$. Every linear sum
satisfies (V) with $L = 1$ and (A) whatever the weight axiom
(\Cref{lem:familycap}), and the class is meant to contain them at constant
$1$. The weight axiom is the tightest one that does.

\begin{proposition}[The modulus of the linear sums]\label{prop:modulus}
For all $\mu, \nu \in \simplex$,
\[
  \sup_{x \in \cV^d,\ \varphi \in \cS}\,
  \abs{\, f_\varphi(x, \mu) - f_\varphi(x, \nu) \,}
  \;=\; R\, \norm{w(\mu) - w(\nu)}_1 .
\]
\end{proposition}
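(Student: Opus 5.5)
The identity splits into an upper bound valid for every $x$ and $\varphi$, and an explicit choice of $x$ and $\varphi$ that attains it.

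\emph{Upper bound.} Fix $x$, $\varphi \in \cS$, and write $a := w(\mu)$, $b := w(\nu)$. Then
\[
  f_\varphi(x,\mu) - f_\varphi(x,\nu) \;=\; \sum_i (a_i - b_i)\,\varphi(x_i).
\]
Since $\abs{\varphi} \le R$, the absolute value is at most $R\sum_i\abs{a_i - b_i} = R\norm{a-b}_1$. This step uses only the bound on $\varphi$, not its Lipschitz property. Note that there is no cancellation constraint here: for a general cap weight, $\sum_i a_i$ and $\sum_i b_i$ need not be equal (for instance, for the shares $\mu/\mumax$). So the plain sup-norm bound is the right one, and it also shows why the modulus is $R\norm{\cdot}_1$ rather than half of it.

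\emph{Attainment.} Let $I_+ := \{i : a_i > b_i\}$ and $I_- := \{i : a_i < b_i\}$. I want $\varphi(x_i) = R$ for $i \in I_+$ and $\varphi(x_i) = -R$ for $i \in I_-$; the coordinates with $a_i = b_i$ do not matter. Since $\diam\cV = 2R$ is attained by compactness, choose $p, q \in \cV$ with $\rho(p,q) = 2R$. Define
\[
  \varphi(y) \;:=\; R - \rho(y, p),
\]
which is $1$-Lipschitz. It takes values in $[-R, R]$, because $0 \le \rho(y,p) \le 2R$, so $\varphi \in \cS$. Also $\varphi(p) = R$ and $\varphi(q) = -R$. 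Set $x_i := p$ for $i \in I_+$ and $x_i := q$ otherwise. The difference then equals
\[
  R\sum_{I_+}(a_i - b_i) + R\sum_{I_-}(b_i - a_i) \;=\; R\norm{a-b}_1,
\]
so the supremum is attained, and is therefore a maximum.

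\emph{Where care is needed.} The main point is the attainment step: the score must hit both extremes $\pm R$ at once while staying $1$-Lipschitz. That is possible exactly because $R$ is half the diameter and the diameter is attained, which is why $R$, and not $\diam\cV$, is the correct constant. Permutation invariance of $w$ is not needed for this proposition.
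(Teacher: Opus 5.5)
Your proof is correct and matches the paper's argument: the upper bound from $\abs{\varphi} \le R$ applied termwise, and attainment via a diametral pair $p, q$ with the score $\varphi = R - \rho(\cdot, p) \in \cS$, placing $p$ on the coordinates where $w_i(\mu) > w_i(\nu)$ and $q$ on those where $w_i(\mu) < w_i(\nu)$. The remark that the supremum is therefore a maximum is a harmless addition.
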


So (W) says exactly that $f$ is no more sensitive to a change of weights
than the linear sums are, uniformly over the values. Among weight axioms
$\abs{f(x,\mu) - f(x,\nu)} \le L\,c(\mu,\nu)$ with a modulus $c$ that does
not look at $x$, it is the tightest one under which every linear sum keeps
constant $1$, because any such $c$ dominates the supremum above. In this
sense $\cGw_L$ is the smallest class cut out by (V), (A) and a value-free
weight axiom that contains the linear sums in its unit ball. A modulus that
does look at $x$ is a transport distance between the weighted value
measures, and the class it cuts out is smaller and excludes the selection
readouts of the next subsection: two coordinates at the same value $R$
with equal shares, and a share $\delta$ shifted from one to the other,
leave the weighted value measure unchanged while the selected maximum
moves by $\delta R$, so no constant serves, where (W) prices the move at
$2R\delta$ (\Cref{rem:whyuniform}). Weights are individuals and not mass,
and this is why the metric of \Cref{sec:representation} is a matching
distance on multisets and not a distance between measures. What (W)
excludes is the rank readouts: the weighted median and quantiles, trimmed
means and top-$k$ selections jump when a weight crosses a threshold, two
atoms at $\pm R$ with weights $\tfrac12 \pm \delta$ moving the weighted
median by $2R$ as $\delta$ changes sign, so no constant serves; the
selected maximum, which is weighted and not ranked, stays in.

\begin{lemma}[Linear sums are members]\label{lem:familycap}
Every $f_\varphi \in \cL^w$ satisfies (V) with $L = 1$ and (A). More
generally, let $\varphi : \cV \to \R^m$ satisfy $\abs{\varphi(s) -
\varphi(t)}_2 \le \rho(s, t)$ and $\abs{\varphi}_2 \le R$, and put
$f_\varphi(x, \mu) := \sum_i w_i(\mu)\,\varphi(x_i)$. Then for all
configurations
\[
  \abs{\, f_\varphi(x,\mu) - f_\varphi(y,\nu) \,}_2 \;\le\;
  \Mw\bigl(\Cl(x,\mu), \Cl(y,\nu)\bigr),
\]
so every coordinate of $f_\varphi$, and every $1$-Lipschitz function of
$f_\varphi$, lies in $\cGw_1$.
\end{lemma}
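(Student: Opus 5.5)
The plan is to show that $f_\varphi$ depends on the configuration only through its lift, to write it as a sum over atoms, and to bound the difference of two such sums along an optimal partial matching, one atom pair at a time. For $P = \Cl(x,\mu)$ set $F(P) := \sum_{p=(s,u)\in P} u\,\varphi(s)$, the sum running over the atoms with multiplicity. Coordinates with $\mu_i = 0$ have $w_i(\mu) = 0$ and contribute nothing, so $f_\varphi(x,\mu) = F(\Cl(x,\mu))$. Because $F$ is a function of a multiset, (A) is immediate.

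The core estimate is per atom. Take a matched pair $p = (s,u)$, $q = (t,v)$, and suppose without loss of generality that $u \le v$. Then
\[
  u\varphi(s) - v\varphi(t) \;=\; u\bigl(\varphi(s) - \varphi(t)\bigr) + (u - v)\,\varphi(t).
\]
Using $\abs{\varphi(s)-\varphi(t)}_2 \le \rho(s,t)$ and $\abs{\varphi}_2 \le R$, this gives
\[
  \abs{u\varphi(s) - v\varphi(t)}_2 \;\le\; \min(u,v)\,\rho(s,t) + R\,\abs{u-v} \;=\; \vr(p,q).
\]
For an unmatched atom $p = (s,u)$ on either side, $\abs{u\varphi(s)}_2 \le R\,u = \vr(p,o)$. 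Now take a partial matching $\sigma$ attaining the minimum in \eqref{eq:matching}. Split $F(P) - F(Q)$ into the matched-pair differences plus the unmatched terms, and apply the triangle inequality in $\R^m$. This yields $\abs{F(P) - F(Q)}_2 \le \Mw(P,Q)$, which is the displayed inequality. Each coordinate of $f_\varphi$, and any $1$-Lipschitz function $g : \R^m \to \R$ of $f_\varphi$, then satisfies the same bound, and is likewise a function of the lift.

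It remains to verify that every function $h = g\circ f_\varphi$ satisfying the displayed inequality obeys the axioms with $L = 1$. I will do this directly from the matching distance rather than through \Cref{thm:rep}, to avoid any circularity in case this lemma is used in that proof. Axiom (A) holds because $h$ factors through $\Cl$.

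For (V), let $x$ and $y$ differ only in coordinate $i$, with the same $\mu$. The identity matching of the atoms of $\Cl(x,\mu)$ and $\Cl(y,\mu)$ has equal heights on every pair, so its cost is $w_i(\mu)\,\rho(x_i,y_i)$. Hence $\Mw \le w_i(\mu)\rho(x_i,y_i)$.

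For (W), fix $x$ and compare $\mu$ with $\nu$. Match atom $i$ to atom $i$ whenever both $\mu_i$ and $\nu_i$ are positive; such a pair costs $R\abs{w_i(\mu) - w_i(\nu)}$, since the value is the same. Leave every coordinate that is positive on only one side unmatched; it costs $R$ times its single nonzero cap weight, which again equals $R\abs{w_i(\mu)-w_i(\nu)}$. Summing over coordinates gives $\Mw \le R\,\norm{w(\mu)-w(\nu)}_1$.

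No step is a real obstacle. The one point needing care is the per-pair estimate: the convex-type split must put the factor $\min(u,v)$ on the value difference and $R$ on the height difference, so that it reproduces exactly the cone cost \eqref{eq:conecost}. The other is checking that zero-weight coordinates and unmatched atoms are accounted for consistently, with each costing $R$ times its height on both the $F$ side and the $\Mw$ side.
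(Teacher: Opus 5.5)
Your proof is correct and follows the paper's: the same per-atom estimate $\abs{u\varphi(s)-v\varphi(t)}_2 \le \min(u,v)\rho(s,t) + R\abs{u-v}$ along a partial matching, with each unmatched atom charged $Ru$. For the membership claim you inline the upper bounds of \Cref{lem:moves}(i)--(ii) instead of citing the easy direction of \Cref{thm:rep}(i), as the paper does; the two are the same argument.
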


\begin{proof}
A change of $x_i$ alone changes $f_\varphi$ by
$w_i(\mu)\abs{\varphi(x_i) - \varphi(y_i)} \le w_i(\mu)\rho(x_i,y_i)$, and
anonymity is the equivariance of $w$. For the joint bound write $P =
\Cl(x,\mu)$, $Q = \Cl(y,\nu)$, so that $f_\varphi(x,\mu) = \sum_{(s,u) \in
P} u\,\varphi(s)$, and let $\sigma$ be any partial matching. A matched pair
$(s,u) \mapsto (t,v)$ contributes $\abs{u\varphi(s) - v\varphi(t)}_2 \le
\min(u,v)\abs{\varphi(s) - \varphi(t)}_2 + \abs{u - v}\sup\abs{\varphi}_2
\le \vr((s,u),(t,v))$, and an unmatched atom contributes $\abs{u\varphi(s)}_2
\le Ru = \vr((s,u), o)$. Summing gives the cost of $\sigma$, and minimizing
over $\sigma$ gives $\Mw(P,Q)$. The last claim is \Cref{thm:rep}(i), or
directly: a coordinate of $f_\varphi$ satisfies (V) and (A) as above and (W)
because $\abs{\sum_i (w_i(\mu) - w_i(\nu))\varphi(x_i)}_2 \le
R\norm{w(\mu) - w(\nu)}_1$.
\end{proof}

\begin{proof}[Proof of \Cref{prop:modulus}]
Upper bound: $\abs{\sum_i (w_i(\mu) - w_i(\nu))\varphi(x_i)} \le
\norm{w(\mu) - w(\nu)}_1 \sup\abs{\varphi} \le R\norm{w(\mu) - w(\nu)}_1$.
Lower bound: by compactness pick $a, b \in \cV$ with $\rho(a,b) = 2R$, and
let $\varphi(z) := R - \rho(z, a)$, which is $1$-Lipschitz with $\varphi(a) =
R$, $\varphi(b) = -R$ and $\abs{\varphi} \le R$, so $\varphi \in \cS$. Put
$x_i = a$ where $w_i(\mu) > w_i(\nu)$ and $x_i = b$ where $w_i(\mu) <
w_i(\nu)$, anywhere else otherwise. Then $\sum_i (w_i(\mu) -
w_i(\nu))\varphi(x_i) = R\sum_i \abs{w_i(\mu) - w_i(\nu)}$.
\end{proof}

\begin{remark}[The pointwise alternative]\label{rem:whyuniform}
One could instead take the modulus at the given $x$, $c_x(\mu,\nu) :=
\sup_{\varphi \in \cS}\abs{f_\varphi(x,\mu) - f_\varphi(x,\nu)}$, which by
Kantorovich--Rubinstein duality on $\cV$ is a transport distance between the
weighted value measures $\sum_i w_i(\mu)\delta_{x_i}$ and $\sum_i
w_i(\nu)\delta_{x_i}$: the balanced Kantorovich distance for the conservative
weight and the flat distance for the accumulative one. The resulting axiom
is stronger and its class smaller, and under it a reweighting among
coordinates at equal values is free. That is the right axiom when the
members are functionals of the weighted value measure, averages and
attention pooling, and the wrong one for the selection readouts. Take $\cV =
[-R, R]$ and $f = \max_i s_i x_i$, with a leader at value $0$ and share $1$
and two coordinates at value $R$ with equal shares $s$, and shift a share
$\delta$ from one of the two to the other. The weighted value measure does
not change at all, while $f$ changes by $\delta R$, so no constant serves.
Under (W) the move is priced at $2R\delta$ and the maximum has constant $1$
(\Cref{prop:roster}).
\end{remark}

\subsection{The cone and the matching distance}\label{app:cone}

\begin{lemma}[The cone is a compact metric space]\label{lem:cone}
$\vr$ is a metric on $\cK$, $(\cK, \vr)$ is compact, and the distance to the
apex is $R$ times the height.
\end{lemma}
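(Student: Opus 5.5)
The plan is to check the metric axioms for $\vr$ directly from the formula \eqref{eq:conecost}, then get compactness from a continuous surjection, and note that the apex statement is the definition. Well-definedness comes first: at height $u = 0$ the formula $\min(u,v)\rho(x,y) + R\abs{u-v}$ reduces to $Rv$ for every $x$. This agrees with $\vr((y,v),o) = Rv$, so $\vr$ extends consistently to the quotient, with the apex written as any $(x,0)$. With that convention a single formula $\vr((x,u),(y,v)) = \min(u,v)\rho(x,y) + R\abs{u-v}$ holds on all of $\cV\times[0,1]$ and is constant on the collapsed edge.

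Symmetry and nonnegativity are immediate. For definiteness, $\vr = 0$ forces $u = v$ and then $u\,\rho(x,y) = 0$, so either $u = 0$ (both points are the apex) or $x = y$. The triangle inequality is the main step. Take three points $(x,u),(y,v),(z,s)$ and bound $\min(u,s)\rho(x,z) + R\abs{u-s}$ by the sum of the other two costs.

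The case analysis runs on the middle height $v$.
\begin{itemize}
\item If $v \ge \min(u,s)$, say $v \ge m := \min(u,s)$, then $\min(u,v) \ge m$ and $\min(v,s) \ge m$. The triangle inequality for $\rho$ gives $m\rho(x,z) \le \min(u,v)\rho(x,y) + \min(v,s)\rho(y,z)$, and $\abs{u-s} \le \abs{u-v} + \abs{v-s}$ handles the height terms.
\item If $v < m$, the value term cannot be routed through $y$ at full weight. Instead use $\rho \le 2R$: $m\rho(x,z) \le v\rho(x,z) + (m-v)2R$. Here $v\rho(x,z) \le v\rho(x,y) + v\rho(y,z)$, which equals the two value terms since $\min(u,v) = \min(v,s) = v$. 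The excess $2R(m-v)$ plus $R\abs{u-s}$ is at most $R\bigl((u-v)+(s-v)\bigr) = R\abs{u-v} + R\abs{v-s}$, because $u,s \ge m > v$ and $2(m-v) + \abs{u-s} = (u-v) + (s-v)$.
\end{itemize}
This second case is where the choice of price $R$ per unit of height, with $\rho \le 2R$, is exactly what is needed. I expect it to be the only delicate point. The apex cases are covered by setting the corresponding height to $0$.

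For compactness, the map $\cV\times[0,1]\to\cK$ is continuous into $(\cK,\vr)$, since $\vr$ is jointly continuous in $(x,u)$. For example, $\abs{\min(u,v)\rho(x,y)} \le \rho(x,y)$ and the height term is Lipschitz. It is also surjective from a compact space, so $(\cK,\vr)$ is compact. Finally, the distance to the apex is $R$ times the height by definition, or by the unified formula at $v = 0$.
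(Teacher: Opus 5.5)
Your proof is correct and is essentially the paper's: a direct check in which the triangle inequality is split on the middle height, with $\rho \le 2R$ used exactly when the middle height lies below both endpoint heights, and compactness from the continuous surjection $\cV\times[0,1]\to\cK$ via $\vr \le \rho + R\abs{u-v}$. Where you differ is only in presentation: you merge the paper's two upper cases into your single case $v \ge \min(u,s)$, and you absorb its separate apex cases by allowing height $0$ in the unified formula, which is a harmless streamlining.
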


\begin{proof}
$\vr$ is well defined on the quotient, since $\vr((x,u),(y,0)) = Ru$ does not
depend on $y$, it is symmetric, and it vanishes only on the diagonal: for
$u, v > 0$, $\vr = 0$ forces $u = v$ and then $\rho(x,y) = 0$. Triangle
inequality: let $p = (x,u)$, $q = (z,t)$, $r = (y,v)$ with positive heights.
We show $\vr(p,r) \le \vr(p,q) + \vr(q,r)$ by the position of $t$. If $t \ge
u \vee v$, the right side is $u\rho(x,z) + v\rho(z,y) + R(2t - u - v) \ge
(u\wedge v)\rho(x,y) + R\abs{u-v}$, since $2t - u - v \ge \abs{u - v}$. If $t
\le u \wedge v$, the right side is $t(\rho(x,z) + \rho(z,y)) + R(u + v - 2t)
\ge t\rho(x,y) + R(u + v - 2t)$, and subtracting $\vr(p,r) = (u\wedge
v)\rho(x,y) + R(u\vee v - u\wedge v)$ leaves $(u\wedge v - t)(2R - \rho(x,y))
\ge 0$ because $\rho \le 2R$. If $u \le t \le v$, the other order being
symmetric, the right side is $u\rho(x,z) + t\rho(z,y) + R(v - u) \ge
u(\rho(x,z) + \rho(z,y)) + R(v-u) \ge \vr(p,r)$. With the apex in the middle,
$\vr(p,o) + \vr(o,r) = R(u+v) \ge (u\wedge v)\rho(x,y) + R\abs{u-v}$ because
$(u \wedge v)\rho(x,y) \le 2R(u\wedge v)$, and with the apex as an endpoint
$\vr(o,r) = Rv \le Rt + R\abs{t - v} \le \vr(o,q) + \vr(q,r)$. Compactness:
$\vr((x,u),(y,v)) \le \rho(x,y) + R\abs{u-v}$ and $\vr((x,u),o) = Ru$, so the
quotient map from the compact $\cV \times [0,1]$ is continuous into
$(\cK,\vr)$ and onto.
\end{proof}

\begin{lemma}[Factorization]\label{lem:factor}
$\Cl(x,\mu) = \Cl(y,\nu)$ if and only if the multisets $\{(x_i,\mu_i) :
\mu_i > 0\}$ and $\{(y_j,\nu_j) : \nu_j > 0\}$ coincide. Consequently every
$f \in \cGw_L$ factors as $f = F \circ \Cl$ for a unique $F : \cM_d \to
\R$, and $\Cl$ is a bijection from the quotient of $\cC$ by permutations and
by values at zero-weight coordinates onto $\cM_d$.
\end{lemma}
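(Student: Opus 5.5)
The argument has two parts. The first is the lift identity itself, which uses only the properties of the cap weight. The second deduces the factorization from \Cref{lem:basic}(ii).

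\emph{The lift identity.} Recall that $\Cl(x,\mu)$ is the multiset $\{(x_i, w_i(\mu)) : \mu_i > 0\}$, since $w_i(\mu) > 0$ exactly when $\mu_i > 0$.

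The direction ``if'' is straightforward. Suppose the multisets of (value, weight) pairs coincide. Then there is a permutation $\sigma$ with $\sigma\nu$ and $\mu$ equal on every coordinate. On the support this holds pair by pair, and off the support both vectors vanish, with the supports having the same size because the multisets have the same cardinality. Hence $\sigma\nu = \mu$. Because $w$ commutes with permutations, $w(\nu)$ is the corresponding permutation of $w(\mu)$, so the two lifts are the same multiset.

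The direction ``only if'' is the step that needs care, and it is where injectivity of $w$ enters. Assume $\Cl(x,\mu) = \Cl(y,\nu)$. The multisets of positive heights coincide, so they have equal cardinality and the supports have the same size. Hence there is a permutation $\sigma$ with $w(\sigma\nu) = \sigma w(\nu) = w(\mu)$ as vectors: match the support coordinates by the atom bijection, and match the zero coordinates to each other. Injectivity of $w$ then gives $\sigma\nu = \mu$. Moreover, $\sigma$ was chosen from the atom bijection, which also pairs $x_i$ with $y_{\sigma^{-1}(i)}$ on the support. Therefore the pairs $(x_i,\mu_i)$ and $((\sigma y)_i, (\sigma\nu)_i)$ agree for every $i$ in the support.

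One point must be checked here. When several atoms have the same height, the bijection matching heights must be taken to be the atom bijection itself, so that the values are carried along with the heights. Under that choice the multisets of $(x_i,\mu_i)$ coincide.

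\emph{Consequences.} Given the identity, \Cref{lem:basic}(ii) shows that $f(x,\mu)$ depends only on $\Cl(x,\mu)$. So $F(P) := f(z)$ for any $z$ with $\Cl(z) = P$ is well defined on $\cM_d = \Cl(\cC)$. It is unique because $\Cl$ is onto $\cM_d$.

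For the bijection statement, two configurations are identified by permutations together with changes of values at zero-weight coordinates exactly when their multisets of positive-weight pairs coincide. One direction is immediate. For the other, permute the coordinates as above and then reset the values at the zero-weight coordinates. By the identity, this equivalence relation is exactly ``same lift''. So $\Cl$ descends to an injective map on the quotient, which is onto $\cM_d$ by definition of $\cM_d$.
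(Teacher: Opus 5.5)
Your proof is correct and follows the paper's own argument. You use equivariance of $w$ for the ``if'' direction; for ``only if'' you build a permutation from the atom bijection so that values travel with heights even under ties, use equivariance to get $w(\sigma\nu)=w(\mu)$ and injectivity of $w$ to conclude $\sigma\nu=\mu$, and read off the factorization from \Cref{lem:basic}(ii). Your explicit check of the bijection from the quotient, which the paper leaves implicit, is a harmless addition.
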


\begin{proof}
If the multisets of $(x_i,\mu_i)$ coincide, so do those of $(x_i, w_i(\mu))$,
by equivariance. Conversely, if the multisets of pairs (value, cap weight)
coincide, there is a permutation $\pi$ of $[d]$ carrying $\supp\nu$ onto
$\supp\mu$ with $(y_{\pi^{-1}(i)}, w_{\pi^{-1}(i)}(\nu)) = (x_i, w_i(\mu))$
for $i \in \supp\mu$, so $\pi y$ agrees with $x$ on $\supp\mu$ and
$w(\pi\nu) = w(\mu)$ by equivariance. Then $\pi\nu = \mu$ by injectivity.
The factorization is \Cref{lem:basic}(ii).
\end{proof}

\begin{lemma}[Metric, measure form, duality]\label{lem:metric}
\begin{enumerate}
\item[(i)] $\Mw$ is a metric on $\cM_d$.
\item[(ii)] $\Mw(P,Q) = \Wone^{\vr}\bigl(P + n_Q\,\delta_o,\; Q + n_P\,\delta_o
\bigr)$, the Kantorovich distance with ground cost $\vr$ between the two
apex-padded measures, both of total mass $n_P + n_Q$.
\item[(iii)] With $\Lipo(\cK) := \{\psi : \cK \to \R \;:\; \Lip_{\vr}(\psi)
\le 1,\ \psi(o) = 0\}$,
\[
  \Mw(P, Q) \;=\; \sup_{\psi \in \Lipo(\cK)}\;
  \Bigl[\, \sum_{(x,u) \in P} \psi(x,u) - \sum_{(y,v) \in Q} \psi(y,v)
  \,\Bigr],
\]
and every $\psi \in \Lipo(\cK)$ satisfies $\abs{\psi(x,u)} \le Ru$.
\end{enumerate}
\end{lemma}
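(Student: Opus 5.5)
I would prove (ii) first, since the padded measure form is what makes (i) and (iii) short. Fix $P, Q \in \cM_d$ with $n_P, n_Q$ atoms, and set $\bar P := P + n_Q\delta_o$, $\bar Q := Q + n_P\delta_o$, two counting measures of equal total mass $N := n_P + n_Q$ on the compact metric space $(\cK, \vr)$ of \Cref{lem:cone}. Every partial matching $\sigma$ extends to a bijection between the $N$ atoms of $\bar P$ and those of $\bar Q$. An unmatched atom of $P$ goes to one of the $n_P$ apex atoms of $\bar Q$, and symmetrically for $Q$. The leftover apex atoms are paired among themselves at cost $\vr(o,o) = 0$; their number is $N - \abs{\sigma} - (n_P - \abs{\sigma}) - (n_Q - \abs{\sigma}) = \abs{\sigma}$ on each side, so this is consistent. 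The cost of this bijection is exactly the cost of $\sigma$ in \eqref{eq:matching}. Conversely, a bijection of the $N$ atoms induces a partial matching: keep the pairs of two non-apex atoms, and declare an atom unmatched when it is paired with the apex. The pairing cost is unchanged, and there is no apex--apex cost. Hence $\Mw$ is the minimum over bijections, i.e. over permutation matrices. The Kantorovich problem between $\bar P$ and $\bar Q$ is a linear program over the transportation polytope with integer marginals. By Birkhoff--von Neumann (after splitting multiplicities into unit atoms) its extreme points are permutation matrices, so the LP optimum equals the assignment optimum. This gives (ii), and also shows the minimum in \eqref{eq:matching} is attained.

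For (i), I would read off symmetry and nonnegativity directly. For definiteness, $\Mw(P,Q)=0$ forces every atom of positive height to be matched at zero $\vr$-cost, since unmatched atoms cost $Ru>0$. Because $\vr$ is a metric, matched atoms coincide, so $P = Q$. For the triangle inequality I would use (ii) with a common padding: $\Wone^{\vr}$ between equal-mass measures is unchanged by adding the same mass at $o$ to both sides. Indeed, an optimal plan can be assumed not to move apex mass to another point while also moving mass into the apex, by the triangle inequality of $\vr$ at $o$. Alternatively, compose the matchings directly: given optimal $\sigma_1: P \leftrightarrow Q$ and $\sigma_2: Q \leftrightarrow S$, define $\sigma_2\circ\sigma_1$ on the atoms for which both are defined. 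Each resulting cost term is bounded via the metric triangle inequality on $\cK$ with the apex as the intermediate point when an atom drops out; an atom of $P$ matched to an atom of $Q$ that is unmatched in $\sigma_2$ pays $\vr(p,o) \le \vr(p,q) + \vr(q,o)$. I would take this direct composition route, since it avoids the padding-invariance subtlety. It gives $\Mw(P,S) \le \Mw(P,Q) + \Mw(Q,S)$.

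For (iii), I apply Kantorovich--Rubinstein duality on the compact space $(\cK,\vr)$ to $\bar P, \bar Q$. This gives $\sup_{\Lip\psi\le1}\int\psi\,d(\bar P-\bar Q)$. Since the two masses are equal, $\psi$ may be shifted so that $\psi(o)=0$. The apex atoms then contribute nothing, and the expression becomes the stated sum. The bound $\abs{\psi(x,u)} = \abs{\psi(x,u) - \psi(o)} \le \vr((x,u),o) = Ru$ follows from \Cref{lem:cone}.

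The main obstacle is the identification in (ii), in particular the bookkeeping of apex atoms: one must check that padding by exactly $n_Q$ and $n_P$ apex atoms gives enough slack to leave any subset unmatched, and that apex--apex pairs are free. The integrality of optimal plans is standard once stated via Birkhoff.
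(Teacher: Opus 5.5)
Your proposal is correct and matches the paper's proof: the bijection/partial-matching correspondence with Birkhoff for (ii), direct composition of matchings with the apex as the intermediate point for the triangle inequality in (i), and Kantorovich--Rubinstein duality with the shift $\psi \mapsto \psi - \psi(o)$ for (iii). The only difference is that you prove (ii) before (i), which changes nothing of substance.
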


\begin{proof}
(i) Symmetry is clear. If $\Mw(P,Q) = 0$ the optimal $\sigma$ has zero cost,
so no atom is unmatched, an unmatched atom costing $Ru > 0$, and every
matched pair coincides, hence $P = Q$. Triangle inequality: given partial
matchings $\sigma_1$ from $P$ to $Q$ and $\sigma_2$ from $Q$ to $S$, define
$\sigma$ from $P$ to $S$ by $\sigma(p) = \sigma_2(\sigma_1(p))$ whenever both
are defined, and leave every other atom unmatched. By the triangle inequality
of $\vr$ with the apex included (\Cref{lem:cone}): a $p$ matched through $q$
to $s$ costs $\vr(p,s) \le \vr(p,q) + \vr(q,s)$, a $p$ matched to a $q$ that
$\sigma_2$ leaves unmatched costs $\vr(p,o) \le \vr(p,q) + \vr(q,o)$, an $s$
matched by $\sigma_2$ to a $q$ that $\sigma_1$ leaves unmatched costs
$\vr(s,o) \le \vr(q,s) + \vr(q,o)$, and atoms unmatched in $\sigma_1$ or
$\sigma_2$ keep their apex cost. Each term of $\cost(\sigma_1) +
\cost(\sigma_2)$ is used at most once, so $\cost(\sigma) \le \cost(\sigma_1)
+ \cost(\sigma_2)$.
(ii) The Kantorovich problem between the padded measures is a transportation
linear program between two counting measures of the same mass, $n_P + n_Q$
unit atoms on each side, so by Birkhoff's theorem its polytope is the convex
hull of the permutation matrices \citep[Prop.~2.1]{peyre2019computational}
and an optimal integer plan exists, a bijection between the atoms of the
two sides.
A bijection matches some atoms of $P$ to atoms of $Q$, the remaining atoms of
each side to apex copies, and apex copies to each other at cost $0$, and its
cost is the cost of the partial matching it induces. Conversely every partial
matching extends to such a bijection, the counts of unmatched atoms and spare
apex copies agreeing on both sides. Hence the two minima coincide.
(iii) Kantorovich--Rubinstein duality on the compact metric space $(\cK,
\vr)$, for two finite measures of equal mass: $\Wone(\alpha,\beta) =
\sup\{\int\psi\,d(\alpha-\beta) : \Lip_\vr\psi \le 1\}$. Here $\alpha -
\beta = P - Q + (n_Q - n_P)\delta_o$ has total mass $0$, so $\psi \mapsto
\psi - \psi(o)$ does not change the integral and one may restrict to
$\psi(o) = 0$, which kills the apex term. The bound is $\abs{\psi(x,u)} =
\abs{\psi(x,u) - \psi(o)} \le \vr((x,u),o) = Ru$.
\end{proof}

\begin{remark}[The dual ball]\label{rem:dualball}
For $\varphi \in \cS$ the function $\psi_\varphi(x,u) := u\,\varphi(x)$ lies
in $\Lipo(\cK)$ and $\int\psi_\varphi\, d\Cl(x,\mu) = f_\varphi(x,\mu)$, so
the linear sums are the sub-ball of scores that read the height linearly.
The dual ball is larger: it contains scores that read the height
nonlinearly, such as $\psi(x,u) = \tfrac{R}{2}u^2$, whose functional is
$\tfrac R2\sum_i w_i(\mu)^2$. This is where the members beyond the linear
sums live: the supremum over $x$ in \Cref{prop:modulus} enlarged the score
ball from $\{u\varphi(x)\}$ to all Lipschitz functions on the cone vanishing
at the apex, and $\Mw$ is the dual of the enlarged ball.
\end{remark}

\begin{lemma}[Elementary moves]\label{lem:moves}
Let $(x,\mu) \in \cC$.
\begin{enumerate}
\item[(i)] (value move, exact) If $y$ differs from $x$ in coordinate $i$
only, then $\Mw\bigl(\Cl(x,\mu), \Cl(y,\mu)\bigr) = w_i(\mu)\,\rho(x_i, y_i)$.
\item[(ii)] (weight move) For every $\nu$, $\Mw\bigl(\Cl(x,\mu), \Cl(x,\nu)
\bigr) \le R\,\norm{w(\mu) - w(\nu)}_1$, with strict inequality possible:
two atoms at the same value exchanging heights are at distance $0$ while
$\norm{w(\mu) - w(\nu)}_1 > 0$.
\item[(iii)] (re-representation) $\Cl(\sigma x, \sigma\mu) = \Cl(x,\mu)$,
and changing $x_i$ with $\mu_i = 0$ does not change $\Cl$.
\end{enumerate}
\end{lemma}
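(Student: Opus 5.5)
The plan is to prove the three parts separately, each by exhibiting an explicit partial matching, which bounds $\Mw$ from above, and, for the equality in (i), a dual certificate from \Cref{lem:metric}(iii), which bounds it from below.

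\emph{(iii).} This is immediate from the definition \eqref{eq:lift}. Equivariance of $w$ gives $w_i(\sigma\mu) = w_{\sigma^{-1}(i)}(\mu)$, so the lift of $(\sigma x, \sigma\mu)$ is the same multiset of atoms, reindexed. An index with $\mu_i = 0$ is outside $\supp\mu$ and contributes no atom, so changing $x_i$ there does not change $\Cl$.

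\emph{(ii).} I will use the identity matching on the common support, pairing $(x_i, w_i(\mu))$ with $(x_i, w_i(\nu))$ whenever both weights are positive. A matched pair then costs $\min(\cdot)\,\rho(x_i, x_i) + R\,\abs{w_i(\mu) - w_i(\nu)} = R\,\abs{w_i(\mu) - w_i(\nu)}$. An atom with positive weight on only one side is left unmatched and costs $R$ times its height, which is again $R\,\abs{w_i(\mu) - w_i(\nu)}$ because the other cap weight is zero exactly there. Summing over $i$ gives $R\norm{w(\mu)-w(\nu)}_1$. For strictness, take $d = 2$, $x_1 = x_2$ and $\nu$ the swap of $\mu$ with $\mu_1 \neq \mu_2$. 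The two lifts are the same multiset, so $\Mw = 0$. On the other hand $w(\nu)$ is the swap of $w(\mu)$, and by injectivity $w(\mu) \neq w(\nu)$, so the $\ell_1$ norm is positive.

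\emph{(i), the main step.} Write $u = w_i(\mu)$. The upper bound $\Mw \le u\,\rho(x_i, y_i)$ comes from matching every atom to its counterpart: only the pair at $i$ has a cost, namely $u\rho(x_i, y_i)$. If $u = 0$ both sides are $0$ by (iii). For the lower bound, which is where care is needed, I will use the duality of \Cref{lem:metric}(iii) with a bump at $x_i$. Put
\[
\psi(z,v) := \bigl(\min(v,u)\,\rho(z, y_i) - \min(v,u)\,\rho(z, x_i)\bigr)_{\text{trimmed}},
\]
or more simply $\psi(z,v) := \min(v,u)\,\phi(z)$ with $\phi(z) := \tfrac12\bigl(\rho(z,y_i) - \rho(z,x_i)\bigr)$. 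This $\phi$ is $1$-Lipschitz with $\abs{\phi} \le \tfrac12\rho(x_i,y_i) \le R$.

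The obstacle is that atoms at other coordinates with the same height appear identically in both lifts. They cancel in $\sum_P \psi - \sum_Q \psi$ regardless of $\psi$, because the two lifts differ only in the atom at $i$, so the dual value is $\psi(x_i,u) - \psi(y_i,u) = u\,(\phi(x_i) - \phi(y_i)) = u\,\rho(x_i,y_i)$. It remains to check that $\psi \in \Lipo(\cK)$, that is, that it is $1$-Lipschitz for $\vr$ and vanishes at the apex. It vanishes at height $0$ since $\min(0,u) = 0$. For the Lipschitz bound,
\[
\abs{\min(v,u)\phi(z) - \min(v',u)\phi(z')} \le \min(v,v')\abs{\phi(z)-\phi(z')} + \abs{\min(v,u)-\min(v',u)}\,\sup\abs{\phi} \le \vr,
\]
using that $\min(\cdot,u)$ is $1$-Lipschitz and bounded by the smaller of the two heights. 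This is the same computation as in \Cref{lem:familycap}, so the check should go through. It gives equality in (i).
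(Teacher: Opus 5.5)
Your proposal is correct and follows the paper's route: identity matchings give the upper bounds in (i) and (ii), the swapped-heights example gives strictness, and a dual certificate from \Cref{lem:metric}(iii) gives the lower bound in (i), where the paper takes the linear-sum score $\psi(z,v) = v\,(R - \rho(z,x_i))$ directly from \Cref{rem:dualball}. Your $\min(v,u)\,\phi(z)$ also works, but the truncation at height $u$ is unnecessary, since $\phi \in \cS$ already makes $v\,\phi(z)$ a linear-sum score, and the garbled ``trimmed'' first formula should be deleted.
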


\begin{proof}
(i) The identity matching gives $\le \vr((x_i,w_i),(y_i,w_i)) =
w_i\rho(x_i,y_i)$. For $\ge$, take $\psi(z,u) := u\,(R - \rho(z, x_i))$,
which is $\psi_\varphi$ for $\varphi = R - \rho(\cdot, x_i) \in \cS$, hence
in $\Lipo(\cK)$ by \Cref{rem:dualball}. The two lifts differ in one atom and
$\psi(x_i,w_i) - \psi(y_i,w_i) = w_i\rho(x_i,y_i)$, so \Cref{lem:metric}(iii)
gives the lower bound. (ii) Match coordinate $i$ to itself where both weights
are positive, at cost $R\abs{w_i(\mu) - w_i(\nu)}$ with no value motion, and
send an atom with a zero partner to the apex, at cost $Rw_i$, again
$R\abs{w_i(\mu) - w_i(\nu)}$. The example: $\{(t,u),(t,v)\}$ and
$\{(t,v),(t,u)\}$ are the same multiset. (iii) is the definition of the lift.
\end{proof}

\begin{lemma}[Continuity, compactness, diameter]\label{lem:compact}
\begin{enumerate}
\item[(i)] For all $(x,\mu), (y,\nu) \in \cC$,
\[
  \Mw\bigl(\Cl(x,\mu), \Cl(y,\nu)\bigr) \;\le\; \abs{w(\mu)}\,\max_i\rho(x_i,y_i) +
  R\,\norm{w(\mu) - w(\nu)}_1 ,
\]
where $\abs{w(\mu)} := \sum_i w_i(\mu)$. In particular $\Cl : \cC \to
(\cM_d, \Mw)$ is continuous.
\item[(ii)] $(\cM_d, \Mw)$ is compact and $\Cl$ is a quotient map: a
function on $\cM_d$ is continuous if and only if its composition with $\Cl$
is. In particular the tail masses $T_k(\mu) = \sum_{i > k}\mu_{(i)}$, with
the weights sorted decreasingly, and $\mumax$ are continuous on $(\cM_d,
\Mw)$.
\item[(iii)] $\Mw(P,Q) \le R\,(\abs{P} + \abs{Q})$ with $\abs{P} := \int
u\,dP(x,u)$ the total height, hence $\diam(\cM_d, \Mw) \le 2R\sup_\mu
\abs{w(\mu)}$.
\end{enumerate}
\end{lemma}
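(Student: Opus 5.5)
For (i) I would construct a feasible partial matching directly from the coordinates and bound its cost by the right-hand side. Match each coordinate $i \in \supp\mu \cap \supp\nu$ to itself. By the cone cost \eqref{eq:conecost} this pair costs
\[
\min(w_i(\mu), w_i(\nu))\,\rho(x_i,y_i) + R\,\abs{w_i(\mu)-w_i(\nu)} \;\le\; w_i(\mu)\max_j\rho(x_j,y_j) + R\,\abs{w_i(\mu)-w_i(\nu)} .
\]
Leave unmatched the atoms of coordinates in exactly one support. Such an atom costs $R\,w_i(\mu)$ or $R\,w_i(\nu)$, and since the other weight is zero, this equals $R\abs{w_i(\mu)-w_i(\nu)}$. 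Summing over $i$ gives the displayed bound. Alternatively, I could combine \Cref{lem:moves}(i),(ii) with the triangle inequality (\Cref{lem:metric}(i)), first moving values at weight $\mu$ and then moving weights. Either way, continuity follows: $w$ is continuous, and $\max_i\rho(x_i,y_i)\to0$ as $(y,\nu)\to(x,\mu)$ in the product topology of $\cV^d\times\simplex$, while $\abs{w(\mu)}\le d$ stays bounded.

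For (ii), the space $\cC=\cV^d\times\simplex$ is compact and $\Cl$ is continuous and onto $\cM_d$, so $\cM_d$ is compact. Let $F$ be a function on $\cM_d$. If $F$ is continuous, then $F\circ\Cl$ is continuous. Conversely, suppose $F\circ\Cl$ is continuous and let $A$ be closed in $\R$. Then $(F\circ\Cl)^{-1}(A)$ is closed in the compact space $\cC$, hence compact. Its image under $\Cl$ is $F^{-1}(A)$, which is therefore compact, hence closed in the metric space $\cM_d$. So $F$ is continuous, and $\Cl$ is a quotient map; this is the standard closed-map argument for a continuous surjection from a compact space onto a Hausdorff space. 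The functions $T_k$ and $\mumax$ are permutation-invariant, continuous in $\mu$, and independent of $x$. They are also blind to zero weights, since sorted tail sums are unchanged by the placement of zeros. By \Cref{lem:factor}, they therefore descend to well-defined functions on $\cM_d$, which are continuous by the quotient property.

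For (iii), I would use the partial matching that leaves everything unmatched. Its cost is $\sum_{(x,u)\in P}Ru+\sum_{(y,v)\in Q}Rv=R(\abs P+\abs Q)$, and $\abs{\Cl(x,\mu)}=\abs{w(\mu)}$, which gives the diameter bound.

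The only subtle step is the quotient and descent argument in (ii), in particular checking that $T_k$ and $\mumax$ are well defined on lifts. For $T_k$ this needs $w$ to be injective and equivariant, so that the weights $\mu$ are recoverable from the multiset of heights; \Cref{lem:factor} supplies exactly this. The remaining steps are direct.
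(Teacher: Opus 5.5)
Your proof is correct and is essentially the paper's. Parts (ii) and (iii) match it step for step: the closed-map argument for the quotient, the descent of $T_k$ and $\mumax$ through \Cref{lem:factor}, and the empty matching. Your alternative for (i), going through $\Cl(y,\mu)$ by the identity matching and then \Cref{lem:moves}(ii), is exactly the paper's route. Your primary single-matching version of (i) simply merges those two legs into one coordinatewise matching, and it even gives the slightly sharper coefficient $\min(w_i(\mu), w_i(\nu))$ in place of $w_i(\mu)$ on the value term.
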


\begin{proof}
(i) Go through $\Cl(y,\mu)$: the first leg costs at most $\sum_i
w_i(\mu)\rho(x_i,y_i)$ by the identity matching, the second at most
$R\norm{w(\mu) - w(\nu)}_1$ by \Cref{lem:moves}(ii). Continuity follows with
the continuity of $w$. (ii) $\cC$ is compact and $\Cl$ continuous and onto,
so $(\cM_d, \Mw)$ is compact. A continuous map from a compact space onto a
Hausdorff space is closed, hence a quotient map. $T_k$ and $\mumax$ are
continuous on $\cC$ and constant on the fibers of $\Cl$. (iii) The empty
matching sends every atom to the apex.
\end{proof}

\subsection{Proof of \Cref{thm:rep}}\label{app:thmproof}

\begin{proof}[Proof of \Cref{thm:rep}(i), the easy direction]
If $\abs{F(P) - F(Q)} \le L\,\Mw(P,Q)$ for all $P, Q$ and $f = F \circ
\Cl$, then (V), (W) and (A) are \Cref{lem:moves}(i), (ii), (iii)
respectively, read through the Lipschitz bound.
\end{proof}

\begin{proof}[Proof of \Cref{thm:rep}(i), the routing direction]
Let $f \in \cGw_L$. By \Cref{lem:factor}, $f = F \circ \Cl$ for a unique
$F$, and it remains to show that for all $(x,\mu), (y,\nu) \in \cC$,
\[
  \abs{\, f(x,\mu) - f(y,\nu) \,} \;\le\; L \cdot \Mw\bigl(\Cl(x,\mu),\,
  \Cl(y,\nu)\bigr).
\]
Write $P = \Cl(x,\mu)$, $Q = \Cl(y,\nu)$, $S = \supp\mu$, $T = \supp\nu$,
and let $\sigma$ be an optimal partial matching, viewed at the level of
coordinates as a bijection from $S' \subseteq S$ onto $T' \subseteq T$. We
may take $\sigma$ \emph{maximal}: if $i \in S$ and $j \in T$ were both
unmatched, adding the pair $(i,j)$ would change the cost by
$\vr((x_i,w_i(\mu)),(y_j,w_j(\nu))) - \vr((x_i,w_i(\mu)),o) -
\vr((y_j,w_j(\nu)),o) \le 0$, by the triangle inequality through the apex.
So either $S' = S$ or $T' = T$.

\emph{Relabeling.} Define an injection $t$ of $T$ into $[d]$ by
$t(\sigma(i)) := i$ for $i \in S'$, and, if $T \setminus T' \ne \emptyset$
(so $S' = S$), by sending $T \setminus T'$ injectively into $[d] \setminus
S$, which has $d - \abs{S} \ge \abs{T} - \abs{T'}$ elements. Let $\pi$ be a
permutation of $[d]$ with $\pi(j) = t(j)$ for $j \in T$, and put $\nu' :=
\pi\nu$, $y' := \pi y$. Then $\nu'_{t(j)} = \nu_j$, $y'_{t(j)} = y_j$, $\nu'$
vanishes off $t(T)$, and by equivariance $w_{t(j)}(\nu') = w_j(\nu)$. By
(A), $f(y',\nu') = f(y,\nu)$, and $\Cl(y',\nu') = Q$. Every coordinate $i
\in [d]$ is now of one of four kinds: \emph{matched} ($i \in S'$, hosting
$(x_i, w_i(\mu))$ and $(y'_i, w_i(\nu')) = (y_{\sigma(i)},
w_{\sigma(i)}(\nu))$), \emph{leaving} ($i \in S \setminus S'$, weight
$w_i(\mu) > 0 = w_i(\nu')$), \emph{arriving} ($i \in t(T\setminus T')$,
weight $0 = w_i(\mu) < w_i(\nu')$), or \emph{empty}. By maximality, leaving
and arriving coordinates do not both occur.

\emph{The chain.} Split the matched coordinates into $I^{\uparrow} := \{i
\in S' : w_i(\mu) \le w_i(\nu')\}$ and $I^{\downarrow} := S' \setminus
I^{\uparrow}$. Define $z \in \cV^d$ by $z_i := y'_i$ for $i \in I^{\uparrow}$
and for arriving $i$, and $z_i := x_i$ otherwise. Then:
\begin{enumerate}
\item[Step 1.] $(x,\mu) \to (z,\mu)$: value moves at weights $\mu$ on
$I^{\uparrow}$ and on the arriving coordinates. By \Cref{lem:basic}(iii),
$\abs{f(x,\mu) - f(z,\mu)} \le L\sum_{i \in I^\uparrow} w_i(\mu)\,\rho(x_i,
y'_i)$, the arriving coordinates having $w_i(\mu) = 0$ and contributing
nothing.
\item[Step 2.] $(z,\mu) \to (z,\nu')$: one weight move. By (W),
$\abs{f(z,\mu) - f(z,\nu')} \le LR\,\norm{w(\mu) - w(\nu')}_1$.
\item[Step 3.] $(z,\nu') \to (y',\nu')$: value moves at weights $\nu'$ on
$I^{\downarrow}$ (on the leaving coordinates $y'$ may be taken equal to $x$,
since their $\nu'$-weight is zero and the lift does not see them). By
\Cref{lem:basic}(iii), $\abs{f(z,\nu') - f(y',\nu')} \le L\sum_{i \in
I^\downarrow} w_i(\nu')\,\rho(x_i, y'_i)$.
\end{enumerate}
On $I^\uparrow$ the weight in Step~1 is $w_i(\mu) = \min(w_i(\mu),
w_i(\nu'))$, and on $I^\downarrow$ the weight in Step~3 is $w_i(\nu') =
\min(w_i(\mu), w_i(\nu'))$, so the value costs of Steps~1 and~3 add up to
$\sum_{i \in S'} \min\bigl(w_i(\mu), w_{\sigma(i)}(\nu)\bigr)\,\rho(x_i,
y_{\sigma(i)})$, while the $\ell^1$ sum in Step~2 is $\sum_{i\in
S'}\abs{w_i(\mu) - w_{\sigma(i)}(\nu)}$ plus the weights $w_i(\mu)$ of the
leaving coordinates plus the weights $w_{t^{-1}(i)}(\nu)$ of the arriving
ones. Adding the three steps,
\[
  \abs{f(x,\mu) - f(y,\nu)} \;\le\; L\Bigl[\sum_{i \in S'}
  \vr\bigl((x_i,w_i(\mu)), (y_{\sigma(i)}, w_{\sigma(i)}(\nu))\bigr)
  + \sum_{\text{unmatched}} R\,w \Bigr] \;=\; L\,\cost(\sigma),
\]
which is $L\,\Mw(P,Q)$ by the optimality of $\sigma$.
\end{proof}

\begin{proof}[Proof of \Cref{thm:rep}(ii)]
The inequality $\le$ is the routing direction. For $\ge$ take $F :=
\Mw(\cdot, Q)$, which is $1$-Lipschitz for $\Mw$, so $F \circ \Cl \in
\cGw_1$ by the easy direction, and $F(P) - F(Q) = \Mw(P,Q)$. If a metric $d$
on $\cM_d$ makes every $F$ with $F \circ \Cl \in \cGw_1$ $1$-Lipschitz, then
$\Mw(P,Q) = \sup_f [f(z) - f(z')] \le d(P,Q)$. And if $\Lip_L(d)\circ\Cl =
\cGw_L$ for all $L$, the same argument with $d$ in place of $\Mw$ gives
$d(P,Q) = \sup_f[f(z) - f(z')] = \Mw(P,Q)$.
\end{proof}

In the language of persistence diagrams, part (ii) is the universality of
the Wasserstein distance on the metric pair $(\cK, \vr, \{o\})$:
\citet[Cor.~1.4 and \S5.3]{bubenik2022universality} prove that on any
metric space with a distinguished subset the $p$-Wasserstein distance on
finite multisets is the largest $p$-subadditive metric compatible with the
ground metric, and that at $p = 1$ it satisfies Kantorovich--Rubinstein
duality, which is \Cref{lem:metric}(iii), and \citet{bubenik2022virtual}
carry the construction to signed measures and their Banach spaces. What is specific here is the
pair, the cone with its apex, and that the class comes first.

The closure properties named after \Cref{thm:rep} are those of a Lipschitz
ball: the class is convex, and if $f_1, \dots, f_m$ are members with
constants $L_1, \dots, L_m$ and $h : \R^m \to \R$ is Lipschitz then $h(f_1,
\dots, f_m)$ is a member with constant $\sup_y \sum_a \abs{\partial_a
h(y)}\, L_a$, since $h(F_1, \dots, F_m)$ is Lipschitz for $\Mw$ with that
constant. The linear sums extend to the vector-valued case, $\sum_i
w_i(\mu)\varphi(x_i)$ with $\varphi : \cV \to \R^m$ $1$-Lipschitz and
bounded by $R$ in norm, which is $1$-Lipschitz from $(\cM_d, \Mw)$ into
$\R^m$ jointly (\Cref{lem:familycap}), so that attention pooling and the
flocking resultant are members by post-composition.

\subsection{The two instances}\label{app:instances}

\paragraph{Three levels, and the sum embeddings.} Three nested sets order
the class: the linear sums \eqref{eq:linearsums}, of which a weighted Deep
Sets network is a Lipschitz readout of finitely many
(\Cref{rem:gnncurrency}), the functionals of the lift, the dual ball of
the matching distance (\Cref{lem:metric}, \Cref{rem:dualball}), and the
class of \Cref{thm:rep}, every Lipschitz function of the lift. The
conservative linear sums learn at the rate of the value space alone, with
no count and no profile (\Cref{prop:countfree}), while the class sees the
law of the lift and pays the profile (\Cref{sec:complexity}).

The two classes of \Cref{sec:axioms} are the following.

\begin{definition}[The two classes]\label{def:twoclasses}
The \emph{conservative} class $\cG_L := \cG^{w^{\mathrm c}}_L$ is the class
of \Cref{def:class} for the cap weight $w^{\mathrm c}(\mu) = \mu$, the
weights themselves, and the \emph{accumulative} class $\cGt_L :=
\cG^{w^{\mathrm a}}_L$ is the class for the cap weight $w^{\mathrm a}(\mu)
= \mu/\mumax$, the vector of \emph{shares} $s(\mu)$, the leader at share
$1$.
\end{definition}

\Cref{tab:mechanisms} sorts the common aggregations by what they do to
influence. The same aggregations are debated in graph networks on
expressivity grounds, the sum separating multisets that the mean and the
maximum identify \citep{xu2019powerful} and a degree-dependent scaler
interpolating between the two \citep{corso2020principal}. In both classes
the weight bound sums over all $d$ coordinates, and $\norm{s(\mu) -
s(\nu)}_1$ can reach $d - 1$: that is the count of \Cref{tab:mechanisms}, a
member at constant $1$, and the size is intended.

\begin{table}[h]
\centering
\small
\begin{tabular}{@{}lllc@{}}
\toprule
mechanism & form & class & $L$ \\
\midrule
weighted mean & $h\bigl(\sum_i \mu_i x_i\bigr)$ & conservative & $\norm{h'}_\infty$ \\
value aggregation & $\sum_i \mu_i\,\varphi(x_i)$, $\varphi \in \cS$ & conservative & $1$ \\
attention pooling & $h\bigl(\sum_i \mu_i\, v(x_i)\bigr)$, $\Lip v \le 1$, $\abs{v} \le R$ & conservative & $\Lip(h)$ \\
top weight & $R\,\mumax$ & conservative & $1$ \\
flocking update & direction of $\sum_i \mu_i\, e^{\mathrm{i}\theta_i}$ & conservative & $1/\varpi_0$ at polarization $\varpi \ge \varpi_0$ \\
\midrule
count & $R/\mumax = R\sum_i s_i$ & accumulative & $1$ \\
unnormalized sum & $\sum_i s_i\,\varphi(x_i)$, $\varphi \in \cS$ & accumulative & $1$ \\
force sum & $\sum_i s_i\, F(x_i)$, $\Lip F \le 1$, $\abs{F} \le R$ & accumulative & $1$ \\
selected maximum & $\max_i s_i\, x_i$ & accumulative & $1$ \\
weighted mean$^{*}$ & $\sum_i \mu_i x_i$ & accumulative & $2$ \\
\bottomrule
\end{tabular}
\caption{Common aggregations, the class they belong to and the constant.
Values in $[-R, R]$ where a value bound is used.
The starred row is the mechanism of the first row read against the share cap.
The flocking row is \Cref{app:flocking}, the rest is
\Cref{prop:roster}. No constant depends on $d$ or on the number of selected
coordinates.}
\label{tab:mechanisms}
\end{table}

\begin{proposition}[Membership]\label{prop:roster}
Each mechanism of \Cref{tab:mechanisms} belongs to the class shown with the
constant shown. Moreover $\cG_L \subseteq \cGt_{2L}$.
\end{proposition}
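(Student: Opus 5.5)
The plan is to check each row of \Cref{tab:mechanisms} against (V), (W), (A), or to reduce it to \Cref{lem:familycap} plus post-composition by a Lipschitz readout, which \Cref{thm:rep}(i) allows. The inclusion $\cG_L \subseteq \cGt_{2L}$ will come from one normalization inequality, and that inclusion then also gives the starred row. Anonymity is immediate in every case, because each form is a symmetric expression in the pairs $(x_i,\mu_i)$ and both cap weights are equivariant. So only (V) and (W) need checking. The flocking row is deferred to \Cref{app:flocking}, as the table says.

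\emph{Linear-sum rows.} The value aggregation and the unnormalized sum are the scalar linear sums $\cL^{w}$ for $w = w^{\mathrm c}$ and $w = w^{\mathrm a}$, so \Cref{lem:familycap} gives constant $1$. The force sum is the vector-valued case of the same lemma, jointly $1$-Lipschitz into $\R^m$ for the Euclidean norm, so each coordinate and each unit-norm projection has constant $1$. The count is $R/\mumax = \sum_i s_i\cdot R$, the linear sum with the constant score $\varphi\equiv R \in \cS$. The conservative weighted mean $h(\sum_i\mu_i x_i)$ uses the score $\varphi(x)=x$, which lies in $\cS$ on $[-R,R]$, followed by the readout $h$, so the constant is $\norm{h'}_\infty$. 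Attention pooling $h(\sum_i \mu_i v(x_i))$ is the vector linear sum with score $v$, followed by $h$. Since that sum is $1$-Lipschitz into $(\R^m,\abs{\cdot}_2)$ by \Cref{lem:familycap}, the composite has constant $\Lip(h)$.

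\emph{Direct rows.} For the top weight $R\,\mumax$, (V) is trivial because there is no $x$-dependence. For (W), $\abs{\max_i\mu_i-\max_i\nu_i}\le\norm{\mu-\nu}_\infty\le\norm{\mu-\nu}_1$. For the selected maximum $\max_i s_i x_i$, changing $x_i$ moves one term of the max by $s_i\abs{x_i-y_i}$, and the max is $1$-Lipschitz in the sup norm of its arguments, which gives (V). Coordinates with zero share contribute the term $0$ regardless of $x_i$, which is consistent. For (W), with $t = s(\nu)$, $\abs{\max_i s_ix_i-\max_i t_ix_i}\le\max_i\abs{s_i-t_i}\abs{x_i}\le R\norm{s-t}_1$.

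\emph{Inclusion $\cG_L\subseteq\cGt_{2L}$.} Let $f\in\cG_L$, write $s=\mu/\mumax$, $t = \nu/\nu_{\max}$, $\abs{s}=\sum_i s_i\ge1$, and note $\mu = s/\abs{s}$. For (V), $\mu_i\le s_i$, so the conservative bound $L\mu_i\rho$ is at most $L s_i\rho$. For (W), I need
\[
  \norm{\mu-\nu}_1=\Bigl\|\frac{s}{\abs{s}}-\frac{t}{\abs{t}}\Bigr\|_1\le\frac{\norm{s-t}_1}{\abs{s}}+\abs{t}\,\Bigl|\frac1{\abs{s}}-\frac1{\abs{t}}\Bigr|=\frac{\norm{s-t}_1+\bigl|\abs{t}-\abs{s}\bigr|}{\abs{s}}\le 2\norm{s-t}_1 ,
\]
which uses $\abs{s}\ge1$. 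Then $\abs{f(x,\mu)-f(x,\nu)}\le LR\norm{\mu-\nu}_1\le 2LR\norm{s-t}_1$. The starred row follows from this inclusion applied to the conservative weighted mean, which has constant $1$.

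\emph{Main obstacle.} The step that needs the most care is this normalization inequality. It must be symmetric enough to cover both orders of $\mu,\nu$, and it relies on $\abs{s}\ge 1$ because the leader has share $1$. I also have to confirm that a factor of $2$ is all that is lost, with no dependence on $d$. The rest is bookkeeping of which score and which readout realizes each row.
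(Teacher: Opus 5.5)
Your proposal is correct and follows the paper's proof essentially row by row, with the same normalization inequality $\norm{\mu-\nu}_1 \le \bigl(\norm{s-t}_1 + \bigl|\abs{t}-\abs{s}\bigr|\bigr)/\abs{s} \le 2\norm{s-t}_1$ for the containment. Routing the weighted mean and the count through \Cref{lem:familycap}, and obtaining the starred row from $\cG_1 \subseteq \cGt_2$ instead of the paper's direct estimates, is a harmless economy.

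One caution concerns the selected maximum. Your sup-norm argument uses the fixed index set $[d]$, so on $[-R,R]$ it certifies $\max_{i\in[d]} s_ix_i$, which is floored at $0$ whenever some coordinate is unselected. The maximum over $\supp\mu$ alone fails (W) on $[-R,R]$: a coordinate at value $0$ entering the support at vanishing share moves it from $-R$ to $0$. This is why the paper restricts to values in $[0,R]$, where the two coincide.
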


\begin{proof}[Proof of \Cref{prop:roster}]
Each (V) line is the differential form of \Cref{rem:derivative} or a direct
two-point estimate, and each (W) line is the displayed one-line estimate.
\emph{Conservative rows}, on $\cV = [-R, R]$. Weighted means $h(\langle\mu,
x\rangle)$: (V) $\abs{h'}\mu_i\abs{x_i - y_i}$, (W) $\abs{h'}\,\abs{\langle
\mu - \nu, x\rangle} \le \norm{h'}_\infty R\norm{\mu - \nu}_1$. Value
aggregations: \Cref{lem:familycap}. Attention pooling: the vector-valued
linear sum of \Cref{lem:familycap} followed by the Lipschitz map $h$. The
top weight $R\,\mumax$: (V) $0$, (W) $R\abs{\mumax - \nu_{\max}} \le
R\norm{\mu - \nu}_\infty$. The flocking update is \Cref{app:flocking}.
\emph{Accumulative rows}, on $\cV = [-R, R]$, with $s = s(\mu)$, $s' = s(\nu)$ and $\abs{s} := \norm{s}_1$. The
count $R\sum_i s_i$: (V) $0$, (W) $R\,\abs{\abs{s} - \abs{s'}} \le R\norm{s
- s'}_1$. The sums and the force sums: \Cref{lem:familycap} with the share
weight. The selected maximum $\max_i s_ix_i$, on values in $[0, R]$ so that the coordinates off the support, at $s_ix_i = 0$, never attain it: (V) $\abs{s_ix_i - s_iy_i} \le s_i\abs{x_i -
y_i}$, (W) $\abs{\max_i s_ix_i - \max_i s'_ix_i} \le \max_i\abs{s_i -
s'_i}\,\abs{x_i} \le R\norm{s - s'}_\infty$. The weighted mean $\langle\mu,
x\rangle = \langle s, x\rangle/\abs{s}$: (V) $\mu_i \le s_i$, and for (W),
$\langle\mu,x\rangle - \langle\nu,x\rangle = \langle s - s', x\rangle/\abs{s}
+ \langle s',x\rangle\bigl(1/\abs{s} - 1/\abs{s'}\bigr)$ with
$\abs{\langle s',x\rangle} \le R\abs{s'}$ and $\abs{1/\abs{s} - 1/\abs{s'}}
\le \norm{s - s'}_1/(\abs{s}\abs{s'})$, so the difference is at most
$2R\norm{s-s'}_1/\abs{s} \le 2R\norm{s - s'}_1$ because $\abs{s} \ge 1$.
\emph{Containment.} $\mu_i \le s_i$ gives the accumulative (V) from the
conservative one, and for the weight axioms $\norm{\mu - \nu}_1 \le
2\,\mumax\,\norm{s - s'}_1 \le 2\norm{s-s'}_1$: indeed $\mu = s/\abs{s}$, so
$\mu - \nu = (s - s')/\abs{s} + s'(1/\abs{s} - 1/\abs{s'})$, of $\ell^1$ norm
at most $(\norm{s-s'}_1 + \abs{\abs{s'} - \abs{s}})/\abs{s} \le
2\norm{s-s'}_1/\abs{s}$, and $1/\abs{s} = \mumax$. Hence $\cG_L \subseteq
\cGt_{2L}$, and by \Cref{thm:rep}(ii) also $\Dl \le 2\,\Dt$ on $\cM_d$.
\end{proof}

The corollary read in \Cref{sec:representation} is the following. Reading
a sum as conservative inflates its constant by the count, the price of the
semantics.

\begin{corollary}[Fibers: the two semantics made exact]\label{cor:fiber}
Let $\mu_S$ be uniform on $S \subseteq [d]$ with $\abs{S} = k$. On the
fiber $\{\mu = \mu_S\}$ the accumulative distance is the matching distance
between the two $k$-point multisets of values,
\[
  \Dt\bigl(\Cl(x,\mu_S), \Cl(y,\mu_S)\bigr) \;=\;
  \min_{\tau}\, \sum_{i \in S} \rho\bigl(x_i, y_{\tau(i)}\bigr)
\]
over the bijections $\tau$ of $S$, and the restriction of $\cGt_L$ to the
fiber is exactly the set of symmetric functions $g$ of $x_S$ with
per-coordinate constant $L$, that is $\abs{g(x_S) - g(x_S^{i \to y_i})} \le
L\,\rho(x_i, y_i)$ with $x^{i \to y_i}$ the vector $x$ with coordinate $i$
replaced by $y_i$, every such $g$ extending to a member of $\cGt_L$. The
restriction of $\cG_L$ has per-coordinate constant $L/k$, and the factor $k$
is attained by the selected maximum.
\end{corollary}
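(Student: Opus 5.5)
The plan is to work on the fiber $\{\mu = \mu_S\}$. There every lift has $k$ atoms, all at share height $1$ for the accumulative cap and all at height $1/k$ for the conservative one. The three claims then come in turn: the closed form of $\Dt$, the fiber characterization via \Cref{thm:rep}(i), and the rescaling to $\cG_L$ together with the extremal example.

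\emph{The distance.} Fix $P = \Cl(x,\mu_S)$ and $Q = \Cl(y,\mu_S)$, with all heights equal to $1$.
\begin{itemize}
\item A perfect matching $\tau$ costs $\sum_i \vr((x_i,1),(y_{\tau(i)},1)) = \sum_i \rho(x_i,y_{\tau(i)})$, which gives $\le$.
\item For $\ge$, take any partial matching. Each pair of unmatched atoms, one on each side, costs $R\cdot 1 + R\cdot 1 = 2R \ge \rho$ of the pair. Re-matching these pairs arbitrarily therefore does not increase the cost. Both sides have $k$ atoms, so the unmatched counts agree, and the optimum is attained at a bijection.
\end{itemize}

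\emph{The accumulative fiber.} A member $f = F\circ\Cl$ restricted to the fiber satisfies (V) with $w_i = s_i = 1$ on $S$. Together with (A) restricted to permutations of $S$, this makes $g(x_S) := f(x,\mu_S)$ symmetric with per-coordinate constant $L$.

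The converse is the substantive step: every such $g$ must extend to a member of $\cGt_L$. The plan is as follows.
\begin{itemize}
\item Show first that $g$ is $L$-Lipschitz for the matching distance just computed. Move from $x_S$ to $y_{\tau(S)}$ one coordinate at a time and apply the per-coordinate bound; the telescope gives $|g(x_S)-g(y_S)| \le L\sum_i \rho(x_i,y_{\tau(i)})$ for every $\tau$. Then use symmetry to replace $y_S$ by its permuted version and take the minimum over $\tau$.
\item So $G(P) := g$ is well defined and $L$-Lipschitz for $\Dt$ on the closed subset $\Cl(\{\mu=\mu_S\}) \subseteq \cM_d$. It is closed because this is the set of lifts with $k$ atoms all at height $1$, a compact image.
\item McShane extension gives an $L$-Lipschitz $F$ on $(\cM_d,\Dt)$, and \Cref{thm:rep}(i) makes $F\circ\Cl$ a member of $\cGt_L$.
\end{itemize}

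\emph{The conservative fiber.} Heights are $1/k$. Every $\vr$ value and apex cost scales by $1/k$, because $\vr$ is homogeneous in the heights when they are equal and the apex cost is $R/k$. Hence $\Dl = \Dt/k$ on the fiber, by the same rematching argument as above. Equivalently, (V) with $w_i = \mu_i = 1/k$ gives the per-coordinate constant $L/k$ directly, and the same extension argument with $\Dl$ shows every symmetric $g$ with constant $L/k$ extends.

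\emph{Attainment of the factor $k$.} The selected maximum $\max_i s_i x_i$ is in $\cGt_1$ by \Cref{prop:roster}. On the fiber it equals $\max_{i\in S} x_i$, whose per-coordinate constant is exactly $1$. To read it in $\cG_L$, write $L'$ for its conservative constant. The fiber bound shows $L'/k \ge 1$, so $L' \ge k$ and it cannot have a constant below $k$ in $\cG$. Conversely $\cGt_1 \subseteq \cG_{k'}$ for the sup of counts, so on the fiber $L' = k$ is attained.

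The main obstacle I expect is the well-definedness of $G$ and its Lipschitz bound. Symmetry of $g$ alone handles only coordinates in $S$. Values at $i \notin S$ are invisible, so $G$ depends on $x_S$ only, and this should be checked explicitly.
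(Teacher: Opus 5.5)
Your argument is the paper's own. The distance formula comes from re-pairing atoms left unmatched on both sides, at cost $2R \ge \rho$. The converse comes from the telescope with all shares $1$ plus symmetry, McShane's extension, and the easy direction of \Cref{thm:rep}(i). The factor $k$ comes from the conservative (V) at $\mu_i = 1/k$, set against the selected maximum's unit change under a move of its leading coordinate. Your identity $\Dl = \Dt/k$ on the fiber is a correct sharpening of the paper's per-coordinate statement. Closedness of the fiber's image is not needed for McShane.

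One sentence should go: the containment $\cGt_1 \subseteq \cG_{k'}$, with $k'$ the largest count, is false. The count $R/\mumax$ lies in $\cGt_1$ (\Cref{prop:roster}). Take uniform weights on $d$ coordinates and raise one weight to $1/d + \epsilon$, lowering the others equally. This is a weight move of $\ell^1$ size $2\epsilon$, and it changes the count by $Rd^2\epsilon/(1 + d\epsilon)$. So the count's conservative constant is at least $d^2/2$, which exceeds $d$ once $d \ge 3$. The paper has only the reverse containment $\cG_L \subseteq \cGt_{2L}$.

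The step is also unnecessary. ``Attained'' asks only for $L' \ge k$, which you proved exactly as the paper does. If you want the fiber-level equality, note that $\max_{i \in S} x_i$ is $k$-Lipschitz for $\Dl = \Dt/k$ on the fiber, so your own McShane step extends it to a member of $\cG_k$.
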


\begin{proof}[Proof of \Cref{cor:fiber}]
On the fiber all shares equal $1$, so a matched pair costs
$\vr((x_i,1),(y_j,1)) = \rho(x_i,y_j) \le 2R$ while an unmatched atom costs
$R$. Leaving a pair unmatched on both sides costs $2R \ge \rho$, so a full
matching is optimal and the formula follows. A symmetric $g$ with
per-coordinate constant $L$ satisfies $\abs{g(x_S) - g(y_S)} \le L\sum_i
\rho(x_i, y_{\tau(i)})$ for every $\tau$, by symmetry and then the telescope
of \Cref{lem:basic}(iii) with all weights $1$, hence is $L$-Lipschitz for
$\Dt$ on the fiber. Extend it to $\cM_d$ with the same constant by McShane's
extension and apply the easy direction of \Cref{thm:rep}(i). Conversely the
accumulative (V) at $s_i = 1$ gives the per-coordinate constant $L$, and the
conservative (V) at $\mu_i = 1/k$ gives $L/k$. The selected maximum has
accumulative constant $1$ (\Cref{prop:roster}) and, on the fiber, changes by
$\abs{x_i - y_i}$ under a move of the leading coordinate, so its conservative
constant is at least $k$.
\end{proof}

\subsection{Strata, coverings, and the crop cost}\label{app:strata}

These are the inputs of \Cref{sec:complexity}: for each number of atoms, a
covering of the lifts carried by that many atoms, the cost of cropping a
configuration to its heaviest atoms, and the diameter of the space.

\begin{definition}[Strata and cells]\label{def:strata}
For $k \le d$, $\cA_k := \{P \in \cM_d : n_P \le k\}$, the lifts of
configurations with at most $k$ participants. For the accumulative instance
and $m \ge 0$, $\cA_k^{[m]} := \{P \in \cA_k : \abs{P} \le 2^m\} = \cA_k \cap
\{\mumax \ge 2^{-m}\}$. Since $\abs{P} \le n_P \le k$ on $\cA_k$, only $2^m <
k$ gives a proper subset. Covering numbers $N(\varepsilon, S, \Mw)$ use the
radius convention.
\end{definition}

\begin{theorem}[Covering of the strata, conservative]\label{thm:covcons}
For every $k \le d$ and $\varepsilon > 0$,
\[
  N\bigl(\varepsilon, \cA_k, \Dl\bigr) \;\le\;
  N\bigl(\tfrac{\varepsilon}{2}, \cV, \rho\bigr)^k \cdot
  N\bigl(\tfrac{\varepsilon}{2R}, \Delta_k, \ell^1\bigr)
  \;\le\; N\bigl(\tfrac{\varepsilon}{2}, \cV, \rho\bigr)^k \Bigl(1 + \frac{16R}{\varepsilon}
  \Bigr)^{k-1}.
\]
On $\cV = [-R,R]$ and $\varepsilon \le R$: $\log N(\varepsilon, \cA_k, \Dl) \le
(2k-1)\log(17R/\varepsilon)$.
\end{theorem}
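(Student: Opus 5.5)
The plan is to build an explicit net of $\cA_k$ as a product of a value net and a weight net, and to control the distance with \Cref{lem:compact}(i).

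\emph{Reduction to $k$ coordinates.} Every $P \in \cA_k$ has $n_P \le k$ atoms. By \Cref{lem:factor} and the anonymity of the lift, $P = \Cl(x, \mu)$ for a configuration whose support lies in the first $k$ coordinates. Pad with zero weights when $n_P < k$; such coordinates are invisible to $\Cl$ by \Cref{lem:moves}(iii). So $(\mu_1, \dots, \mu_k) \in \Delta_k$, and the values outside $[k]$ are irrelevant.

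\emph{The net and the distance bound.} Fix an $\tfrac{\varepsilon}{2}$-net $\cV'$ of $(\cV, \rho)$ and an $\tfrac{\varepsilon}{2R}$-net $\Delta'$ of $(\Delta_k, \ell^1)$. The candidate centers are $\Cl(x', \mu')$ with $x' \in (\cV')^k$ and $\mu' \in \Delta'$, each embedded in $\cC$ with zeros beyond coordinate $k$. There are at most $\abs{\cV'}^k\,\abs{\Delta'}$ of them, and each carries at most $k$ atoms, so each lies in $\cA_k$.

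Given $P = \Cl(x, \mu)$, choose $x'_i \in \cV'$ with $\rho(x_i, x'_i) \le \varepsilon/2$ and $\mu' \in \Delta'$ with $\norm{\mu - \mu'}_1 \le \varepsilon/(2R)$. For the conservative cap $\abs{w(\mu)} = 1$, so \Cref{lem:compact}(i) gives
\[
  \Dl\bigl(\Cl(x,\mu), \Cl(x',\mu')\bigr) \;\le\; \max_i \rho(x_i, x'_i) + R\,\norm{\mu - \mu'}_1 \;\le\; \varepsilon .
\]
Atoms of the center with $\mu'_i = 0$ simply disappear, which is harmless. This proves the first inequality.

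\emph{The simplex count.} This is the one step needing care with constants. Project $\Delta_k$ onto its first $k - 1$ coordinates. The projection is a bijection onto a subset of the $\ell^1$ unit ball of $\R^{k-1}$. Since the last coordinate is determined by the others, $\norm{\mu - \nu}_1 \le 2\,\norm{\mu_{[k-1]} - \nu_{[k-1]}}_1$. Hence an $\eta/2$-net of the projected set in $\ell^1(\R^{k-1})$ lifts to an $\eta$-net of $\Delta_k$.

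The standard volumetric (maximal packing) argument in dimension $k-1$ bounds the $\eta/2$-covering number of a subset of the unit ball by $(1 + 2/(\eta/2))^{k-1} = (1 + 4/\eta)^{k-1}$. The centers are taken from the set itself, via a maximal $\eta/2$-separated subset. With $\eta = \varepsilon/(2R)$ this is $(1 + 8R/\varepsilon)^{k-1} \le (1 + 16R/\varepsilon)^{k-1}$, which is the stated bound with room to spare.

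\emph{Interval case.} On $\cV = [-R, R]$ we have $N(\varepsilon/2, \cV, \rho) \le 1 + 2R/\varepsilon$. For $\varepsilon \le R$ both $1 + 2R/\varepsilon$ and $1 + 16R/\varepsilon$ are at most $17R/\varepsilon$. Multiplying the $k$ value factors by the $k - 1$ weight factors gives $\log N \le (2k-1)\log(17R/\varepsilon)$.

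The only real obstacle is bookkeeping: checking that \Cref{lem:compact}(i) applies with the $\max_i$ form, which is what lets the value resolution be $\varepsilon/2$ rather than $\varepsilon/(2k)$, and getting the simplex volumetric constant right.
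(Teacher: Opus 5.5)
Your proof is correct and follows the paper's route. It covers $\cA_k$ by a product of an $\varepsilon/2$-net of $\cV$ per slot and an internal $\varepsilon/(2R)$-net of $(\Delta_k,\ell^1)$, controlled by the coordinatewise matching bound (you import it from \Cref{lem:compact}(i), where the paper writes it out), and counts the simplex by projecting to the $\ell^1$ ball of $\R^{k-1}$ at the factor-$2$ distortion. The differences are only in constants: your maximal $\eta/2$-separated set inside the projected image gives $(1+8R/\varepsilon)^{k-1}$ where the paper's net of the whole ball plus re-centering gives $(1+16R/\varepsilon)^{k-1}$, and your interval count $1+2R/\varepsilon$ is sharper than the paper's $4R/\varepsilon+1$, so both leave extra room under $17R/\varepsilon$.
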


\begin{proof}
Parametrize: for $v \in \cV^k$ and $a \in \Delta_k$ let $\Psi(v,a) := \Cl(z)$
where $z$ carries values $v$ and weights $a$ on the first $k$ coordinates and
weight $0$ elsewhere. By (A) every $P \in \cA_k$ is some $\Psi(v,a)$.
Matching coordinate $j$ to coordinate $j$, with the apex where one side has
$a_j = 0$,
\begin{align*}
  \Dl\bigl(\Psi(v,a), \Psi(v',a')\bigr) \;&\le\; \sum_{j=1}^k \Bigl[\min(a_j,a'_j)\,\rho(v_j,v'_j)
  + R\abs{a_j - a'_j}\Bigr] \\
  &\le\; \max_j \rho(v_j, v'_j) + R\norm{a - a'}_1 ,
\end{align*}
using $\sum_j\min(a_j,a'_j) \le 1$. Take an $\varepsilon/2$-net of $\cV$ in
each value slot and an $\varepsilon/(2R)$-net of $(\Delta_k, \ell^1)$. Each
term is then at most $\varepsilon/2$. For the simplex count: dropping the
last coordinate maps $\Delta_k$ into the unit $\ell^1$-ball of $\R^{k-1}$
with $\norm{a-a'}_1 \le 2\norm{b - b'}_1$. A $\delta/4$-net of that ball has
at most $(1 + 8/\delta)^{k-1}$ points, a ball in its own norm, and
re-centering the net at points of the image gives a $\delta/2$-net of the
image and hence a $\delta$-net of $\Delta_k$. With $\delta =
\varepsilon/(2R)$ this is $(1 + 16R/\varepsilon)^{k-1}$. On the interval,
$N(\varepsilon/2, [-R,R]) \le 4R/\varepsilon + 1 \le 5R/\varepsilon$ for
$\varepsilon \le R$, and $k\log(5R/\varepsilon) + (k-1)\log(17R/\varepsilon)
\le (2k-1)\log(17R/\varepsilon)$.
\end{proof}

\begin{theorem}[Covering of the cells, accumulative]\label{thm:covacc}
For every $k \le d$, $m \ge 0$ and $\varepsilon > 0$,
\[
  N\bigl(\varepsilon, \cA_k^{[m]}, \Dt\bigr) \;\le\;
  N\bigl(2^{-m-1}\varepsilon, \cV, \rho\bigr)^k \cdot
  k\Bigl(1 + \frac{2^{m+3}R}{\varepsilon}\Bigr)^{k-1},
\]
with net points in $\cA_k$. Equivalently, at the cell's own scale,
$N\bigl(2^{m-1}\varepsilon, \cA_k^{[m]}, \Dt\bigr) \le N(\varepsilon/4,
\cV, \rho)^k \cdot k(1 + 16R/\varepsilon)^{k-1}$: the count of a cell
resolved at $2^{m-1}\varepsilon$ does not depend on $m$. On $\cV = [-R,R]$
and $\varepsilon \le R$: $\log N(\varepsilon, \cA_k^{[m]}, \Dt) \le
(2k-1)\log(2^{m+5}R/\varepsilon)$, and $2^m \le 2k$ suffices on $\cA_k$.
\end{theorem}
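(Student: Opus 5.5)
We prove \Cref{thm:covacc} along the lines of \Cref{thm:covcons}, with the scale $2^m$ of the cell in place of the total height $1$.

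\emph{Parametrization and coordinatewise bound.} Every $P \in \cA_k^{[m]}$ is $\Psi(s,v) := \Cl(z)$, where $z$ carries values $v \in \cV^k$ on the first $k$ coordinates with shares $s \in [0,1]^k$ and weight zero elsewhere. The share vector $s$ has $\max_j s_j = 1$ and $\abs{s} \le 2^m$. Match coordinate $j$ to coordinate $j$, sending an atom to the apex when its partner has share $0$. This gives
\[
  \Dt\bigl(\Psi(s,v),\Psi(s',v')\bigr) \;\le\; \sum_j \min(s_j,s'_j)\,\rho(v_j,v'_j) + R\,\norm{s-s'}_1 \;\le\; 2^m\max_j\rho(v_j,v'_j) + R\,\norm{s-s'}_1 ,
\]
using $\sum_j \min(s_j,s'_j) \le \abs{s} \le 2^m$. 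It therefore suffices to take an $2^{-m-1}\varepsilon$-net of $\cV$ in each of the $k$ value slots, so the first term is at most $\varepsilon/2$. It remains to find an $\varepsilon/(2R)$-net in $\ell^1$ of the admissible share vectors, with net points that are themselves share vectors, so that the net lies in $\cA_k$.

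\emph{The share net.} Split the admissible shares by which coordinate $j^*$ is the leader, with $s_{j^*}=1$; this produces the factor $k$. Within one piece the remaining $k-1$ shares form a subset of $\{b \in [0,1]^{k-1} : \norm{b}_1 \le 2^m - 1\}$, contained in the $\ell^1$-ball of radius $2^m$. A volumetric count at radius $\delta/2$ in that ball gives at most $(1 + 2^{m+2}/\delta)^{k-1}$ points. Re-center each net point at a point of the piece, at the cost of doubling the radius, so the net is made of genuine share vectors; these have leader $1$ and total at most $2^m$, hence lie in the cell. With $\delta = \varepsilon/(2R)$ this yields $(1+2^{m+3}R/\varepsilon)^{k-1}$. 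The shares were only in $[0,1]$, so the box bound could sharpen this, but it is not needed. Multiplying the piece count, the share net and the value nets gives the displayed bound.

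\emph{The rescaled form and the interval case.} Substituting $\varepsilon \mapsto 2^m\varepsilon$ turns $2^{-m-1}\cdot 2^m\varepsilon$ into $\varepsilon/2$ and $2^{m+3}R/(2^m\varepsilon)$ into $8R/\varepsilon$. The stated $N(\varepsilon/4,\cV,\rho)$ and $16R/\varepsilon$ correspond to the substitution $\varepsilon\mapsto 2^{m-1}\varepsilon$, giving $\varepsilon/4$ and $16R/\varepsilon$ exactly, and this count does not depend on $m$.

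For $\cV=[-R,R]$, bound $N(2^{-m-1}\varepsilon,\cV)\le 2^{m+2}R/\varepsilon+1\le 2^{m+3}R/\varepsilon$. Bound the factor $k\le 2^{k-1}$, and absorb it into the $k-1$ simplex factors as $2(1+2^{m+3}R/\varepsilon)\le 2^{m+5}R/\varepsilon$ for $\varepsilon\le R$. Summing the logarithms gives at most $(2k-1)\log(2^{m+5}R/\varepsilon)$. Finally, $\abs{P}\le k$ on $\cA_k$ shows that $2^m\le 2k$ is the only relevant range: the smallest $m$ with $2^m\ge k$ has $2^m<2k$.

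The main obstacle is constant bookkeeping: keeping net points inside $\cA_k^{[m]}$ after re-centering, and absorbing the factor $k$ in the interval bound. The geometry itself is the same as in \Cref{thm:covcons}.
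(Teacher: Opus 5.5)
Your proof is correct and follows the paper's argument step for step. It uses the same share parametrization with the coordinatewise bound $2^m\max_j\rho(v_j,v'_j) + R\norm{s-s'}_1$, value nets at $2^{-m-1}\varepsilon$, the union over the $k$ faces $\{s_j=1\}$ with a volumetric net of the $\ell^1$-ball of radius $2^m$ re-centred inside the face, and the substitution $\varepsilon\mapsto 2^{m-1}\varepsilon$ for the scale-free form. The only difference is the interval bookkeeping: you absorb $k\le 2^{k-1}$ into the $k-1$ simplex factors, while the paper spreads $\log k\le(2k-1)\log 2$ over all $2k-1$ factors, and both give $(2k-1)\log(2^{m+5}R/\varepsilon)$.
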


\begin{proof}
Parametrize by the share vector: let $\Sigma_k := \{s \in [0,1]^k : \max_j
s_j = 1\}$ and, for $v \in \cV^k$, $s \in \Sigma_k$, let $\widetilde\Psi(v,s)
:= \Cl(z)$ with $z$ carrying values $v$ and weights $s/\norm{s}_1$ on the
first $k$ coordinates, so that $z$ has share vector $s$ there. Every $P \in
\cA_k^{[m]}$ is $\widetilde\Psi(v,s)$ with $\norm{s}_1 = \abs{P} \le 2^m$.
Matching coordinatewise,
\begin{align*}
  \Dt\bigl(\widetilde\Psi(v,s), \widetilde\Psi(v',s')\bigr) \;&\le\;
  \sum_{j}\min(s_j,s'_j)\,\rho(v_j,v'_j) + R\norm{s - s'}_1 \\
  &\le\; 2^m\max_j\rho(v_j,v'_j) + R\norm{s-s'}_1 .
\end{align*}
Take a $2^{-m-1}\varepsilon$-net of $\cV$ in each value slot and an
$\varepsilon/(2R)$-net of the share vectors of the cell in $\ell^1$. Each
term is then at most $\varepsilon/2$. For the share count: the share vectors
of the cell lie in the union over $j \le k$ of the faces $\{s_j = 1\}$, and
on the face $\{s_j = 1\}$ the remaining coordinates form a nonnegative vector
of $\ell^1$-norm at most $2^m - 1 < 2^m$, a point of the $\ell^1$-ball of
radius $2^m$ in $\R^{k-1}$. A $\delta/2$-net of that ball has at most $(1 +
2^{m+2}/\delta)^{k-1}$ points, and re-centering at points of the face gives
a $\delta$-net of the face inside the face. With $\delta = \varepsilon/(2R)$
this is $k(1 + 2^{m+3}R/\varepsilon)^{k-1}$. The net points
$\widetilde\Psi(v', s')$ are lifts in $\cA_k$. The second display is the
first with $2^{m-1}\varepsilon$ in place of $\varepsilon$. On the interval,
$N(2^{-m-1}\varepsilon,[-R,R]) \le 2^{m+2}R/\varepsilon + 1 \le
2^{m+3}R/\varepsilon$, and $k\log(2^{m+3}R/\varepsilon) + \log k +
(k-1)\log(2^{m+4}R/\varepsilon) \le (2k-1)\log(2^{m+5}R/\varepsilon)$, using
$\log k \le (2k-1)\log 2$. Finally $\abs{P} \le k$ on $\cA_k$, so
$\cA_k^{[m]} = \cA_k$ once $2^m \ge k$.
\end{proof}

Both bounds have exponent $k\dim_\varepsilon(\cV) + (k-1)$, which is $2k-1$
on intervals, with the ambient $d$ absent and the cell factor $2^m$ inside
the logarithms only, and not at all once the cell is resolved at its own
scale. On a \emph{selector design}, the law of $(\pi(\omega), c(\omega))$ for a
selector $c$, the weights are functions of the values and the simplex
directions are not there to cover. If the cropped weights depend on the $k$
selected values $x_S$ through a map that is $L_c$-Lipschitz from $(\cV^S,
\max_i\rho)$ to $(\Delta_k, \ell^1)$, or are uniform on the support, as
for a hard selector, then by \Cref{lem:compact}(i) the
map $x_S \mapsto \Cl(x_S, \hat\mu(x_S))$ is $(1 + RL_c)$-Lipschitz into
$(\cM_d, \Dl)$, and the cropped support of the stratum is covered by
$N(\varepsilon/(2(1 + RL_c)), \cV, \rho)^k$ points: exponent
$k\dim_\varepsilon(\cV)$. Both counts are used in \Cref{sec:complexity}.

\begin{proposition}[Crop cost]\label{prop:crop}
Let $\hat\mu$ be a $k$-crop of $\mu$ for $k \ge 1$, $k$ of its largest
weights kept and renormalized, by a fixed measurable rule where weights tie
(the cost below is the same for every choice), and $T_k = T_k(\mu)$ the
weight beyond them. Then
\[
  \Dl\bigl(\Cl(x,\mu), \Cl(x,\hat\mu)\bigr) \;\le\; 2R\,T_k,
  \qquad
  \Dt\bigl(\Cl(x,\mu), \Cl(x,\hat\mu)\bigr) \;\le\; \frac{R\,T_k}{\mumax(\mu)} .
\]
\end{proposition}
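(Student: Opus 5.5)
The plan is to bound each distance by the cost of one explicit partial matching in \eqref{eq:matching}: the identity matching on the kept coordinates, with every cropped coordinate sent to the apex. Since $\Mw$ is the least cost over all partial matchings, this cost is an upper bound. Equivalently, this is \Cref{lem:moves}(ii) at fixed values $x$, which bounds the distance by $R\norm{w(\mu) - w(\hat\mu)}_1$. So the whole proof reduces to computing that $\ell^1$ norm for the two cap weights. No value motion occurs, because the crop changes only the weights.

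\textbf{Setup.} Let $K$ be the set of $k$ kept coordinates, the $k$ largest weights with ties broken by the fixed rule, and write $T := T_k(\mu) = 1 - \sum_{i \in K}\mu_i$. Since $k \ge 1$, the set $K$ contains a coordinate of weight $\mumax > 0$. Hence $T < 1$ and $\hat\mu_i = \mu_i/(1-T)$ on $K$, with $\hat\mu_i = 0$ off $K$. The computation below involves only $T$ and the kept masses, which is why the tie rule does not affect the cost.

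\textbf{Conservative instance.} Here $w(\mu) = \mu$.
\begin{itemize}
\item On $K$ the kept weights rise, and
\[
\sum_{i \in K}\abs{\hat\mu_i - \mu_i} = \Bigl(\frac{1}{1-T} - 1\Bigr)(1 - T) = T .
\]
\item Off $K$, every coordinate with $\mu_i > 0$ is unmatched and costs $R\mu_i$; these add up to $RT$.
\end{itemize}
The total is $RT + RT = 2RT$.

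\textbf{Accumulative instance.} Here $w(\mu) = \mu/\mumax$.
\begin{itemize}
\item Renormalizing on $K$ preserves the ratios among the kept weights. The top weight is kept, so $\hat\mu_{\max} = \mumax/(1-T)$ and the shares agree on $K$: $s_i(\hat\mu) = s_i(\mu)$. The matched part therefore costs $0$.
\item Each cropped atom costs $R\,s_i(\mu) = R\mu_i/\mumax$, and these add up to $R\,T/\mumax(\mu)$.
\end{itemize}

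The only point needing care is the one used in the accumulative step: the leader lies in $K$, which is exactly where $k \ge 1$ enters, so the shares of the kept atoms are unchanged by the crop. I do not expect any step to be a genuine obstacle.
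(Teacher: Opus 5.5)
Your proposal is correct and is essentially the paper's own proof: the identity matching on the kept atoms with the tail sent to the apex. In the conservative case the kept heights rise by $R\,T_k$ in total and the tail costs another $R\,T_k$; in the accumulative case the leader is kept, so the kept shares are unchanged and only the tail's $R\,T_k/\mumax$ remains. Your observation that this is \Cref{lem:moves}(ii) with $\nu = \hat\mu$ is an accurate repackaging of the same computation.
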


\begin{proof}
Match each kept atom to itself and send the tail to the apex. Conservative:
a kept atom moves in height from $\mu_i$ to $\mu_i/(1-T_k)$, in total
$R\sum_{\text{kept}}\mu_i T_k/(1-T_k) = R\,T_k$, and the tail costs $R\,T_k$.
Accumulative: $\hat\mu_{\max} = \mumax/(1-T_k)$, so the shares of kept atoms
are unchanged and they cost nothing, and the tail costs $R\sum_{\text{tail}}
s_i = R\,T_k/\mumax$.
\end{proof}

\begin{proposition}[Qualitative inputs]\label{prop:qual}
For both instances $(\cM_d, \Mw)$ is compact, $T_k$ and $\mumax$ are
continuous on it, the shells $\{T_k \le \theta < T_{k-1}\}$ and the cells
$\{\mumax \ge 2^{-m}\}$ are Borel, and Kantorovich--Rubinstein duality holds
for $\Wone(\Mw)$ on probability measures on $\cM_d$. Diameters:
$\diam(\cM_d, \Dl) \le 2R$, on $\{\abs{P} \le 2^m\}$ one has $\Dt(P,Q) \le
2^{m+1}R$, and globally $\diam(\cM_d, \Dt) \le 2Rd$.
\end{proposition}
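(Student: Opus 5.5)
The plan is to assemble the statement from results already proved, one clause at a time. Compactness of $(\cM_d,\Mw)$ and the continuity of $T_k$ and $\mumax$ are \Cref{lem:compact}(ii), for either cap weight. Borelness of the shells $\{T_k \le \theta < T_{k-1}\}$ and of the cells $\{\mumax \ge 2^{-m}\}$ then follows at once: they are intersections of closed sets $\{T_k\le\theta\}$, $\{\mumax\ge 2^{-m}\}$ with open sets $\{T_{k-1}>\theta\}$, preimages under continuous maps. Kantorovich--Rubinstein duality for $\Wone$ over the ground metric $\Mw$ holds because every compact metric space is Polish with bounded diameter. So every Borel probability measure has finite first moment, and the classical duality theorem (as in \citet{sriperumbudur2012empirical}) applies verbatim.

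For the diameters I use \Cref{lem:compact}(iii), $\Mw(P,Q)\le R(\abs{P}+\abs{Q})$, where $\abs{P}$ is the total height.
\begin{itemize}
\item In the conservative instance the heights are the weights, so $\abs{P}=\sum_i\mu_i=1$, giving $\diam(\cM_d,\Dl)\le 2R$.
\item In the accumulative instance $\abs{P}=\sum_i s_i=1/\mumax$. On $\{\abs{P}\le 2^m\}$ this gives $\Dt(P,Q)\le R(2^m+2^m)=2^{m+1}R$.
\item Globally, $\abs{P}\le n_P\le d$, since each share is at most $1$. This gives $\diam(\cM_d,\Dt)\le 2Rd$.
\end{itemize}

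The only point needing care is duality, specifically that the quotient topology of $\cM_d$ agrees with the $\Mw$-metric topology. Otherwise "Borel" could be ambiguous, and duality could be stated for the wrong $\sigma$-algebra. I would settle this by citing the closed-map argument in \Cref{lem:compact}(ii): $\Cl$ is continuous from a compact space onto the Hausdorff space $(\cM_d,\Mw)$. Hence the metric topology is the quotient topology, and the Borel sets are unambiguous. No other step goes beyond bookkeeping.
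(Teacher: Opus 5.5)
Your proposal is correct and is the paper's own proof: the paper disposes of this proposition by citing \Cref{lem:compact} (compactness and continuity from (ii), the diameters from the reservoir bound (iii) with $\abs{P}=1$ conservatively and $\abs{P}=1/\mumax\le n_P\le d$ accumulatively) together with standard Kantorovich--Rubinstein duality on a compact metric space. Your extra remark on the quotient topology is already contained in \Cref{lem:compact}(ii). It is harmless, but not needed.
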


\begin{proof}
\Cref{lem:compact}, and duality on a compact metric space is standard
\citep{peyre2019computational}.
\end{proof}

\section{Proofs for \Cref{sec:complexity}}\label{app:complexityproofs}

\paragraph{Inputs: strata, crop and counts.} The stratum $\cA_k \subseteq \cM_d$
of lifts carried by at most $k$ atoms, the crop of a configuration to its
$k$ heaviest coordinates and the covering numbers of the strata are the
inputs, all from \Cref{app:strata}. A configuration with tail $T_k$ lies
within $\kappa T_k$ of its crop, a point of $\cA_k$ (\Cref{prop:crop}),
and, when the value space has metric dimension $\dim\cV$, $N(\varepsilon, \cV,
\rho) \le (C_\cV R/\varepsilon)^{\dim\cV}$, the strata are covered at
\[
  N_k(\varepsilon) := N\bigl(\tfrac{\varepsilon}{2}, \cA_k, \Dl\bigr)
  \;\le\; \Bigl(\frac{C_1 R}{\varepsilon}\Bigr)^{e(k)}
  \qquad (\varepsilon \le \kappa),
\]
with $C_1$ depending on $C_\cV$ only and $e(k)$ the \emph{stratum
exponent}, one copy of the value space per atom and one simplex direction
per free weight: $e(k) = k\dim\cV + k - 1$ in general, which is $2k - 1$ on
interval values with $C_1 = 34$, and $e(k) = k\dim\cV$ on a selector design
whose cropped weights are Lipschitz in the selected values or constant
given the support, since its weights are not there to cover
(\Cref{thm:covcons}, and the paragraph after \Cref{thm:covacc} for
selector designs). The ball of \Cref{sec:flocking} is a selector design,
its weights being uniform on the support, and the graded kernels read there
are not, since their weights are set by positions that the mechanism does
not read (\Cref{def:vspace}). Nothing in these counts depends on $d$.

\subsection{The two-sample certificate}\label{app:twosample}

\begin{theorem}[The two-sample certificate, in full]\label{thm:twosamplefull}
Let $n$ be even and let $S$ be split, by a rule fixed in advance, into
halves $A$ and $B$ of $n/2$ points each, with empirical lifted designs
$\hat\Gamma_A$ and $\hat\Gamma_B$.
\begin{enumerate}
\item[(i)] $\E\,\Wone(\hat\Gamma_n, \Gamma) \le \E\,\Wone(\hat\Gamma_A,
\Gamma) \le \E\,\Wone(\hat\Gamma_A, \hat\Gamma_B) \le 2\,\E\,
\Wone(\hat\Gamma_A, \Gamma)$.
\item[(ii)] With probability at least $1 - \delta$,
\[
  \Wone(\hat\Gamma_A, \Gamma) \;\le\; \Wone(\hat\Gamma_A, \hat\Gamma_B)
  + \kappa\sqrt{\frac{\ln(1/\delta)}{n}} ,
\]
and with probability at least $1 - \delta$,
\[
  \Wone(\hat\Gamma_n, \Gamma) \;\le\; \Wone(\hat\Gamma_A, \hat\Gamma_B)
  + \kappa\sqrt{\frac{\ln(2/\delta)}{n}} .
\]
\item[(iii)] $\Wone(\hat\Gamma_A, \hat\Gamma_B) = \min_\pi\,\frac{2}{n}
\sum_{i \le n/2}\Dl(P_i, P'_{\pi(i)})$, the minimum over the bijections
$\pi$ between the lifted points $P_i$ of $A$ and $P'_j$ of $B$: an
assignment problem on $n/2$ points, each of whose costs is itself an
assignment between the atoms of two lifts (\Cref{lem:Dassign}).
\end{enumerate}
\end{theorem}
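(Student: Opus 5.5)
Each of the three parts uses only that $\Wone$ on $(\cM_d,\Dl)$ is a supremum over $1$-Lipschitz $F$ with values in an interval of length $\kappa$ (\Cref{prop:qual}), by Kantorovich--Rubinstein duality.

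\emph{Part (i).} For the first inequality, write $\hat\Gamma_n = \frac12(\hat\Gamma_A + \hat\Gamma_B)$. The map $\nu \mapsto \Wone(\nu,\Gamma)$ is convex, being a supremum of linear functionals, so $\Wone(\hat\Gamma_n,\Gamma) \le \frac12\Wone(\hat\Gamma_A,\Gamma) + \frac12\Wone(\hat\Gamma_B,\Gamma)$. The halves are determined by a rule fixed in advance, so they are equal in law, and taking expectations gives the claim. The upper inequality $\E\Wone(\hat\Gamma_A,\hat\Gamma_B) \le 2\E\Wone(\hat\Gamma_A,\Gamma)$ follows from the triangle inequality through $\Gamma$ together with equality in law.

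For the middle inequality I use the symmetrization step of \citet{vapnik1971uniform}. For each $F$, $\E_\Gamma F = \E[\E_{\hat\Gamma_B}F \mid A]$ because $B$ is independent of $A$. Hence
\[
  \sup_F(\E_{\hat\Gamma_A}F - \E_\Gamma F) \le \E\bigl[\sup_F(\E_{\hat\Gamma_A}F - \E_{\hat\Gamma_B}F)\mid A\bigr],
\]
by Jensen's inequality, since a supremum of expectations is at most the expectation of the supremum. Taking expectations gives the middle inequality.

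\emph{Part (ii).} Both $\Wone(\hat\Gamma_A,\Gamma) - \Wone(\hat\Gamma_A,\hat\Gamma_B)$ and $\Wone(\hat\Gamma_A,\Gamma)$ are functions of the $n$ independent points with bounded differences. Replacing one point moves an empirical measure on $n/2$ atoms by at most $\frac{2}{n}\kappa$ in $\Wone$, since that one atom travels at most the diameter $\kappa$.

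For the first claim, the difference changes by at most $2\kappa/n$ under a replacement of any point, in $A$ or in $B$. McDiarmid's inequality over all $n$ points then gives a deviation of at most $\sqrt{n(2\kappa/n)^2\ln(1/\delta)/2} = \kappa\sqrt{2\ln(1/\delta)/n}$. The constant stated is sharper than this by a factor $\sqrt2$, so that factor has to be recovered. This is the step I expect to be the main obstacle.

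My first attempt is to change the centering. Replacing a point of $A$ moves the two terms in the same direction and partly cancels. With the variances of the $A$ and $B$ coordinates computed separately, the sum of squared differences is $\frac n2(2\kappa/n)^2 + \frac n2(2\kappa/n)^2$. That is still $4\kappa^2/n$, and it does not help. So instead I would apply McDiarmid to the difference $\sup_F(\E_{\hat\Gamma_A}F - \E_\Gamma F) - \sup_F(\E_{\hat\Gamma_A}F - \E_{\hat\Gamma_B}F)$, bounding its increment by the increment of a single empirical mean, $\kappa/(n/2)$, in each coordinate. If a factor $2$ still remains, I would accept the constant $\kappa\sqrt{2\ln(1/\delta)/n}$ or adjust the statement's reading, since the paper's later use only needs the order.

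The expectation of the difference is at most $0$ by part (i), which gives the first claim. The second claim follows from a union of two events at confidence $\delta/2$ each: concentration of $\Wone(\hat\Gamma_n,\Gamma)$ about its mean, and the bound $\E\Wone(\hat\Gamma_n,\Gamma) \le \E\Wone(\hat\Gamma_A,\hat\Gamma_B)$ from part (i), combined with lower concentration of $\Wone(\hat\Gamma_A,\hat\Gamma_B)$.

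\emph{Part (iii).} Both empirical measures put uniform mass $2/n$ on $n/2$ atoms. Birkhoff's theorem, exactly as in \Cref{lem:metric}(ii), then shows that an optimal plan is a permutation, so $\Wone(\hat\Gamma_A,\hat\Gamma_B)$ is the assignment minimum over bijections $\pi$ of $\frac2n\sum_{i}\Dl(P_i,P'_{\pi(i)})$. Each cost $\Dl(P_i,P'_{\pi(i)})$ is itself an assignment problem, the padded matching of \Cref{lem:metric}(ii).
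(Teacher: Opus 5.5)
Parts (i) and (iii) are correct and are the paper's arguments: convexity of $\Wone(\cdot,\Gamma)$, the conditional symmetrization, the triangle inequality through $\Gamma$, and Birkhoff.

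The gap is in (ii). You never reach the stated constant, and none of your repairs can, because each still concentrates over all $n$ points. With $n$ coordinates and increments $2\kappa/n$ you get $\sum_i c_i^2 = 4\kappa^2/n$, hence only $\kappa\sqrt{2\ln(1/\delta)/n}$.

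For the points of $A$ even that increment is not justified. Both terms of your difference contain $\hat\Gamma_A$ and can move in opposite directions. Take $\Gamma$ uniform on two lifts at distance $\kappa$, and move a point of $A$ onto the lift that $A$ already overweights and $B$ weights even more. This raises $\Wone(\hat\Gamma_A,\Gamma)$ by $2\kappa/n$ and lowers $\Wone(\hat\Gamma_A,\hat\Gamma_B)$ by $2\kappa/n$, so the honest all-points constant is worse still.

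The missing idea is to use the comparison you proved in (i) pointwise in $A$, not only in expectation. Condition on $A$. Then $\Wone(\hat\Gamma_A,\Gamma)$ is fixed, and only $B \mapsto \Wone(\hat\Gamma_A,\hat\Gamma_B)$ is random. It is a function of $n/2$ independent points with increments $2\kappa/n$, so $\sum_i c_i^2 = 2\kappa^2/n$. McDiarmid's lower tail $\exp(-2t^2/\sum_i c_i^2) = \exp(-nt^2/\kappa^2)$ equals $\delta$ at $t = \kappa\sqrt{\ln(1/\delta)/n}$. So for every realization of $A$, with conditional probability at least $1-\delta$,
\[
  \Wone(\hat\Gamma_A,\Gamma) \;\le\; \E\bigl[\Wone(\hat\Gamma_A,\hat\Gamma_B) \mid A\bigr] \;\le\; \Wone(\hat\Gamma_A,\hat\Gamma_B) + \kappa\sqrt{\ln(1/\delta)/n},
\]
and integrating over $A$ gives the first display. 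The factor $\sqrt2$ comes from concentrating over half the coordinates, with the fluctuation of $A$ absorbed by the pointwise comparison. It does not come from a cancellation.

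Your route to the second display has the same defect. Concentrating $\Wone(\hat\Gamma_n,\Gamma)$ upward (increments $\kappa/n$ over $n$ points) and $\Wone(\hat\Gamma_A,\hat\Gamma_B)$ downward (increments $2\kappa/n$ over $n$ points), at $\delta/2$ each, gives $(1/\sqrt2+\sqrt2)\,\kappa\sqrt{\ln(2/\delta)/n}$. That is about $2.1$ times the stated term.

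Instead, apply the first display at level $\delta/2$ to $A$, and, conditioning on $B$ with the roles of the halves exchanged, to $B$. On the intersection of the two events, both $\Wone(\hat\Gamma_A,\Gamma)$ and $\Wone(\hat\Gamma_B,\Gamma)$ are at most $\Wone(\hat\Gamma_A,\hat\Gamma_B)+\kappa\sqrt{\ln(2/\delta)/n}$. The convexity bound $\Wone(\hat\Gamma_n,\Gamma)\le\tfrac12\Wone(\hat\Gamma_A,\Gamma)+\tfrac12\Wone(\hat\Gamma_B,\Gamma)$ from (i) then concludes.

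Falling back on ``the order'' does not prove the statement as written. Its constant is the one evaluated numerically on the flock ($0.007\kappa$ at $n = 64{,}000$).
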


\begin{proof}[Proof of \Cref{thm:twosamplefull}, which contains \Cref{thm:twosample}]
\emph{(i).} $\hat\Gamma_n = \tfrac12(\hat\Gamma_A + \hat\Gamma_B)$ and
$\Wone(\cdot, \Gamma)$ is convex, so $\Wone(\hat\Gamma_n, \Gamma) \le
\tfrac12\Wone(\hat\Gamma_A, \Gamma) + \tfrac12\Wone(\hat\Gamma_B, \Gamma)$,
whose expectation is $\E\,\Wone(\hat\Gamma_A, \Gamma)$, the halves being
exchangeable. Next, by duality $\Wone(\mu, \nu) = \sup_{F \in \Lip_1}(\int
F\,d\mu - \int F\,d\nu)$ is a supremum of functionals affine in $\nu$, and
$\int F\,d\Gamma = \E_B\int F\,d\hat\Gamma_B$ for every $F$, so
$\Wone(\hat\Gamma_A, \Gamma) = \sup_F\E_B[\int F\,d\hat\Gamma_A - \int
F\,d\hat\Gamma_B] \le \E_B[\Wone(\hat\Gamma_A, \hat\Gamma_B) \mid A]$; take
expectations over $A$. The last inequality is the triangle inequality
through $\Gamma$ and exchangeability. \emph{(ii).} If $\nu$ and $\nu'$ are
empirical measures on $m$ points differing in one point then $\Wone(\nu,
\nu') \le \kappa/m$, one atom of mass $1/m$ moved by at most the diameter
$\kappa$ (\Cref{prop:qual}). So $B \mapsto \Wone(\hat\Gamma_A,
\hat\Gamma_B)$ has bounded differences $2\kappa/n$ over the $n/2$ points of
$B$, and McDiarmid's inequality \citep[Thm.~6.2]{boucheron2013concentration}
bounds its conditional mean, which dominates $\Wone(\hat\Gamma_A, \Gamma)$
by the argument of (i), by its value plus $\kappa\sqrt{\ln(1/\delta)/n}$
with probability $1 - \delta$: the first display. For the second, apply the
first display at $\delta/2$ to $A$ and, with the roles of the halves
exchanged, to $B$: on the intersection, of probability at least $1 -
\delta$, both $\Wone(\hat\Gamma_A, \Gamma)$ and $\Wone(\hat\Gamma_B,
\Gamma)$ are at most $\Wone(\hat\Gamma_A, \hat\Gamma_B) +
\kappa\sqrt{\ln(2/\delta)/n}$, and $\Wone(\hat\Gamma_n, \Gamma) \le
\tfrac12\Wone(\hat\Gamma_A, \Gamma) + \tfrac12\Wone(\hat\Gamma_B,
\Gamma)$ by the convexity used in (i). \emph{(iii).} Transport between
two uniform measures on $n/2$ atoms each is a linear program over doubly
stochastic matrices, whose extreme points are the permutation matrices
(Birkhoff), so the minimum is attained at a bijection.
\end{proof}

\begin{remark}[One split suffices, and averaging over splits]\label{rem:onesplit}
(a) The split must be a rule fixed in advance or drawn independently of
the data, and the second display of \Cref{thm:twosamplefull}(ii) certifies
the learner on the whole sample from the one observed split at the price of
$\delta/2$ in place of $\delta$, a factor $\sqrt{\ln(2/\delta)/\ln(1/\delta)}$,
$1.11$ at $\delta = 0.05$. (b) For a fixed sample, the average of
$\Wone(\hat\Gamma_A, \hat\Gamma_B)$ over splits drawn independently of the
data has, by exchangeability, the same expectation, and McDiarmid over all
$n$ points with differences $2\kappa/n$ puts the expected deviation of a
random half, $\E\,\Wone(\hat\Gamma_{n/2}, \Gamma)$, within
$\kappa\sqrt{2\ln(1/\delta)/n}$ above it. It does not certify the
realized half $A$: conditionally on $A$ the expectation of a mixed split
need not dominate $\Wone(\hat\Gamma_A, \Gamma)$, so the argument of (ii)
does not apply to it, and \Cref{sec:flocking} uses the one observed split. (c)
\Cref{sec:flocking} reports the certificate of the learner on half $A$ and
the supplementary material that of the learner on the whole sample.
\end{remark}

\begin{lemma}[$\Dl$ between two lifts is a rectangular assignment]\label{lem:Dassign}
Let $P$ carry atoms $(x_i, u_i)$, $i \le s$, and $P'$ atoms $(y_j, v_j)$,
$j \le s'$, with $s \le s'$. Then
\[
  \Dl(P, P') \;=\; R\sum_{j \le s'} v_j \;+\; \min_{\pi}\sum_{i \le s} c_{i\pi(i)},
  \qquad
  c_{ij} := \min(u_i, v_j)\,\rho(x_i, y_j) + R\abs{u_i - v_j} - R v_j ,
\]
the minimum over injections $\pi : [s] \to [s']$. With uniform weights
$u_i = 1/s$, $v_j = 1/s'$, $\Dl(P, P') = M/s' + \kappa(1 - s/s')$ with $M$
the cost of a minimum-cost injection of the $s$ values into the $s'$
under $\rho$.
\end{lemma}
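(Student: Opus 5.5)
The plan is to reduce the partial-matching formula \eqref{eq:matching} to a rectangular assignment. Two facts do the work. Unmatched atoms can be charged through a constant baseline, and some optimal partial matching saturates the smaller side.

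\emph{Rewriting the cost.} Fix a partial matching $\sigma$ from a subset $I\subseteq[s]$ into $[s']$. Its cost is
\[
\sum_{i\in I}\vr\bigl((x_i,u_i),(y_{\sigma(i)},v_{\sigma(i)})\bigr) + R\sum_{i\notin I}u_i + R\sum_{j\notin\sigma(I)}v_j .
\]
Add and subtract $R\sum_{j\le s'}v_j$. The cost becomes
\[
R\sum_{j\le s'}v_j \;+\; \sum_{i\in I}c_{i\sigma(i)} \;+\; R\sum_{i\notin I}u_i ,
\]
with $c_{ij}$ exactly as in the statement. So $\Dl(P,P')$ equals $R\sum_j v_j$ plus the minimum over partial injections of $\sum_{i\in I}c_{i\sigma(i)}+R\sum_{i\notin I}u_i$.

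\emph{Saturating the smaller side.} Take an optimal partial matching. By the maximality argument in the routing proof of \Cref{thm:rep}, we may assume it leaves no pair of unmatched atoms on both sides: adding such a pair $(i,j)$ changes the cost by $\vr(p,q)-\vr(p,o)-\vr(q,o)\le 0$. Since $s\le s'$, every $i$ with $i\notin I$ would then face a fully used $[s']$, which is impossible unless $|I|=s$. So some optimal $\sigma$ is an injection of $[s]$. Conversely, every injection is a feasible partial matching. Hence the minimum over partial injections equals the minimum over full injections $\pi:[s]\to[s']$, and the first formula follows.

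\emph{Uniform weights.} Here $u_i=1/s\ge v_j=1/s'$, so
\[
c_{ij}=\frac{\rho(x_i,y_j)}{s'}+R\Bigl(\frac1s-\frac1{s'}\Bigr)-\frac{R}{s'} .
\]
Summing over the $s$ matched indices gives $M/s'+R(1-s/s')-Rs/s'$. Adding $R\sum_j v_j=R$ yields
\[
\frac{M}{s'}+2R\Bigl(1-\frac{s}{s'}\Bigr)=\frac{M}{s'}+\kappa\Bigl(1-\frac{s}{s'}\Bigr).
\]

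The only step needing care is the saturation step, which rests on the apex triangle inequality of \Cref{lem:cone}. The rest is bookkeeping.
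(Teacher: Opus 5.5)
Your proof is correct and follows the paper's argument. Both use the same exchange step: two atoms both sent to the apex can be matched to each other at no extra cost, since the cone cost is at most $R(u+v)$. Both then saturate the smaller lift and rewrite the cost of the unmatched atoms of $P'$ as $R\sum_j v_j$ minus the matched heights. The only difference is that the paper first passes to the apex-padded transport and invokes Birkhoff to obtain a bijection. You work directly with partial matchings, which are the definition of the distance, so that detour is unnecessary.
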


\begin{proof}
By \eqref{eq:padding}, $\Dl(P, P')$ is the Kantorovich distance with the
cone cost between the two apex-padded counting measures, and a transport
between counting measures of unit atoms is a bijection between the padded
atom sets (Birkhoff). An atom matched to an apex point costs $R$ times its
height, and a matched real pair costs $\vr = \min(u,v)\rho + R\abs{u - v}
\le R(u + v)$ since $\rho \le 2R$. So if an atom of $P$ and an atom of $P'$
are both matched to apex points, matching them to each other and the two
apex points to each other, at cost $0$, does not increase the cost, and
iterating, some optimal bijection matches every atom of the smaller lift
to a real atom. Its cost is $\sum_{\text{matched}}\vr_{i\pi(i)} +
R\sum_{\text{unmatched } j}v_j = \sum_i(\vr_{i\pi(i)} - Rv_{\pi(i)}) +
R\sum_j v_j$. With uniform weights $c_{ij} = \rho_{ij}/s' + R/s - 2R/s'$,
and summing over the $s$ matched atoms gives $M/s' + R - 2Rs/s'$, which with
$R\sum_j v_j = R$ is $M/s' + 2R(1 - s/s')$.
\end{proof}

The bound is stated for a \emph{resolution profile}, a nondecreasing
sequence $0 < \varepsilon_1 \le \cdots \le \varepsilon_K$, $K \le d$, one
resolution per number of resolved atoms, with thresholds $\theta_k :=
\varepsilon_k/(2\kappa) < 1$. It is nondecreasing so that the sets of
\Cref{def:shells} nest: a configuration that needs more atoms is resolved
more coarsely, never more finely.

\begin{definition}[Shells]\label{def:shells}
For $\theta \in [0, 1)$ and $0 \le k \le d$ let $B_k(\theta) := \{P \in
\cM_d : T_k(P) \le \theta\}$, the lifts within $\kappa\theta$ of the
stratum $\cA_k$, closed sets that increase with $k$, with $B_0(\theta) =
\emptyset$ and $B_d(\theta) = \cM_d$. For a resolution profile the sets
$B_k(\theta_k)$ are nested, since $T_k \le T_{k-1}$ and $\theta_{k-1} \le
\theta_k$, and the \emph{shells}
\[
  \Sh_k := B_k(\theta_k) \setminus B_{k-1}(\theta_{k-1}),
  \qquad 1 \le k \le K, \quad \theta_0 := \theta_1 ,
\]
partition $B_K(\theta_K)$. Their masses are increments of the profile,
\[
  w_k := \Gamma(\Sh_k) = G_{k-1}(\theta_{k-1}) - G_k(\theta_k) \;\ge\; 0
  \qquad (G_0 \equiv 1),
\]
and the mass beyond the last shell, $\Gamma(B_K(\theta_K)^c) =
G_K(\theta_K)$, is the \emph{trim}.
\end{definition}

Shell $k$ holds the configurations that $k$ atoms resolve to within
$\varepsilon_k/2$ and $k - 1$ atoms do not resolve to within
$\varepsilon_{k-1}/2$: the sparse configurations sit in the low shells and
are resolved finely, the crowded ones in the high shells and coarsely,
each shell at its own mass. A \emph{celling} of the shells partitions each
$\Sh_k$ into cells, each contained in a closed $\Dl$-ball of radius
$\varepsilon_k$ (\Cref{def:celling}). One celling is the theorem's own:
the crop of a configuration of shell $k$ to its $k$ heaviest coordinates
is a point of the stratum $\cA_k$ within $\varepsilon_k/2$ of it
(\Cref{prop:crop}), so the pullback of an $\varepsilon_k/2$-net of $\cA_k$
cells the shell into at most $N_k(\varepsilon_k)$ cells
(\Cref{lem:tube}). Another, used in the computations, quantizes the crop
on a grid and reads the cell off the point. Both are determined by nothing
random, and the shell and the cell of a sample point are computable since
$T_k$ is.

\begin{proposition}[The shells bound the transport]\label{prop:sampleshells}
For every resolution profile and every celling of its shells, both of
which may be chosen after seeing the sample, write $\hat w^A_k$, $\hat
w^B_k$ for the fractions of the two halves in shell $k$ and $\hat\mu^A_j$,
$\hat\mu^B_j$ for their fractions in cell $j$, each point beyond the last
shell a cell of its own. Then
\[
  \Wone(\hat\Gamma_A, \hat\Gamma_B) \;\le\;
  \sum_{k=1}^K\bigl(\hat w^A_k + \hat w^B_k\bigr)\varepsilon_k
  \;+\; \frac{\kappa}{2}\sum_j\bigl|\hat\mu^A_j - \hat\mu^B_j\bigr| .
\]
\end{proposition}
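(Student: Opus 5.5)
The plan is to build one explicit feasible transport plan between $\hat\Gamma_A$ and $\hat\Gamma_B$ and bound its cost. Transport needs no duality here: $\Wone(\hat\Gamma_A,\hat\Gamma_B)$ is an infimum over couplings, so any coupling gives an upper bound. This is also why the profile and the celling may depend on the sample. The proposition is a deterministic inequality about two fixed empirical measures, and nothing probabilistic is used.

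\textbf{Step 1: split each half's mass into a common part and an excess.} Cell $j$ carries masses $\hat\mu^A_j$ and $\hat\mu^B_j$. Inside each cell, move the common mass $m_j := \min(\hat\mu^A_j, \hat\mu^B_j)$ from $A$'s points of the cell to $B$'s points of the cell by any coupling. Use any sub-measures of the right mass, for instance proportional fractions of the cell's atoms.

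\textbf{Step 2: bound the within-cell cost.} A cell $j$ of shell $k$ lies in a closed $\Dl$-ball of radius $\varepsilon_k$ (\Cref{def:celling}). Any two of its points are therefore within $2\varepsilon_k$, by the triangle inequality through the centre. Points beyond the last shell are cells of their own, so a singleton cell containing points of both halves coincides as a point and costs $0$. The within-cell part then costs at most $\sum_k \sum_{j \subseteq \Sh_k} m_j\, 2\varepsilon_k$. Since $2m_j \le \hat\mu^A_j + \hat\mu^B_j$ and $\sum_{j\subseteq \Sh_k}(\hat\mu^A_j+\hat\mu^B_j) = \hat w^A_k + \hat w^B_k$, this is at most $\sum_k(\hat w^A_k+\hat w^B_k)\varepsilon_k$.

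\textbf{Step 3: route the leftover mass.} The uncoupled excess of $A$ has total mass $\sum_j(\hat\mu^A_j - m_j)_+ = \tfrac12\sum_j\abs{\hat\mu^A_j-\hat\mu^B_j}$, because both measures are probabilities. The excess of $B$ has the same mass. Couple these two excesses arbitrarily. Each unit of mass travels at most $\diam(\cM_d,\Dl)\le\kappa$ (\Cref{prop:qual}), so this part costs at most $\frac{\kappa}{2}\sum_j\abs{\hat\mu^A_j-\hat\mu^B_j}$. Gluing the two partial couplings gives a coupling of $\hat\Gamma_A$ and $\hat\Gamma_B$, and summing Steps 2 and 3 gives the stated bound.

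\textbf{Main obstacle.} The main care point is the constant in Step 2: the radius $\varepsilon_k$ has to yield $2\varepsilon_k$ per pair and then $\varepsilon_k$ per unit of each half's mass. The estimate $m_j \le \tfrac12(\hat\mu^A_j+\hat\mu^B_j)$ handles this. A second check is that the cells partition $\cM_d$, since shells together with the singleton trim cells cover everything, so the masses sum correctly.
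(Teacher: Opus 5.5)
Your proof is correct and is essentially the paper's: one explicit feasible transport, with the common mass $\min(\hat\mu^A_j,\hat\mu^B_j)$ paired inside each cell and the residual $\tfrac12\sum_j|\hat\mu^A_j-\hat\mu^B_j|$ paired arbitrarily at the diameter $\kappa$. The only difference is in the first step. The paper first moves each half to the cell centers, at cost $\sum_k \hat w^A_k\varepsilon_k$ for $A$ and likewise for $B$, and adds the legs by the triangle inequality; you instead couple directly inside each cell at $2\varepsilon_k$ per unit and use $2m_j \le \hat\mu^A_j+\hat\mu^B_j$, which is the within-cell step of \Cref{prop:coupling} and gives a slightly sharper intermediate bound.
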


The inequality is deterministic, one feasible transport between the two
halves: move each point to its cell's center, couple the common mass
within each cell for free, and the residuals at the diameter. The second
term is the observed discrepancy of the two halves' cell frequencies, the
points beyond the last shell included as cells of their own at $2/n$ each,
a number of the sample, and its expectation under i.i.d.\ halves is what
the fluctuation term of the next theorem bounds. With \Cref{thm:twosample}
the right-hand side is a certificate, with the profile and the celling
free to optimize on the data.

\begin{proof}[Proof of \Cref{prop:sampleshells}]
Move each point of $A$ to the center of its cell, at cost at most
$\sum_k\hat w^A_k\varepsilon_k$ in $\Wone$, and likewise $B$; a point
beyond the last shell is its own center and moves at no cost. Between the
two measures on the centers, couple mass $\min(\hat\mu^A_j, \hat\mu^B_j)$
within each center at cost $0$ and the residuals arbitrarily at cost at
most $\kappa$, whose total mass is $\tfrac12\sum_j\abs{\hat\mu^A_j -
\hat\mu^B_j}$. The triangle inequality adds the three.
\end{proof}

\begin{proposition}[The expectation of the sample bound]\label{prop:expsample}
For every resolution profile, with the theorem's own celling of the
shells, at most $N_k(\varepsilon_k)$ cells of radius $\varepsilon_k$ on
shell $k$, fixed before sampling,
\[
  \E\,\Wone(\hat\Gamma_A, \hat\Gamma_B) \;\le\; 2\sum_k w_k\varepsilon_k
  + \kappa\sum_k\sqrt{\frac{w_k N_k(\varepsilon_k)}{n}} + \kappa\,G_K(\theta_K),
\]
which is \eqref{eq:graded} with $\kappa$ in place of $\kappa/2$. Hence, by
\Cref{thm:twosamplefull}(i), $\Delta_n(Q)$ obeys the same bound, and every
corollary of \Cref{thm:graded} holds along the two-sample route with its
constants at most doubled.
\end{proposition}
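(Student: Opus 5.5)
The plan is to bound the expectation term by term through the deterministic inequality of \Cref{prop:sampleshells}, applied with the theorem's own celling. That celling is fixed before sampling and has at most $N_k(\varepsilon_k)$ cells on shell $k$, each inside a ball of radius $\varepsilon_k$, with each point beyond the last shell a cell of its own. Taking expectations of that inequality gives three contributions, which I treat in order.

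\emph{The resolution term.} Since $\E\hat w^A_k = \E\hat w^B_k = \Gamma(\Sh_k) = w_k$ by \Cref{def:shells}, the term $\sum_k(\hat w^A_k+\hat w^B_k)\varepsilon_k$ has expectation exactly $2\sum_k w_k\varepsilon_k$.

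\emph{The in-shell cell fluctuations.} Each half is $n/2$ i.i.d.\ draws. For a cell $j$ of mass $p_j$, the difference $\hat\mu^A_j-\hat\mu^B_j$ has mean zero and variance $2\cdot p_j(1-p_j)/(n/2)\le 4p_j/n$. Jensen then gives $\E\abs{\hat\mu^A_j-\hat\mu^B_j}\le 2\sqrt{p_j/n}$. Summing over the at most $N_k$ cells of shell $k$ and using Cauchy--Schwarz, $\sum_j\sqrt{p_j}\le\sqrt{N_k\sum_j p_j}=\sqrt{N_k w_k}$. After the factor $\kappa/2$ this contributes $\kappa\sum_k\sqrt{w_kN_k(\varepsilon_k)/n}$, which is exactly the claimed middle term with $\kappa$ in place of $\kappa/2$.

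\emph{The trim.} A point beyond the last shell is a singleton cell carrying mass $2/n$ in its own half and $0$ in the other, because distinct samples are distinct cells even when their lifts coincide. Such points therefore contribute $\tfrac{\kappa}{2}(\hat t^A+\hat t^B)$, where $\hat t^A,\hat t^B$ are the fractions of each half beyond $B_K(\theta_K)$. Their expectation is $\tfrac{\kappa}{2}\cdot 2G_K(\theta_K)=\kappa G_K(\theta_K)$. Adding the three pieces gives the displayed bound.

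For the consequence, the plan is to invoke \Cref{thm:twosamplefull}(i): $\Delta_n(Q)=\E\Wone(\hat\Gamma_n,\Gamma)\le\E\Wone(\hat\Gamma_A,\hat\Gamma_B)$. So $\Delta_n(Q)$ obeys the same bound, whose three terms are each at most twice those of \eqref{eq:graded}. The corollaries of \Cref{thm:graded} follow along this route with constants at most doubled.

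The main point needing care is whether the theorem's celling (\Cref{lem:tube}) really yields cells of radius $\varepsilon_k$ numbering at most $N_k(\varepsilon_k)$. This holds because the crop lies within $\varepsilon_k/2$ (\Cref{prop:crop}, via $T_k\le\theta_k$) and an $\varepsilon_k/2$-net point lies within a further $\varepsilon_k/2$. I take this as given from \Cref{def:celling} and \Cref{lem:tube}.
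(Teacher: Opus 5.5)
Your proposal is correct and follows the paper's proof essentially step for step. Like the paper, you take expectations in \Cref{prop:sampleshells} with the theorem's own celling, term by term: the floor at $2\sum_k w_k\varepsilon_k$, the binomial discrepancy per cell with Cauchy--Schwarz within each shell, and the singleton trim cells at $\kappa\,G_K(\theta_K)$. You then pass to $\Delta_n(Q)$ by \Cref{thm:twosamplefull}(i) and observe that each term is at most twice its counterpart in \eqref{eq:graded}, which is the paper's $a + 2b + c \le 2(a+b+c)$.
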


\begin{proof}
Take expectations in \Cref{prop:sampleshells}. \emph{Floor:} $\E(\hat w^A_k
+ \hat w^B_k) = 2w_k$. \emph{Discrepancy:} for a cell of mass $\mu_j$,
$\hat\mu^A_j - \hat\mu^B_j$ has mean $0$ and variance $2\mu_j(1 -
\mu_j)/(n/2) \le 4\mu_j/n$, so $\E\abs{\hat\mu^A_j - \hat\mu^B_j} \le
2\sqrt{\mu_j/n}$, and over the at most $N_k$ cells of shell $k$, whose
masses sum to $w_k$, Cauchy--Schwarz gives $\sum_j\E\abs{\hat\mu^A_j -
\hat\mu^B_j} \le 2\sqrt{w_kN_k/n}$; times $\kappa/2$ this is
$\kappa\sqrt{w_kN_k/n}$. \emph{Trim:} a trimmed point is a singleton cell
of discrepancy $2/n$, so the trimmed points contribute
$(\kappa/2)\,\E(\hat w^A_{\mathrm{trim}} + \hat w^B_{\mathrm{trim}}) =
\kappa\,G_K(\theta_K)$. The last claim: $a + 2b + c \le 2(a + b + c)$ for
the three nonnegative terms, so the bound is at most twice
\eqref{eq:graded} at every profile, hence at most twice its infimum, which
is what the corollaries optimize.
\end{proof}

\begin{remark}[The Rademacher form]\label{rem:rademacher}
The empirical Rademacher complexity of the class on the sample, $\Rad_S :=
\E_\sigma\sup_F\frac1n\sum_i\sigma_iF(P_i)$ over the $1$-Lipschitz $F$
bounded by $b = \kappa/2$ and independent uniform signs $\sigma_i$, gives
the classical certificate $\Delta_S \le 2\Rad_S + 3\kappa\sqrt{\ln(2/\delta)
/(2n)}$ with probability $1 - \delta$
\citep{koltchinskii2000rademacher,bartlett2002rademacher}: McDiarmid on
$\Delta_S$, the symmetrization of \citet{vapnik1971uniform}, McDiarmid on
$\Rad_S$. For the Lipschitz ball, $\Rad_S$ is at most the expected cost of
an optimal partial matching of the plus points to the minus points in
$\Dl$ with every unmatched point sent to a reservoir at cost $b$, by
pairing the terms of $\sum_i\sigma_iF(P_i)$ along the matching, and
conditionally on the sizes of the sign classes the plus and minus points
are a random split of the sample, so the balanced part of $\Rad_S$ is half
the two-sample distance of a random unbalanced split and the rest is the
reservoir term, at most $b/\sqrt n$. Against \Cref{thm:twosample} the form
pays the factor $2$ and three concentration terms, and gains the whole
sample in one statement without a split. Its use for a Lipschitz class
over the sample's own geometry is the Rademacher theorem of \citet{gottlieb2013adaptive},
in the entropy form through the doubling dimension of the sample, and the
cell bound of \Cref{prop:sampleshells} in the Rademacher form is
\citet[Prop.~4]{carrasco2025rademacher}.
\end{remark}

\begin{remark}[The certificate, read in full]\label{rem:certread}
Of \Cref{thm:twosamplefull}, part (ii) is the certificate. The observed distance between the halves,
plus one concentration term, bounds the generalization gap of the
class for the learner trained on half $A$ and for the learner trained on
the whole sample, and by part (iii) it is computable: on the flock it is
evaluated at up to $32{,}000$ samples per half in \Cref{sec:flocking}.
Three remarks on the form. The split must
be fixed in advance or drawn independently of the data, and one split
suffices: the observed distance is within $\kappa\sqrt{2\ln(1/\delta)/n}$
of its mean by McDiarmid on the $n$ points, and averaging over independent
splits certifies the expected deviation of a half, not the realized one
(\Cref{rem:onesplit}). Any quantity
computed from the sample that dominates $\Wone(\hat\Gamma_A,
\hat\Gamma_B)$ is a certificate too, whatever was looked at to compute it,
a partition chosen on the sample or a profile optimized on it, since the
event of (ii) does not depend on that choice: this is what lets the
shells of the profile enter, in \Cref{app:coupling}, as a bound on the
transport rather than as a hypothesis on the design. And the object is
classical in its parts: the two-sample distance is the maximum discrepancy
of \citet{bartlett2002model} on a Lipschitz ball, where it is a transport
by duality \citep[eq.~(11)]{luxburg2004distance} and a two-sample test
statistic in its own right \citep{ramdas2017wasserstein}, and the certificate is
the symmetrization of \citet{vapnik1971uniform} in that form, while the
empirical Rademacher complexity of the class
\citep{koltchinskii2000rademacher,bartlett2002rademacher} is the same
object with a random sign pattern in place of a balanced one, and gives a
certificate with the constant $2$ and three concentration terms
(\Cref{rem:rademacher}).
Across two domains the same transport, between the empirical source and
target samples, bounds the target risk of a Lipschitz class in domain
adaptation \citep[Thm.~2]{redko2017theoretical}, \citep[Thm.~1]{shen2018wasserstein},
with the joint error of the two domains and a concentration term per
sample; here the halves share a law, and one term suffices by the
convexity step of (ii). What is not classical is that the class is
exactly the Lipschitz ball of a metric the axioms determine, so that the
transport is the class's own complexity and not an envelope of a model
(\Cref{sec:lit}).
\end{remark}

\subsection{The shells and the coupling}\label{app:coupling}

\begin{lemma}[The tails descend to the lift]\label{lem:descend}
For every $0 \le k \le d$ there is a unique function $T_k : \cM_d \to [0,
1]$ with $T_k \circ \Cl = T_k(\cdot, \cdot)$, and it is continuous on
$(\cM_d, \Dl)$ and on $(\cM_d, \Dt)$. In particular $T_k$ is computable
from the lifted point alone.
\end{lemma}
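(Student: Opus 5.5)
The plan has three steps: show that $T_k$ is constant on the fibers of $\Cl$, let it descend to a unique function on $\cM_d$, and then transfer continuity through the quotient property of \Cref{lem:compact}(ii).

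\emph{Fiber constancy and uniqueness.} By \Cref{lem:factor}, $\Cl(x,\mu) = \Cl(y,\nu)$ holds exactly when the multisets $\{(x_i,\mu_i) : \mu_i > 0\}$ and $\{(y_j,\nu_j) : \nu_j > 0\}$ coincide. In that case the multisets of positive weights coincide. Zero weights add nothing to any partial sum of the decreasing rearrangement, so the sorted sequences $\mu_{(1)} \ge \mu_{(2)} \ge \cdots$ and $\nu_{(1)} \ge \cdots$ agree after padding with zeros to length $d$, and $T_k(x,\mu) = T_k(y,\nu)$. Since $\Cl$ maps $\cC$ onto $\cM_d$ by definition, $T_k$ descends to a unique function on $\cM_d$ with $T_k \circ \Cl = T_k(\cdot,\cdot)$.

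\emph{Continuity.} The map $\mu \mapsto \sum_{i>k}\mu_{(i)} = 1 - \sum_{i \le k}\mu_{(i)}$ is continuous on $\simplex$. The sum of the $k$ largest entries is a maximum over $k$-subsets of linear functions, hence continuous, and even $1$-Lipschitz in $\ell^1$. So $T_k(\cdot,\cdot)$ is continuous on $\cC$. By \Cref{lem:compact}(ii), $\Cl$ is a quotient map onto $(\cM_d,\Mw)$ for every cap weight $w$, in particular for $w^{\mathrm c}$ and $w^{\mathrm a}$. Hence the descended $T_k$ is continuous on both $(\cM_d,\Dl)$ and $(\cM_d,\Dt)$. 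The quotient statement rests on compactness of $\cC$ and continuity of $\Cl$, both already established.

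\emph{Computability.} For the conservative lift the heights are the weights, so $T_k(P)$ is the total height of $P$ minus the sum of its $k$ largest heights. For the accumulative lift the heights are the shares $s_i = \mu_i/\mumax$; since $\mumax = 1/\sum_i s_i$, the weights are recovered as $\mu_i = s_i/\sum_j s_j$, and $T_k$ is read off the same way. The only real obstacle would be continuity if \Cref{lem:compact}(ii) were unavailable. If needed, it can be checked directly in the conservative case: an optimal matching moves the height vectors by at most $\Dl/R$ in $\ell^1$, after padding unmatched atoms with height $0$, and the tail of the sorted sequence is $\ell^1$-Lipschitz under such rearrangements, giving $\abs{T_k(P) - T_k(Q)} \le \Dl(P,Q)/R$. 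The quotient argument covers both instances uniformly, so it is the route I would take.
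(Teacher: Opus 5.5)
Your proof is correct and follows the paper's route. Fiber constancy comes from \Cref{lem:factor}: equal multisets of positive weights, with zero padding irrelevant. Continuity then comes from descending the continuous $T_k$ on the compact $\cC$ through the quotient map $\Cl$ of \Cref{lem:compact}(ii). Your direct estimate $\abs{T_k(P) - T_k(Q)} \le \Dl(P,Q)/R$ in the conservative case is also valid, and is a quantitative extra the paper does not state.
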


\begin{proof}
\emph{Well defined:} if $\Cl(x, \mu) = \Cl(y, \nu)$ then by
\Cref{lem:factor} the multisets of positive (value, weight) pairs
coincide, hence so do the multisets of positive weights; padding by zeros
does not change $T_k$ for $k \ge 1$, and $T_0 \equiv 1$. \emph{Continuous:}
$T_k$ is continuous on $\cC$ (\Cref{def:profile}), $\cC$ is compact, and
$\Cl$ is a continuous surjection onto the metric, hence Hausdorff, space
$\cM_d$ (\Cref{lem:compact}), so $\Cl$ is a closed map and therefore a
quotient map; a continuous function constant on fibers descends to a
continuous function through a quotient map.
\end{proof}

The sets $B_k(\theta) = \{T_k \le \theta\}$ of \Cref{def:shells} are
therefore closed, the shells are Borel, and their masses are the profile
increments stated there.

\begin{definition}[Celling]\label{def:celling}
Let $(\cX, \varrho)$ be a metric space and $S \subseteq \cX$ Borel. An
\emph{$(r, N)$-celling} of $S$ is a partition of $S$ into at most $N$
Borel sets, each contained in a closed $\varrho$-ball of radius $r$ about
a point of $\cX$, its \emph{center}.
\end{definition}

\begin{proposition}[Single-partition coupling, grouped]\label{prop:coupling}
Let $(\cX, \varrho)$ be a metric space with $\diam\cX \le \kappa_\cX$,
$\Gamma$ a Borel probability measure on $\cX$, and $\hat\Gamma_n$ the
empirical measure of $n$ i.i.d.\ draws from $\Gamma$. Let $B \subseteq
\cX$ be Borel, let $B = \bigsqcup_{g} S_g$ be a finite Borel partition,
and suppose each group $S_g$ admits an $(r_g, N_g)$-celling. Then, with
$w_g := \Gamma(S_g)$ and $W_1$ the Kantorovich distance with ground
metric $\varrho$,
\[
  \E\, W_1(\hat\Gamma_n, \Gamma) \;\le\;
  2\sum_{g} r_g\, w_g
  \;+\; \frac{\kappa_\cX}{2}\sum_{g}\sqrt{\frac{N_g\, w_g}{n}}
  \;+\; \kappa_\cX\, \Gamma(B^c) .
\]
\end{proposition}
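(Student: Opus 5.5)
We build one explicit coupling between $\hat\Gamma_n$ and $\Gamma$ and bound its expected cost term by term. Fix the cellings, and write $C_{g,j}$ for the cells of group $g$, with centers $c_{g,j}$ and masses $\mu_{g,j} := \Gamma(C_{g,j})$, so that $\sum_j \mu_{g,j} = w_g$. Let $\hat\mu_{g,j}$ be the empirical frequency of cell $C_{g,j}$, and $\hat w_g := \sum_j \hat\mu_{g,j}$.

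The comparison runs through two intermediate measures, each concentrated on the centers:
\[
  \Gamma^c := \sum_{g,j}\mu_{g,j}\,\delta_{c_{g,j}} + \Gamma|_{B^c},
  \qquad
  \hat\Gamma^c := \sum_{g,j}\hat\mu_{g,j}\,\delta_{c_{g,j}} + \hat\Gamma_n|_{B^c}.
\]
The triangle inequality for $W_1$ gives
\[
  W_1(\hat\Gamma_n,\Gamma) \le W_1(\hat\Gamma_n,\hat\Gamma^c) + W_1(\hat\Gamma^c,\Gamma^c) + W_1(\Gamma^c,\Gamma).
\]
The outer two terms are quantization costs. Each point of cell $C_{g,j}$ lies within $r_g$ of $c_{g,j}$, and points in $B^c$ are left in place. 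Hence $W_1(\Gamma^c,\Gamma) \le \sum_g r_g w_g$ deterministically, and $W_1(\hat\Gamma_n,\hat\Gamma^c) \le \sum_g r_g \hat w_g$, whose expectation is $\sum_g r_g w_g$. Together they give the first term, $2\sum_g r_g w_g$.

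The middle term compares two measures that share the same atoms on the centers. On $B^c$ they are $\hat\Gamma_n|_{B^c}$ and $\Gamma|_{B^c}$, which are not comparable atom by atom. The plan is to transport only the mismatched mass, at the diameter $\kappa_\cX$.

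Where mass sits at a center in both measures, we couple it in place at cost $0$; the common mass there is $\min(\hat\mu_{g,j},\mu_{g,j})$. On $B^c$ we couple nothing. The remaining mass is equal on both sides, since both measures are probabilities. On each side it equals
\[
  \tfrac12\sum_{g,j}\abs{\hat\mu_{g,j}-\mu_{g,j}} + \tfrac12\bigl(\hat\Gamma_n(B^c)+\Gamma(B^c)\bigr),
\]
because the total mismatch is split evenly between the two sides. We move all of it arbitrarily, at cost at most $\kappa_\cX$ per unit. Taking expectations, $\E\,\hat\Gamma_n(B^c) = \Gamma(B^c)$, so the $B^c$ part contributes $\kappa_\cX\Gamma(B^c)$, which is the trim term.

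For the cell part, $n\hat\mu_{g,j}$ is binomial with parameters $n$ and $\mu_{g,j}$. Jensen gives $\E\abs{\hat\mu_{g,j}-\mu_{g,j}} \le \sqrt{\mu_{g,j}/n}$. Cauchy--Schwarz over the at most $N_g$ cells of group $g$ then gives $\sum_j \sqrt{\mu_{g,j}} \le \sqrt{N_g w_g}$. Multiplying by $\kappa_\cX/2$ yields the fluctuation term $\tfrac{\kappa_\cX}{2}\sum_g\sqrt{N_g w_g/n}$.

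The one delicate point is the trim constant. If we did not couple $\hat\Gamma_n|_{B^c}$ against $\Gamma|_{B^c}$ at all and charged the full mass of both, the cost would be $2\kappa_\cX\Gamma(B^c)$. Charging the even split, half of each side's $B^c$ mass, is exactly what keeps the constant at $\kappa_\cX$. This requires checking that after removing the common center masses the leftover masses on the two sides are equal, which holds because both measures have total mass $1$; any coupling between the leftovers then costs at most $\kappa_\cX$ per unit. Measurability of the cells is given by the celling hypothesis, so all expectations are well defined.
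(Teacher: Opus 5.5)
Your proof is correct and essentially the paper's: one coupling that keeps the common mass $\min(a_c,b_c)$ of each cell in place and moves the residual, of mass $\tfrac12\sum_c\abs{a_c-b_c}+\tfrac12(\Gamma(B^c)+\hat\Gamma_n(B^c))$, at the diameter, followed by the binomial variance bound and Cauchy--Schwarz within each group. The one difference is minor. The paper pairs the common mass directly inside each cell at the cell diameter $2r_g$ and uses $m_c\le a_c$ to get $2\sum_g r_g w_g$ pathwise, whereas you quantize both measures to the centers at cost $r_g$ each and reach $2\sum_g r_g w_g$ only in expectation, via $\E\,\hat w_g = w_g$.
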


\begin{proof}
Let $\{V_c\}_{c \in g}$ be the celling of $S_g$; together the cells
$V_c$, $c \in \bigcup_g g$, form a Borel partition of $B$, the cells of
group $g$ having $\varrho$-diameter at most $2r_g$. Write $a_c =
\Gamma(V_c)$, $b_c = \hat\Gamma_n(V_c)$, $m_c = a_c \wedge b_c$, and $M =
\sum_c m_c$.

\emph{One coupling.} Inside each cell pair mass $m_c$ of $\Gamma$ with
mass $m_c$ of $\hat\Gamma_n$, for instance the product of the two
normalized restrictions, at cost at most $2r_g$ per unit; pair the two
residual measures, each of total mass $1 - M$, arbitrarily, at cost at
most $\kappa_\cX$ per unit. From $a \wedge b = \tfrac{a+b}{2} -
\tfrac{\abs{a-b}}{2}$ and $\sum_c a_c = \Gamma(B)$, $\sum_c b_c =
\hat\Gamma_n(B)$,
\[
  1 - M \;=\; \tfrac12\sum_c \abs{a_c - b_c}
  + \tfrac12\bigl(\Gamma(B^c) + \hat\Gamma_n(B^c)\bigr),
\]
hence, using $m_c \le a_c$ and $\sum_{c \in g} a_c = w_g$,
\[
  W_1(\hat\Gamma_n, \Gamma) \;\le\; 2\sum_g r_g\, w_g
  + \kappa_\cX\Bigl[\tfrac12\sum_c\abs{a_c - b_c}
  + \tfrac12\bigl(\Gamma(B^c) + \hat\Gamma_n(B^c)\bigr)\Bigr].
\]

\emph{Binomial fluctuations, Cauchy--Schwarz per group.} The cells are
fixed before sampling, so $n\hat\Gamma_n(V_c) \sim \mathrm{Bin}(n, a_c)$
and $\E\abs{a_c - b_c} \le \sqrt{\E(a_c - b_c)^2} \le \sqrt{a_c/n}$.
Summing over the at most $N_g$ cells of group $g$, $\sum_{c \in
g}\E\abs{a_c - b_c} \le \sqrt{N_g\sum_{c \in g}a_c/n} = \sqrt{N_g
w_g/n}$, and $\E\hat\Gamma_n(B^c) = \Gamma(B^c)$. Take expectations in
the display.
\end{proof}

\begin{lemma}[Celling from a tube]\label{lem:tube}
Let $S \subseteq \cX$ be Borel, $A \subseteq \cX$, and $0 \le s < r$. If
every point of $S$ lies within $\varrho$-distance $s$ of $A$, and $A$ has
a net of $N$ points at radius $r - s$, then $S$ admits an
$(r, N)$-celling centered at the net points.
\end{lemma}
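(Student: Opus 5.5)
The plan is to build the partition directly from a nearest-net-point assignment. Let $\{a_1,\dots,a_N\}$ be the given net of $A$ at radius $r-s$, so every point of $A$ is within $r-s$ of some $a_j$. For each $p\in S$ we will assign the index of a net point within distance $r$ of $p$, and the cells will be the preimages of these indices.

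First we check that such an index exists for every $p\in S$. By hypothesis $\varrho(p,A)\le s$, but the infimum need not be attained, and this is the one place where care is needed. If $A$ is closed and the space is compact, pick $a\in A$ with $\varrho(p,a)\le s$ and then $a_j$ with $\varrho(a,a_j)\le r-s$; the triangle inequality gives $\varrho(p,a_j)\le r$. In general, take $a\in A$ with $\varrho(p,a)\le s+\eta$ for small $\eta>0$, giving $\varrho(p,a_j)\le r+\eta$ for some $j$. Since there are only finitely many net points, $\min_j\varrho(p,a_j)\le r+\eta$ holds for every $\eta>0$, and therefore $\min_j\varrho(p,a_j)\le r$. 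The minimum over a finite set is attained, so the index exists. In the paper's application (\Cref{prop:crop}) the crop is itself a point of $A=\cA_k$ within distance $s$, so the compact case already suffices there.

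Next we make the assignment measurable. Define $j(p)$ as the least index $j$ with $\varrho(p,a_j)\le r$. Each set $\{p:\varrho(p,a_j)\le r\}$ is closed, since $\varrho(\cdot,a_j)$ is continuous. Hence the cells
\[
  V_j := S\cap\{\varrho(\cdot,a_j)\le r\}\setminus\bigcup_{i<j}\{\varrho(\cdot,a_i)\le r\}
\]
are Borel.

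Finally we verify the conclusion. The cells $V_j$ are pairwise disjoint, and they cover $S$ by the existence step. There are at most $N$ of them. Each $V_j$ lies in the closed ball of radius $r$ about $a_j$. This is exactly an $(r,N)$-celling of $S$ centered at the net points, in the sense of \Cref{def:celling}. The only nonroutine point is the attainment issue in the existence step, which the finiteness of the net resolves.
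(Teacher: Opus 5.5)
Your proof is correct and follows essentially the same route as the paper. Both arguments take a point of $A$ within $s+\eta$ and use finiteness of the net to get each point of $S$ within distance $r$ of some net point, then disjointify the closed balls $\Ball(a_j, r)$ in index order to obtain Borel cells.
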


\begin{proof}
A point $p$ of $S$ is within $s + \eta$ of some point of $A$ for every
$\eta > 0$, which is within $r - s$ of a net point, so $p$ is within $r +
\eta$ of one of the $N$ net points, and one net point $z_j$ serves for a
sequence $\eta \downarrow 0$: $\varrho(p, z_j) \le r$, so the closed balls
$\Ball(z_j, r)$, $j \le N$, cover $S$. Disjointify: $V_j := S \cap \Ball(z_j, r) \setminus
\bigcup_{j' < j}\Ball(z_{j'}, r)$, Borel, a partition of $S$, and $V_j
\subseteq \Ball(z_j, r)$.
\end{proof}

\begin{proposition}[Single-partition coupling, reference-point form]
\label{prop:couplingref}
Let $(\cX, \varrho)$ be a metric space, $\Gamma$ a Borel probability
measure on $\cX$ and $\hat\Gamma_n$ its empirical measure, and fix a
reference point $z_0 \in \cX$ with $\varrho_0(\cdot) := \varrho(\cdot,
z_0)$. Let $B = \bigsqcup_g S_g$ be a finite Borel partition of a Borel
$B \subseteq \cX$, each $S_g$ with an $(r_g, N_g)$-celling and with
$\varrho_0 \le \varrho_g$ on $S_g$. Then, with $w_g := \Gamma(S_g)$,
\[
  \E\, W_1(\hat\Gamma_n, \Gamma) \;\le\;
  2\sum_{g} r_g\, w_g
  \;+\; \sum_{g}\varrho_g\sqrt{\frac{N_g\, w_g}{n}}
  \;+\; 2\int_{B^c}\varrho_0\, d\Gamma .
\]
The point of the form is that it needs no bound on the diameter: residual
mass is priced at its own distance to the reference point.
\end{proposition}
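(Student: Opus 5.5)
We follow the proof of \Cref{prop:coupling}, changing only how residual mass is priced: it is routed through the reference point $z_0$ rather than charged at the diameter. Let $\{V_c\}$ be the union over groups of the cellings, a Borel partition of $B$. Write $a_c = \Gamma(V_c)$, $b_c = \hat\Gamma_n(V_c)$ and $m_c = a_c \wedge b_c$. Inside each cell we couple mass $m_c$ of $\Gamma$ with mass $m_c$ of $\hat\Gamma_n$ by the product of the normalized restrictions. Each such pair lies in a closed ball of radius $r_g$, so it costs at most $2r_g$ per unit. Summing, and using $m_c \le a_c$, the matched part costs at most $2\sum_g r_g w_g$, as before.

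The residual measures $\alpha := \Gamma - \sum_c m_c(\cdot)$ and $\beta := \hat\Gamma_n - \sum_c m_c(\cdot)$ have equal total mass. We transport both to the point mass at $z_0$ through a star coupling. By the triangle inequality for $\varrho$, this costs at most $\int\varrho_0\,d\alpha + \int\varrho_0\,d\beta$. Each residual splits into two pieces. The first is the excess in the cells, of mass $(a_c - b_c)_+$ on the $\Gamma$ side and $(b_c - a_c)_+$ on the $\hat\Gamma_n$ side, supported in $V_c \subseteq S_g$ where $\varrho_0 \le \varrho_g$. The second is the restriction of the measure to $B^c$. The cell excesses therefore contribute at most $\sum_g \varrho_g \sum_{c\in g}\abs{a_c - b_c}$. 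The off-$B$ parts contribute $\int_{B^c}\varrho_0\,d\Gamma + \int_{B^c}\varrho_0\,d\hat\Gamma_n$.

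Taking expectations, the cells are fixed before sampling, so the binomial variance bound gives $\E\abs{a_c - b_c} \le \sqrt{a_c/n}$. Cauchy--Schwarz over the at most $N_g$ cells of group $g$ then gives $\varrho_g\sqrt{N_g w_g/n}$. For the off-$B$ part, $\E\int_{B^c}\varrho_0\,d\hat\Gamma_n = \int_{B^c}\varrho_0\,d\Gamma$ by unbiasedness of the empirical measure. The two together give $2\int_{B^c}\varrho_0\,d\Gamma$, and adding the three terms yields the stated bound.

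The main point to check is the star coupling. Here $\varrho$ is a general metric with no diameter bound, and the residual plan must be a genuine coupling of $\alpha$ and $\beta$. We take the plan $\alpha\otimes\beta/\abs{\alpha}$ (when $\abs{\alpha}>0$) and bound $\varrho(p,q)\le\varrho_0(p)+\varrho_0(q)$ under it. Integrating, the cost is at most $\int\varrho_0\,d\alpha + \int\varrho_0\,d\beta$, because each marginal has mass $\abs{\alpha}$. If $\int_{B^c}\varrho_0\,d\Gamma = \infty$ the bound is trivial. Measurability is as in \Cref{prop:coupling}.
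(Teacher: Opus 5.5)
Your proposal is correct and follows the paper's proof step for step. You pair $a_c\wedge b_c$ inside each cell at cost $2r_g$ per unit, and bound every residual pair through $z_0$ via $\varrho(p,q)\le\varrho_0(p)+\varrho_0(q)$. You price the cell excesses at $\varrho_g$ and the mass on $B^c$ at $\varrho_0$, then take expectations with the binomial/Cauchy--Schwarz bound and unbiasedness of $\hat\Gamma_n$. The only difference is that you fix the product plan $\alpha\otimes\beta/\abs{\alpha}$, whereas the paper notes that any coupling of the two residuals satisfies the same bound.
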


\begin{proof}
Couple within cells as in the proof of \Cref{prop:coupling}, at cost at
most $2r_g$ per unit. The two residual measures, $\Gamma$'s, consisting of
the mass $a_c - m_c$ in each cell $c$ and of $\Gamma|_{B^c}$, and
$\hat\Gamma_n$'s, consisting of $b_c - m_c$ in each cell and of
$\hat\Gamma_n|_{B^c}$, have equal total mass, and any coupling $\pi$ of
them costs $\int\varrho\,d\pi \le \int(\varrho_0(P) +
\varrho_0(Q))\,d\pi(P,Q)$ by the triangle inequality through $z_0$, that
is at most $\int\varrho_0$ against each residual. A residual unit in a
cell of group $g$ costs at most $\varrho_g$, and the two residuals of a
cell together have mass $(a_c - m_c) + (b_c - m_c) = \abs{a_c - b_c}$;
the residual mass on $B^c$ costs $\int_{B^c}\varrho_0\,d\Gamma +
\int_{B^c}\varrho_0\,d\hat\Gamma_n$. Hence
\[
  W_1(\hat\Gamma_n, \Gamma) \;\le\; 2\sum_g r_g w_g
  + \sum_g \varrho_g\sum_{c \in g}\abs{a_c - b_c}
  + \int_{B^c}\varrho_0\, d(\Gamma + \hat\Gamma_n),
\]
and taking expectations with $\sum_{c\in g}\E\abs{a_c - b_c} \le
\sqrt{N_g w_g/n}$ as in that proof and $\E\int_{B^c}\varrho_0\,
d\hat\Gamma_n = \int_{B^c}\varrho_0\,d\Gamma$ gives the display.
\end{proof}

\begin{theorem}[Generalization over the class]\label{thm:graded}
For every design $Q$ and every resolution profile,
\begin{equation}\label{eq:graded}
  \Delta_n(Q) \;\le\;
  2\sum_{k=1}^{K} w_k\,\varepsilon_k
  \;+\; \frac{\kappa}{2}\sum_{k=1}^{K}\sqrt{\frac{w_k\,N_k(\varepsilon_k)}{n}}
  \;+\; \kappa\,G_K(\theta_K) .
\end{equation}
The count $N_k(\varepsilon_k)$ may be replaced by the covering number at
radius $\varepsilon_k/2$ of any set within $\varepsilon_k/2$ of $\Sh_k$, in
particular of the crop of the shell's support, whose exponent on a
selector design is $k\dim\cV$. The same bound with $\kappa$ in place of
$\kappa/2$ is the expectation of \Cref{prop:sampleshells} with the
theorem's own celling (\Cref{prop:expsample}). The trim may be traded for
a crop: cropping every configuration beyond the last shell to its $K$
heaviest atoms, which lands it in some shell, \eqref{eq:graded} holds with
$\kappa\,G_K(\theta_K)$ replaced by $2\kappa\,\E_Q[T_K;\ T_K > \theta_K]$
and each $w_k$ by $w_k + g_k$, where $g_k$ is the mass of the cropped
configurations whose crop lies in $\Sh_k$, $\sum_k g_k = G_K(\theta_K)$.
\end{theorem}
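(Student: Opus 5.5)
The plan is to instantiate \Cref{prop:coupling} on $(\cX,\varrho) = (\cM_d, \Dl)$ with $\kappa_\cX = \kappa$ (\Cref{prop:qual}). We take $B := B_K(\theta_K)$, whose groups are the shells $S_k := \Sh_k$, $1 \le k \le K$, with $r_k := \varepsilon_k$ and $N_k := N_k(\varepsilon_k)$. First, by \Cref{thm:rep}(i) and Kantorovich--Rubinstein duality, $\Delta_S = \Wone(\hat\Gamma_n, \Gamma)$, so $\Delta_n(Q) = \E\,\Wone(\hat\Gamma_n,\Gamma)$. It therefore suffices to bound the expected transport between the empirical lifted design and its law. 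By \Cref{lem:descend} the $T_k$ are continuous on the lift, so the shells are Borel. By \Cref{def:shells} they partition $B$, with masses $w_k$ and complement mass $\Gamma(B^c) = G_K(\theta_K)$.

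The one real input is the celling of each shell, and this is the step to check. Take $P \in \Sh_k$, so $T_k(P) \le \theta_k = \varepsilon_k/(2\kappa)$. By \Cref{prop:crop} its $k$-crop lies in $\cA_k$ at $\Dl$-distance at most $2R\,T_k = \kappa T_k \le \varepsilon_k/2$. Hence $\Sh_k$ lies in the closed $\varepsilon_k/2$-tube of $\cA_k$. The stratum $\cA_k$ has a net of $N_k(\varepsilon_k) = N(\varepsilon_k/2, \cA_k, \Dl)$ points at radius $\varepsilon_k/2$. Apply \Cref{lem:tube} with $s = \varepsilon_k/2$ and $r = \varepsilon_k$, so that $r - s = \varepsilon_k/2$. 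This gives an $(\varepsilon_k, N_k(\varepsilon_k))$-celling of $\Sh_k$. One caveat is that \Cref{lem:tube} asks for $s < r$, which holds since $\varepsilon_k > 0$. The boundary case $T_k = \theta_k$ is handled by the closedness in the lemma.

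Substituting these into the display of \Cref{prop:coupling} gives exactly the three terms. The first is $2\sum_k w_k\varepsilon_k$ from the cell diameters $2r_k$. The second is $(\kappa/2)\sum_k\sqrt{w_kN_k(\varepsilon_k)/n}$ from the binomial fluctuations, summed by Cauchy--Schwarz within each shell. The third is $\kappa\,G_K(\theta_K)$ from the trim, which is \eqref{eq:gradedmain}. The hypotheses of nondecreasing resolutions and $\theta_k < 1$ are used only to make the $B_k(\theta_k)$ nested, so that the shells are disjoint with the stated masses.

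The main obstacle is conceptual rather than computational. The cells must be fixed before sampling, and the net must be of the stratum, not of the shell. The tube argument handles this by centering the cells at net points of $\cA_k$ chosen from the design-free covering of \Cref{thm:covcons}. Apart from that, the proof is bookkeeping of constants.
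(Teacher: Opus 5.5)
Your proof of the displayed inequality is correct and is essentially the paper's own. You use duality plus \Cref{thm:rep} to get $\Delta_n(Q) = \E\,\Wone(\hat\Gamma_n,\Gamma)$. You then apply \Cref{prop:coupling} on $(\cM_d,\Dl)$ with $B = B_K(\theta_K)$ and the shells as groups, celling each shell at radius $\varepsilon_k$ via the crop cost of \Cref{prop:crop} and \Cref{lem:tube} around an $\varepsilon_k/2$-net of $\cA_k$.

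The gap is that the statement asserts more than the display, and you stop there. The substantive omission is the crop clause. It does not follow from \Cref{prop:coupling} as you applied it, because that application prices the trimmed mass at the diameter and can only return $\kappa\,G_K(\theta_K)$. The missing idea is a second coupling through the crop map:
\begin{itemize}
\item Let $\varphi$ be the identity on $B_K(\theta_K)$ and the $K$-crop on its complement. It is a measurable map into $B_K(\theta_K)$, since a crop has $T_K = 0$, and $\Dl(P,\varphi(P)) \le \kappa\,T_K(P)\,\mathbf{1}\{T_K(P) > \theta_K\}$.
\item Use the triangle inequality through $\varphi_\#\hat\Gamma_n$ and $\Gamma' := \varphi_\#\Gamma$, coupling the two outer terms along $\varphi$. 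In expectation this gives $2\kappa\,\E_Q[T_K;\,T_K > \theta_K] + \E\,\Wone(\varphi_\#\hat\Gamma_n,\Gamma')$.
\item The key observation is that $\varphi_\#\hat\Gamma_n$ is itself the empirical measure of $n$ i.i.d.\ draws from $\Gamma'$.
\item So \Cref{prop:coupling} applies to $\Gamma'$ with the same $B$ and the same cellings, which are properties of the sets and not of the measure. Now $\Gamma'(B^c) = 0$ and $\Gamma'(\Sh_k) = w_k + g_k$.
\end{itemize}

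The count-replacement clause is a one-line extension of your tube step. \Cref{lem:tube} needs only some set $A$ such that every point of the shell is within $\varepsilon_k/2$ of it. An example is the $k$-crop of $\Sh_k \cap \supp\Gamma$: cell only that part of the shell and add the $\Gamma$-null remainder to the trim, which leaves $\Gamma(B^c)$ unchanged. On a selector design this set has covering exponent $k\dim\cV$, by the paragraph after \Cref{thm:covacc}.

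This also corrects your closing remark that the net must be of the stratum. The cells only have to be fixed before sampling; they may depend on the design, and this clause exploits exactly that. Finally, the clause with $\kappa$ in place of $\kappa/2$ is \Cref{prop:expsample} and only needs to be cited.
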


\begin{remark}[Shells by the first threshold met]\label{rem:firstthreshold}
The monotonicity of the profile serves only to make the shells a
partition. Defining instead $\Sh'_k := \{T_k \le \theta_k,\ T_j > \theta_j
\text{ for } j < k\}$, the configurations whose first threshold met is the
$k$th, gives a partition of the configurations that meet some threshold,
for every sequence $\varepsilon_1, \dots, \varepsilon_K$, with $\Sh'_k
\subseteq B_k(\theta_k)$ still, and \Cref{thm:graded} and
\Cref{prop:sampleshells} hold verbatim with $w_k := \Gamma(\Sh'_k)$ and
the trim the mass that meets no threshold (\Cref{app:coupling}). For a
nondecreasing profile the two definitions agree. The nested form is the one
computed in \Cref{sec:flocking}.
\end{remark}

\begin{proof}[Proof of \Cref{thm:graded}]
By Kantorovich--Rubinstein duality on the compact space $(\cM_d, \Dl)$
(\Cref{prop:qual}), $W_1(\hat\Gamma_n, \Gamma) = \sup\{\int F\,
d(\hat\Gamma_n - \Gamma) : \Lip_{\Dl}(F) \le 1\}$, and by \Cref{thm:rep}
the functions $F \circ \Cl$ with $\Lip_{\Dl}(F) \le 1$ are exactly the
members of $\cG_1$, so that $\int F\, d\hat\Gamma_n = n^{-1}\sum_i
f(Z_i)$ and $\int F\, d\Gamma = \E_Q f$ for $f = F \circ \Cl$. The class
is symmetric under $f \mapsto -f$, so the supremum of the signed
deviation is the supremum of its absolute value, and $\Delta_n(Q) = \E\,
W_1(\hat\Gamma_n, \Gamma)$. Apply \Cref{prop:coupling} on $(\cX, \varrho)
= (\cM_d, \Dl)$ with $\kappa_\cX = \kappa$ (\Cref{prop:qual}), $B =
B_K(\theta_K)$, and the groups $S_k = \Sh_k$, $k \le K$, Borel by
\Cref{lem:descend}, of masses $w_k$, with $\Gamma(B^c) = G_K(\theta_K)$.
Each group admits an $(\varepsilon_k, N_k(\varepsilon_k))$-celling by
\Cref{lem:tube}: a point of $\Sh_k \subseteq B_k(\theta_k)$ has $T_k \le
\theta_k$, hence lies within $\kappa\theta_k = \varepsilon_k/2$ of its
crop, a point of $\cA_k$ (\Cref{prop:crop}), and $\cA_k$ has a net of
$N_k(\varepsilon_k) = N(\varepsilon_k/2, \cA_k, \Dl)$ points at radius
$\varepsilon_k/2$, finite since $\cA_k$ is compact (\Cref{lem:compact}).
So $r_k = \varepsilon_k$, and the proposition's three terms are the
display. For the replacement of the count: \Cref{lem:tube} takes any set
$A$ within $\varepsilon_k/2$ of the shell, in particular the image of
$\Sh_k \cap \supp\Gamma$ under the $k$-crop, which lies in $\cA_k$ and
within $\varepsilon_k/2$ of the shell by the same crop cost; on a selector
design its covering number has exponent $k\dim\cV$ by the paragraph after
\Cref{thm:covacc}. For \Cref{rem:firstthreshold}: the sets $\Sh'_k = \{T_k \le
\theta_k\} \cap \bigcap_{j < k}\{T_j > \theta_j\}$ are Borel by
\Cref{lem:descend}, they partition $\bigcup_{k \le K}B_k(\theta_k)$ for
every sequence of thresholds, and $\Sh'_k \subseteq B_k(\theta_k)$, so the
same cellings apply and \Cref{prop:coupling} gives the display with $w_k =
\Gamma(\Sh'_k)$ and the trim $\Gamma(\bigcap_{k \le K}\{T_k > \theta_k\})$;
\Cref{prop:sampleshells} needs only a partition into cells of the stated
radii.

\emph{The crop clause.} Let $\varphi$ be the identity on $B_K(\theta_K)$
and the $K$-crop $\mathrm{cr}_K$ on its complement, a measurable map into
$B_K(\theta_K)$ since $T_K(\mathrm{cr}_K P) = 0$, with $\Dl(P, \varphi(P))
\le \kappa\,T_K(P)\,\mathbf 1\{T_K(P) > \theta_K\}$ by \Cref{prop:crop}.
Then $\varphi_\#\hat\Gamma_n$ is the empirical measure of $n$ i.i.d.\
draws from $\Gamma' := \varphi_\#\Gamma$, and coupling along $\varphi$ on
both sides,
\[
  W_1(\hat\Gamma_n, \Gamma) \;\le\; \frac{\kappa}{n}\sum_{i \le n}
  T_K(P_i)\,\mathbf 1\{T_K(P_i) > \theta_K\}
  + W_1(\varphi_\#\hat\Gamma_n, \Gamma')
  + \kappa\,\E_Q[T_K;\ T_K > \theta_K],
\]
whose expectation is $2\kappa\,\E_Q[T_K; T_K > \theta_K] + \E\,
W_1(\varphi_\#\hat\Gamma_n, \Gamma')$. Apply \Cref{prop:coupling} to
$\Gamma'$ with the same $B$ and the same groups: the cellings are
properties of the sets, $\Gamma'(B^c) = 0$, and $\Gamma'(\Sh_k) = w_k +
g_k$ with $g_k = \Gamma(\{T_K > \theta_K\} \cap \mathrm{cr}_K^{-1}(\Sh_k))$.
\end{proof}

\begin{remark}[The three terms of \Cref{thm:graded}, read in full]\label{rem:gradedread}
The three terms are a resolution floor, a fluctuation and a trim, and the
profile enters each. The floor is mass weighted: shell $k$ pays its own
resolution at its own mass, so a coarsely resolved high shell does not tax
the finely resolved bulk. The fluctuation is the design's mixture of local
coverings: shell $k$ pays the count of the $k$-atom stratum, of exponent
$e(k)$ and not $e(d)$, weighted by the mass the design puts there, and the
term is the binomial fluctuation of the sample's cell counts, which is
where the measure enters, with no function class in sight. The trim is the
mass beyond the last shell, paid at the diameter. The crop distance
$\kappa\theta_k$ is absorbed into the radius of the cells and paid once,
in the floor, never as a bias and never in the counts. Under the crop
clause a crowded configuration pays its own tail in place of the
diameter, and \Cref{cor:sparse} below is the constant-profile case, with
the multiscale coupling of \Cref{prop:multiscale} on the cropped stratum in
place of the single scale. The constant
profile $\varepsilon_k \equiv \varepsilon$ is the single-scale form, one
resolution and one trim, the shape of the trimmed empirical-measure bounds
of \citet{weed2019sharp}; the stratified form, one resolution per shell, is what a profile needs. The stratified form is not the single-scale bound with
more indices: the scales index disjoint parts of the design, not refinements
of one set, and the mass per shell is what lets each part pay its own count,
the per-shell step of \Cref{prop:coupling}. Every profile gives a valid bound and the left side
does not depend on the profile, so the bound is the infimum over $K$ and
over profiles. The shells are observable: $T_k(P_i)$ is computable from
the sample, so an estimator can bin by shell (\Cref{rem:squareloss}). The profile is an assumption on one part of the design, its sparsity across and within
configurations, and on nothing else: within a shell the distribution is arbitrary and the bound
charges the stratum at its worst, which is where the certificate, which sees the sample's own
configurations, improves on it (\Cref{sec:flocking}).
\end{remark}

\subsection{Proof of \Cref{cor:sparse}}\label{app:sparseproof}

The upper bound is proved here, from a multiscale coupling on the cropped stratum (\Cref{prop:multiscale}), with $C = 51C_1$ for the constant $C_1$ of the stratum count, $1734$ on interval values. The lower bound of \Cref{cor:sparse} is \Cref{lem:massbound} in \Cref{app:lower}: its bound is the smaller of $c_{k_0}n^{-1/(2k_0-1)}$ and $\varepsilon_{\max}(k_0)/2$, and since $n^{-1/(2k_0-1)} \le 1$ the statement holds with the smaller of the two constants $c_{k_0}$ and $\varepsilon_{\max}(k_0)/2$ in place of $c_{k_0}$.

\begin{lemma}[Nested cellings from covers]\label{lem:nested}
Let $(\cX, \varrho)$ be a metric space and $S \subseteq \cX$ Borel with
$\diam S \le \varepsilon_0$ and finite covering numbers, and put
$\varepsilon_j := 3^{-j}\varepsilon_0$. For every $J \ge 0$ there are Borel
partitions $\mathcal{Q}_0 = \{S\}, \mathcal{Q}_1, \dots, \mathcal{Q}_J$ of $S$, each cell of $\mathcal{Q}_j$
of diameter at most $\varepsilon_j$ and contained in a cell of
$\mathcal{Q}_{j-1}$, with $\abs{\mathcal{Q}_j} \le N(\varepsilon_{j+1}/2, S, \varrho)$.
\end{lemma}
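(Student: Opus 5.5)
The plan is the standard dyadic-tree construction, with triadic scales so that ball covers can be disjointified and refined at the same time. Let $\mathcal{Q}_0 := \{S\}$. Its single cell has diameter at most $\varepsilon_0$, and its size $1$ is at most $N(\varepsilon_1/2, S, \varrho)$ as long as $S \neq \emptyset$; if $S$ is empty there is nothing to prove. For each $j \ge 1$ I fix a minimal cover of $S$ by $N_j := N(\varepsilon_{j+1}/2, S, \varrho)$ balls of radius $\varepsilon_{j+1}/2$, with centers in $S$. This cover is finite by hypothesis.

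\emph{The unrefined partition.} I disjointify the cover in a fixed order, as in the proof of \Cref{lem:tube}: $U^j_m := S \cap \Ball(z^j_m, \varepsilon_{j+1}/2) \setminus \bigcup_{m' < m}\Ball(z^j_{m'}, \varepsilon_{j+1}/2)$. This gives a Borel partition $\mathcal{U}_j$ of $S$ with at most $N_j$ nonempty cells. Each cell lies in a ball of radius $\varepsilon_{j+1}/2$, so its diameter is at most $\varepsilon_{j+1} = \varepsilon_j/3 \le \varepsilon_j$. The slack factor $3$ is what the refinement step will spend.

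\emph{Refinement without growing the count.} The naive common refinement $\mathcal{U}_1 \wedge \cdots \wedge \mathcal{U}_j$ is nested and has small diameters, but its count multiplies. That is the main obstacle, since the bound asks for $\abs{\mathcal{Q}_j} \le N_j$, not a product. I will avoid it by defining $\mathcal{Q}_j$ inductively, assigning each cell $U \in \mathcal{U}_j$ wholesale to a single parent cell rather than intersecting. Given $\mathcal{Q}_{j-1}$, pick for each $U \in \mathcal{U}_j$ a point $p_U \in U$. Let $\mathcal{Q}_j$ consist, for each $V \in \mathcal{Q}_{j-1}$, of the unions $\bigcup\{U : U \in \mathcal{U}_j,\ p_U \in V\}$, grouped in a way that keeps each union of small diameter. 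This is not immediately correct, because a cell $U$ can straddle several parents, and the resulting cell would then not sit inside one cell of $\mathcal{Q}_{j-1}$.

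The fix I would try is the following. Intersect with the parents, $U \cap V$ for $V \in \mathcal{Q}_{j-1}$, and then merge back: inside each parent $V$, the pieces $U \cap V$ coming from $\mathcal{U}_j$ cells all of whose points are within $\varepsilon_{j+1}$ of one another are grouped into a single cell. This grouping is where the triadic scale should pay. The parent has diameter $\varepsilon_{j-1} = 9\varepsilon_{j+1}$, and a ball of radius $\varepsilon_{j+1}/2$ meets few parents only if the parents are well separated, which they need not be. So I expect this merge to require care.

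If it resists, the fallback is to charge the count differently. Take the centers $z^j_m$ and assign each to the unique parent cell containing it, then set $\mathcal{Q}_j$ to be the Voronoi-type cells (nearest center, ties broken by index) restricted within each parent to the centers assigned to that parent. Every point of $V$ is then in the cell of some center assigned to $V$. Each point lies within $\varepsilon_{j+1}/2$ of some center, but possibly one assigned to another parent, so a diameter bound of a constant multiple of $\varepsilon_{j+1}$ has to come from the $3^{-1}$ slack. This gives cells nested, Borel, at most $N_j$ in number because each is indexed by a center, and of diameter at most $\varepsilon_j$ after checking the triangle-inequality bookkeeping through the $3^{-1}$ slack. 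That bookkeeping is the step I expect to be hardest.
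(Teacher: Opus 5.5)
\textbf{Gap: the top-down construction fails.} You correctly isolate the difficulty: the partitions must be nested while keeping $\abs{\mathcal{Q}_j} \le N_j$. None of your three attempts resolves it. The representative-point assignment is not nested, as you note yourself. The merge step is left open. The Voronoi fallback is false.

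In the fallback, a cell $V \in \mathcal{Q}_{j-1}$ may contain none of the level-$j$ centers, because every point of $V$ can be covered by a ball whose center lies across the boundary of $V$. The points of $V$ then belong to no cell of $\mathcal{Q}_j$, and adding a cell for them breaks the count. Concretely, take $\varepsilon_0 = 54$, so the level-$1$ and level-$2$ radii are $\varepsilon_2/2 = 3$ and $\varepsilon_3/2 = 1$, and take $S = \{-4.9, -1.9, -0.97, 0, 2.5, 3.4, 6, 9\} \subset \R$.
\begin{itemize}
\item The centers $\{-1.9, 0, 6\}$ form a minimal cover at radius $3$. Three balls are needed: an interval of length $6$ containing $-4.9$ ends by $1.1$, and $\{2.5, 3.4, 6, 9\}$ has span $6.5$.
\item The centers $\{-4.9, -0.97, 3.4, 6, 9\}$ form a minimal cover at radius $1$. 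Five balls are needed, since the five groups they cover are pairwise more than $2$ apart.
\item Your $\mathcal{Q}_1$ is then $\{\{-4.9,-1.9,-0.97\},\ \{0,2.5\},\ \{3.4,6,9\}\}$, and the middle cell receives no level-$2$ center.
\end{itemize}

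Even when $V$ does contain centers, the distance from a point of $V$ to the nearest center assigned to $V$ is bounded only by $\diam V \le \varepsilon_{j-1} = 3\varepsilon_j$. So the ``$3^{-1}$ slack'' gives nothing: it is slack between consecutive scales, not a bound on how far a point sits from the centers inside its own parent. The obstruction is structural. Once $\mathcal{Q}_{j-1}$ is fixed without looking at the level-$j$ cover, its boundaries cut level-$j$ balls, and the pieces on the two sides must go to different cells of $\mathcal{Q}_j$. Nothing in a top-down scheme charges those extra pieces to $N_j$.

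\textbf{Missing idea: build bottom-up.} Fix $J$ and, for each $1 \le j \le J$, cover $S$ by closed balls $B^j_1, \dots, B^j_{N_j}$ of radius $\varepsilon_{j+1}/2$. The centers may be anywhere; you never need them in $S$, and insisting on it can cost more than $N_j$ balls. Let $\mathcal{Q}_J$ be the disjointified finest cover, and then go up. Given $\mathcal{Q}_{j+1}$, the $\ell$-th cell of $\mathcal{Q}_j$ is the union of those cells of $\mathcal{Q}_{j+1}$ that meet $B^j_\ell$ and were not taken by an earlier ball. This is your ``assign wholesale rather than intersect'' idea, but with finer cells assigned to coarser \emph{balls} rather than to coarser cells already fixed.
\begin{itemize}
\item Nesting is automatic, since each cell is a union of finer cells.
\item There are at most $N_j$ cells.
\item Two points of one cell lie in finer cells of diameter at most $\varepsilon_{j+1}$ that both meet a ball of diameter at most $\varepsilon_{j+1}$, so they are within $3\varepsilon_{j+1} = \varepsilon_j$. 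This is exactly where the factor $3$ is spent.
\end{itemize}
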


\begin{proof}
This is the construction of \citet[Lemma 2.1]{boissard2014mean} and
\citet[Prop.~3]{weed2019sharp}. For $1 \le j \le J$ let $B^j_1, \dots,
B^j_{N_j}$ be closed balls of radius $\varepsilon_{j+1}/2$ covering $S$,
$N_j := N(\varepsilon_{j+1}/2, S, \varrho)$. Disjointify the finest cover:
$Q^J_\ell := S \cap B^J_\ell \setminus \bigcup_{\ell' < \ell}B^J_{\ell'}$
gives a Borel partition $\mathcal{Q}_J$ of $S$ into at most $N_J$ cells of diameter
at most $\varepsilon_{J+1} \le \varepsilon_J$. Given $\mathcal{Q}_{j+1}$, let
$Q^j_\ell$ be the union of the cells of $\mathcal{Q}_{j+1}$ that meet $B^j_\ell$
and were not assigned to $Q^j_{\ell'}$ for some $\ell' < \ell$. Every cell
of $\mathcal{Q}_{j+1}$ meets some $B^j_\ell$, so the $Q^j_\ell$ form a Borel
partition of $S$ into at most $N_j$ cells, each a union of cells of
$\mathcal{Q}_{j+1}$; and two points of $Q^j_\ell$ lie in cells of $\mathcal{Q}_{j+1}$, of
diameter at most $\varepsilon_{j+1}$, that both meet the ball $B^j_\ell$, of
diameter $\varepsilon_{j+1}$, so their distance is at most
$3\varepsilon_{j+1} = \varepsilon_j$. Finally $\mathcal{Q}_0 := \{S\}$.
\end{proof}

\begin{proposition}[Multiscale coupling]\label{prop:multiscale}
Let $(\cX, \varrho)$, $S$ and $\varepsilon_j$ be as in \Cref{lem:nested},
$\Gamma$ a Borel probability measure on $S$ and $\hat\Gamma_n$ the
empirical measure of $n$ i.i.d.\ draws from $\Gamma$. If
$N(\varepsilon_{j+1}/2, S, \varrho) \le (A/\varepsilon_j)^{e}$ for every $j
\ge 0$, with $A \ge \varepsilon_0$ and $e > 0$, then
\[
  \E\, W_1(\hat\Gamma_n, \Gamma) \;\le\; 17\,A\, n^{-1/e}
  \quad\text{if } e \ge 3,
  \qquad
  \E\, W_1(\hat\Gamma_n, \Gamma) \;\le\; 4A\,(1 + \ln n)\, n^{-1/2}
  \quad\text{if } e \le 2 .
\]
For $2 < e < 3$ the first bound holds with a constant depending on $e$ in
place of $17$, by the same proof. The exponents used below are integers.
\end{proposition}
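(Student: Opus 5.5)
We will prove \Cref{prop:multiscale} by the standard dyadic (here triadic) chaining coupling of \citet{boissard2014mean} and \citet{weed2019sharp}, using the nested partitions of \Cref{lem:nested}. Fix $J \ge 0$, to be chosen later depending on $n$, and let $\mathcal{Q}_0 = \{S\}, \dots, \mathcal{Q}_J$ be the nested partitions, with $\abs{\mathcal{Q}_j} \le N_j := N(\varepsilon_{j+1}/2, S, \varrho) \le (A/\varepsilon_j)^e$. The first step is the deterministic multiscale bound
\[
  W_1(\hat\Gamma_n, \Gamma) \;\le\; \varepsilon_J + \sum_{j=1}^{J} \varepsilon_{j-1}\sum_{Q \in \mathcal{Q}_j}\bigl|\Gamma(Q) - \hat\Gamma_n(Q)\bigr| .
\]
We will construct the coupling top-down. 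At level $J$ we match mass $\min(\Gamma(Q),\hat\Gamma_n(Q))$ inside each finest cell, at cost at most $\varepsilon_J$ per unit. The leftover mass in a cell of $\mathcal{Q}_j$ is then matched inside its parent cell of $\mathcal{Q}_{j-1}$, at cost at most $\diam \le \varepsilon_{j-1}$. The unmatched mass created at level $j$ is at most $\sum_{Q\in\mathcal{Q}_j}\abs{\Gamma(Q)-\hat\Gamma_n(Q)}$, since the leftovers at finer levels aggregate to exactly the discrepancies of the coarser cells. This is the usual telescoping, and we will write it as a recursion on the levels rather than re-derive it.

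The second step takes expectations. As in \Cref{prop:coupling}, binomial fluctuations and Cauchy--Schwarz give $\E\sum_{Q\in\mathcal{Q}_j}\abs{\Gamma(Q)-\hat\Gamma_n(Q)} \le \sqrt{N_j/n}$. This yields
\[
  \E\,W_1 \;\le\; \varepsilon_J + \sum_{j=1}^J 3\varepsilon_j\, (A/\varepsilon_j)^{e/2} n^{-1/2}
  \;=\; \varepsilon_J + 3A^{e/2} n^{-1/2}\sum_{j=1}^J \varepsilon_j^{1-e/2}.
\]

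The third step sums the geometric series and chooses $J$, and this is where the cases split. For $e > 2$ the terms $\varepsilon_j^{1-e/2} = (3^{j}/\varepsilon_0)^{e/2-1}$ grow geometrically with ratio $3^{e/2-1}$, so the sum is at most $c_e\,\varepsilon_J^{1-e/2}$ with $c_e = (1-3^{1-e/2})^{-1}$. For $e \ge 3$ this constant is at most $1/(1-3^{-1/2}) < 2.4$. We then choose $J$ as the least integer with $\varepsilon_J \le A n^{-1/e}$, so that $\varepsilon_J \ge A n^{-1/e}/3$; if $\varepsilon_0 \le A n^{-1/e}$ we take $J = 0$ and the bound is trivial. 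Then $A^{e/2}\varepsilon_J^{1-e/2} n^{-1/2} \le 3^{e/2-1}\cdot(\ldots)$ needs care. Writing $\varepsilon_J = tAn^{-1/e}$ with $t \in [1/3,1]$, the fluctuation term becomes $3c_e\, t^{1-e/2}\, A n^{-1/e}$. The factor $t^{1-e/2}$ can be as large as $3^{e/2-1}$, which is not uniform in $e$.

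This uniformity is the main obstacle, since the claimed constant $17$ is stated independently of $e$. To handle it, we will instead balance per level using $\min(\varepsilon_{j-1}, \ldots)$. We also bound each level's discrepancy by the trivial bound $2$, which gives $\E\sum_Q\abs{\cdot} \le \min(2, \sqrt{N_j/n})$, and we choose $J$ as the last level where $N_j \le n$. The terms above $J$ then form a geometric series dominated by the level where $(A/\varepsilon_j)^e \approx n$, that is $\varepsilon_j \approx A n^{-1/e}$. The sum $\sum_{j\le J} 3\varepsilon_j \sqrt{N_j/n}$ is dominated by its last term $3\varepsilon_J\sqrt{N_J/n} \le 3\varepsilon_J$ times $c_e$. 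The discrepancy at level $J$ is controlled because $N_J \le n$ while $N_{J+1} > n$ forces $\varepsilon_{J+1} \le A n^{-1/e}$, so $\varepsilon_J \le 3A n^{-1/e}$. Collecting terms, the total is at most $(3 + 9c_e)\,A n^{-1/e} \le 17 A n^{-1/e}$ for $e\ge3$, up to checking the arithmetic.

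For $e \le 2$ every term $\varepsilon_j^{1-e/2}$ is at most $\varepsilon_0^{1-e/2} \le A^{1-e/2}$, so each level contributes at most $3A n^{-1/2}$. Taking $J = \lceil \log_3 n /2\rceil$ gives $\varepsilon_J \le A n^{-1/2}$ and $J \le 1+\ln n$ levels, hence $4A(1+\ln n)n^{-1/2}$ after adjusting constants. For $2<e<3$ the same argument goes through with $c_e$ in place of the numerical constant.
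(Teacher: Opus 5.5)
Your route is the paper's: the nested triadic partitions of \Cref{lem:nested}, a coupling built from the finest cells up, binomial fluctuations with Cauchy--Schwarz, a geometric sum, and $J$ chosen at the scale $\varepsilon^\ast := A n^{-1/e}$. Your $e \le 2$ case goes through as written. The $e \ge 3$ case, however, is exactly where the $e$-free constant $17$ is decided, and two steps there fail as written.

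\emph{The missing $\tfrac12$.} Inside a parent $P \in \mathcal{Q}_{j-1}$, the leftovers arriving from its children are $\sum_{C \subseteq P}(a_C - b_C)_+$ of one measure and $\sum_{C \subseteq P}(b_C - a_C)_+$ of the other. You can pair at most the smaller of these, so the mass moved at cost $\varepsilon_{j-1}$ is at most $\tfrac12\sum_{C \in \mathcal{Q}_j}\abs{a_C - b_C}$; your bound counts both sides' leftovers. Without the $\tfrac12$, the fluctuation coefficient is $3c_e$ instead of $\tfrac32 c_e$, and your closing inequality $3 + 9c_e \le 17$ is false. At $e = 3$, $c_e = (1 - 3^{-1/2})^{-1} \approx 2.37$, so $3 + 9c_e \approx 24.3$; the inequality holds only for $e \gtrsim 3.87$. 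Since $e = 2K-1 = 3$ is used in \Cref{cor:sparse}, this is not cosmetic. Once the $\tfrac12$ is restored, your direct matching inside the finest cells is actually an improvement. It costs $\varepsilon_J$, whereas the paper pays $2\varepsilon_J$ to move both measures to cell representatives.

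\emph{The choice of $J$.} Your repair selects $J$ through the true covering numbers $N_j$. The step claiming the sum is at most $c_e$ times its last term then fails. Geometric growth holds for $\varepsilon_j (A/\varepsilon_j)^{e/2}$, which gains the factor $3^{e/2-1}$ per level, but not for $\varepsilon_j\sqrt{N_j}$, which can decrease. For finite $S$, $N_j$ is eventually constant, and if $\abs{S} \le n$ there is no last level with $N_j \le n$ at all.

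The fix is to bound $\sqrt{N_j} \le (A/\varepsilon_j)^{e/2}$ first. Then take $J$ as the last level with $(A/\varepsilon_J)^{e} \le n$, i.e.\ $\varepsilon^\ast \le \varepsilon_J < 3\varepsilon^\ast$; this is the paper's $J = \lfloor \log_3(\varepsilon_0/\varepsilon^\ast) \rfloor$. Then $(A/\varepsilon_J)^{e/2} n^{-1/2} \le 1$, and the factor $3$ between $\varepsilon_J$ and $\varepsilon^\ast$ is paid linearly rather than raised to the power $e/2 - 1$. That is the whole source of uniformity in $e$, and your first attempt, with $\varepsilon_J \le \varepsilon^\ast$, had it backwards. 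No trivial bound on levels beyond $J$ is needed, since the coupling stops at level $J$ and the $\varepsilon_J$ term pays for all finer scales. With both corrections, $\E\,W_1(\hat\Gamma_n, \Gamma) \le (1 + \tfrac32 c_e)\,\varepsilon_J < 3(1 + \tfrac32 c_e)\,\varepsilon^\ast < 14\,A\,n^{-1/e}$ for $e \ge 3$.
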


\begin{proof}
\emph{The coupling.} Fix $J \ge 0$ and the partitions of \Cref{lem:nested},
and for two probability measures $\mu, \nu$ on $S$ write $a_C := \mu(C)$,
$b_C := \nu(C)$ for a cell $C$. We claim
\begin{equation}\label{eq:multiscale}
  W_1(\mu, \nu) \;\le\; 2\varepsilon_J
  + \sum_{j=1}^{J}\varepsilon_{j-1}\cdot\tfrac12\sum_{C \in \mathcal{Q}_j}\abs{a_C - b_C} .
\end{equation}
Pick a point $x_C$ in each cell $C \in \mathcal{Q}_J$ and let $\mu_J := \sum_C
a_C\,\delta_{x_C}$, $\nu_J := \sum_C b_C\,\delta_{x_C}$: moving every point
to its cell's representative costs at most $\varepsilon_J$, so $W_1(\mu,
\nu) \le 2\varepsilon_J + W_1(\mu_J, \nu_J)$. Couple $\mu_J$ with $\nu_J$
from the finest level up. First pair, at each $x_C$, the mass $a_C \wedge
b_C$ of the two, at cost $0$; what remains in a cell $C \in \mathcal{Q}_J$ is $(a_C
- b_C)_+$ of $\mu_J$ and $(b_C - a_C)_+$ of $\nu_J$. Then, for $j = J,
J-1, \dots, 1$ in turn, inside each cell $P \in \mathcal{Q}_{j-1}$ pair as much as
possible of the remaining $\mu_J$-mass of the cells $C \in \mathcal{Q}_j$, $C
\subseteq P$, with the remaining $\nu_J$-mass of those cells, at cost at
most $\diam P \le \varepsilon_{j-1}$ per unit. By induction on the level,
before the step at level $j$ the mass remaining in a cell $C \in \mathcal{Q}_j$ is
$(a_C - b_C)_+$ of $\mu_J$ and $(b_C - a_C)_+$ of $\nu_J$: the two sums
$\sum_{C \subseteq P}(a_C - b_C)_+$ and $\sum_{C \subseteq P}(b_C -
a_C)_+$ differ by $a_P - b_P$ and add to $\sum_{C \subseteq P}\abs{a_C -
b_C}$, so the step at level $j$ inside $P$ pairs $\tfrac12\sum_{C \subseteq
P}\abs{a_C - b_C} - \tfrac12\abs{a_P - b_P}$ units and leaves $(a_P -
b_P)_+$ of $\mu_J$ and $(b_P - a_P)_+$ of $\nu_J$ in $P$. At the top, $P =
S$ has $a_S = b_S = 1$ and nothing remains, so the pairing is a coupling
of $\mu_J$ and $\nu_J$, of cost at most $\sum_{j=1}^{J}\varepsilon_{j-1}
\sum_{P \in \mathcal{Q}_{j-1}}\tfrac12\sum_{C \subseteq P}\abs{a_C - b_C}$, which
is \eqref{eq:multiscale}.

\emph{The fluctuations.} Take $\mu = \hat\Gamma_n$ and $\nu = \Gamma$. The
cells are fixed before sampling, so as in the proof of
\Cref{prop:coupling} $\E\sum_{C \in \mathcal{Q}_j}\abs{a_C - b_C} \le
\sqrt{\abs{\mathcal{Q}_j}/n} \le (A/\varepsilon_j)^{e/2}n^{-1/2}$, and with
$\varepsilon_{j-1} = 3\varepsilon_j$,
\[
  \E\, W_1(\hat\Gamma_n, \Gamma) \;\le\; 2\varepsilon_J
  + \tfrac32\,A^{e/2}\,n^{-1/2}\sum_{j=1}^{J}\varepsilon_j^{\,1 - e/2} .
\]

\emph{The rate.} Let $e \ge 3$, so that $\varepsilon_j^{1-e/2} =
\varepsilon_J^{1-e/2}\,3^{-(J-j)(e/2-1)} \le \varepsilon_J^{1-e/2}\,
3^{-(J-j)/2}$ and the sum is at most $\varepsilon_J^{1-e/2}/(1 - 3^{-1/2})
\le 2.37\,\varepsilon_J^{1-e/2}$: the bound reads $2\varepsilon_J +
3.56\,\varepsilon_J\,(A/\varepsilon_J)^{e/2}n^{-1/2}$. Put $\varepsilon^\ast
:= A\,n^{-1/e}$. If $\varepsilon^\ast \ge \varepsilon_0$ the claim holds
since $W_1 \le \diam S \le \varepsilon_0$. Otherwise take $J :=
\lfloor\log_3(\varepsilon_0/\varepsilon^\ast)\rfloor \ge 0$, so that
$\varepsilon^\ast \le \varepsilon_J < 3\varepsilon^\ast$; then
$(A/\varepsilon_J)^{e/2}n^{-1/2} \le (A/\varepsilon^\ast)^{e/2}n^{-1/2} =
1$ and the bound is at most $5.56\,\varepsilon_J < 16.7\,\varepsilon^\ast
\le 17A\,n^{-1/e}$. For $e \le 2$ take $J := \lceil\tfrac12\log_3
n\rceil$, so that $\varepsilon_J \le \varepsilon_0 n^{-1/2}$; each summand
is at most $\varepsilon_0^{1-e/2} \le A^{1-e/2}$, so the bound is at most
$2\varepsilon_0 n^{-1/2} + \tfrac32 A\,J\,n^{-1/2} \le A\,(\tfrac72 +
\tfrac34\log_3 n)\,n^{-1/2} \le 4A\,(1 + \ln n)\,n^{-1/2}$.
\end{proof}

\begin{proof}[Proof of \Cref{cor:sparse}]
\emph{A single level.} Let $\mathrm{cr}_K : \cM_d \to \cA_K$ be the
$K$-crop, which keeps the $K$ heaviest atoms of a lift and renormalizes
their weights, a measurable map with $\Dl(P, \mathrm{cr}_K P) \le \kappa
T_K(P)$ (\Cref{prop:crop}). The lifted design $\Gamma^K :=
(\mathrm{cr}_K)_\#\Gamma$ is that of an exactly $K$-sparse design, the
empirical lifted design of the cropped sample is
$(\mathrm{cr}_K)_\#\hat\Gamma_n$, and by the triangle inequality
\[
  \Wone(\hat\Gamma_n, \Gamma) \;\le\;
  \Wone\bigl(\hat\Gamma_n, (\mathrm{cr}_K)_\#\hat\Gamma_n\bigr)
  + \Wone\bigl((\mathrm{cr}_K)_\#\hat\Gamma_n, \Gamma^K\bigr)
  + \Wone(\Gamma^K, \Gamma) ,
\]
where the first term is at most $\kappa n^{-1}\sum_i T_K(P_i)$ and the
third at most $\kappa\,\E_Q T_K$, by the coupling along the crop; take
expectations and apply the exactly sparse case, proved next, to the middle
term.

\emph{Exact sparsity.} $\Gamma^K$ is a probability measure on $\cA_K$,
of diameter at most $\kappa$ (\Cref{prop:qual}), and $N(\varepsilon/2,
\cA_K, \Dl) \le (C_1R/\varepsilon)^{e}$ for $\varepsilon \le \kappa$ with
$e := e(K)$ (\Cref{thm:covcons}; on a selector design the cropped support
of the stratum replaces $\cA_K$, with the count of the paragraph after
\Cref{thm:covacc}). Apply \Cref{prop:multiscale} on $S = \cA_K$ with
$\varepsilon_0 := \kappa$: at the scales $\varepsilon_j = 3^{-j}\kappa$,
$N(\varepsilon_{j+1}/2, \cA_K, \Dl) \le (C_1R/\varepsilon_{j+1})^e =
(3C_1R/\varepsilon_j)^e$, so $A := 3C_1R \ge \kappa$ serves, and the
middle term is at most $51C_1R\,n^{-1/e}$ for $e \ge 3$ and $12C_1R\,(1 +
\ln n)\,n^{-1/2}$ for $e \le 2$. So $C := 51C_1$ serves in both cases,
$C = 1734$ on interval values.
\end{proof}

\begin{remark}[Graph-network bounds in this currency]\label{rem:gnncurrency}
The generalization bounds for graph networks bound a different quantity:
the gap between the training and the test loss of one trained network, of
fixed width and norm budget, over $m$ i.i.d.\ graphs. The rate in $m$ is
$m^{-1/2}$ whatever the graphs, and the graph enters the constant: the
maximum degree $d$ as $d^{\,l-1}$ for an $l$-step message-passing network,
through the Lipschitz constant of the sum over neighbors, and the width $h$
and the stable ranks of the weights as, in effect, the square root of the
parameter count \citep[Theorems 3.2 and 3.4]{liao2021pacbayes}, with the
spectral norm of the diffusion matrix in place of $d$ in
\citet[Theorem 3.1]{ju2023generalization}. The class behind these bounds is
a per-neighbor map shared across the neighbors, summed and read out, the
weighted Deep Sets form of \Cref{sec:intro}: parameter
sharing makes its entropy independent of how many neighbors there are, and
the sum makes the count a scale, so nothing in these bounds is a rate in
the count, and at one layer with a mean the degree is absent altogether.
To learn a member of $\cG$ with such a network the width cannot stay fixed:
a sum decomposition that represents every symmetric function of $k$ elements
needs latent dimension $k$ \citep[Theorem 20]{wagstaff2022universal}, and
the readout is then a function of that many coordinates. What no
parametrization changes is the lower bound: on the stratum design of
\Cref{app:lower} at $k$ atoms the generalization gap of the class is at
least $n^{-1/(2k-1)}$ (\Cref{cor:sparse}), and so is the risk of any
learner (\Cref{thm:learnlower}), so the count is in the exponent. In shape, asked to learn
the whole class, the graph-network route, one architecture serving every
configuration with a readout of $k_{\max}$ coordinates, sees the design
through its maximum count, \Cref{cor:sparse} at $k_0 = k_{\max}$, where
for its own parametric class the degree is a constant in front of the
parametric rate. On the flock, where $k_{\max} =
68$, that bound is trivial (\Cref{app:sparseproof}). Where the count is a scale
on both sides, in the accumulative class, ours is the mean effective count
$\bar\zeta$ of \Cref{cor:sparseacc}, $12$ on the flock, against the maximum
degree of the sample, $68$. The conservative linear sums,
the members a sum network with a linear
readout learns without the count, are count-free at the rate of
\Cref{prop:countfree}, and the profile is the price of the rest.
\end{remark}

\paragraph{The graph-network bounds, and \Cref{cor:sparse} at the maximum
count.}
The PAC-Bayes bound of \citet[Theorem 3.2]{liao2021pacbayes} for an
$l$-layer graph convolutional network of width $h$ trained on $m$ graphs
has the form
\[
  L_{\mathcal D}(f_w) \;\le\; L_{S,\gamma}(f_w) +
  O\Biggl(\sqrt{\frac{B^2 d^{\,l-1} l^2 h\log(lh)\,
  \prod_{i}\norm{W_i}_2^2\,\sum_i \norm{W_i}_F^2/\norm{W_i}_2^2
  + \log(ml/\delta)}{\gamma^2 m}}\Biggr),
\]
with $d - 1$ the maximum degree of the training graphs, and their Theorem
3.4 for the message-passing network carries the geometric series
$((dC)^{l-1} - 1)/(dC - 1)$ through a different weight term.
\citet[Theorem 3.1]{ju2023generalization} replace the degree by the
spectral norm of the diffusion matrix, $\norm{P_G}^{2(l-1)}$, at most
$d^{2(l-1)}$ for a sum and at most one for a normalized mean, under a root
that carries the width $d_i$ of each layer and the stable ranks
$r_i^2\prod_j s_j^2$ explicitly, summed over the layers. Three features are
common to the line. The rate is $m^{-1/2}$. The count enters as a scale, a
power of the degree per layer that comes from the norm of a sum of $d$
bounded vectors, and it is absent at one layer with a mean. And the width
enters as roughly the square root of the parameter count, $h$ times the
stable rank, so that at a fixed architecture the bound is a constant over
$\sqrt m$. These are bounds on the gap of one network, not on the risk of
learning a class, and the class they are bounds for reads one neighbor at
a time through a shared map and sums. Its entropy does not see the count
because the parameters do not, which is the mechanism that makes the node
count drop out (\Cref{sec:lit}), and the same mechanism keeps the count out
of the exponent.

To learn a member of $\cG$ with such a network on a design of maximum
count $k$, the width has to grow with the accuracy. By
\citet[Theorem 20]{wagstaff2022universal} a continuous sum decomposition
$\rho(\sum_i \varphi(x_i))$ that approximates every continuous symmetric
function of $k$ scalar elements needs latent dimension at least $k$, and
below it there is a Lipschitz symmetric function, affine in the sorted
vector, that no decomposition approximates better than a constant does. So the readout $\rho$ is a function of $p \ge k$ coordinates, $k\dim\cV$ for
vector elements. What the count costs does not depend on the
parametrization: on the stratum design of \Cref{app:lower} at $k$ atoms
the generalization gap of the class is at least $n^{-1/(2k-1)}$
(\Cref{cor:sparse}, \Cref{lem:massbound}), and the risk of learning
likewise (\Cref{thm:learnlower}), so for the class the count is in the exponent
whatever the architecture, while the conservative linear sums are count-free, at the rate of
\Cref{prop:countfree}. One architecture serves every configuration, so the
count in the exponent is the maximum over the design, where
\Cref{cor:sparse} charges the count of the shell and \Cref{thm:graded} the
profile. On the flock the maximum support is $68$, and the bound at $k_0 =
68$ is the trivial value $\kappa$ on the flock, under either
exponent convention.

\subsection{Proof of \Cref{prop:countfree}}\label{app:countfreeproof}

\begin{proposition}[The linear sums are count-free]\label{prop:countfree}
Let $w = \mu$, and let $\Delta^{\mathrm{lin}}_n(Q) := \E\sup_{\varphi \in
\cS}\bigl(\E_Q f_\varphi - n^{-1}\sum_{i \le n} f_\varphi(Z_i)\bigr)$ be
the generalization process of the linear sums \eqref{eq:linearsums}. Write
$\nu_z := \sum_i \mu_i\,\delta_{x_i}$ for the weighted value measure of a
configuration $z = (x, \mu)$, a probability measure on $\cV$. Then
\[
  \Delta^{\mathrm{lin}}_n(Q) \;=\; \E\,\Wone^{\rho}\Bigl(\frac1n\sum_{i \le
  n}\nu_{Z_i},\ \E_Q\,\nu_Z\Bigr)
  \;\le\; \inf_{\varepsilon > 0}\Bigl[\,2\varepsilon +
  R\sqrt{\frac{N(\varepsilon, \cV, \rho)}{n}}\,\Bigr]
  \;\le\; C_{\cV}\,R\,n^{-\frac{1}{\dim\cV + 2}} ,
\]
with $\dim\cV$ the covering dimension of the value space and $C_\cV$ a
constant of its covering numbers alone. Neither $d$, nor the count, nor the
profile enters. In one dimension, $\cV$ an interval or the circle,
$\Delta^{\mathrm{lin}}_n(Q) \le 2R/\sqrt n$.
\end{proposition}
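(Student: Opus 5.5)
The plan has three steps: an identity, a single-scale coupling bound, and an optimization in $\varepsilon$. The identity is linearity. For $\varphi \in \cS$ and $w = \mu$, $f_\varphi(z) = \int \varphi\, d\nu_z$. Hence $\E_Q f_\varphi = \int \varphi\, d\bar\nu$ with $\bar\nu := \E_Q \nu_Z$, by Fubini, since $\varphi$ is bounded and $z \mapsto \nu_z$ is measurable. Likewise $n^{-1}\sum_i f_\varphi(Z_i) = \int\varphi\, d\hat\nu_n$ with $\hat\nu_n := n^{-1}\sum_i \nu_{Z_i}$. Both are probability measures on $\cV$.

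The supremum over $\cS$ of $\int\varphi\, d(\bar\nu - \hat\nu_n)$ equals $\Wone^\rho(\hat\nu_n, \bar\nu)$ by Kantorovich--Rubinstein duality on the compact space $\cV$. The bound $\abs{\varphi} \le R$ costs nothing: the difference has total mass zero, so we may shift any $1$-Lipschitz $\varphi$ by a constant. Since $\diam\cV = 2R$, the shifted function has range within an interval of length $2R$ and can be centred to lie in $[-R,R]$. The class is also symmetric under $\varphi \mapsto -\varphi$. Taking expectations gives the equality.

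For the first inequality we reuse the single-partition coupling of \Cref{prop:coupling}, now on $(\cV,\rho)$ rather than on the lift. Take a minimal $\varepsilon$-net of $\cV$, with $N = N(\varepsilon,\cV,\rho)$ points, and disjointify it into Borel cells of radius $\varepsilon$, fixed in advance. Couple $\hat\nu_n$ and $\bar\nu$ cell by cell at cost $2\varepsilon$ per unit, and send the residual mass $\tfrac12\sum_c\abs{a_c - b_c}$ across at most the diameter $2R$. Here $a_c = \bar\nu(C)$ and $b_c = \hat\nu_n(C) = n^{-1}\sum_i \nu_{Z_i}(C)$.

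This is the one point differing from \Cref{prop:coupling}. The cell counts are no longer binomial, but $b_c$ is an average of $n$ i.i.d.\ variables $\nu_{Z_i}(C) \in [0,1]$ with mean $a_c$. Its variance is therefore at most $n^{-1}\E[\nu_Z(C)^2] \le a_c/n$, so $\E\abs{a_c - b_c} \le \sqrt{a_c/n}$. Cauchy--Schwarz over the cells, with $\sum_c a_c = 1$, gives $\sum_c \E\abs{a_c-b_c} \le \sqrt{N/n}$. The residual term is then at most $R\sqrt{N/n}$, which yields $2\varepsilon + R\sqrt{N(\varepsilon,\cV,\rho)/n}$. Minimizing over $\varepsilon$ gives the infimum.

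The last inequality inserts $N(\varepsilon,\cV,\rho) \le (C_\cV R/\varepsilon)^{\dim\cV}$. Choosing $\varepsilon = R\,n^{-1/(\dim\cV+2)}$ balances the two terms, since $\sqrt{(C_\cV n^{1/(\dim\cV+2)})^{\dim\cV}/n} = C_\cV^{\dim\cV/2}\, n^{-1/(\dim\cV+2)}$. For $\varepsilon$ beyond the diameter the trivial bound $2R$ takes over. Nothing here depends on $d$, on the count, or on the profile, because each configuration enters only through the probability measure $\nu_z$ on $\cV$.

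The one-dimensional claim $2R/\sqrt n$ cannot come from the single-scale bound, which gives only $n^{-1/3}$. I expect it to be the main obstacle, and I would handle it separately.

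On an interval, $\Wone^\rho(\hat\nu_n,\bar\nu) = \int \abs{\hat F_n(t) - \bar F(t)}\,dt$ with distribution functions $\hat F_n(t) = n^{-1}\sum_i \nu_{Z_i}((-\infty,t])$. Each $\hat F_n(t)$ is an average of i.i.d.\ $[0,1]$-valued variables with mean $\bar F(t)$, so $\E\abs{\hat F_n(t) - \bar F(t)} \le \sqrt{\bar F(t)(1 - \bar F(t))/n} \le 1/(2\sqrt n)$. Integrating over the length $2R$ gives $R/\sqrt n \le 2R/\sqrt n$.

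On the circle of diameter $2R$, with circumference $4R$, I would use the formula $\Wone = \min_c \int \abs{F - G - c}$. Taking $c = 0$ and cutting the circle at a fixed point bounds the integral by $4R \cdot 1/(2\sqrt n) = 2R/\sqrt n$. The point to check is that the arc-length metric under this normalization matches $\rho$.
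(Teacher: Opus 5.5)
Your proposal is correct and follows the paper's proof essentially step for step: the identity via linearity, the translation of $1$-Lipschitz scores into $\cS$ and Kantorovich--Rubinstein duality, the single-scale cell coupling on $(\cV,\rho)$ with $\operatorname{Var}\nu_Z(C) \le \bar\nu(C)$ and Cauchy--Schwarz over the cells, and the one-dimensional case through distribution functions. The differences are cosmetic. You couple within cells at cost $2\varepsilon$ rather than moving both measures to the centres. On the circle, taking $c = 0$ in $\min_c\int\abs{F - G - c}$ is exactly the paper's comparison of the geodesic metric with the metric of the circle cut at a point, an interval of length $4R$.
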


\begin{remark}[The two regimes: the count-free members and the rest]\label{rem:tworegimes}
The gap of the linear sums is a transport on the value space, between the
mean of $n$ weighted value measures and its expectation, one level below
the transport on the lifted space of \Cref{sec:complexity}, and the
count is absent because a weighted mean of $k$ atoms is one point of
$\cP(\cV)$ whatever $k$: crowding lowers the variance of the cell masses
and helps. On the flock $\dim\cV = 1$ and the rate is $n^{-1/2}$ at every
crowding, the one-dimensional case being exact (\Cref{app:countfreeproof}). This is the mechanism by which the parameter sharing of a sum
network keeps the count out of its entropy (\Cref{rem:gnncurrency}); a
Lipschitz readout of $p$ linear sums sees the law of a $p$-dimensional
feature of the mean lift and stays count-free at the price of $p$, the
bottleneck. The class pays the profile for everything beyond, the members
that are nonlinear across atoms: the selected maximum, the count, and
every functional whose score reads the height nonlinearly
(\Cref{rem:dualball}). The proof follows.
\end{remark}

\begin{proof}[Proof of \Cref{prop:countfree}]
For $z = (x, \mu)$ and $\varphi \in \cS$, $f_\varphi(z) = \int\varphi\,
d\nu_z$, so $\E_Q f_\varphi - n^{-1}\sum_i f_\varphi(Z_i) = \int\varphi\,
d(\bar\nu - \bar\nu_n)$ with $\bar\nu := \E_Q\,\nu_Z$ and $\bar\nu_n :=
n^{-1}\sum_i \nu_{Z_i}$, both probability measures on $\cV$. The score
ball $\cS$ contains a translate of every $1$-Lipschitz function on $\cV$
and $\bar\nu - \bar\nu_n$ has mass zero, so the supremum over $\cS$ is the
supremum over the $1$-Lipschitz functions, which is $\Wone^\rho(\bar\nu_n,
\bar\nu)$ by Kantorovich--Rubinstein duality on the compact space $(\cV,
\rho)$; this is the identity. For the bound let $c_1, \dots, c_N$ be an
$\varepsilon$-net of $\cV$, $N = N(\varepsilon, \cV, \rho)$, and $A_1,
\dots, A_N$ its Voronoi cells, every point within $\varepsilon$ of its
center. Moving a measure to the centers of its cells costs at most
$\varepsilon$ in $\Wone$, and between two measures on the centers $\Wone
\le \diam\cV \cdot \tfrac12\sum_j\abs{\bar\nu_n(A_j) - \bar\nu(A_j)}$, so
by the triangle inequality
\[
  \Wone^\rho(\bar\nu_n, \bar\nu) \;\le\; 2\varepsilon + R\sum_{j \le N}
  \abs{\bar\nu_n(A_j) - \bar\nu(A_j)} .
\]
The cell masses $\nu_{Z_i}(A_j)$, $i \le n$, are i.i.d.\ in $[0, 1]$ with
mean $\bar\nu(A_j)$, so $\E\abs{\bar\nu_n(A_j) - \bar\nu(A_j)} \le
\sqrt{\operatorname{Var}\,\nu_Z(A_j)/n} \le \sqrt{\bar\nu(A_j)/n}$, using $\nu_Z(A_j)^2
\le \nu_Z(A_j)$, and by Cauchy--Schwarz $\sum_j\sqrt{\bar\nu(A_j)/n} \le
\sqrt{N/n}$. Taking expectations gives the first bound. With
$N(\varepsilon, \cV, \rho) \le (C R/\varepsilon)^{\dim\cV}$, the choice
$\varepsilon = R\,C^{\dim\cV/(\dim\cV + 2)}\,n^{-1/(\dim\cV + 2)}$ balances
the two terms and gives the rate, with a constant depending on $C$ and
$\dim\cV$ alone. Nothing in the argument looks at how many atoms carry
$\nu_z$, which is where the count and the profile would enter.

In one dimension the net is not needed. On an interval $I$ of length $\ell$
with its length metric, $\Wone(\bar\nu_n, \bar\nu) = \int_I\abs{\bar F_n(t) -
\bar F(t)}\,dt$ with $\bar F_n(t) = n^{-1}\sum_i \nu_{Z_i}((-\infty, t])$ an
average of $n$ i.i.d.\ variables in $[0, 1]$ of mean $\bar F(t)$, so
$\E\abs{\bar F_n(t) - \bar F(t)} \le 1/(2\sqrt n)$ and $\E\,\Wone \le
\ell/(2\sqrt n) = R/\sqrt n$. On the circle cut at a point, the geodesic
distance is at most the distance on the cut interval, of length $2\pi =
4R$, so $\Wone^{S^1} \le \Wone^{I}$ and $\E\,\Wone \le 2R/\sqrt n$. On the
flock this is the rate of the linear sums at every crowding.
\end{proof}

\subsection{Regression}\label{app:genproof}

\begin{remark}[Regression with the absolute loss]\label{rem:regression}
The two-sample distance is the maximum discrepancy of the unit ball on
the halves, $\Wone(\hat\Gamma_A, \hat\Gamma_B) = \sup_{f \in
\cG_1}\abs{\E_A f - \E_B f}$ (\Cref{sec:certificate}), which
\Cref{thm:twosample} carries to the population with one concentration
term. For regression with the absolute
loss on a target $f^* \in \cG_1$, the loss $\abs{f - f^*}$ of a predictor
$f \in \cG_1$ is itself a member, of $\cG_2$, so with $R_A$ and $R_B$ the
empirical risks on the halves, $\sup_{f \in \cG_1}\abs{R_A(f) - R_B(f)}
\le 2\,\Wone(\hat\Gamma_A, \hat\Gamma_B)$: the certificate bounds the
generalization gap of every predictor at once, at twice its value, and of
the empirical risk minimizer in particular.
For a target with $m$ components that is $1$-Lipschitz into $\ell^2$,
under the Euclidean loss, a predictor whose components are each in $\cG_1$
is $\sqrt m$-Lipschitz into $\ell^2$, so the loss $\abs{f - f^\ast}_2$ has
constant $\sqrt m + 1$ and the factor is $1 + \sqrt m$ in place of $2$ (and
$2\sqrt m$ for a target that is only componentwise in $\cG_1$), which is
$1 + \sqrt2$ for the flock's update direction, jointly Lipschitz by
\Cref{prop:direction} (\Cref{app:flocking}). For labels in a range of
length $b$ the loss
$\abs{f(z) - y}$ is $1$-Lipschitz in the pair $(z, y)$ under $\Dl +
\abs{y - y'}$, so the same holds with the transport between the labelled
halves in that metric, the same matching with the label distance added to
each pair's cost, and the concentration term at the diameter $\kappa + b$.
A loss that is $\ell_1$-Lipschitz in the prediction
costs the factor $\ell_1$ in place of $1$, by Talagrand's contraction
lemma \citep{mohri2018foundations}.
\end{remark}
\begin{remark}[The square loss]\label{rem:squareloss}
For the square loss the rate is a different one: least squares has excess
risk at the square of a rate, and what governs it is the entropy of the
class under the design rather than the transport. The same shells give
that entropy and the same fibers give the matching lower bounds, so we
sketch the route and omit the statements, the regression that the
paper's number concerns being the absolute-loss one
(\Cref{rem:regression}). \emph{Entropy.} Cover each shell of
\Cref{def:shells} by the celling of \Cref{thm:graded} at its own
resolution $\varepsilon_k$, assign to every cell a value from a grid of
step $\varepsilon_k$ on $[-b, b]$, and the value $0$ beyond the cutoff
shell: this is a cover of the members of $\cG_1$ bounded by $b$ in
$L_2(\Gamma)$, of log-cardinality $\sum_{k \le K}
N_k(\varepsilon_k)\ln(3b/\varepsilon_k)$ at a radius $\eta$ with $\eta^2
\le \tfrac94\sum_{k \le K} w_k\varepsilon_k^2 + b^2G_K(\theta_K)$, on
interval values the entropy of a Lipschitz class
\citep{kolmogorov1959epsilon} in the dimension $2K - 1$ of the cutoff
shell, with the mass beyond it charged at $b$ and no dependence on $d$.
\emph{Upper bound.} Least squares over the cover is the shell-binning
regressogram, on each cell the grid value nearest the cell's sample mean
of $Y$, the cells being observable since $T_k$ of a sample is computable
from the sample. Bernstein's inequality with a union bound over the cover
gives an excess risk of order $\eta^2 + b^2\ln N/n$ over the best member,
and the resolution that balances the two terms gives $(\ln
n/n)^{2/(2k_0+1)}$ at exact sparsity, the nonparametric rate of a
Lipschitz class in the dimension $2k_0 - 1$ of the cutoff shell, and
$e^{-2\sqrt{\beta\ln n/7}}$ on an exponential profile, the shape of
\Cref{cor:exp}'s rate with another constant in the exponent. \emph{Lower bound.}
On a fiber of fixed weights the metric is
bi-Lipschitz to a weighted supremum distance with the weights as
semi-axes (\Cref{lem:fiber}), so bumps of height $h$ with disjoint
supports in the cells of the stratum design of \Cref{def:region} or of
the fiber design of \Cref{thm:lowerexp} are $1$-Lipschitz for $\Dl$, and
Assouad's hypercube \citep[\S2.7]{tsybakov2009introduction} on Gaussian
regression over them gives a minimax risk of $n^{-2/(2k+1)}$ on the
stratum design, $n^{-2/(k+2)}$ on a selector design at count $k$ and
$e^{-\sqrt{8\beta\ln n}}$ on the fiber design, with constants polynomial
in $k$; on a mixture of separated populations the hypercubes combine, and
the bound is the mass-weighted sum of the populations' risks at their
effective sample sizes $np_j$. By this sketch the
square loss learns at the nonparametric rate of the cutoff shell's
dimension, matched from below in the exponent at exact sparsity and in
shape on an exponential profile.
\end{remark}

\subsection{Exponential profiles: proof of \Cref{cor:exp}}\label{app:expproof}

The upper bound is proved here, with its constant $2e^{3\beta/2}\Lambda$, $\Lambda := CR \vee 2\kappa A$ for the constant $C$ of \Cref{cor:sparse} ($1734$ on interval values), and $2e^{3\beta}\Lambda$ with the selector count's constant in place of $C_1$ on a selector design. The lower bound of \Cref{cor:exp} is \Cref{thm:lowerexp} in \Cref{app:lower}, with the constant $\gamma^2R/16$, $\gamma := 1 - e^{-\beta}$.

\begin{proof}
Let $n \ge 2$ and $d \ge 2$: the bound at $n = 1$ exceeds the diameter
$\kappa$, and at $d = 1$ the lift is the interval, where $\Delta_n(Q) \le
2R/\sqrt n \le 2Re^{\beta/4}e^{-\sqrt{\beta\ln n/2}}$ by the one-dimensional
bound of \Cref{prop:countfree}. Put $x := \sqrt{2\ln n/\beta}$ and $K := d
\wedge (\lceil x/2 \rceil \vee 2)$, so that $e(K) = 2K - 1 \ge 3$, that
$\beta K \ge \sqrt{\beta\ln n/2}$ unless $K = d$, and that $2K - 1 \le x +
3$ in every case. \Cref{cor:sparse} at this $K$, with its constant $C$,
gives
\[
  \Delta_n(Q) \;\le\; 2\kappa\,\E_Q T_K + CR\,n^{-1/(2K-1)}
  \;\le\; 2\kappa A\,e^{-\beta K} + CR\,e^{-\ln n/(x+3)} ,
\]
where the first term vanishes at $K = d$ and is at most $2\kappa A\,
e^{-\sqrt{\beta\ln n/2}}$ otherwise, and $\ln n/(x + 3) \ge \ln n/x -
3\ln n/x^2 = \sqrt{\beta\ln n/2} - 3\beta/2$. So $\Delta_n(Q) \le (2\kappa
A + CR\,e^{3\beta/2})\,e^{-\sqrt{\beta\ln n/2}} \le 2e^{3\beta/2}
\Lambda\,e^{-\sqrt{\beta\ln n/2}}$. On a selector design, $e(K) = K$: take
$x := \sqrt{\ln n/\beta}$ and $K := d \wedge (\lceil x\rceil \vee 3)$, so
that $e(K) \ge 3$ once $d \ge 3$, $K \le x + 3$ and $\ln n/(x+3) \ge
\sqrt{\beta\ln n} - 3\beta$; for $d \le 2$ the design is exactly $d$-sparse
and the clause $e \le 2$ of \Cref{cor:sparse} at $K = d$, $CR(1 + \ln
n)n^{-1/2}$, is below the claimed rate at a constant depending on $\beta$.
\end{proof}

\paragraph{The shells route.}  The same rate with the constant
$\tfrac13$, at $n \ge n_0$, follows from \Cref{thm:graded} at a constant
profile with the trim at the diameter, and the computation serves the
accumulative class (\Cref{cor:expacc}), whose theorem has no crop clause.
Write $L := \ln n$, let $n_0 := \exp(48\beta \vee 9\ln^2\Lambda'/\beta)$,
and let $C_1 := 136$, the constant of the accumulative count in
\Cref{app:accumulative}, which dominates the $34$ of \Cref{thm:covcons} and
lets one $\Lambda'$ serve both classes, so that $N_k(\varepsilon) \le
(C_1R/\varepsilon)^{2k-1}$ for $\varepsilon \le \kappa$ (\Cref{thm:covcons})
and $\Lambda' := C_1R \vee 2\kappa A \vee e$, a different constant from the crop route's $\Lambda$. Apply
\Cref{thm:graded} with the constant profile at
\[
  \varepsilon := \Lambda' e^{-u}, \qquad \theta := \varepsilon/(2\kappa),
  \qquad
  u := \tfrac13\sqrt{\beta L},
\]
so that $\ln(\Lambda'/\varepsilon) = u$ and $N_k(\varepsilon) \le
e^{(2k-1)u}$. Three consequences of $n \ge n_0$, that is $48\beta \le L$
and $9\ln^2\Lambda' \le \beta L$, are used repeatedly:
\begin{enumerate}
\item[(a)] $u \le L/20$, from $\beta \le L/48$;
\item[(b)] $\beta \le u$ and $1 \le \ln\Lambda' \le u$, from $u \ge
\sqrt{\beta \cdot 48\beta}/3 \ge 2\beta$ and $u \ge \sqrt{9\ln^2\Lambda'}/3
= \ln\Lambda'$;
\item[(c)] $\kappa \le \Lambda'$, since $C_1 \ge 2$, and $L \ge \sqrt{48
\cdot 9}\,\ln\Lambda' \ge 20$.
\end{enumerate}
In particular $\varepsilon = \Lambda' e^{-u} \le 1 \le \kappa$, using $R \ge \tfrac12$ (so $\kappa = 2R \ge 1$), so the
count bound applies and $\theta \le \tfrac12$.

Markov's inequality at the working threshold
converts the moment profile into the quantile the theorem consumes:
\[
  G_k(\theta) \;\le\; 1 \wedge \frac{A e^{-\beta k}}{\theta}
  \;=\; 1 \wedge e^{-\beta(k - h)},
  \qquad
  h := \beta^{-1}\ln(2\kappa A/\varepsilon) \in [0,\, u/\beta].
\]
Set $m := \lceil h \rceil + 1 \le u/\beta + 2$ and apply the theorem with
$K := (\lceil 3u/\beta \rceil \vee 1) \wedge d$. \emph{Trim:} either $K =
d$, or $\beta K \ge 3u$ and $\kappa\, G_K(\theta) \le \kappa\,
e^{-\beta(K - h)} \le \kappa\, e^{-2u} \le \Lambda' e^{-u}$. \emph{Shells:}
$w_k \le G_{k-1}(\theta) \le e^{-\beta(k - m)_+}$, as $k - 1 - h \ge k -
m$, so the $k$-th shell term is at most $n^{-1/2} e^{\psi(k)}$ with
$\psi(k) := (2k-1)\tfrac{u}{2} - \tfrac{\beta}{2}(k - m)_+$. Split the sum
at $m$: below $m$ it is geometric with ratio $e^u \ge e$; above $m$,
geometric with ratio $e^{u - \beta/2} \ge e^{u/2} \ge e^{1/2}$ by (b),
and we extend it upward to $\tilde K := \lceil 3u/\beta \rceil \ge K$.
Hence
\begin{gather*}
  \sum_{k=1}^{K} \sqrt{\frac{w_k N_k}{n}} \;\le\;
  \frac{2\, e^{\psi(m)} + 3\, e^{\psi(\tilde K)}}{\sqrt{n}}, \\
  \psi(m) \;\le\; \tfrac{u^2}{\beta} + \tfrac{3u}{2}
  \;=\; \tfrac{L}{9} + \tfrac{3u}{2},
  \qquad
  \psi(\tilde K) \;\le\; \tfrac{3u^2}{\beta} + \tfrac{u}{2}
  - \bigl(u - \beta\bigr)
  \;=\; \tfrac{L}{3} - \tfrac{u}{2} + \beta,
\end{gather*}
using $m \le u/\beta + 2$, $\tilde K \le 3u/\beta + 1$, and $\tilde K - m
\ge 2u/\beta - 2 \ge 0$ by (b). The middle term of the theorem is thus at
most $\Lambda'\bigl(e^{\psi(m)} + \tfrac32 e^{\psi(\tilde K)}\bigr)/\sqrt{n}$,
and each piece is at most $e^{-u}$:
\begin{gather*}
  \psi(m) - \tfrac{L}{2}
  \;\le\; \tfrac{L}{9} + \tfrac{3u}{2} - \tfrac{L}{2}
  \;\le\; -\tfrac{L}{4} \;\le\; -u, \\
  \ln\tfrac32 + \psi(\tilde K) - \tfrac{L}{2}
  \;\le\; \tfrac{L}{48} + \tfrac{L}{3} + \tfrac{L}{48}
  - \tfrac{u}{2} - \tfrac{L}{2} \;=\; -\tfrac{L}{8} - \tfrac{u}{2}
  \;\le\; -u,
\end{gather*}
using $\tfrac{3u}{2} \le \tfrac{3L}{40}$, $\beta \le L/48$, $\ln\tfrac32
\le L/48$, and $u \le L/4$, all from (a)--(c). So the middle term is at
most $2\Lambda' e^{-u}$, and the total is $2\Lambda' e^{-u} + 2\Lambda'
e^{-u} + \Lambda' e^{-u} = 5\Lambda' e^{-u}$.

\begin{remark}[The exponential rate, read in full]\label{rem:expread}
The rate $e^{-c\sqrt{\beta\ln n}}$ is slower than
every polynomial in $n$ and faster than every power of $\ln n$. The
optimizing cutoff is $K_n \asymp \sqrt{\ln n/\beta}$, so the dimension
the rate sees at sample size $n$ grows without bound, like $\sqrt{\ln
n}$, and no fixed dimension explains the answer. In sample-complexity
form, accuracy $\varepsilon$ costs $n(\varepsilon) = \exp\bigl(c'
\ln^2(1/\varepsilon)/\beta\bigr)$, quasi-polynomial in $1/\varepsilon$,
between every finite-dimensional class and generic nonparametrics, and
free of $d$. The profile is a statement about the mean tail and costs
nothing for being one: the crop pays each configuration its own tail and
the bound takes the mean (\Cref{rem:expconstants}). Since every design is
exactly $d$-sparse,
\Cref{cor:sparse} at $K = d$ holds unconditionally and the bound is the
minimum of the two branches, which cross at $\ln n = \beta(2d-1)^2/2$; on
the measured flock, with $d$ in the thousands, the
crossover is unreachable at any rate the profiles of \Cref{sec:flocking} show. Two qualifiers. The content of these bounds is the exponent and the
absence of $d$, and the number at hand is the certificate's. And the rate
is a single level's, \Cref{cor:sparse} at its best $K$ with the mean tail as
bias: what the shells add is the across-configuration axis, where each
shell pays its own rate at its own mass.
\end{remark}

\begin{remark}[The constants in the exponent]\label{rem:expconstants}
The constant $1/\sqrt2$ is the crop's. A trim at the diameter, whether
\Cref{thm:graded} without its crop clause at a constant profile or the
finite-sample dimension of \citet[Prop.~10]{weed2019sharp} on the trimmed
coverings of \Cref{app:strata}, needs the tail to
fall below the working threshold on all but the trimmed mass, which a
mean tail gives only through Markov's inequality, at twice the cutoff:
that route returns $\tfrac12$ at best, and $\tfrac13$ as computed (the shells route of \Cref{app:expproof}, at $n \ge n_0$ and $R \ge \tfrac12$),
and the factor is the trim's price, not the profile's; a pointwise bound
$T_k \le Ae^{-\beta k}$ at every configuration would hand the trim its
$1/\sqrt2$ back, and nothing here needs one. The lower bound of
\Cref{thm:lowerexp} has $\sqrt2$.
\end{remark}

\subsection{The accumulative class}\label{app:accumulative}

The theorems above are instances of \Cref{prop:coupling}, and everything
specific to the metric entered through four inputs: the stratum covering,
the crop cost, the diameter of $(\cM_d, \Dl)$, and the qualitative facts
behind \Cref{lem:descend}. This subsection instantiates the argument for
the accumulative class $\cGt_L$, the Lipschitz ball of $\Dt$, the matching
distance with the shares $s_i = \mu_i/\mumax$ as heights. The lifts of the
two instances are bijections from the same quotient of $\cC$, so we regard
$\Dt$ as a second metric on the same set $\cM_d$, with $T_k$, $\mumax$ and
the shells the same functions and sets, and write $\abs{P} := \sum_i s_i =
1/\mumax(P)$ for the total height of $P$, the effective count, with the
leader at height $1$. Three inputs are imported from \Cref{app:strata}.
The crop cost: for $\hat\mu$ the $k$-crop, $\Dt(\Cl(x,\mu), \Cl(x,\hat\mu))
\le R\,T_k/\mumax = \kappa T_k/(2\mumax)$, and $\mumax(\hat\mu) =
\mumax/(1 - T_k) \ge \mumax$, the crop does not decrease the top weight
(\Cref{prop:crop} and its proof). The qualitative facts: $(\cM_d, \Dt)$ is
compact, $T_k$ and $\mumax$ are continuous on it, and duality holds for
$\Wt$, the Kantorovich distance on measures on $\cM_d$ with ground metric
$\Dt$ (\Cref{prop:qual}); moreover $\Dt(P, Q) \le R(\abs{P} + \abs{Q})$,
by sending every atom of both to the apex, so for a fixed one-atom lift
$P_0$,
\[
  \varrho_0(P) := \Dt(P, P_0) \;\le\; R\Bigl(\frac{1}{\mumax(P)} + 1\Bigr)
  \;\le\; \frac{\kappa}{\mumax(P)} ,
\]
while $\diam(\cM_d, \Dt) \le 2Rd$ and no better in general, the count
being $1$-Lipschitz for $\Dt$ and ranging over $[R, Rd]$. The celled
covering: for $m \ge 0$ let $\cA_k^{[m]} := \{P \in \cA_k : \mumax(P) \ge
2^{-m}\} = \{P \in \cA_k : \abs{P} \le 2^m\}$; then for every $k \le d$,
$m \ge 0$ and $\varepsilon > 0$, by the second display of
\Cref{thm:covacc},
\[
  N\bigl(2^{m-1}\varepsilon,\; \cA_k^{[m]},\; \Dt\bigr) \;\le\;
  \Nt_k(\varepsilon) \;:=\; N\bigl(\tfrac{\varepsilon}{4}, \cV, \rho\bigr)^k
  \cdot k\Bigl(1 + \frac{16R}{\varepsilon}\Bigr)^{k-1},
\]
with net points in $\cA_k$: resolving the cell $\{\abs{P} \le 2^m\}$ at
scale $2^{m-1}\varepsilon$ costs the conservative count up to the factor
$k$ and a constant, independently of $m$. On interval values and
$\varepsilon \le \kappa$, $\Nt_k(\varepsilon) \le (68R/\varepsilon)^{2k-1}$.

\begin{definition}[Concentration index, cells, concentration weights]
\label{def:cells}
For $P \in \cM_d$ let $m(P) := \lceil \log_2(1/\mumax(P)) \rceil \in \{0,
1, \dots\}$, so that $1/\mumax(P) \le 2^{m(P)} < 2/\mumax(P)$; $m(P) = 0$
exactly on point masses. For the stratified shells $\Sh_k$ of
\Cref{def:shells} set
\[
  \Sh_{k,m} := \Sh_k \cap \{m(P) = m\}, \qquad
  w_{k,m} := \Gamma(\Sh_{k,m}),
\]
the first and second \emph{concentration weights} of shell $k$,
\[
  \zeta_k := \sum_{m \ge 0} 2^m\, w_{k,m}
  = \int_{\Sh_k} 2^{m(P)}\, d\Gamma ,
  \qquad
  \xi_k := \sum_{m \ge 0} 4^m\, w_{k,m} ,
\]
and the \emph{count-weighted trim}
\[
  H_K := \int_{B_K(\theta_K)^c} \frac{d\Gamma}{\mumax}
  \;\in\; \bigl[\, G_K(\theta_K),\; d\, G_K(\theta_K) \,\bigr].
\]
Since $\mumax \ge (1 - \theta_k)/k$ on $\Sh_k$, only indices $m <
\log_2(2k/(1-\theta_k))$ occur there, at most $M_k := \lceil
\log_2(2k/(1-\theta_k)) \rceil$ cells per shell, and, from $1/\mumax \le
2^{m(P)} < 2/\mumax$,
\[
  w_k \;\le\; \zeta_k \;\le\; 2\int_{\Sh_k}\frac{d\Gamma}{\mumax}
  \;\le\; \frac{2k}{1-\theta_k}\, w_k ,
  \qquad
  \xi_k \;\le\; 4\int_{\Sh_k}\frac{d\Gamma}{\mumax^2}
  \;\le\; \Bigl(\frac{2k}{1-\theta_k}\Bigr)^2 w_k .
\]
The design's \emph{concentration moments} are $\bar\zeta :=
\E_Q[\mumax^{-1}]$ and $\bar\xi := \E_Q[\mumax^{-2}]$.
\end{definition}

\begin{theorem}[Generalization over the accumulative class]\label{thm:gradedacc}
For every design $Q$ and every resolution profile, with the shells and
masses of \Cref{def:shells} and the cells, weights and trim of
\Cref{def:cells},
\[
  \tilde\Delta_n(Q) \;=\; \E\, \Wt(\hat\Gamma_n, \Gamma) \;\le\;
  2\sum_{k=1}^{K} \zeta_k\,\varepsilon_k
  \;+\; \kappa \sum_{k=1}^{K} \sum_{m < M_k} 2^m
  \sqrt{\frac{w_{k,m}\, \Nt_k(\varepsilon_k)}{n}}
  \;+\; 2\kappa\, H_K .
\]
Supplementary bounds: the floor is at most $4\sum_k \varepsilon_k
\int_{\Sh_k}\mumax^{-1}\,d\Gamma$, hence at most $4\varepsilon\bar\zeta$
for the constant profile; the inner sum is at most $\sqrt{M_k\,\xi_k}$;
and $H_K \le \sqrt{\bar\xi\, G_K(\theta_K)}$.
\end{theorem}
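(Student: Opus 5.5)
The plan is to instantiate the reference-point coupling of \Cref{prop:couplingref} on $(\cX,\varrho) = (\cM_d,\Dt)$, with reference point a fixed one-atom lift $P_0$, so that $\varrho_0 \le \kappa/\mumax$ as recorded above. The diameter of $(\cM_d,\Dt)$ can be as large as $2Rd$, which is why I would not use \Cref{prop:coupling}: residual mass must be priced at its own count. Duality for $\Wt$ on the compact space $(\cM_d,\Dt)$ (\Cref{prop:qual}) and \Cref{thm:rep} for $w^{\mathrm a}$ identify $\tilde\Delta_n(Q)$ with $\E\,\Wt(\hat\Gamma_n,\Gamma)$, exactly as in the proof of \Cref{thm:graded}. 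Take $B := B_K(\theta_K)$ and groups $S_g := \Sh_{k,m}$ for $k\le K$ and $m<M_k$. These sets are Borel by \Cref{lem:descend} and the continuity of $\mumax$, and they partition $B$ because only $m<M_k$ occur on $\Sh_k$.

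The main step is a celling of each $\Sh_{k,m}$ at radius $r_{k,m} := 2^m\varepsilon_k$ with at most $\Nt_k(\varepsilon_k)$ cells, obtained through \Cref{lem:tube}.
\begin{enumerate}
\item A point $P\in\Sh_{k,m}$ has $T_k\le\theta_k$ and $1/\mumax \le 2^m$. The accumulative crop cost gives $\Dt(P,\mathrm{cr}_kP) \le \kappa T_k/(2\mumax) \le 2^{m-1}\cdot 2\kappa\theta_k/2 = 2^{m-2}\varepsilon_k$.
\item The crop keeps $\mumax$ from decreasing, so $\mathrm{cr}_kP \in \cA_k^{[m]}$.
\item $\cA_k^{[m]}$ has a net of $\Nt_k(\varepsilon_k)$ points at radius $2^{m-1}\varepsilon_k$.
\item Hence \Cref{lem:tube}, with $s = 2^{m-2}\varepsilon_k$ and $r - s \ge 2^{m-1}\varepsilon_k$, yields a $(2^m\varepsilon_k, \Nt_k(\varepsilon_k))$-celling.
\end{enumerate}
Accordingly the floor is $2\sum_{k,m} 2^m\varepsilon_k w_{k,m} = 2\sum_k \zeta_k\varepsilon_k$. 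On $\Sh_{k,m}$ the reference-point bound is $\varrho_0 \le \kappa/\mumax \le \kappa 2^m =: \varrho_{k,m}$. This gives the fluctuation term $\kappa\sum_{k,m} 2^m\sqrt{w_{k,m}\Nt_k/n}$. The trim term of \Cref{prop:couplingref} is $2\int_{B^c}\varrho_0\,d\Gamma \le 2\kappa H_K$. The one point to check carefully is whether the crop-distance constant fits inside radius $2^m\varepsilon_k$ with the $2^{m-1}$ net scale, so the factor bookkeeping in $\theta_k = \varepsilon_k/(2\kappa)$ and $R = \kappa/2$ is where an error would hide. The margin above is a factor $2$, so I expect it to close.

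The supplementary bounds are elementary.
\begin{enumerate}
\item Floor: $2^{m(P)} < 2/\mumax$ gives $\zeta_k \le 2\int_{\Sh_k}\mumax^{-1}\,d\Gamma$. With a constant profile, summing over the disjoint shells gives at most $4\varepsilon\bar\zeta$.
\item Inner sum: Cauchy--Schwarz over the $M_k$ indices gives $\sum_m 2^m\sqrt{w_{k,m}} \le \sqrt{M_k\sum_m 4^m w_{k,m}} = \sqrt{M_k\xi_k}$.
\item Trim: Cauchy--Schwarz gives $H_K = \int \mathbf 1_{B^c}\mumax^{-1}\,d\Gamma \le \sqrt{\bar\xi\,G_K(\theta_K)}$.
\end{enumerate}
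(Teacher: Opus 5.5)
Your proposal is correct and follows the paper's proof essentially step for step: \Cref{prop:couplingref} on $(\cM_d,\Dt)$ with a one-atom reference point, the groups $\Sh_{k,m}$, and a celling at radius $2^m\varepsilon_k$ obtained by combining the $k$-crop (at distance at most $2^{m-2}\varepsilon_k$, landing in $\cA_k^{[m]}$) with the $2^{m-1}\varepsilon_k$-net of $\cA_k^{[m]}$ through \Cref{lem:tube}. The supplementary bounds are the same elementary estimate and two applications of Cauchy--Schwarz that the paper uses, and your margin check $r - s = 3\cdot 2^{m-2}\varepsilon_k \ge 2^{m-1}\varepsilon_k$ is exactly the one the paper relies on.
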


\begin{proof}
The identity is duality for $\Wt$ and \Cref{thm:rep} for the accumulative
instance, as in the proof of \Cref{thm:graded}. Apply
\Cref{prop:couplingref} on $(\cX, \varrho) = (\cM_d, \Dt)$ with reference
point $P_0$ a point mass, $B = B_K(\theta_K)$, and the groups $S_{k,m} =
\Sh_{k,m}$, $k \le K$, $m < M_k$, Borel, of masses $w_{k,m}$.
\emph{Reference costs:} on $S_{k,m}$, $1/\mumax \le 2^m$, so $\varrho_0
\le \kappa/\mumax \le 2^m\kappa =: \varrho_{k,m}$; on $B^c$, $\varrho_0
\le \kappa/\mumax$, so $\int_{B^c}\varrho_0\,d\Gamma \le \kappa\, H_K$.
\emph{Cellings:} a point $P \in \Sh_{k,m}$ has $T_k(P) \le \theta_k$, so
its $k$-crop $\hat P$ satisfies $\Dt(P,\hat P) \le \kappa\theta_k/(2\mumax(P))
\le 2^{m-1}\kappa\theta_k = 2^{m-2}\varepsilon_k$, with $\hat P \in
\cA_k$ and $\mumax(\hat P) \ge \mumax(P) \ge 2^{-m}$, that is $\hat P \in
\cA_k^{[m]}$; and $\cA_k^{[m]}$ has a net of $\Nt_k(\varepsilon_k)$
points at radius $2^{m-1}\varepsilon_k \le 2^m\varepsilon_k -
2^{m-2}\varepsilon_k$, so \Cref{lem:tube} gives a $(2^m\varepsilon_k,
\Nt_k(\varepsilon_k))$-celling of $S_{k,m}$. With $r_{k,m} =
2^m\varepsilon_k$, $N_{k,m} = \Nt_k(\varepsilon_k)$ and $\varrho_{k,m} =
2^m\kappa$ the proposition's three terms are the display:
$2\sum_{k,m}2^m\varepsilon_k\,w_{k,m} = 2\sum_k\zeta_k\varepsilon_k$, then
$\sum_{k,m}2^m\kappa\sqrt{w_{k,m}\Nt_k/n}$, then $2\kappa H_K$. The
supplementary bounds: $2^{m(P)} < 2/\mumax(P)$ for the floor;
Cauchy--Schwarz over $m$ for the inner sum, $\sum_m 2^m\sqrt{w_{k,m}} \le
\sqrt{M_k\sum_m 4^m w_{k,m}}$; and Cauchy--Schwarz for $H_K \le
\sqrt{\int\mumax^{-2}\,d\Gamma}\,\sqrt{\Gamma(B_K^c)}$.
\end{proof}

Against \Cref{thm:graded} on the same design and the same shells, the
profile, the shells and the masses are identical: the accumulative class
does not induce a different partition, it is the same partition with cell
$(k, m)$ resolved at scale $2^m\varepsilon_k$ instead of $\varepsilon_k$.
The counts are identical in shape. The price of the semantics is a
count-weighting of all three terms, the floor by the mean inverse top
weight on the bulk, the fluctuation by the second concentration moment,
the trim by the count of the trimmed configurations. This is forced, not
slack: the count $R/\mumax$ is a member of $\cGt_1$ (\Cref{prop:roster}),
so two configurations of counts $c$ and $c'$ are at $\Dt$-distance at
least $R\abs{c - c'}$, and mass misplaced across counts must pay for the
difference. In the conservative class the total height is $1$ everywhere
and all three weights are $1$.

\begin{corollary}[Exact sparsity, accumulative class, the single-partition route]\label{cor:sparseacc}
If $Q(T_{k_0} = 0) = 1$ then $1/\mumax \le k_0$ $Q$-a.s., $\bar\zeta \le
k_0$, and with $K = k_0$ and the constant profile, for every $\varepsilon
> 0$,
\[
  \tilde\Delta_n(Q) \;\le\; 4\varepsilon\,\bar\zeta
  + 2k_0\kappa\sqrt{\frac{k_0\, M\, \Nt_{k_0}(\varepsilon)}{n}},
  \qquad M := \lceil\log_2(2k_0)\rceil .
\]
On interval values, optimizing $\varepsilon$, for $n \ge 68^{2k_0+1}k_0^3$,
so that the optimizing $\varepsilon$ is at most $\kappa$,
\[
  \tilde\Delta_n(Q) \;\le\; C\,R\;\bar\zeta\; n^{-\frac{1}{2k_0+1}}
  \;\le\; C\,R\,k_0\; n^{-\frac{1}{2k_0+1}}
\]
with $C$ absolute.
\end{corollary}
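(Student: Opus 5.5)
The corollary should be \Cref{thm:gradedacc} specialised to $K = k_0$ with the constant profile $\varepsilon_k \equiv \varepsilon$, followed by a balancing of $\varepsilon$.

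\emph{Preliminary facts.} Exact sparsity means $Q(T_{k_0} = 0) = 1$, so at most $k_0$ weights are positive. Hence $\mumax \ge 1/k_0$ almost surely, and so $1/\mumax \le k_0$ and $\bar\zeta \le k_0$. The trim vanishes: $T_{k_0} = 0 \le \theta$ almost surely, so $G_{k_0}(\theta) = 0$ and $H_{k_0} = 0$.

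\emph{The three terms.} The floor is at most $4\varepsilon\bar\zeta$, by the first supplementary bound of \Cref{thm:gradedacc}. For the fluctuation, the second supplementary bound gives an inner sum of at most $\sqrt{M_k\xi_k}$ for each shell. Shells with $k \le k_0$ satisfy $1/\mumax \le k_0$, since they carry the same design. So every cell index obeys $2^m < 2k_0$, and at most $M = \lceil\log_2(2k_0)\rceil$ cells occur in each shell. I would use this design-level bound in place of the $M_k$ of \Cref{def:cells}, which carries a factor $1/(1-\theta_k)$.

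Next I bound $\xi_k \le \sum_m 4^m w_{k,m} \le (2k_0)^2 w_k$, sum over $k \le k_0$, and apply Cauchy--Schwarz across the shells together with $\sum_k w_k \le 1$:
\[
\sum_{k \le k_0}\sqrt{M\,(2k_0)^2 w_k} \;\le\; 2k_0\sqrt{M k_0}.
\]
I also bound $\Nt_k \le \Nt_{k_0}$, since the count is monotone in $k$. Multiplying by $\kappa n^{-1/2}$ gives the stated $2k_0\kappa\sqrt{k_0 M\,\Nt_{k_0}(\varepsilon)/n}$.

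\emph{Optimising $\varepsilon$.} On interval values I use $\Nt_{k_0}(\varepsilon) \le (68R/\varepsilon)^{2k_0-1}$ for $\varepsilon \le \kappa$. The fluctuation term is then $2k_0\kappa\sqrt{k_0M}\,(68R/\varepsilon)^{k_0-1/2}\,n^{-1/2}$. Balancing it against $4\varepsilon\bar\zeta$, with $\bar\zeta \ge 1$ taken as the worst case or kept explicit, gives
\[
\varepsilon \asymp R\,\bigl(k_0^3 M/n\bigr)^{1/(2k_0+1)},
\]
up to the constant $68$. This yields the rate $n^{-1/(2k_0+1)}$ with the prefactor $R\bar\zeta$. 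The hypothesis $n \ge 68^{2k_0+1}k_0^3$ is what guarantees that this $\varepsilon$ is at most $\kappa$, so the count bound applies.

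\emph{Main obstacle.} The delicate step is making the prefactor absolute. The factor $(k_0^3 M)^{1/(2k_0+1)}$, and the $68$ raised to a power near $1$, are bounded uniformly in $k_0$, because $x^{1/(2k_0+1)}$ stays bounded for polynomial $x$. The $\bar\zeta$ on the floor side then combines with the $k_0$ on the fluctuation side. The intended balance is $\varepsilon\bar\zeta$ against $k_0\cdot(\text{count})^{1/2}$ with $\bar\zeta \le k_0$. I would first try simply bounding $k_0 \le k_0^{2k_0/(2k_0+1)}\cdot k_0^{1/(2k_0+1)}$ and absorbing the powers into the constant, and only then verify the final inequality $\le CR\bar\zeta n^{-1/(2k_0+1)}$.
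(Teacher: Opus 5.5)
Your proposal is correct and follows the paper's proof. Specialize \Cref{thm:gradedacc} at $K = k_0$ with the constant profile, note that the trim vanishes, bound the floor by $4\varepsilon\bar\zeta$, bound the fluctuation by Cauchy--Schwarz over the at most $k_0 M$ occupied cells using $2^m < 2k_0$ and the monotonicity of $\Nt_k$ in $k$, then balance $\varepsilon$.

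The one thing to tidy is your last paragraph. At your balancing $\varepsilon \asymp R\,(k_0^3 M/n)^{1/(2k_0+1)}$, the fluctuation term is already an absolute constant times $R\,(k_0^3 M)^{1/(2k_0+1)}\,n^{-1/(2k_0+1)}$. So the balance has already absorbed the prefactor $k_0$, the floor alone supplies $\bar\zeta$, and $\bar\zeta \ge 1$ finishes the bound. The paper instead puts $\bar\zeta^{-2}$ inside $\varepsilon$, which gives the sharper $\bar\zeta^{(2k_0-1)/(2k_0+1)}$, and absorbs $M \le 2^{2k_0-1}$ into the base $136$. The split $k_0 = k_0^{2k_0/(2k_0+1)}\,k_0^{1/(2k_0+1)}$ you suggest is unnecessary and would not help, because $k_0^{2k_0/(2k_0+1)}$ is unbounded in $k_0$.
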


\begin{proof}
$G_{k_0}(\theta) \le G_{k_0}(0) = 0$ kills the trim, $H_{k_0} = 0$. On
the support of $Q$ at most $k_0$ weights are positive, so $1/\mumax \le
k_0$ there, $2^{m} < 2k_0$ on every occupied cell, at most $M$ values of
$m$ occur, and the fluctuation term is at most
$2k_0\kappa\sum_{k,m}\sqrt{w_{k,m}\Nt_k/n} \le 2k_0\kappa\sqrt{k_0 M\,
\Nt_{k_0}(\varepsilon)/n}$, by Cauchy--Schwarz over the at most $k_0 M$
occupied cells, $\sum_{k,m} w_{k,m} = 1$, and monotonicity of the counts
in $k$. For the rate insert $M\,\Nt_{k_0}(\varepsilon) \le
(C_1R/\varepsilon)^{2k_0-1}$ with $C_1 = 136$, from
$\Nt_{k_0}(\varepsilon) \le (68R/\varepsilon)^{2k_0-1}$ and $M \le 2k_0
\le 2^{2k_0-1}$, and choose
\[
  \varepsilon \;:=\; C_1R\,\Bigl(\frac{k_0^3}{\bar\zeta^2 n}\Bigr)^{\frac{1}{2k_0+1}}
  \;\le\; \kappa ;
\]
both terms are then at most $4(C_1+1)R\,(k_0^3)^{1/(2k_0+1)}\,
\bar\zeta^{(2k_0-1)/(2k_0+1)}\, n^{-1/(2k_0+1)}$, using $\kappa = 2R$;
conclude with $\bar\zeta \ge 1$ and $\sup_{k \ge 1} (k^3)^{1/(2k+1)} \le 2$.
\end{proof}

\begin{lemma}[Concentration moments from the profile]\label{lem:concmoment}
Let $A \ge 1$, $\beta > 0$, and $k_1 := \lceil \beta^{-1}\ln(2A)\rceil
\vee 1$.
\begin{enumerate}
\item[(i)] If $\E_Q[T_k] \le Ae^{-\beta k}$ for all $k$, then $\bar\zeta
\le 2k_1 + 2 + 2/\beta$, and more generally $\bar\zeta \le 2k +
4Ae^{-\beta k}/(1 - e^{-\beta})$ for every $k \ge 1$.
\item[(ii)] Under the same hypothesis, $\bar\xi \le 4k_1^2 + 4(2k_1+1)(1
+ 1/\beta) + 8/\beta^2$, so $\sqrt{\bar\xi} \le 4k_1 + 2 + 4/\beta$; and
for every $K \ge 1$ and $\theta$,
\[
  \int_{B_K(\theta)^c}\frac{d\Gamma}{\mumax} \;\le\; 2K\,G_K(\theta)
  + \frac{4Ae^{-\beta K}}{1 - e^{-\beta}} .
\]
\end{enumerate}
The concentration moments are constants of order
$\beta^{-1}(1 + \ln 2A)$, the effective neighbor count, and its square,
independent of $d$ and of $n$.
\end{lemma}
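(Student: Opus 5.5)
The whole lemma rests on one pointwise inequality. For every configuration and every $j \ge 1$, the $j$ heaviest weights sum to $1 - T_j$, so $\mumax \ge (1 - T_j)/j$. Let $J(P) := \min\{j \ge 1 : T_j(P) \le \tfrac12\}$, which is finite since $T_d = 0$. At $j = J$ this gives $1/\mumax \le 2J$. Since $T_j$ is nonincreasing in $j$, for every $k \ge 1$,
\[
  J \;\le\; k + N_k, \qquad N_k := \#\{j \ge k : T_j > \tfrac12\},
  \qquad\text{hence}\qquad \frac{1}{\mumax} \;\le\; 2k + 2N_k .
\]
This bound never mentions $d$, which is the point. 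Markov's inequality at level $\tfrac12$ turns the mean-tail hypothesis into $Q(T_j > \tfrac12) \le 2Ae^{-\beta j}$.

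\emph{Step (i).} Take expectations in the pointwise bound:
\[
  \bar\zeta \;\le\; 2k + 2\sum_{j \ge k} 2Ae^{-\beta j} \;=\; 2k + \frac{4Ae^{-\beta k}}{1 - e^{-\beta}},
\]
which is the general form. At $k = k_1$ the choice $k_1 \ge \beta^{-1}\ln(2A)$ gives $2Ae^{-\beta k_1} \le 1$. Together with $1/(1 - e^{-\beta}) \le 1 + 1/\beta$, this yields $\bar\zeta \le 2k_1 + 2 + 2/\beta$.

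\emph{Step (ii), second moment.} Square the pointwise bound at $k = k_1$:
\[
  \mumax^{-2} \;\le\; 4k_1^2 + 8k_1 N + 4N^2, \qquad N := N_{k_1}.
\]
\begin{enumerate}
\item[(a)] Since $N$ counts an up-set of indices, $N^2 = \sum_{i,j \ge k_1}\mathbf 1\{T_{i \vee j} > \tfrac12\}$. The number of pairs with $i \vee j = k_1 + l$ is $2l + 1$, so
\[
  \E N^2 \;\le\; \sum_{l \ge 0}(2l + 1)\,2Ae^{-\beta(k_1 + l)} \;\le\; \sum_{l \ge 0}(2l+1)e^{-\beta l}.
\]
\item[(b)] Similarly $\E N \le \sum_{l \ge 0} e^{-\beta l} \le 1 + 1/\beta$.
\item[(c)] The geometric-series estimates $\sum_l e^{-\beta l} \le 1 + 1/\beta$ and $\sum_l l\,e^{-\beta l} \le e^{-\beta}/(1 - e^{-\beta})^2 \le 1/\beta + 1/\beta^2$, applied to (a) and (b), produce the stated $4k_1^2 + 4(2k_1 + 1)(1 + 1/\beta) + 8/\beta^2$.
\end{enumerate}
The square-root form then follows by checking that $(2k_1 + 1 + 2/\beta)^2$, or the slightly looser $(4k_1 + 2 + 4/\beta)^2$, dominates this expression.

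\emph{Step (ii), count-weighted trim.} Integrate the pointwise bound at $k = K$ over $B_K(\theta)^c$. The term $2K$ contributes $2K\,G_K(\theta)$. The term $2N_K$ is bounded by dropping the restriction to $B_K(\theta)^c$:
\[
  2\,\E N_K \;\le\; 2\sum_{j \ge K} 2Ae^{-\beta j} \;=\; \frac{4Ae^{-\beta K}}{1 - e^{-\beta}} .
\]

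\emph{Main obstacle.} Conceptually the obstacle is getting a $d$-free bound. The naive $1/\mumax \le d$ is useless, and the device that removes it is the stopping index $J$ combined with the counting variable $N_k$. The only step needing care is the bookkeeping of constants in the second moment, specifically the pair count $2l + 1$ in $N^2$.
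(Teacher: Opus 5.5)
Your argument is essentially the paper's. The pointwise bound $1/\mumax \le 2k + 2N_k$ is the paper's layer-cake bound $t \le 2j_0 + 2\sum_{j \ge j_0}\mathbf 1\{t > 2j\}$ for $t = 1/\mumax$, with each indicator enlarged to $\mathbf 1\{T_j > \tfrac12\}$. Likewise, your $8k_1N + 4N^2$ with the pair count $2l+1$ puts weight $8j+4$ on index $j$, exactly as in the paper's $t^2 \le 4j_0^2 + \sum_{j \ge j_0}(8j+4)\mathbf 1\{t > 2j\}$. Part (i) and the count-weighted trim are correct as you wrote them.

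Two slips in the constants of (ii) need fixing.

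\emph{The second-moment constant.} In (c), the estimate $\sum_l l\,e^{-\beta l} \le 1/\beta + 1/\beta^2$ leaves an extra $8/\beta$ beyond the stated bound, so it does not ``produce the stated'' expression. Use the sharper identity
\[
  \sum_l l\,e^{-\beta l} \;=\; \frac{e^{-\beta}}{(1-e^{-\beta})^2} \;=\; \frac{1}{(2\sinh(\beta/2))^{2}} \;\le\; \frac{1}{\beta^2} .
\]
With it, $4k_1^2 + 4(2k_1+1)\sum_l e^{-\beta l} + 8\sum_l l\,e^{-\beta l}$ gives exactly $4k_1^2 + 4(2k_1+1)(1+1/\beta) + 8/\beta^2$.

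\emph{The square-root form.} $(2k_1+1+2/\beta)^2$ does not dominate that expression; it falls short by $4k_1 + 3 + 4/\beta^2$. Only $(4k_1+2+4/\beta)^2$ does, which is all the lemma claims, so drop the first alternative.
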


\begin{proof}
For any $k \ge 1$, $\mumax \ge (1 - T_k)/k$, so $\{1/\mumax > 2k\}
\subseteq \{T_k > \tfrac12\}$, and Markov gives
$\Gamma(1/\mumax > 2k) \le 2Ae^{-\beta k}$. (i) Layer cake: $\E[\mumax^{-1}] \le 2k + \sum_{j
\ge k} 2\Pr(\mumax^{-1} > 2j) \le 2k + 4A\sum_{j \ge k} e^{-\beta j}$. At
$k = k_1$, $4Ae^{-\beta k_1} \le 2$ and $(1 - e^{-\beta})^{-1} \le 1 +
1/\beta$. (ii) For $t := 1/\mumax$ and any integer $j_0 \ge 1$, pointwise
$t \le 2j_0 + 2\sum_{j \ge j_0}\mathbf 1\{t > 2j\}$ and $t^2 \le 4j_0^2 +
\sum_{j \ge j_0}(8j+4)\,\mathbf 1\{t > 2j\}$. Integrating the second
against $\Gamma$ at $j_0 = k_1$: $\bar\xi \le 4k_1^2 + 8A\sum_{j \ge
k_1}(2j+1)e^{-\beta j} \le 4k_1^2 + 8Ae^{-\beta k_1}\bigl[(2k_1+1)(1 +
1/\beta) + 2/\beta^2\bigr]$, using $\sum_{i \ge 0}e^{-\beta i} \le 1 +
1/\beta$ and $\sum_{i \ge 0} i e^{-\beta i} = e^{-\beta}/(1-e^{-\beta})^2
\le 1/\beta^2$, and $8Ae^{-\beta k_1} \le 4$; the square root by
$2\sqrt{ab} \le a + b$. Integrating the first against $\Gamma$ restricted
to $B_K(\theta)^c$ at $j_0 = K$ gives the last display.
\end{proof}

\begin{corollary}[Exponential profiles, accumulative class, the single-partition route]\label{cor:expacc}
Let $\cV = [-R,R]$ with $R \ge \tfrac12$, let $A$ and $\beta$ be as in
\Cref{cor:exp}, $\Lambda' := 136R \vee 2\kappa A \vee e$ as in the shells
route of \Cref{app:expproof}, $k_1$ as in \Cref{lem:concmoment}, and let
$n_0 := \exp(48\beta \vee 9\ln^2\Lambda'/\beta)$. Then for $n \ge n_0$,
under an exponential profile at rate $\beta$,
$\tilde\Delta_n(Q) \le 32\,(k_1 + 1 + 1/\beta)\,\Lambda'\,
e^{-\frac13\sqrt{\beta\ln n}}$.
\end{corollary}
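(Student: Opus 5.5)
The plan is to run the shells route of \Cref{app:expproof} a second time, now for \Cref{thm:gradedacc} in place of \Cref{thm:graded}. We keep the same constant profile, the same $u$ and the same $K$. The count-weightings of the three terms are then absorbed either by \Cref{lem:concmoment} or by the slack already present in that computation.

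\emph{Setup.} Put $L := \ln n$, $u := \tfrac13\sqrt{\beta L}$, $\varepsilon := \Lambda' e^{-u}$ and $\theta := \varepsilon/(2\kappa)$. Take $K := (\lceil 3u/\beta\rceil \vee 1)\wedge d$ and $M_k \le \lceil\log_2(4k)\rceil$. For $n \ge n_0$ the consequences (a)--(c) of the shells route hold verbatim, so $\varepsilon \le 1 \le \kappa$ and $\theta \le \tfrac12$. On interval values $\Nt_k(\varepsilon) \le (68R/\varepsilon)^{2k-1} \le (\Lambda'/\varepsilon)^{2k-1} = e^{(2k-1)u}$, because $\Lambda' \ge 136R$. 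We also reuse the Markov bound $G_k(\theta) \le 1\wedge e^{-\beta(k-h)}$ with the same $h$ and $m$.

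\emph{Floor and trim.} By the supplementary bound of \Cref{thm:gradedacc}, the floor is at most $4\varepsilon\bar\zeta$. \Cref{lem:concmoment}(i) gives $\bar\zeta \le 2(k_1+1+1/\beta)$, so the floor is at most $8(k_1+1+1/\beta)\Lambda' e^{-u}$. For the trim, \Cref{lem:concmoment}(ii) gives
\[
2\kappa H_K \le 2\kappa\bigl(2K\,G_K(\theta) + 4Ae^{-\beta K}/(1-e^{-\beta})\bigr).
\]
If $K = d$ the trim vanishes. Otherwise $\beta K \ge 3u$, so $G_K(\theta) \le e^{-2u}$ and $Ae^{-\beta K} \le Ae^{-3u}$. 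Next, $K \le 3u/\beta + 1$ together with $u e^{-u} \le 1$ gives $\kappa K e^{-2u} \le \kappa(3/\beta+1)e^{-u}$. Also $2\kappa A \le \Lambda'$ and $(1-e^{-\beta})^{-1} \le 1 + 1/\beta$. The trim is therefore at most a constant times $(1+1/\beta)\Lambda' e^{-u}$.

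\emph{Fluctuation.} By Cauchy--Schwarz over $m$, the inner sum is at most $\sqrt{M_k\xi_k}$. On $\Sh_k$ we have $\xi_k \le (2k/(1-\theta))^2 w_k \le 16k^2 w_k$. Shell $k$ then contributes at most
\[
4\kappa\, k\sqrt{M_k}\,\sqrt{w_k\Nt_k(\varepsilon)/n} \le 4\kappa\, k\sqrt{M_k}\, n^{-1/2}e^{\psi(k)},
\]
with $\psi$ exactly as in the shells route. The factor $k\sqrt{M_k} \le K^{3/2}\sqrt{\log_2(4K)}$ is polynomial in $K \le 3u/\beta+1$, hence at most $e^{O(\ln u + \ln(1+1/\beta))}$. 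The two pieces $e^{\psi(m)}$ and $e^{\psi(\tilde K)}$ were shown there to sit below $e^{L/2}$ by margins $L/4$ and $L/8 + u/2$, against the $e^{-u}$ we need. So after using (a) and (b) this polynomial factor can be charged to the slack $e^{-L/8} \le e^{-3u/2}$ (possibly enlarging the threshold $n_0$ through (a) and (c)). Alternatively, and more safely, we bound $k \le K \le 3u/\beta + 1$ and $\sqrt{M_k} \le$ a small multiple of $\sqrt u$, and absorb $u^{3/2}e^{-u/2}$ into a constant times $(1+1/\beta)$. Either way the fluctuation term is at most a constant times $(k_1+1+1/\beta)\Lambda' e^{-u}$.

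\emph{Main obstacle and conclusion.} The main obstacle is this last bookkeeping step: absorbing $k\sqrt{M_k}$ into the stated constant $32(k_1+1+1/\beta)$ without enlarging $n_0$. I would first try the crude route of keeping one factor $e^{-u/2}$ from the margin $L/8 \ge 3u/2$ and using $\sup_u u^{3/2}\log u\, e^{-u/2} = O(1)$. I would tune the constants only if that overshoots 32. Summing the floor, fluctuation and trim bounds then gives the displayed bound.
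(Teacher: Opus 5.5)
Your floor and trim steps are the paper's. Routing the fluctuation through the per-shell bound $\xi_k \le 16k^2 w_k$ and the Markov decay $w_k \le e^{-\beta(k-m)_+}$ is a legitimate alternative. The gap is the step you flag yourself, which is the content of the corollary (the constant), and both ways you sketch for closing it fail as stated.

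\emph{The bound on $M_k$.} The claim $\sqrt{M_k} \lesssim \sqrt u$ is false. Near $k = \tilde K \approx 3u/\beta$ one has $M_k \approx \log_2(12u/\beta)$, which is unbounded as $\beta \to 0$ with $u$ fixed. That regime is inside the hypothesis: for small $\beta$ and $n \approx n_0$, $u$ stays at about $\ln\Lambda'$.

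\emph{The slack.} Charging $k\sqrt{M_k}$ to a slack of the form $e^{-3u/2}$ leaves, at fixed $u$, a factor of order $\beta^{-1}\sqrt{\log(1/\beta)}$, or $\beta^{-3/2}$ with your $K^{3/2}$. This is not $O(k_1 + 1 + 1/\beta)$ when $A$ is fixed. Enlarging $n_0$ would prove a different statement. The true slack $e^{-L/8}$, with $L = 9u^2/\beta$, does kill powers of $1/\beta$, but passing to $e^{-3u/2}$ throws that away.

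\emph{The fix.} $M_k$ belongs in the count, not in the slack. Since $M_k \le \lceil\log_2 4k\rceil \le 2k \le 2^{2k-1}$, you get $M_k\Nt_k(\varepsilon) \le (136R/\varepsilon)^{2k-1} \le e^{(2k-1)u}$. This is why $\Lambda'$ carries $136R$ rather than the $68R$ you used. Only the factor $4k$ then remains. Splitting at $m$ as in the shells route,
\[
  \kappa\,n^{-1/2}\sum_{k \le K} 4k\,e^{\psi(k)}
  \;\le\; 4\Lambda'\bigl(2m\,e^{-L/4} + 3\tilde K\,e^{-L/6 - u/2 + \beta}\bigr)
  \;\le\; \Bigl(\frac{6}{\beta} + 3\Bigr)\Lambda' e^{-u} ,
\]
using $m \le u/\beta + 2$, $\tilde K \le 3u/\beta + 1$, $L \ge 20u$, $\beta \le u$ and $u \ge 1$. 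Add the floor $8(k_1 + 1 + 1/\beta)\Lambda' e^{-u}$ and the trim $8(1 + 1/\beta)\Lambda' e^{-u}$ (via $ue^{-u} \le e^{-1}$). The total is $(8k_1 + 19 + 22/\beta)\Lambda' e^{-u}$, within $32(k_1 + 1 + 1/\beta)\Lambda' e^{-u}$.

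\emph{Comparison with the paper.} The paper uses no shell-mass decay in the fluctuation at all. It bounds $\xi_k \le 4\bar\xi$ globally, so the term is $2\kappa\sqrt{\bar\xi}\,n^{-1/2}\sum_{k \le K} e^{(k - 1/2)u} \le 4\Lambda'\sqrt{\bar\xi}\,e^{-u}$, from $(K - \tfrac12)u \le L/3 + u/2$ alone. Then \Cref{lem:concmoment}(ii), $\sqrt{\bar\xi} \le 4k_1 + 2 + 4/\beta$, supplies the constant. That route has no $k$-dependent factor to absorb and is shorter. Yours, once repaired, gives a fluctuation term free of $k_1$.
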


\begin{proof}
Run the shells route of \Cref{app:expproof} on
\Cref{thm:gradedacc} with the constant profile: $\varepsilon :=
\Lambda' e^{-u}$, $\theta := \varepsilon/(2\kappa) \le \tfrac12$ (as
$\varepsilon \le 1 \le \kappa$), $u = \tfrac13\sqrt{\beta L}$, $L = \ln
n$; the consequences (a)--(c) hold verbatim, and $M_k\,\Nt_k(\varepsilon)
\le (C_1R/\varepsilon)^{2k-1} \le e^{(2k-1)u}$ with $C_1 = 136$, since
$M_k \le \lceil\log_2 4k\rceil \le 2k$ for $\theta \le \tfrac12$.

Take $K := (\lceil 3u/\beta\rceil \vee 1) \wedge
d$ as there, so that either $K = d$ or $\beta K \ge 3u$ and $G_K(\theta)
\le e^{-2u}$. \emph{Floor:} at most $4\varepsilon\bar\zeta =
4\Lambda'\bar\zeta e^{-u}$. \emph{Fluctuation:} by the supplementary bound
of the theorem and $\xi_k \le 4\bar\xi$ it is at most
$2\kappa\sqrt{\bar\xi}\sum_{k \le K}\sqrt{M_k\Nt_k(\varepsilon)/n} \le
2\kappa\sqrt{\bar\xi}\,n^{-1/2}\sum_{k \le K}e^{(k - \frac12)u} \le
4\kappa\sqrt{\bar\xi}\,e^{(K - \frac12)u - L/2}$, a geometric sum with
ratio $e^u \ge e$, and $(K - \tfrac12)u \le 3u^2/\beta + u/2 = L/3 +
u/2$, so the exponent is at most $-L/6 + u/2 \le -u$ as $u \le L/20$: the
term is at most $4\Lambda'\sqrt{\bar\xi}\,e^{-u}$. \emph{Trim:} if $K <
d$, \Cref{lem:concmoment}(ii) gives $2\kappa H_K \le 4\kappa K e^{-2u} +
8\kappa A(1 + 1/\beta)e^{-\beta K} \le 4\Lambda'(3u/\beta + 1)e^{-2u} +
4\Lambda'(1 + 1/\beta)e^{-3u}$, using $\kappa \le \Lambda'$, $2\kappa A \le
\Lambda'$, $K \le 3u/\beta + 1$ and $\beta K \ge 3u$; since $ue^{-u} \le
e^{-1}$ and $e^{-u} \le e^{-1}$, this is at most $4\Lambda'
e^{-u}\bigl[(3e^{-1} + e^{-2})/\beta + 2e^{-1}\bigr] \le 8\Lambda'(1 +
1/\beta)e^{-u}$. If $K = d$ the trim is zero. \emph{Total:} at most
$4\Lambda' e^{-u}\,[\bar\zeta + \sqrt{\bar\xi} + 2 + 2/\beta]$, and
\Cref{lem:concmoment}(i)--(ii) give $\bar\zeta + \sqrt{\bar\xi} \le 6k_1
+ 4 + 6/\beta$, so the bracket is at most $8(k_1 + 1 + 1/\beta)$.
\end{proof}

\begin{corollary}[Exact sparsity, accumulative class, the multiscale route]\label{cor:sparseacc2}
Let $\cV = [-R, R]$. If $Q(T_{k_0} = 0) = 1$ with $k_0 \ge 2$, then
\[
  \tilde\Delta_n(Q) \;\le\; C\,R\,k_0\; n^{-\frac{1}{2k_0-1}}, \qquad C := 17 \cdot 408 = 6936 .
\]
Conversely $\tilde\Delta_n(Q) \ge \tfrac12\Delta_n(Q)$ on every design, since
$\cG_1 \subseteq \cGt_2$ (\Cref{prop:roster}), so on the stratum design
$Q_{k_0}$ of \Cref{lem:massbound} the exponent $1/(2k_0-1)$ holds from both
sides.
\end{corollary}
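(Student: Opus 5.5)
The plan is to run the proof of \Cref{cor:sparse} on the accumulative metric, replacing \Cref{thm:covcons} with the celled count of \Cref{thm:covacc}. Under exact $k_0$-sparsity, $1/\mumax \le k_0$ holds $Q$-a.s. Hence every lift in the support lies in $\cA_{k_0}^{[m^\ast]}$ with $m^\ast := \lceil \log_2 k_0\rceil$, so $2^{m^\ast} < 2k_0$. On that set $\Dt(P,Q) \le R(\abs{P}+\abs{Q}) \le 2Rk_0 = \kappa k_0$, so $S := \cA_{k_0}^{[m^\ast]}$ has diameter at most $\varepsilon_0 := \kappa k_0$. I will apply \Cref{prop:multiscale} directly on $(S, \Dt)$, with $\Gamma$ supported on $S$. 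Since the proposition only uses cells of $S$, the global diameter $2Rd$ of $(\cM_d,\Dt)$ never enters.

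The key input is the covering hypothesis of \Cref{prop:multiscale}. I need $N(\varepsilon_{j+1}/2, S, \Dt) \le (A/\varepsilon_j)^{e}$ with $e = 2k_0-1$ and $A \ge \varepsilon_0$. \Cref{thm:covacc} on interval values gives $\log N(\varepsilon, \cA_{k_0}^{[m^\ast]}, \Dt) \le (2k_0-1)\log(2^{m^\ast+5}R/\varepsilon)$ for $\varepsilon \le R$. With $2^{m^\ast} < 2k_0$ this is at most $(64k_0R/\varepsilon)^{2k_0-1}$. At radius $\varepsilon_{j+1}/2 = \varepsilon_j/6$ it becomes $(384 k_0 R/\varepsilon_j)^{2k_0-1}$. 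For radii between $R$ and the diameter, I will check that the count is either $1$ or still dominated by the same expression, after enlarging the constant slightly. This gives $A = 408 k_0 R$ (say), which dominates $\varepsilon_0 = 2Rk_0$. The constant $408$ comes from absorbing the small-scale slack in \Cref{thm:covacc}. I expect matching it exactly to be the fiddly step: the crude estimate is that $1 + 2^{m+3}R/\varepsilon$ and the value net each fit under $2^{m+5}R/\varepsilon$ only for $\varepsilon \le R$, while the multiscale scales start at $\varepsilon_0 = \kappa k_0 > R$.

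Since $k_0 \ge 2$ gives $e \ge 3$, \Cref{prop:multiscale} then yields $\E\,\Wt(\hat\Gamma_n,\Gamma) \le 17 A n^{-1/e} = 17\cdot 408\, R k_0\, n^{-1/(2k_0-1)}$. The left side equals $\tilde\Delta_n(Q)$ by duality and \Cref{thm:rep} for the accumulative instance, exactly as in the proof of \Cref{thm:gradedacc}.

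For the converse, \Cref{prop:roster} gives $\cG_1 \subseteq \cGt_2$, so $\tfrac12\cG_1 \subseteq \cGt_1$. Taking suprema of the deviation, $\tilde\Delta_n(Q) \ge \tfrac12 \Delta_n(Q)$. On $Q_{k_0}$, \Cref{lem:massbound} (the converse of \Cref{cor:sparse}) gives $\Wone(\hat\Gamma_n,\Gamma_{k_0}) \ge c_{k_0} n^{-1/(2k_0-1)}$ pointwise for every $n$-atom measure, hence in expectation. Combined with the upper bound, this shows the exponent is matched from both sides.
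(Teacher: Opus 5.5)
Your proposal is correct and is the paper's proof: \Cref{prop:multiscale} applied on $S = \cA_{k_0}^{[M]}$ with $M = \lceil\log_2 k_0\rceil$, $\varepsilon_0 = 2Rk_0$ and $e = 2k_0 - 1 \ge 3$, and the converse from $\cG_1 \subseteq \cGt_2$ together with \Cref{lem:massbound}. The large-radius step you defer is not an obstacle. The paper avoids it by using the rescaled second display of \Cref{thm:covacc}, $N(2^{M-1}\varepsilon, \cA_{k_0}^{[M]}, \Dt) \le \Nt_{k_0}(\varepsilon) \le (68R/\varepsilon)^{2k_0-1}$ for $\varepsilon \le \kappa$. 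That bound holds at every radius up to $2^{M-1}\kappa \ge k_0R$ and gives $A = 6\cdot 68\,Rk_0 = 408Rk_0$ directly. Your first-display estimate in fact holds for all radii up to $2^{M+2}R \ge 4k_0R$, which covers every radius $\varepsilon_{j+1}/2 \le k_0R/3$, so your $A = 384Rk_0$ would also do.
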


\begin{proof}
On the support of $Q$ at most $k_0$ weights are positive, so $1/\mumax \le
k_0$ there and every lift lies in $S := \cA_{k_0}^{[M]}$ with $M :=
\lceil\log_2 k_0\rceil$, so that $k_0 \le 2^M < 2k_0$. Its diameter:
$\Dt(P, P') \le R(\abs{P} + \abs{P'}) \le 2Rk_0 =: \varepsilon_0$. Its
covering: at a radius $\varrho \le 2^{M-1}\kappa$ put $\varepsilon :=
\varrho/2^{M-1} \le \kappa$, so that $N(\varrho, S, \Dt) \le
\Nt_{k_0}(\varepsilon) \le (68R/\varepsilon)^{2k_0-1} =
(68R\,2^{M-1}/\varrho)^{2k_0-1} \le (68Rk_0/\varrho)^{2k_0-1}$. Apply
\Cref{prop:multiscale} on $S$ with $\varepsilon_j := 3^{-j}\varepsilon_0$:
the radii $\varepsilon_{j+1}/2 = \varepsilon_j/6 \le \varepsilon_0/6 =
Rk_0/3 \le 2^{M-1}\kappa$ are admissible, and $N(\varepsilon_{j+1}/2, S,
\Dt) \le (6 \cdot 68Rk_0/\varepsilon_j)^{2k_0-1} = (A/\varepsilon_j)^{e}$
with $A := 408Rk_0 \ge \varepsilon_0$ and $e := 2k_0 - 1 \ge 3$. The
proposition gives $\E\,\Wt(\hat\Gamma_n, \Gamma) \le 17A\,n^{-1/e}$. The
converse: a member of $\cG_1$ is a member of $\cGt_2$, so
$\sup_{\cGt_1}(\E_Q f - \E_n f) \ge \tfrac12\sup_{\cG_1}(\E_Q f - \E_n f)$
pointwise in the sample, and \Cref{lem:massbound} bounds the expectation of
the right side from below on $Q_{k_0}$.
\end{proof}

\begin{corollary}[Exponential profiles, accumulative class, the crop route]\label{cor:expacc2}
Let $\cV = [-R, R]$ and let $Q$ have an exponential profile at rate
$\beta$ with constant $A \ge 1$. Then for every $n \ge 2$ and $d \ge 2$,
\[
  \tilde\Delta_n(Q) \;\le\; C\,R\,\bigl[A(1 + 1/\beta) + e^{3\beta/2}\bigr]
  \Bigl(\sqrt{\tfrac{\ln n}{2\beta}} + 3\Bigr)\, e^{-\sqrt{\beta\ln n/2}} ,
  \qquad C = 6936 :
\]
the exponent of \Cref{cor:exp}, with the effective count at the cutoff,
$K_n \asymp \sqrt{\ln n/\beta}$, as a factor.
\end{corollary}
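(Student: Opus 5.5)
The plan is to run the crop route of the proof of \Cref{cor:exp} on the accumulative class, with the crop clause done by hand as in the proof of \Cref{cor:sparse}. The exactly sparse step is \Cref{cor:sparseacc2} applied to the cropped design; the count enters through the factor $K$ there and through the crop cost $RT_K/\mumax$.

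\emph{Step 1 (crop decomposition).} Fix $K \le d$ and let $\mathrm{cr}_K$ be the $K$-crop. By the triangle inequality for $\Wt$ and the coupling along the crop on both sides,
\[
  \tilde\Delta_n(Q) \;\le\; 2R\,\E_Q\bigl[T_K/\mumax\bigr] + \E\,\Wt\bigl((\mathrm{cr}_K)_\#\hat\Gamma_n,\ (\mathrm{cr}_K)_\#\Gamma\bigr),
\]
using \Cref{prop:crop} for the accumulative cost. The cropped design is exactly $K$-sparse, so \Cref{cor:sparseacc2} bounds the second term by $CRK\,n^{-1/(2K-1)}$ whenever $K \ge 2$, with $C = 6936$.

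\emph{Step 2 (the bias).} Bounding $\E[T_K/\mumax]$ is the step I expect to be the main obstacle, because the weight $1/\mumax$ is not bounded by the count. I would write $T_K/\mumax \le 2K\,T_K + T_K\mathbf 1\{1/\mumax > 2K\}$. The first piece has expectation at most $2KAe^{-\beta K}$. For the second piece, $T_K \le 1$, and I would use the layer-cake estimate of \Cref{lem:concmoment}, $\int_{\{1/\mumax>2K\}}\mumax^{-1}\,d\Gamma \le 4Ae^{-\beta K}/(1-e^{-\beta})$; more precisely, I integrate $t\,\mathbf 1\{t>2K\} \le 2\sum_{j\ge K}\mathbf 1\{t>2j\}$ together with $\Gamma(1/\mumax>2j) \le 2Ae^{-\beta j}$. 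This gives
\[
  \E[T_K/\mumax] \;\le\; A\bigl(2K + 4(1+1/\beta)\bigr)e^{-\beta K},
\]
so the bias is $\lesssim RA(1+1/\beta)(K+1)e^{-\beta K}$.

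\emph{Step 3 (choice of $K$).} I take exactly the cutoff of the proof of \Cref{cor:exp}: $x := \sqrt{2\ln n/\beta}$ and $K := d \wedge (\lceil x/2\rceil \vee 2)$. Then $2K-1 \le x+3$, and $K \le \sqrt{\ln n/(2\beta)} + 3$ gives the stated prefactor. Either $K = d$, in which case the bias vanishes, or $\beta K \ge \sqrt{\beta\ln n/2}$, in which case the bias is at most $CRA(1+1/\beta)(K+1)e^{-\sqrt{\beta\ln n/2}}$. The same computation as there, $\ln n/(x+3) \ge \sqrt{\beta\ln n/2} - 3\beta/2$, turns $n^{-1/(2K-1)}$ into $e^{3\beta/2}e^{-\sqrt{\beta\ln n/2}}$. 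Adding the two terms and absorbing the small constants into $C = 6936$ yields the display. The requirement $K \ge 2$ is what the assumption $d \ge 2$ secures, and it makes $e = 2K-1 \ge 3$ as required by \Cref{cor:sparseacc2}.
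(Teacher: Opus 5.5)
Your route is the paper's: the same crop decomposition with bias $2R\,\E_Q[T_K/\mumax]$, \Cref{cor:sparseacc2} on the cropped design, and the same cutoff $K = d\wedge(\lceil x/2\rceil\vee 2)$. Splitting on $\{1/\mumax>2K\}$ instead of the paper's $\{T_K>\tfrac12\}$ is harmless.

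Step~2, as written, is wrong in two places. First, the second piece of your split must be $T_K\,\mumax^{-1}\mathbf 1\{1/\mumax>2K\}$, not $T_K\mathbf 1\{1/\mumax>2K\}$. This is presumably a slip, since you then integrate $\mumax^{-1}$. Second, and more substantively, the pointwise inequality $t\,\mathbf 1\{t>2K\}\le 2\sum_{j\ge K}\mathbf 1\{t>2j\}$ is false. For $t\in(2K,2K+2]$ the left side exceeds $2K$, while the right side equals $2$. The estimate you quote, $\int_{\{1/\mumax>2K\}}\mumax^{-1}\,d\Gamma\le 4Ae^{-\beta K}/(1-e^{-\beta})$, is also not what \Cref{lem:concmoment} says: its bound over $B_K(\theta)^c$ carries an extra $2K\,G_K(\theta)$. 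Without that term the estimate fails. Take $\beta=\tfrac12$, $A=1$, $K=4$, with mass $0.19$ on a configuration of eight weights $1/(8+\epsilon)$ plus one weight $\epsilon/(8+\epsilon)$, $\epsilon$ small, and the remaining mass on point masses. The profile holds, but the left side is about $1.52$ and the right side about $1.38$.

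The repair is the correct layer cake, $t\,\mathbf 1\{t>2K\}\le 2K\,\mathbf 1\{t>2K\}+2\sum_{j\ge K}\mathbf 1\{t>2j\}$, combined with $\Gamma(1/\mumax>2j)\le 2Ae^{-\beta j}$. This adds $4KAe^{-\beta K}$ and gives $\E_Q[T_K/\mumax]\le Ae^{-\beta K}(6K+4+4/\beta)\le 6A(1+1/\beta)(K+1)e^{-\beta K}$, which is exactly the paper's bound. It is still of order $A(1+1/\beta)(K+1)e^{-\beta K}$, so your Step~3 goes through unchanged. The bias is at most $12RA(1+1/\beta)(K+1)e^{-\sqrt{\beta\ln n/2}}$, and with $K+1\le\sqrt{\ln n/(2\beta)}+3$ both terms fit under $C=6936$. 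Note that it is $K+1$, not $K$, that the bias term needs.
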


\begin{proof}
\emph{The crop.} For $K \le d$ let $\mathrm{cr}_K$ be the $K$-crop, with
$\Dt(P, \mathrm{cr}_K P) \le R\,T_K(P)/\mumax(P)$ (\Cref{prop:crop}), and
$\Gamma^K := (\mathrm{cr}_K)_\#\Gamma$, the lift of an exactly $K$-sparse
design. As in the proof of \Cref{cor:sparse}, coupling along the crop on
both sides,
\[
  \tilde\Delta_n(Q) \;\le\; 2R\,\E_Q\Bigl[\frac{T_K}{\mumax}\Bigr]
  + \E\,\Wt\bigl((\mathrm{cr}_K)_\#\hat\Gamma_n, \Gamma^K\bigr) ,
\]
and once $K \ge 2$ the second term is at most $6936\,RK\,n^{-1/(2K-1)}$ by
\Cref{cor:sparseacc2}, the cropped lifts having $K$ atoms of share at most
$1$. \emph{The crop cost.} On $\{T_K \le \tfrac12\}$, $\mumax \ge (1 -
T_K)/K \ge 1/(2K)$, so pointwise $T_K/\mumax \le 2K\,T_K +
\mumax^{-1}\mathbf 1\{T_K > \tfrac12\}$, and \Cref{lem:concmoment}(ii) at
$\theta = \tfrac12$, with $G_K(\tfrac12) \le 2\,\E_Q T_K \le 2Ae^{-\beta K}$
by Markov, gives
\[
  \E_Q\Bigl[\frac{T_K}{\mumax}\Bigr] \;\le\; 2KAe^{-\beta K} + 4KAe^{-\beta K}
  + \frac{4Ae^{-\beta K}}{1 - e^{-\beta}}
  \;\le\; Ae^{-\beta K}\,(6K + 4 + 4/\beta) ,
\]
and $6K + 4 + 4/\beta \le 6(1 + 1/\beta)(K + 1)$.
\emph{The cutoff.} As in the proof of \Cref{cor:exp}, put $x :=
\sqrt{2\ln n/\beta}$ and $K := d \wedge (\lceil x/2\rceil \vee 2)$, so that
$\beta K \ge \sqrt{\beta\ln n/2}$ unless $K = d$, where $T_d = 0$ and the
crop costs nothing, that $2K - 1 \le x + 3$, whence $n^{-1/(2K-1)} \le
e^{-\ln n/(x+3)} \le e^{3\beta/2}e^{-\sqrt{\beta\ln n/2}}$, and that $K + 1
\le x/2 + 3$. The two terms are then at most $12RA(1 + 1/\beta)(K+1)\,
e^{-\sqrt{\beta\ln n/2}}$ and $6936\,R\,e^{3\beta/2}(K+1)\,e^{-\sqrt{\beta\ln
n/2}}$, and $12 \le 6936$.
\end{proof}

\subsection{The two caps compared, and the accumulative certificate}\label{app:capprice}

Against \Cref{cor:sparse} and \Cref{cor:exp}, the multiscale and crop
routes give the accumulative class the same exponents
(\Cref{cor:sparseacc2,cor:expacc2}), and the accumulative price is a factor
of the count, paid once: the maximal count $k_0$ at exact sparsity, and the
effective count at the cutoff, $\sqrt{\ln n/(2\beta)} + 3$, on exponential
profiles. The single-partition route of \Cref{thm:gradedacc} gives the
weaker exponents $1/(2k_0+1)$ and $1/3$ (\Cref{cor:sparseacc,cor:expacc}),
with the mean count $\bar\zeta$ and a constant of order $\beta^{-1}(1 +
\ln 2A)$ in place of the maximal count, the profile bounding the
concentration moments (\Cref{lem:concmoment}). This is the two semantics of
\Cref{sec:axioms} as arithmetic. On a hard selection of $k$ coordinates
the accumulative class is the conservative class at scale $kL$
(\Cref{cor:fiber}), and the design pays that scale.

\begin{remark}[The accumulative certificate at a truncated diameter]\label{rem:accertificate}
\Cref{thm:twosample} needs a diameter for its concentration step, and
$(\cM_d, \Dt)$ has $2Rd$ and no better, the count being a member of the
class. Let $h(P) := \sum_i s_i = 1/\mumax(P)$ be the total height, so that
$\Dt(P, P') \le R(h(P) + h(P'))$ through the reservoir. Fix $\zeta > 0$,
let $\tau := Q(h > \zeta)$ and $\Gamma_\zeta := \Gamma(\cdot \mid h \le
\zeta)$. On the event that all $n$ sample points have $h \le \zeta$, of
probability at least $1 - n\tau$, the points are i.i.d.\ from
$\Gamma_\zeta$, whose support has $\Dt$-diameter at most $2R\zeta$, so
\Cref{thm:twosample} applies with $\kappa_\zeta :=
2R\zeta$ in place of $\kappa$ and $\Gamma_\zeta$ in place of $\Gamma$; and
$\Wt(\Gamma_\zeta, \Gamma) \le R\tau\,(\zeta + \E[h \mid h > \zeta])$, by
coupling the heavy mass to light points through the reservoir. Hence, with
probability at least $1 - \delta - n\tau$,
\[
  \Wt(\hat\Gamma_A, \Gamma) \;\le\; \Wt(\hat\Gamma_A, \hat\Gamma_B)
  + 2R\zeta\sqrt{\frac{\ln(1/\delta)}{n}}
  + R\tau\bigl(\zeta + \E[h \mid h > \zeta]\bigr).
\]
The level comes from the profile, by the first line of the proof of
\Cref{lem:concmoment}: $\mumax < 1/(2k)$ forces $T_k > \tfrac12$, so
$Q(h > 2k) \le Q(T_k > \tfrac12)$, and Markov
gives $Q(h > 2k) \le 2Ae^{-\beta k}$, and $\zeta_n :=
2\lceil\beta^{-1}\ln(2An/\delta)\rceil$, at most $(2/\beta)\ln(2An/\delta) + 2$, makes $n\tau \le \delta$, the display then holding with probability at least $1 - 2\delta$, with a tail term of
order $\delta\zeta_n/n$; the price is the factor $\zeta_n$ on the
concentration term, the effective count at the sample size, logarithmic
in $n$, the quantity the bounds in expectation carry as $\bar\zeta$. The
trim that works is on the heavy configurations, not on the light atoms:
cropping a crowded configuration to $K$ atoms costs $R$ times the removed
height, cheap in the conservative class where the height is the weight
and expensive here where it is the share. The cell bound of
\Cref{prop:sampleshells} and the exact matching need no diameter, a
trimmed sample point being priced at its own observed height.
\end{remark}

\section{Examples}\label{app:examples}

\subsection{Flocking}\label{app:flocking}

The protocol below stores a sample of \Cref{sec:flocking}, one focal
particle at one instant, as a \emph{view}, the term of the legend of
\Cref{fig:count}.

\paragraph{The dynamics.} Particles $i = 1, \dots, d$ carry positions $p_i
\in \T^2_\ell$, the torus of side $\ell$, and headings $\theta_i \in S^1$.
Writing $u(\theta) = (\cos\theta, \sin\theta)$, one step of the drift of the
dynamics is
\[
  \theta'_i \;=\; \arg \sum_{j} c_{ij}\, e^{\mathrm{i}\theta_j},
  \qquad
  p'_i \;=\; p_i + v_0\, u(\theta'_i),
\]
where $c_{i\cdot} \in \Delta_d$ are the neighborhood weights of particle $i$,
the selector, computed from the distances $\abs{p_j - p_i}$. The classical
model \citep{vicsek1995novel} takes $c_{i\cdot}$ uniform on $\{j :
\abs{p_j - p_i} \le r\}$, the metric interaction on a hard ball. Two
variants are in the literature, the second in our roster: the
\emph{topological} interaction, uniform on the particle itself and its $k$
nearest neighbors, the form supported by the starling observations of
\citet{ballerini2008interaction} with $k \approx 7$, and \emph{soft kernels},
$c_{ij} \propto K(\abs{p_j - p_i})$.

\begin{convention}[Noise]\label{conv:noise}
The stochastic term, angular noise added to $\theta'_i$, is exogenous: the
mechanism is the drift map above, the noise is added afterwards, and all
statements are about the drift.
\end{convention}

\paragraph{The four flocks.} Every flock has $d = 3200$ particles on the
torus of side $\ell = 40$, density $2$, at speed $v_0 = 0.5$, and the
flocks differ in the rule. The ball rule is the one above at $r = 1$. The
three graded rules have the same drift with soft weights $c_{ij} \propto
K(\abs{p_j - p_i})$: the bump $K(t) = (1 - t^2/r_b^2)_+^2$ at $r_b = 1.4$,
the Gaussian $K(t) = e^{-t^2/2h^2}$ at $h = 0.5$ truncated at radius $3.5$,
and the Cauchy $K(t) = (1 + t^2/h^2)^{-3}$ at $h = 0.9$ truncated at radius
$4$, beyond which lies $0.2\%$ of the untruncated kernel's weight.
The range of each graded
rule is the one at which its mean tail matches the ball's at ten atoms,
$T_{10}$ between $0.16$ and $0.19$ for every flock in both regimes, so the
four rules have the same effective number of partners. Each flock is read
through the selector of its own rule at the rule's parameters: the lift of
a sample is the weights the flock's own update puts on it, and the
mechanism whose class the certificate bounds is that update. The two
regimes are $\eta = 1.2$, the main text's, and $\eta = 2.4$, both in the
ordered phase \citep{chate2008collective}.

\paragraph{The protocol.} The protocol is the same for every flock and
regime: a flock, a list of seeds, one replica per seed, and every replica
viewed at the same steps. A replica draws uniformly random positions and
headings from its seed and runs the drift with its noise, $64{,}000$
replicas per flock and regime. A view is one focal particle, drawn
uniformly at random at a fixed step of its replica, with its neighborhood
within radius $3.5$ ($4$ on the Cauchy flock) stored as positions relative
to the focal particle and headings rotated by an independent uniformly
random angle, the self atom included at position $0$: the design certified
is the law of the rotated view, a design of the same mechanism, which is
equivariant under the rotation.
Every replica is viewed at step $3000$, after the burn-in, by which the
flock has ordered (the polarization below), and the focal particle and
the rotation of a view are drawn from the replica's generator after its
last step.
One view per replica: the views are independent samples
of the flock's law at that age, the i.i.d.\ hypothesis of
\Cref{thm:twosample} holds as stated, and nothing is assumed about mixing
within a run.

\paragraph{The split, the lift and the certificate.} Half $A$ is the replicas of the first $32{,}000$ seeds and
half $B$ those of the last $32{,}000$, a split fixed before any data were
seen, and the nested sample sizes are the prefixes of the seed order, $n =
500$ to $64{,}000$ samples in doublings, both halves counted. The lift of a
view is the selector's weights on its stored atoms, normalized to sum to
one, and on the Gaussian and the Cauchy flock the lift is cropped to the
heaviest atoms carrying all but $10^{-3}$ of the weight, at the cost of
\Cref{prop:crop} per sample, paid in the certificate by the triangle
inequality: $0.0017\kappa$ and $0.0019\kappa$ at the largest $n$. Every
distance between a lift of $A$ and a lift of $B$ is the assignment of
\Cref{lem:Dassign}, and the transport is one exact assignment on the
$32{,}000 \times 32{,}000$ matrix of them, by the Jonker--Volgenant
algorithm, at every prefix. The certificate is that of \Cref{thm:twosample}
for half $A$ at $\delta = 0.05$, the transport plus the crop
cost plus $\kappa\sqrt{\ln(1/\delta)/n}$, and the whole-sample certificate
of \Cref{thm:twosamplefull}(ii), with its one term at $\delta/2$, is larger
by $0.0008\kappa$ at the largest $n$.

\paragraph{The simulator and the seeds.} The simulator is a compiled
implementation of the drift above, exact against the reference
implementation step by step, and a matching takes from an hour for the ball
flock to five hours for the Cauchy flock on a many-core machine, the
assignment on the full matrix half an hour of it. The seeds are consecutive
integers, one range per flock and regime, stored with every replica and
listed in the supplementary material, which regenerates every table and
figure of this appendix and of \Cref{sec:flocking} from them.

\begin{figure}[t]
\centering
\includegraphics[width=0.42\textwidth]{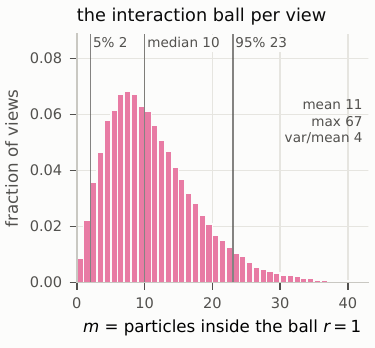}\hspace{0.03\textwidth}%
\includegraphics[width=0.42\textwidth]{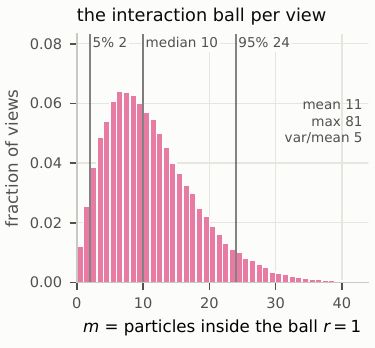}
\caption{The count law of the ball flock after the burn-in: the number $m$
of particles inside the ball $r = 1$ other than the self, per sample, on the
$64{,}000$ replicas at \emph{(left)} $\eta = 1.2$, those of
\Cref{fig:certdim}, and \emph{(right)} $\eta = 2.4$, those of
\Cref{fig:twin}.}
\label{fig:count}
\end{figure}

\paragraph{The count law and the profile.} \Cref{fig:count} is the count
law of the ball flock after the burn-in in both regimes, the number $m$ of
particles inside the ball other than the self. At $\eta = 1.2$ the ball
holds $10.7$ particles on average and $67$ at most, from $2$ to $23$
between the $5\%$ and $95\%$ samples, with a variance $4.3$ times the mean,
and at $\eta = 2.4$ it holds $10.9$ on average and $81$ at most, from $2$
to $24$, with a variance $4.6$ times the mean. The survival
$P(m \ge k)$ is
geometric beyond the knee, at $0.20$ per atom on $15 \le k \le 50$ in both
regimes. Every sample of the ball flock is exactly sparse with uniform
weights on its $m + 1$ atoms and tail $T_k = (1 - k/(m+1))_+$, so the
ball's mean tail in \Cref{fig:certdim}(a) is the functional $\E_Q T_k =
\sum_m P(m)\,(1 - k/(m+1))_+$ of the count law, and it is geometric beyond
the knee because the count law is, at a decay rate of $0.23$ per atom. The graded rules are the other case, a
decay inside each sample at a rate set by the local crowding. Their
profiles are \Cref{fig:certdim}(a) at $\eta = 1.2$ and \Cref{fig:twin}(a)
at $\eta = 2.4$: matched to the ball's at ten atoms, the bump's tail is the
ball's shape on a wider support with the boundary atoms discounted, the
Gaussian's is geometric and the Cauchy's polynomial, above the Gaussian's
from the seventh atom on. At the one-percent level the selectors resolve
$24$ (ball), $27$ (bump), $32$ (Gaussian) and $57$ (Cauchy) atoms at $\eta
= 1.2$, and $24$, $29$, $35$ and $66$ at $\eta = 2.4$. The profiles of the
two regimes are close, the high-noise flock's tails a little heavier.

\begin{table}[t]
\centering
\caption{The two-sample transport $\Wone(\hat\Gamma_A, \hat\Gamma_B)$ and
the certificate of \Cref{thm:twosample} for half $A$ at
$\delta = 0.05$, in units of $\kappa$, at $n = 64{,}000$ samples
($32{,}000$ per half), each flock through the selector of its own rule,
after the burn-in of $3000$ steps, in both regimes, with the cell
certificate of \Cref{prop:sampleshells} on the same halves at the same
$\delta$, its shells, resolution and cutoff optimized on the data (the
covering script of the supplementary material).
The trivial value is $\kappa$, and the mean distance $\Dl$ between two
random samples is $0.56$ to $0.68\kappa$.}
\label{tab:cert}
\small
\begin{tabular}{l ccc ccc}
\toprule
& \multicolumn{3}{c}{$\eta = 1.2$} & \multicolumn{3}{c}{$\eta = 2.4$} \\
\cmidrule(lr){2-4} \cmidrule(lr){5-7}
flock & $\Wone$ & cert. & cell cert. & $\Wone$ & cert. & cell cert. \\
\midrule
ball, $r = 1$              & 0.0272 & 0.0340 & 0.681 & 0.0435 & 0.0503 & 0.881 \\
bump, $r_b = 1.4$          & 0.1057 & 0.1125 & 1.005 & 0.1386 & 0.1454 & 1.004 \\
Gaussian, $h = 0.5$        & 0.1152 & 0.1238 & 1.006 & 0.1498 & 0.1583 & 1.004 \\
Cauchy, $h = 0.9$, $a = 3$ & 0.1205 & 0.1292 & 1.006 & 0.1555 & 0.1643 & 1.005 \\
\bottomrule
\end{tabular}
\end{table}

\paragraph{The certificate.} \Cref{tab:cert} is the certificate of
\Cref{thm:twosample} at the largest $n$ on every flock, in both regimes,
after the burn-in, with the transport it is built
on, and \Cref{fig:certdim}(b) and \Cref{fig:twin}(b) are the curves
against $n$. Four readings. \emph{The bound is non-vacuous by a wide
margin} on every flock and regime: the largest value in the table is
below $0.17\kappa$, against the trivial value $\kappa$ and a mean distance
between two random samples of $0.56$ to $0.68\kappa$, and on the ball at $\eta = 1.2$
the certificate is $0.034\kappa$, a thirtieth of the trivial value.
\emph{The graded flocks sit together}, at four times the ball's transport
at $\eta = 1.2$ and three times at $\eta = 2.4$: the
weight mismatch between two samples is a cost the ball's uniform weights
never pay, and it depends little on how the grading is done, the three
graded certificates lying within $0.02\kappa$ of one another in both
regimes. \emph{The matching is close to its floor.} The mean distance
from a sample of $A$ to its nearest sample of $B$ is $0.95$ to $0.96$ of
the matching cost on the graded flocks and $0.69$ ($0.79$ at $\eta = 2.4$)
on the ball, $0.0188\kappa$ at $\eta = 1.2$, so what remains is the sample
size and not slack in the transport. \emph{The covering certificates sit
at the trivial value.} The cell certificate of \Cref{prop:sampleshells}
on the cells of the quantized $k$-crops of the two halves (on the ball,
where the weights tie, the crop takes the $k$ headings nearest the view's
circular mean), the resolution floor, the observed discrepancy of the halves' cell frequencies and the
trim, with the concentration term of \Cref{thm:twosample}, at the shells,
the resolution and the cutoff $K \le 24$ that are best on the data, is
$0.68\kappa$ on the ball at $\eta = 1.2$ and $0.88\kappa$ at $\eta =
2.4$, twenty and eighteen times the matching certificate, and at or above
$\kappa$ on every graded kernel at every $n$, where the continuous
weights make almost every occupied cell a singleton, and the stratified
bound of \Cref{thm:graded} at a fixed covering constant, from the sample's
own tails with the shells optimized, is above $0.98\kappa$ on every flock
and regime. The comparison is between forms of the same bound on the same
class, the same halves and the same $\delta$: every cell bound is one
feasible transport and the matching is below every one of them
(\Cref{prop:sampleshells}), and a celling optimized on the sample is a
certificate on the same terms (\Cref{rem:certread}).
Not computed is a
chained form, the entropy integral over the scales, or the
doubling-dimension covers of \citet{gottlieb2017efficient}, which could
sit between the two.

\begin{figure}[t]
\centering
\raisebox{-\height}{\includegraphics[width=0.306\textwidth]{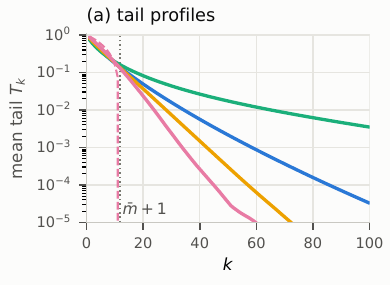}}\hspace{0.018\textwidth}%
\raisebox{-\height}{\includegraphics[width=0.295\textwidth]{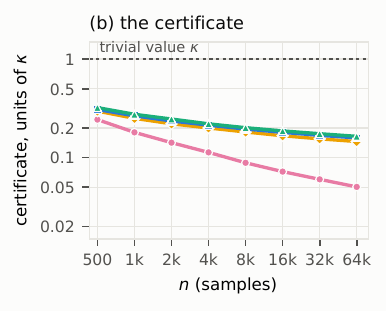}}\\[-2pt]
\includegraphics[width=0.56\textwidth]{figures/certificate_legend}
\caption{The flock at $\eta = 2.4$, the twin of \Cref{fig:certdim}:
$64{,}000$ independent replicas per flock, one sample each. \emph{(a)} The
profile, the mean tail $T_k$ under the four selectors. \emph{(b)} The
certificate of \Cref{thm:twosample} for half $A$ at $\delta
= 0.05$, in units of $\kappa$, after the burn-in of $3000$ steps.}
\label{fig:twin}
\end{figure}

\paragraph{The implied dimension.} The \emph{implied dimension} of a
measured transport is $-1/(\mathrm d\ln\Wone/\mathrm d\ln n)$, the
exponent it shows at the sample sizes attained, read here as the reciprocal
of the least-squares slope of $\ln\Wone(\hat\Gamma_A, \hat\Gamma_B)$
against $\ln n$ over a window of three consecutive doublings, at the middle
$n$. No constant enters: the transport is computed exactly and the slope is
free of units. On interval values an exactly $k$-sparse design with $k \ge
2$ shows at most $2k - 1$ as $n$ grows and the stratum design of
\Cref{app:lower} at least $2k - 1$
(\Cref{cor:sparse}), as the bipartite matching on the $d$-cube decays at
$n^{-1/d}$ \citep[Thm.~15]{luxburg2004distance}. In general the number is
the dimension of the lifted design at the scales the sample resolves.
\Cref{fig:dimension} draws it for the four selectors after the burn-in in
both regimes. Over the last three doublings the
ball's transport decays with the exponent of a $3$-dimensional sample at
$\eta = 1.2$ and of a $5$-dimensional one at $\eta = 2.4$, the graded
flocks' with $13$ to $15$: about ten dimensions of weights at the sample's
scale, against the $k - 1 \approx 10$ that the stratum count charges at the
mean support. The dimensions are local. On the graded flocks the exponent
rises with $n$, from $5$ to $8$ at $n = 1000$ to $13$ at $n = 32{,}000$ at
$\eta = 1.2$, about one dimension per doubling and decelerating: at a finer
scale the sample resolves more atoms of the weight vector, the cutoff $K_n
\asymp \sqrt{\ln n/\beta}$ of \Cref{cor:exp} read off the data, and the
number is the dimension of the lifted design at the scales the sample
resolves, not a limit. On the ball it is between $2$ and $3.5$ at every $n$
at $\eta = 1.2$, where the support has up to $68$ atoms, and the more
ordered flock is the lower-dimensional for every selector.

\begin{figure}[t]
\centering
\includegraphics[width=0.295\textwidth]{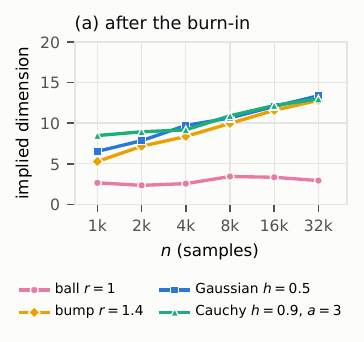}\hspace{0.03\textwidth}%
\includegraphics[width=0.295\textwidth]{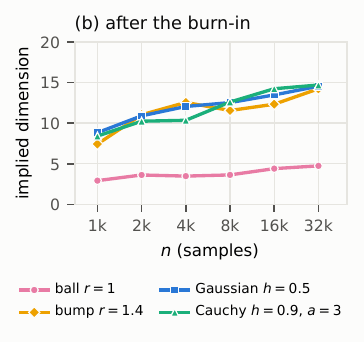}
\caption{The implied dimension, the reciprocal of the local slope of
$\ln\Wone(\hat\Gamma_A, \hat\Gamma_B)$ against $\ln n$ over three
consecutive doublings, at the middle $n$, after the burn-in: \emph{(a)} on
the replicas of \Cref{fig:certdim} at $\eta = 1.2$ and \emph{(b)} on those
of \Cref{fig:twin} at $\eta = 2.4$. The curves are the transports behind
panel (b) of the two figures.}
\label{fig:dimension}
\end{figure}

\paragraph{The two regimes.} \Cref{fig:twin} is \Cref{fig:certdim} at
$\eta = 2.4$, on the replicas of that regime. The profile is nearly the
same, the transport is larger, $0.044\kappa$ on the ball against
$0.027\kappa$: the high-noise flock is less ordered, with a local
polarization of $0.78$ against $0.94$ (below), and the bound sees that too.

\paragraph{The certificate on a member with zero training risk.} For a
target $f^\ast \in \cG_L$ observed on half $A$, the lower Lipschitz
extension $f_-(z) := \max_{i \in A}(f^\ast(z_i) - L\,\Dl(z, z_i))$ agrees
with $f^\ast$ on $A$ and is $L$-Lipschitz \citep[Lemma 7]{luxburg2004distance},
an empirical risk minimizer of $\cG_L$ with zero training risk, and on the
target $f^\ast = L\,\Dl(\cdot, A)$, which is zero on $A$, it is
$-L\,\Dl(\cdot, A)$: its error at $z$ is $2L\,\Dl(z, A)$ and its gap on
half $B$ is exactly $2L\,\overline{\mathrm{NN}}$, twice the mean
nearest-neighbor distance from $B$ to $A$. By \Cref{rem:regression} the
certificate bounds the gap of every empirical risk minimizer of $\cG_L$ on
every target in $\cG_L$ by $2L$ times its value, so on this member the
ratio of the bound to the gap is the certificate over
$\overline{\mathrm{NN}}$: $1.8$ on the ball at $\eta = 1.2$ ($0.034$ over
$0.0188$) and $1.5$ at $\eta = 2.4$, and $1.10$ to $1.13$ on the graded
kernels in both regimes, where the matching is at its nearest-neighbor
floor (\Cref{tab:cert} and the floor above). The bound uniform over the
class is within that factor of the truth on this member, and a member is
what a uniform bound is judged by. For one chosen learner on one target
the gap can be far smaller, the target's own smoothness at the neighbor
scale, which only the labels see \citep{ashlagi2021functions}, and a
holdout estimate on half $B$ reads it: the certificate competes with
nothing there. What it gives is the bound before any learner is chosen or
label seen, at once for every target and predictor in the class, so that
a choice made after seeing the data costs nothing
(\Cref{thm:twosamplefull}(ii)).

\paragraph{The bandwidth sweep.} The graded kernels of \Cref{sec:flocking}
are at the ball's own scale: the Gaussian's $h = 0.5$ is the second-moment
match to the disc of radius $1$ (the disc has mean squared distance
$\tfrac12$, the kernel $2h^2$), and the bump's radius and the Cauchy's
bandwidth match the ball's tail at ten atoms. Their certificates lie
within $0.02\kappa$ of one another, four times the ball's: the price of a
design of dimension thirteen against one of dimension three.
What the bound reads is the profile, and the bandwidth sets the
profile, so we sweep it rather than choose it: the Gaussian rule at $h
\in \{0.125, 0.18, 0.25, 0.35, 0.5\}$, each flock simulated under its own
rule as in the protocol, on $8{,}000$ replicas per bandwidth read at
$4{,}000$ samples per half and on $32{,}000$ replicas read at $16{,}000$
per half, and the flock of \Cref{sec:flocking} at the same prefixes
(\Cref{tab:sweep}). The certificate follows the profile,
slowly: halving the bandwidth sharpens the profile by a factor of nearly
three, from $32$ to $12$ atoms carrying all but one percent of the
weight, lowers the implied dimension from $10$ to $7$, and lowers the
certificate by a tenth at this $n$, and a quarter of the bandwidth, at
$9$ atoms and dimension $6$, lowers it by three tenths, monotone in the
bandwidth over the five values. At $16{,}000$ samples per half every
certificate is lower by 15 to 26\%, $0.133\kappa$, $0.128\kappa$,
$0.113\kappa$, $0.097\kappa$ and $0.081\kappa$ at $h = 0.5$, $0.35$,
$0.25$, $0.18$ and $0.125$, against the ball's $0.043\kappa$ at that $n$.
The weights the nearest atoms
carry vary continuously with their positions whatever the bandwidth, and
those are the coordinates the transport pays for; the gain is in the
rate, $n^{-1/6}$ against $n^{-1/10}$, so it widens with $n$. The narrow
kernels also order the flock less, the polarization falling from $0.94$
to $0.78$, since each particle then follows fewer neighbors.

\begin{table}[h]
\centering
\caption{The bandwidth sweep of the Gaussian rule at $\eta = 1.2$, each
flock under its own rule, read at $4{,}000$ samples per half (the pilot
sets of $8{,}000$ replicas) and at $16{,}000$ per half (the full sets of
$32{,}000$ replicas), $h = 0.5$ the flock of \Cref{sec:flocking} at the
same prefixes: the atoms
carrying all but $1\%$ and $0.1\%$ of the mean weight and the decay rate
per atom between them, the mean local polarization within $r = 1$, the
implied dimension of the transport over the last three doublings, and
the certificate of \Cref{thm:twosample} in units of $\kappa$.}
\label{tab:sweep}
\small
\begin{tabular}{c cc c c c c}
\toprule
$h$ & $k(1\%)$ & $k(0.1\%)$ & decay & $\varpi$ & dim & cert. \\
\midrule
\multicolumn{7}{l}{\emph{$4{,}000$ samples per half}} \\
$0.125$ & 9 & 16 & 0.33 & 0.78 & 5.8 & 0.109 \\
$0.18$ & 11 & 19 & 0.29 & 0.85 & 5.9 & 0.127 \\
$0.25$ & 12 & 21 & 0.26 & 0.90 & 7.2 & 0.142 \\
$0.35$ & 20 & 33 & 0.18 & 0.93 & 7.2 & 0.155 \\
$0.5$ & 32 & 53 & 0.11 & 0.94 & 9.7 & 0.157 \\
\midrule
\multicolumn{7}{l}{\emph{$16{,}000$ samples per half}} \\
$0.125$ & 9 & 16 & 0.33 & 0.78 & 5.8 & 0.081 \\
$0.18$ & 11 & 19 & 0.29 & 0.86 & 6.7 & 0.097 \\
$0.25$ & 13 & 21 & 0.29 & 0.90 & 8.0 & 0.113 \\
$0.35$ & 20 & 33 & 0.18 & 0.93 & 9.9 & 0.128 \\
$0.5$ & 32 & 53 & 0.11 & 0.94 & 12.0 & 0.133 \\
\bottomrule
\end{tabular}
\end{table}

\paragraph{The noise sweep.} Both regimes of the paper are in the ordered
phase. To see where the certificate stops being small we simulated the
ball rule above the transition, at $\eta = 3$, $4$ and $5$, on $8{,}000$
replicas each under the protocol of \Cref{sec:flocking}, read at $4{,}000$
samples per half, with the flocks of the paper at the same prefix
(\Cref{tab:noise}). The certificate grows slowly, from $0.070\kappa$ to
$0.107\kappa$ at $\eta = 3$, and then holds, as the crowding falls with
the order, from $10.7$ to $6.5$ neighbors, and the transport's implied
dimension rises from $2.5$ to $4.9$: a disordered lift is a count and a
spread of headings with no common direction, a few dimensions more than an
ordered one. The polarization gate $\varpi_0 = 0.6$ of the $\eta = 2.4$
regime, below which the update leaves the class (\Cref{prop:direction}),
is met on $69\%$, $27\%$ and $13\%$ of the samples at $\eta = 3$, $4$ and
$5$: above the transition the update is a member on a minority of the
samples, while the certificate is the design's and does not need the
order.

\begin{table}[h]
\centering
\caption{The noise sweep of the ball rule at $4{,}000$ samples per half:
the paper's regimes at that prefix and the pilots above the transition
($8{,}000$ replicas each). The columns of \Cref{tab:sweep}, with the mean
count of neighbors other than the self and the share of samples at the
polarization gate $\varpi_0$ of \Cref{prop:direction}.}
\label{tab:noise}
\footnotesize\setlength{\tabcolsep}{3.5pt}
\begin{tabular}{l ccc c cc c c c}
\toprule
$\eta$ & $k(1\%)$ & $k(0.1\%)$ & $\beta$ & count & pol. & gate & dim. & cert$_A$ & $\varpi_0$ \\
\midrule
$1.2$ & 24 & 34 & 0.23 & 10.7 & 0.94 & 1.00 & 2.5 & 0.070 & 0.8 \\
$2.4$ & 24 & 34 & 0.23 & 10.9 & 0.78 & 0.94 & 3.5 & 0.088 & 0.6 \\
$3$ & 26 & 39 & 0.18 & 10.8 & 0.66 & 0.69 & 4.0 & 0.107 & 0.6 \\
$4$ & 15 & 20 & 0.46 & 7.5 & 0.47 & 0.27 & 4.4 & 0.101 & 0.6 \\
$5$ & 12 & 15 & 0.77 & 6.5 & 0.37 & 0.13 & 4.9 & 0.106 & 0.6 \\
\bottomrule
\end{tabular}
\end{table}

\paragraph{The fully symmetric form.} The drift is invariant
under torus shifts, and depends on particle $i$ only through the
configuration relative to it, with no internal memory. Frame by the focal
particle with the shift $T_i : p \mapsto p - p_i$:
\[
  \tilde p_j := p_j - p_i \;\in\; \T^2_\ell, \qquad
  x_j := \theta_j \;\in\; \cV := S^1 ,
\]
for all $j$, including $j = i$: the framed position $\tilde p_j$ is what the
selector reads, the heading $x_j$ is the value the mechanism reads, and
the self atom sits at position $\tilde p_i = 0$ with its own heading. Framed positions keep the tilde, and $t_j :=
\abs{\tilde p_j}$ is the distance from the focal particle. Headings are
untouched (shift-only framing). Then the selector becomes an ordinary
anonymous selector $c : (\T^2_\ell)^d \to \Delta_d$ of the framed positions
with per-coordinate scores: with $i$ fixed by the framing, $c = c(\tilde p)
\in \Delta_d$ is the weight vector, its $j$-th coordinate $c_j \propto
K(t_j)$ depends on $\tilde p_j$ alone, and $j = i$ contributes $K(0)$, the
maximal weight. The aggregation
is fully anonymous, self included,
\[
  m \;=\; \sum_j c_j\, e^{\mathrm{i}\theta_j}, \qquad
  \theta'_i = \arg m, \qquad p'_i = p_i + v_0\, m/\abs{m},
\]
and the focal state appears only in the framing and un-framing shifts,
which are isometries, never as an argument of the aggregation. The
composite is therefore $\theta'_i = g(x, \tilde p) := f(x, c(\tilde p))$
with $f$ anonymous, the mechanism reading the headings and the selector
the positions, and the design the law of $(x, c(\tilde p))$: a move of
particle $j \ne i$ moves the framed positions in coordinate $j$ only, by
exactly the world distance, and a move of the focal particle moves the
whole frame, an ordinary telescope priced with no new axiom.

\begin{convention}[The self atom stays in]\label{conv:self}
The self atom is always part of the configuration. Beyond physical fidelity
(the Vicsek average includes the particle itself) it pins the normalization
from below: the sum $Z = \sum_l K(t_l)$ contains the self term $K(0)$, so $Z
\ge K(0)$ however small every neighbor term is, which keeps the selector
well defined for an isolated particle and carries the additive locality of
compact windows.
\end{convention}

\begin{remark}[What the framing removes, and what stays]\label{rem:selfin}
Removed: the self as a distinguished argument. $f$ has no query slot, is
symmetric in all $d$ atoms, and cannot tell which atom is the self; it sees
a multiset of weighted headings, and the selector sees an atom at
position $0$ carrying the maximal weight, so a particle exactly on top of
the focal one is rightly treated identically. Kept: the
self data, as an ordinary atom, necessarily so, because shift-only framing
pins the self position but not the self heading, and $\theta_i$ is a
variable the alignment average reads. Deleting the self atom would force $f$
to remember $\theta_i$, a query slot again. Full $SE(2)$ framing, rotating
by $-\theta_i$, removes the self entirely but is wanted only for
heading-dependent interactions (vision cones, blind spots) or to make a
learned mechanism equivariant by construction; rotations are not isometries
of the torus, so that variant lives on a planar relative chart cropped to a
ball of a few interaction ranges, at a crop cost that is exactly zero for
compact windows and exponentially small for Gaussian kernels. Default:
shift-only, self atom in. From here on $(x, \mu)$ denotes the framed
configuration and neighbor moves are moves of coordinates $j \ne i$.
\end{remark}

\paragraph{The value space and its constants.}
\begin{definition}[Value space]\label{def:vspace}
$\cV = S^1$, the heading, with $\rho = d_{S^1}$, the angle in radians.
The framed positions are not values: they are read by the selector alone,
and the design is the law of the headings and the weights.
\end{definition}
$\cV$ is compact and geodesic, $\diam\cV = \pi$, so $R = \pi/2$ and
$\kappa = \pi$, and $N(\varepsilon, \cV, \rho) = \lceil \pi/\varepsilon
\rceil$: metric dimension $1$. A learner who took the positions as values
would pay their dimension in the exponents of \Cref{sec:complexity}, which
carry $\dim\cV$. Every statement of \Cref{sec:complexity}
made on an interval $[-R, R]$ holds on $S^1$ with the same constants. With
$R = \pi/2$, $N(\varepsilon/2, S^1) = \lceil 2\pi/\varepsilon \rceil \le
5R/\varepsilon$ for $\varepsilon \le R$, the bound the proof of
\Cref{thm:covcons} uses for the interval, so $C_1 = 34$ and the constant of
\Cref{cor:exp} are unchanged, and the lower bounds of \Cref{cor:sparse,cor:exp}
transfer by placing the design on a half-circle, which is isometric to an
interval of length $\pi$. By \Cref{thm:covcons} the stratum of $k$-sparse lifted configurations has
\[
  \log N(\varepsilon, \cA_k, \Dl) \;\lesssim\; k\, \log(C/\varepsilon) +
  (k-1)\log(C'/\varepsilon) \;=\; (2k - 1)\,\log(C''/\varepsilon) ,
\]
exponent $2k - 1$ when the weights are free, as they are for
the graded kernels, whose weights are functions of the positions, and $k$
for the ball, whose weights are uniform on its support.

\begin{remark}[The domain size is in the selector]\label{rem:diam}
The torus enters only through the selector's scores. The class's
constants are those of the circle: the reservoir price $R = \pi/2$ is the
worst change of a heading average per unit of weight removed, the crop
constant $\kappa = \pi$ is a half turn, and neither the side $\ell$ nor the
interaction range $r$ appears in them. The framing shift and the position
update $p'_i = p_i + v_0 u'$ are an isometry and a Lipschitz image of the
heading output, and never enter the aggregation.
\end{remark}

\paragraph{Membership: the Vicsek statistics are in the class.} The update
factors through two statistics of the configuration,
\begin{equation}\label{eq:mR}
  \mathbf{m}(x, \mu) \;=\; \sum_{j} \mu_j\, e^{\mathrm{i}\theta_j} \;\in\; \R^2,
  \qquad
  \varpi(x, \mu) \;=\; \abs{\mathbf{m}(x, \mu)} \;\in\; [0, 1],
\end{equation}
the local mean velocity, with headings read from the values and weights
from the selector slot, and its resultant length, the local polarization:
$\varpi = 1$ at perfect alignment, $\varpi \approx 0$ at cancellation, and with
uniform weights over the whole flock $\varpi$ is exactly Vicsek's global order
parameter. Membership comes in two layers of different regularity, and the
ordered-region hypothesis of the second is a lower bound $\varpi \ge \varpi_0$ on the
first.

\begin{proposition}[Unconditional layer]\label{prop:uncond}
For $\mathbf{m}$ and $\varpi$ of \eqref{eq:mR}: (i) each component of $\mathbf{m}$ is in
$\cG_{1}$, and jointly $\abs{\mathbf{m}(P) - \mathbf{m}(Q)}_2 \le \Dl(P, Q)$
for all lifted $P, Q$; (ii) $\varpi$ is in $\cG_{1}$, so the quantity
gating the conditional layer is itself a member, unconditionally.
\end{proposition}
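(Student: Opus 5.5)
The plan is to read $\mathbf{m}$ as the vector-valued linear sum of \Cref{lem:familycap} with the conservative cap $w = \mu$. The obstacle is that the score must obey the normalization of that lemma. Take $\varphi : S^1 \to \R^2$, $\varphi(\theta) = e^{\mathrm{i}\theta}$, read in $\R^2$. Then $\abs{\varphi}_2 = 1$, while $R = \pi/2 > 1$, so $\abs{\varphi}_2 \le R$ holds. The chord is at most the arc, $\abs{e^{\mathrm{i}s} - e^{\mathrm{i}t}}_2 = 2\sin(\rho(s,t)/2) \le \rho(s,t)$ with $\rho = d_{S^1}$, so $\varphi$ is $1$-Lipschitz from $(\cV, \rho)$ into $\ell^2$. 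Both hypotheses of \Cref{lem:familycap} therefore hold, and with $f_\varphi = \mathbf{m}$ the lemma gives
\[
  \abs{\mathbf{m}(x,\mu) - \mathbf{m}(y,\nu)}_2 \;\le\; \Dl\bigl(\Cl(x,\mu), \Cl(y,\nu)\bigr)
\]
for all configurations. This shows $\mathbf{m}$ factors through the lift and justifies writing $\mathbf{m}(P)$.

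Each component of $\mathbf{m}$ is $\abs{\cdot}$-dominated by the Euclidean norm of the difference, so it is $1$-Lipschitz for $\Dl$. By the easy direction of \Cref{thm:rep}(i) it then lies in $\cG_1$, which is (i). Alternatively, each component is the scalar linear sum with score $\cos$ or $\sin$. These scores are $1$-Lipschitz on $S^1$ and bounded by $1 \le R$, so they lie in $\cS$, and the components lie in $\cL^{w}\subseteq\cG_1$ directly.

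For (ii), note that $\varpi = \abs{\cdot}_2 \circ \mathbf{m}$. The Euclidean norm is $1$-Lipschitz on $\R^2$ by the reverse triangle inequality, so
\[
  \abs{\varpi(P) - \varpi(Q)} \;\le\; \abs{\mathbf{m}(P) - \mathbf{m}(Q)}_2 \;\le\; \Dl(P,Q).
\]
Hence $\varpi$ is $1$-Lipschitz on $(\cM_d, \Dl)$, and \Cref{thm:rep}(i) again gives $\varpi \in \cG_1$. This is exactly the last clause of \Cref{lem:familycap}, with no hypothesis on the polarization.

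The one real check is the normalization in the first step: the lemma requires $\abs{\varphi}_2 \le R$ with $R = \tfrac12\diam\cV$, and this holds only because $\pi/2 \ge 1$. Everything after it is a direct appeal to \Cref{lem:familycap} and \Cref{thm:rep}.
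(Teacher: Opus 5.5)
Your proposal is correct and follows the paper's proof. The paper also reads $\mathbf{m}$ as the vector-valued linear sum of \Cref{lem:familycap} with score $e^{\mathrm{i}\theta}$, uses chord $\le$ arc for the Lipschitz bound, and gets $\varpi$ by composing with the $1$-Lipschitz map $\abs{\cdot}_2$. Your explicit check that $\abs{\varphi}_2 = 1 \le R = \pi/2$ verifies a hypothesis of the lemma that the paper leaves implicit.
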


\begin{proof}
(i) $\mathbf{m} = \sum_j \mu_j e^{\mathrm{i}\theta_j} = \int u\,\varphi(s)\, d\Cl$ with $\varphi(s) = e^{\mathrm{i}\theta(s)}$, the
linear sum of \Cref{lem:familycap} with a value-only score. For values $s, t$ with headings $\theta, \theta'$,
$\abs{e^{\mathrm{i}\theta} - e^{\mathrm{i}\theta'}}_2 = 2\sin(d_{S^1}/2) \le
d_{S^1}(\theta, \theta') = \rho(s, t)$. \Cref{lem:familycap}, in its vector-valued form, gives the joint bound. Anonymity is clear. (ii) Post-compose with the
$1$-Lipschitz map $\abs{\cdot}_2$.
\end{proof}

$\mathbf{m}$ is position-blind: value moves in position cost nothing and change
nothing, consistently with positions entering only through the selector.
The update $\theta'_i = \arg \mathbf{m}$ factors as configuration $\mapsto \mathbf{m}$
(\Cref{prop:uncond}) followed by the fixed normalization $\mathbf{m} \mapsto u' =
\mathbf{m}/\abs{\mathbf{m}}$ of $\R^2 \to S^1$, which discards the magnitude and keeps the
direction. Both outputs pass through it: the position update $p'_i = p_i +
v_0\, u'$ consumes the same direction at constant speed, and the magnitude
$\varpi$ never enters the dynamics.

\begin{proposition}[Direction layer]\label{prop:direction}
On the ordered region $\{\varpi \ge \varpi_0\}$ the normalization $u' = \mathbf{m}/\abs{\mathbf{m}}$
satisfies $\abs{u'(\mathbf{m}) - u'(\mathbf{m}')}_2 \le (1/\varpi_0)\abs{\mathbf{m} - \mathbf{m}'}_2$ and $\norm{Du'(\mathbf{m})}
= 1/\abs{\mathbf{m}} \le 1/\varpi_0$. Hence the heading-update components lie in the class
at scale $1/\varpi_0$ on the ordered region, and the position
displacement $v_0\, \mathbf{m}/\abs{\mathbf{m}}$ at scale $v_0/\varpi_0$.
\end{proposition}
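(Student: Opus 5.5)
The plan is to prove the two claims separately, the Lipschitz bound of the normalization and its differential, and then to read membership off the closure properties stated after \Cref{thm:rep}.

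\emph{The Lipschitz bound.} For $\mathbf{m}, \mathbf{m}'$ with $\abs{\mathbf{m}}, \abs{\mathbf{m}'} \ge \varpi_0$, write $a = \abs{\mathbf{m}}$ and $b = \abs{\mathbf{m}'}$, and assume $a \le b$ by symmetry. Then
\[
  \abs{u'(\mathbf{m}) - u'(\mathbf{m}')}_2
  = \frac{1}{a}\Bigl|\mathbf{m} - \tfrac{a}{b}\mathbf{m}'\Bigr|_2 .
\]
The vector $\tfrac ab\mathbf{m}'$ is the radial projection of $\mathbf{m}'$ onto the circle of radius $a$, where $\mathbf{m}$ lies. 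Projection onto a closed convex ball is $1$-Lipschitz and fixes $\mathbf{m}$, so this projection gives $\abs{\mathbf{m} - \tfrac ab\mathbf{m}'}_2 \le \abs{\mathbf{m} - \mathbf{m}'}_2$. Hence the difference is at most $\abs{\mathbf{m} - \mathbf{m}'}_2/a \le \abs{\mathbf{m} - \mathbf{m}'}_2/\varpi_0$. The region $\{\abs{\mathbf{m}} \ge \varpi_0\}$ is not convex, so the bound must be this direct two-point argument and not an integration of the derivative along segments.

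\emph{The derivative.} Compute $Du'(\mathbf{m}) = \abs{\mathbf{m}}^{-1}(I - u'u'^{\top})$. This is $\abs{\mathbf{m}}^{-1}$ times an orthogonal projection of rank one, so its operator norm is $1/\abs{\mathbf{m}} \le 1/\varpi_0$.

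\emph{Membership.} By \Cref{prop:uncond}(i), $\abs{\mathbf{m}(P) - \mathbf{m}(Q)}_2 \le \Dl(P,Q)$. Composing with the Lipschitz bound gives $\abs{u'(P) - u'(Q)}_2 \le \varpi_0^{-1}\Dl(P,Q)$ for all lifts in the ordered region. Each component of $u'$ is therefore $\varpi_0^{-1}$-Lipschitz for $\Dl$ there, and by the easy direction of \Cref{thm:rep}(i) it satisfies the axioms at scale $1/\varpi_0$ on that region. Scaling by $v_0$ gives the constant $v_0/\varpi_0$ for the displacement. A member on all of $\cC$, if one is wanted, is obtained by a McShane extension of each component from the ordered region with the same constant, as in \Cref{cor:fiber}, followed by the easy direction.

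The step that needs care is the nonconvexity of the ordered region, which rules out the usual derivative-along-segments argument. The radial projection handles it; alternatively the bound follows from the identity $\abs{u - u'}^2 = (\abs{\mathbf{m}-\mathbf{m}'}^2 - (a-b)^2)/(ab)$, which gives the sharper $1/\sqrt{ab}$.
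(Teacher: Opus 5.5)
Your proof is correct and follows the paper's structure: a direct two-point bound, the Jacobian, then composition with \Cref{prop:uncond}(i). The one genuine difference is how you get the two-point Lipschitz bound.

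\begin{itemize}
\item The paper uses the identity $\abs{ru - sv}_2^2 = (r-s)^2 + rs\abs{u - v}_2^2$ for unit vectors $u, v$. This is exactly the alternative you mention at the end, and it gives the constant $1/\sqrt{\abs{\mathbf{m}}\,\abs{\mathbf{m}'}}$.
\item Your main argument uses the nonexpansiveness of the metric projection onto the closed ball of radius $\min(\abs{\mathbf{m}}, \abs{\mathbf{m}'})$. It gives the weaker constant $1/\min(\abs{\mathbf{m}}, \abs{\mathbf{m}'})$, which is still at most $1/\varpi_0$.
\end{itemize}

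The identity is an exact one-line computation and sharper, since it charges the geometric mean of the two radii rather than the smaller one. The projection argument avoids the computation and explains the bound geometrically. It does rely on the reduction $\abs{\mathbf{m}} \le \abs{\mathbf{m}'}$ you make, since that is what makes the ball projection of $\mathbf{m}'$ radial while fixing $\mathbf{m}$.

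Your McShane extension to a member on all of $\cC$ is what the paper does in \Cref{rem:region}. There it also clips to $[-1,1]$, which matters only for bounding the error off the ordered region by the range $2$.
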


\begin{proof}
For unit vectors $u, v$ and $r, s \ge \varpi_0$, $\abs{ru - sv}_2^2 = (r - s)^2 + rs\abs{u - v}_2^2$, so
$\abs{u - v}_2 \le \abs{ru - sv}_2/\sqrt{rs} \le \abs{ru - sv}_2/\varpi_0$: take $u = u'(\mathbf{m})$, $r = \abs{\mathbf{m}}$, $v = u'(\mathbf{m}')$, $s = \abs{\mathbf{m}'}$. The differential form is the Jacobian $(I - u'u'^{\top})
/\abs{\mathbf{m}}$. Compose with \Cref{prop:uncond}(i).
\end{proof}

\begin{remark}[The $\varpi_0$ caveat is the phase transition]\label{rem:phase}
$\varpi \ge \varpi_0$ fails where neighbors cancel, the rule in the
disordered phase and the exception in the ordered one, half a percent of
the samples at $\eta = 1.2$.
There the true update is genuinely unstable, the direction of a near-zero
resultant, so the divergence of the class constant as $\varpi_0 \downarrow 0$ is
the constant pricing the physics rather than slack: $L = 1/\varpi_0$
diverges at the order--disorder transition, where the order-parameter
fluctuations peak \citep{chate2008collective}. The resultant $\varpi$ is
a member unconditionally, and only the direction readout pays $1/\varpi_0$. The
price is attached to the constant-speed rule, which forces the output onto
the unit circle, not to alignment as such: a linear velocity-averaging
dynamics of Cucker--Smale type \citep{cucker2007emergent}, $v'_i = \sum_j c_{ij} v_j$ with no
renormalization, stops at the first layer and is a fully unconditional
member.
\end{remark}

On the replicas after the burn-in the local polarization under the ball
selector has mean $0.94$ at $\eta = 1.2$, with $\varpi \ge 0.8$ on $99.5\%$
of the samples and $\varpi \ge 0.6$ on $99.96\%$, and mean $0.78$ at $\eta =
2.4$, with $\varpi \ge 0.6$ on $94\%$ and $\varpi \ge 0.5$ on $97\%$
($64{,}000$ samples per regime, the resultant of the atoms within $r = 1$,
self included).

The gate $\varpi \ge \varpi_0$ is a condition on $(x, \mu)$ jointly, hence with $\mu =
c(\omega)$ apparently a restriction on the selector as well as on $x$. The next
lemma decouples it: weighted averages of unit vectors respect cones, so a
heading-coherence condition on $x$ alone yields one $\varpi_0$ uniformly over all
selectors with a given tail bound.

\begin{lemma}[Coherence decouples the gate from the selector]\label{lem:coherence}
Fix a range $r_{\mathrm c} > 0$ and $\alpha < \pi/2$, and call the state
$(\alpha, r_{\mathrm c})$-coherent at the focal particle if all headings of
the particles within distance $r_{\mathrm c}$ of it, self included, lie in a
closed arc of half-width $\alpha$, a condition on the headings and the
positions together. Then for every $\mu \in \Delta_d$ placing mass at least
$1 - t$ on the $r_{\mathrm c}$-ball,
\[
  \varpi(x, \mu) \;\ge\; (1 - t)\cos\alpha - t \;\ge\; \cos\alpha - 2t .
\]
Consequently, for any selector whose tail beyond $r_{\mathrm c}$ is at most $t$ on
the design region, the ordered-region hypothesis holds with $\varpi_0 =
\cos\alpha - 2t$, positive once $\cos\alpha > 2t$, uniformly over the
selector family: the restriction is on the state, the standard locally
ordered hypothesis of the flocking literature, and
on the selector only through its standing tail bound.
\end{lemma}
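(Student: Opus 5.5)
The plan is to split $\mathbf m(x,\mu)=\sum_j\mu_j e^{\mathrm i\theta_j}$ into the part carried by the $r_{\mathrm c}$-ball and the part outside it, and to bound the projection of $\mathbf m$ onto the bisector of the arc. Let $u_0$ be the unit vector at the center of the closed arc of half-width $\alpha$ that holds the headings of the particles in the ball, and let $B$ be the set of coordinates within distance $r_{\mathrm c}$ of the focal particle. By hypothesis $\mu(B)\ge 1-t$.

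Since $\varpi=\abs{\mathbf m}_2\ge\langle \mathbf m,u_0\rangle$ by Cauchy--Schwarz, it suffices to bound the inner product from below, term by term. For $j\in B$ the heading lies within angle $\alpha$ of $u_0$, so $\langle e^{\mathrm i\theta_j},u_0\rangle\ge\cos\alpha$, which is positive because $\alpha<\pi/2$. For $j\notin B$ the inner product is at least $-1$. Summing with the weights gives
\[
  \langle\mathbf m,u_0\rangle\;\ge\;\mu(B)\cos\alpha-(1-\mu(B))\;\ge\;(1-t)\cos\alpha-t .
\]
The last step uses that the middle expression is increasing in $\mu(B)$, with slope $\cos\alpha+1>0$, together with $\mu(B)\ge 1-t$. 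The second inequality of the statement follows from $(1-t)\cos\alpha-t=\cos\alpha-t(\cos\alpha+1)\ge\cos\alpha-2t$, since $\cos\alpha\le1$.

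For the consequence, apply the bound at every configuration of the design region. There $\mu=c(\omega)$ places mass at least $1-t$ on the $r_{\mathrm c}$-ball by the tail assumption, and the state is coherent, so $\varpi\ge\cos\alpha-2t=:\varpi_0$ uniformly over every selector with that tail bound. Proposition~\ref{prop:direction} then applies once $\varpi_0>0$.

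No step is a real obstacle. The only points needing care are that the arc is closed, so the bound $\cos\alpha$ is attained, and that the self atom lies in $B$ at distance $0$, so $B$ is nonempty.
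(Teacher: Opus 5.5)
Your proof is correct and is the paper's argument: project $\mathbf{m}$ onto the arc axis, bound the in-ball terms below by $\cos\alpha$ and the out-of-ball terms by $-1$, and use $\varpi \ge \langle \mathbf{m}, e\rangle$. Your monotonicity step in $\mu(B)$, with slope $1+\cos\alpha>0$, spells out a step the paper leaves implicit.
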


\begin{proof}
Let $e$ be the arc axis. Each in-ball heading vector has $\langle
e^{\mathrm{i}\theta_j}, e\rangle \ge \cos\alpha$, each out-of-ball vector at
least $-1$, so $\langle \mathbf{m}, e \rangle \ge (1-t)\cos\alpha - t$, and $\varpi \ge
\langle \mathbf{m}, e\rangle$.
\end{proof}

\begin{remark}[Region restrictions are benign]\label{rem:region}
$\varpi$ is a functional of the lift, $\varpi = \abs{\int u\, e^{\mathrm{i}\theta(s)}\, dP}$,
so $\{\varpi \ge \varpi_0\}$ is a lift-defined region of the same kind as a stratum
hypothesis: in the learning setup both are conditions on the lifted design.
Membership on the region is direct. $\mathbf{m}$ is $1$-Lipschitz on all lifts
(\Cref{prop:uncond}) and $u'$ is $(1/\varpi_0)$-Lipschitz in $\mathbf{m}$ on the
region (\Cref{prop:direction}), so each component of $u' \circ \mathbf{m}$, restricted
to the lifts of the region, is $(1/\varpi_0)$-Lipschitz there, and McShane's
extension, clipped to $[-1, 1]$, is a member of $\cG_{1/\varpi_0}$ that agrees
with the update on the region. The characterization itself is not the route:
the proof of \Cref{thm:rep} passes through intermediate configurations that
can leave the region. The region is stable under the crop up to a margin:
a $t$-crop moves $\varpi$ by at most $2t/(1-t)$, so after the crop the gate
holds at $\varpi_0 - 2t/(1-t)$.
\end{remark}

\paragraph{What the number bounds.} The certificate bounds the uniform
deviation of the unit ball $\cG_1$ in the class's own units, against the
trivial $\kappa$, and it translates directly into the flock's own
statistics. The components of the local mean velocity are members of
$\cG_1$ with range $2$ (\Cref{prop:uncond}), so on the ball at $\eta = 1.2$
their deviation between the population and the sample is at most
$0.034\kappa = 0.11$ against $2$. The heading update is in $\cG_L$ at $L =
1/\varpi_0$ on the samples whose local polarization is at least $\varpi_0$
(\Cref{prop:direction}), and its extension from that region, clipped to
$[-1, 1]$, is a member of $\cG_{1/\varpi_0}$ on every sample
(\Cref{rem:region}) that differs from the update only on the region $B :=
\{\varpi < \varpi_0\}$, by at most the range $2$. So for a component $u$ of
the update and its extension $\bar u$,
\[
  |\Gamma u - \hat\Gamma_A u|
  \le |\Gamma \bar u - \hat\Gamma_A \bar u| + 2\,\Gamma(B) + 2\,\hat\Gamma_A(B)
  \le \frac{1}{\varpi_0}\,\Wone(\hat\Gamma_A, \Gamma) + 2\,\Gamma(B) + 2\,\hat\Gamma_A(B),
\]
where $\hat\Gamma_A(B)$ is observed and $\Gamma(B)$ is at most the share of
$B$ in the $64{,}000$ samples plus $\sqrt{\ln(1/\delta)/(2 \cdot 64{,}000)} =
0.005$ at $\delta = 0.05$, by Hoeffding's inequality on the indicator of
$B$. At $\eta = 1.2$ and $\varpi_0 = 0.8$ the region holds on $99.5\%$ of the
samples of either half, so the deviation of the update between the
population and the sample is at most $0.13 + 0.03 = 0.16$ against a range
of $2$. At $\eta = 2.4$ and $\varpi_0 = 0.6$ it holds on $94\%$, and the
deviation is at most $0.26 + 0.25 = 0.51$ against $2$, the samples outside
the gate costing as much as the certificate at that noise. For regression with
the absolute loss on a target in the class the gap of every predictor is at
most twice the number (\Cref{rem:regression}), and other losses cost the
factor of their Lipschitz constant.

\section{Lower bounds from the profile}\label{app:lower}

The upper bounds of \Cref{sec:complexity} are functionals of the profile.
This appendix shows that the profile is also what bounds the problem from
below: at a fixed stratum the uniform law is at the $(2k-1)$-dimensional
rate, and so is the risk of learning from noiseless labels
(\Cref{thm:learnlower}), and on a design with an exponential profile it
is quasi-polynomial in $n$. The technique is classical, a
bound on the mass that the lifted design gives a ball, so that $n$ atoms
cover at most half of the design at the scale where the bound is
$1/(2n)$, and the content is in the object: the metric $\Dl$ restricted
to a fiber of fixed weights is bi-Lipschitz to a weighted supremum
distance whose weights are the design's (\Cref{lem:fiber}), so the lifted
design is locally an anisotropic box with semi-axes the weights, and on
an exponential profile its mass at scale $\varepsilon$ is
$\exp(-\ln^2(1/\varepsilon)/(2\beta))$ up to constants. The square loss
has lower bounds of the same shape from the same fibers, by Assouad's
hypercube, sketched in \Cref{rem:squareloss}.

\subsection{The device: fibers of fixed weights}
\label{app:lowerdevice}

\begin{lemma}[Fibers of fixed weights]\label{lem:fiber}
Let $\mu \in \simplex$ have support $S := \{i : \mu_i > 0\}$ with pairwise
distinct entries on $S$, and set, with $\gamma(\mu) := 1$ when $S$ is a
single atom,
\[
  \gamma(\mu) \;:=\; \min_{i \ne j \in S}
  \frac{\abs{\mu_i - \mu_j}}{\max(\mu_i, \mu_j)} \;\in\; (0, 1] .
\]
\begin{enumerate}
\item[(i)] $\Cl$ is injective on $\cV^S \times \{\mu\}$, values off $S$
being forgotten, and for all $x, y \in \cV^d$,
\[
  \frac{\gamma(\mu)}{2}\, \max_{i \in S} \mu_i\, \rho(x_i, y_i)
  \;\le\; \Dl\bigl(\Cl(x,\mu), \Cl(y,\mu)\bigr)
  \;\le\; \sum_{i \in S} \mu_i\, \rho(x_i, y_i) .
\]
\item[(ii)] Write $F_\mu := \Cl(\cV^d \times \{\mu\})$ for the fiber over
$\mu$. If $\mu, \nu$ differ as multisets of positive weights then
$\Dl(F_\mu, F_\nu) \ge R\, m(\mu, \nu) > 0$, where $m(\mu,\nu)$ is the
minimum, over partial matchings of the positive entries of $\mu$ with
those of $\nu$, of the sum of $\abs{\mu_i - \nu_j}$ over matched pairs
plus the sum of the unmatched entries.
\end{enumerate}
\end{lemma}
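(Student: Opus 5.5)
The upper bound in (i) comes from the identity matching, exactly as in \Cref{lem:compact}(i). Match coordinate $i$ to itself for $i\in S$; the heights agree, so each pair costs $\mu_i\rho(x_i,y_i)$ and there are no unmatched atoms. Injectivity then follows from the lower bound, since $\gamma(\mu)>0$.

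For the lower bound in (i), I would fix an optimal partial matching $\sigma$ between $P=\Cl(x,\mu)$ and $Q=\Cl(y,\mu)$, take an index $i^\ast$ attaining $\max_i\mu_i\rho(x_i,y_i)$, and set $a:=\mu_{i^\ast}$. The argument splits by where $\sigma$ sends the atom $(x_{i^\ast},a)$ of $P$.
\begin{enumerate}
\item[(a)] It is matched to $(y_{i^\ast},a)$. The cost is at least $a\rho(x_{i^\ast},y_{i^\ast})$.
\item[(b)] It is matched to $(y_j,\mu_j)$ with $j\ne i^\ast$. The height term alone costs $R\abs{a-\mu_j}\ge R\gamma\max(a,\mu_j)\ge R\gamma a\ge \tfrac{\gamma}{2}a\rho(x_{i^\ast},y_{i^\ast})$, using $\rho\le 2R$.
\item[(c)] It is unmatched. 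The cost is $Ra\ge \tfrac12 a\rho$.
\end{enumerate}
In every case $\Dl\ge\frac{\gamma}{2}a\rho(x_{i^\ast},y_{i^\ast})$, since $\gamma\le1$.

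The step I expect to be delicate is hidden in case (b). One might worry that permuting atoms of equal value could make a non-identity matching cheap. The distinctness of the weights rules this out: every cross-index pair pays at least $R\gamma$ times the larger height, so the optimal $\sigma$ can leave the identity only at a cost proportional to the weights involved. Case (b) uses this only for the single atom $i^\ast$, so no global structure of $\sigma$ is needed.

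For (ii), take any configurations $(x,\mu)$ and $(y,\nu)$ and any partial matching between their lifts. Drop the value terms $\min(u,v)\rho\ge0$ from the cost of each matched pair. What remains is $R$ times the $\abs{u-v}$ over matched pairs plus the unmatched heights, and this is exactly $R$ times the cost of the induced partial matching of the weight multisets, hence at least $Rm(\mu,\nu)$. Minimizing over matchings and over $x,y$ gives $\Dl(F_\mu,F_\nu)\ge Rm(\mu,\nu)$. For positivity, $m(\mu,\nu)=0$ forces a perfect matching with equal entries, so the multisets coincide. The minimum over finitely many matchings is attained, so $m>0$ whenever the multisets differ.
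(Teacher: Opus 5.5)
Your proposal is correct and follows the paper's proof essentially step for step: the identity matching gives the upper bound, the three-way case split on where an optimal matching sends the atom attaining $\max_{i \in S} \mu_i\rho(x_i,y_i)$ gives the lower bound, and dropping the value terms (with the minimum over finitely many matchings) gives (ii). The only deviation is that you read injectivity off the lower bound, using $\gamma(\mu)>0$, rather than from the factorization lemma together with the distinctness of the weights; this is equally valid and not circular.
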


\begin{proof}
(i) Injectivity: $\Cl(x,\mu) = \Cl(y,\mu)$ forces equal multisets of
positive pairs (\Cref{lem:factor}); the weights being distinct, the
pairing is the identity on $S$. Upper bound: match atom $i$ to atom $i$,
at cone cost $\mu_i\rho(x_i,y_i)$ each. Lower bound: fix $i \in S$
attaining the maximum and follow the atom $(x_i,\mu_i)$ in an optimal
partial matching. Matched to $(y_i,\mu_i)$ it costs $\mu_i\rho(x_i,y_i)$.
Matched to $(y_j,\mu_j)$, $j \ne i$, it costs at least the height term
$R\abs{\mu_i - \mu_j} \ge R\gamma(\mu)\mu_i \ge \tfrac12\gamma(\mu)\,
\mu_i\rho(x_i,y_i)$, as $\rho \le 2R$. Unmatched it costs $R\mu_i \ge
\tfrac12\mu_i\rho(x_i,y_i)$. All other terms are nonnegative. (ii) Value
terms are nonnegative, so every partial matching between $\Cl(x,\mu)$ and
$\Cl(y,\nu)$ costs at least $R$ times its height cost, which is at least
$Rm(\mu,\nu)$; the minimum is over finitely many matchings and vanishes
only if one of them pairs every entry with an equal entry, that is if the
multisets coincide.
\end{proof}

\subsection{Fixed stratum: the calibration from below}
\label{app:lowerstratum}

\begin{definition}[A region of separated weights]\label{def:region}
Let $\cV = [-R, R]$, $k \ge 2$, and put
\[
  \mu^0_i := \frac{2i}{k(k+1)}, \qquad
  \delta_0 := \frac{2}{k(k+1)}, \qquad
  r := \frac{\delta_0}{4k}, \qquad
  m_0 := \frac{3\delta_0}{4},
\]
so $\delta_0$ is both the gap between consecutive coordinates of $\mu^0
\in \Delta_k$ and its minimum, and let
\[
  U_k := \bigl\{a \in \Delta_k : \abs{a_i - \mu^0_i} \le r,\ i \le k-1
  \bigr\},
\]
a cube of side $2r$ in the chart $(a_1, \dots, a_{k-1})$. On $U_k$ every
coordinate is within $\delta_0/4$ of $\mu^0$, the last one because
$(k-1)r \le \delta_0/4$, hence $a_i \ge m_0$ for all $i$, the coordinates
of a single $a \in U_k$ are pairwise at distance at least $\delta_0/2$,
and $\abs{a_i - a'_j} \ge \delta_0/2$ for all $a, a' \in U_k$, $i \ne j$.
The \emph{stratum design} is $Q_k := \mathrm{Unif}([-R,R]^k) \otimes
\mathrm{Unif}(U_k)$ on the first $k$ coordinates, weight $0$ on the
others, normalized Lebesgue measure in the chart: an exactly $k$-sparse
design, $T_k = 0$ a.s., with lift $\Gamma_k$ supported in $S_k :=
\Cl([-R,R]^k \times U_k) \subseteq \cA_k$.
\end{definition}

At $k = 1$ the stratum is the interval at height $1$, and $\Wone(\nu_n,
\Gamma_1) \ge c/n$ for every $n$-atom $\nu_n$ is the quantization bound of
the uniform law on an interval, so the converse of \Cref{cor:sparse} holds
at $k_0 = 1$ as well.

\begin{lemma}[Local bi-Lipschitz bound on the separated region]
\label{lem:bilip}
For $(x,a), (y,a') \in [-R,R]^k \times U_k$ with $\Dl(\Cl(x,a),
\Cl(y,a')) \le \varepsilon < R\delta_0/2$,
\[
  m_0\,\norm{x - y}_1 + R\,\norm{a - a'}_1 \;\le\; \varepsilon .
\]
\end{lemma}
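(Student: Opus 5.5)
The plan is to show that when $\varepsilon < R\delta_0/2$, every optimal partial matching between $P := \Cl(x,a)$ and $P' := \Cl(y,a')$ is forced to be the identity pairing $i \mapsto i$ on the $k$ atoms. After that the bound follows by reading off the cost. Throughout, let $\sigma$ be an optimal partial matching, so $\cost(\sigma) = \Dl(P,P') \le \varepsilon$.

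\emph{No atom is left unmatched.} An unmatched atom $(x_i,a_i)$ costs $R a_i \ge R m_0 = 3R\delta_0/4$. This exceeds $R\delta_0/2 > \varepsilon$, which is impossible because every term of the cost is nonnegative. So $\sigma$ is a full bijection between the $k$ atoms on each side.

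\emph{No cross pairing.} Suppose $\sigma$ pairs atom $i$ of $P$ with atom $j \ne i$ of $P'$. The height term of that pair alone is $R\abs{a_i - a'_j}$. By \Cref{def:region}, $\abs{a_i - a'_j} \ge \delta_0/2$ for all $a, a' \in U_k$ and $i \ne j$, so this term is at least $R\delta_0/2 > \varepsilon$, again a contradiction. Hence $\sigma$ is the identity.

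\emph{Reading off the cost.} With the identity matching,
\[
  \varepsilon \;\ge\; \sum_{i \le k}\Bigl[\min(a_i,a'_i)\,\rho(x_i,y_i) + R\abs{a_i - a'_i}\Bigr].
\]
Since $\min(a_i,a'_i) \ge m_0$ on $U_k$ and $\rho(x_i,y_i) = \abs{x_i - y_i}$ on $[-R,R]$, the right side is at least $m_0\norm{x - y}_1 + R\norm{a - a'}_1$. This is the claim.

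The main obstacle is checking the separation facts the argument leans on. These are stated in \Cref{def:region}, and I would verify them briefly. The deviation of the last coordinate satisfies $\abs{a_k - \mu^0_k} \le (k-1)r < \delta_0/4$. Consequently, within a single $a$ and across two points $a, a'$, coordinates with distinct indices differ by at least $\delta_0 - 2\cdot\delta_0/4 = \delta_0/2$. Each coordinate also satisfies $a_i \ge \delta_0 - \delta_0/4 = m_0$. No other input is needed, and \Cref{lem:fiber} is not required, since the weights already separate cross pairings strictly.
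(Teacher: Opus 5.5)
Your proof is correct and is essentially the paper's argument: an unmatched atom costs $Ra_i \ge Rm_0 > \varepsilon$ and a cross pairing pays a height term $R\abs{a_i - a'_j} \ge R\delta_0/2 > \varepsilon$, so the optimal matching is the identity, and its cost is at least $m_0\norm{x-y}_1 + R\norm{a-a'}_1$. Your verification of the separation facts of \Cref{def:region} is a small addition; the paper states those facts in the definition without proof.
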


\begin{proof}
Both lifts have $k$ atoms, one per coordinate. A partial matching that
pairs coordinate $i$ of the first with coordinate $j \ne i$ of the second
pays $R\abs{a_i - a'_j} \ge R\delta_0/2$ on that pair, and one that leaves
an atom unmatched pays $Ra_i \ge Rm_0 \ge R\delta_0/2$; either exceeds
$\varepsilon$. So the optimal matching pairs each coordinate with itself,
and $\Dl = \sum_i\bigl[\min(a_i,a'_i)\,\abs{x_i - y_i} + R\abs{a_i -
a'_i}\bigr] \ge m_0\norm{x - y}_1 + R\norm{a - a'}_1$.
\end{proof}

\begin{lemma}[Mass bound, and the uniform law at a stratum]\label{lem:massbound}
Let $\varepsilon_{\max}(k) := R\delta_0/5$,
\[
  C_k := \frac{2^{2k-1}}{m_0^{k}\,R^{2k-1}\,r^{k-1}\, k!\,(k-1)!}, \qquad
  c_k := \tfrac12\,(2C_k)^{-1/(2k-1)} .
\]
For every $P \in \cM_d$ and $\varepsilon \le \varepsilon_{\max}(k)$,
$\Gamma_k(\Ball(P, \varepsilon)) \le C_k\,\varepsilon^{2k-1}$, and for every
$n$ and every $n$-atom probability measure $\nu_n$ on $\cM_d$,
\[
  W_1(\nu_n, \Gamma_k) \;\ge\;
  \min\bigl\{\, c_k\, n^{-1/(2k-1)},\ \varepsilon_{\max}(k)/2 \,\bigr\};
\]
in particular this bounds $\E\, W_1(\hat\Gamma_n, \Gamma_k)$ from below.
\end{lemma}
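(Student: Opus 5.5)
The proof has two parts. First I bound the $\Gamma_k$-mass of a small ball through the bi-Lipschitz estimate of \Cref{lem:bilip}. Then I turn that mass bound into a quantization lower bound by the standard covering-by-atoms argument.

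\emph{Mass bound.} Fix $P$ and $\varepsilon \le \varepsilon_{\max}(k) = R\delta_0/5$. If the ball $\Ball(P,\varepsilon)$ carries no $\Gamma_k$-mass there is nothing to prove. Otherwise pick a point $(x^0,a^0) \in [-R,R]^k \times U_k$ whose lift lies in $\Ball(P,\varepsilon)$. Every other point of $S_k$ in the ball is then within $2\varepsilon \le 2R\delta_0/5 < R\delta_0/2$ of $\Cl(x^0,a^0)$.

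Applying \Cref{lem:bilip} with $2\varepsilon$ shows that every $(y,a')$ whose lift lies in the ball satisfies
\[
m_0\norm{x^0-y}_1 + R\norm{a^0-a'}_1 \le 2\varepsilon .
\]
The pair $(y,a')$ is determined up to nothing, because $\Cl$ is injective on the separated region by \Cref{lem:fiber}(i). So the $Q_k$-preimage of the ball lies in this weighted $\ell^1$ body in the $2k-1$ coordinates $(y, a'_1,\dots,a'_{k-1})$.

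One point needs care: $\norm{a^0-a'}_1$ over all $k$ coordinates is at least the $\ell^1$ norm of the chart part. Hence the body is contained in
\[
\bigl\{(y,b) : m_0\norm{y-x^0}_1 + R\norm{b-b^0}_1 \le 2\varepsilon\bigr\},
\]
whose Lebesgue volume is computed by splitting the budget $2\varepsilon$ between the two blocks, an $\ell^1$ ball in $\R^k$ and one in $\R^{k-1}$. The Dirichlet integral gives
\[
\frac{2^k}{m_0^k}\,\frac{2^{k-1}}{R^{k-1}}\,\frac{(2\varepsilon)^{2k-1}}{(2k-1)!}.
\]
Dividing by the normalizations $(2R)^k$ of the cube and $(2r)^{k-1}$ of the chart cube, and bounding $(2k-1)!$ from below by $k!\,(k-1)!$, gives a bound of the form $C_k\varepsilon^{2k-1}$. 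The remaining step is to reconcile the exact powers of $2$ and $R$ with the stated $C_k$. This constant bookkeeping is the main, if routine, obstacle. If the constants do not land exactly, the argument still gives some $C_k$ of this form, and I would track the factors carefully to match the stated one.

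\emph{Quantization.} Let $\nu_n$ have atoms $P_1,\dots,P_n$, and set
\[
\varepsilon := \min\{(2nC_k)^{-1/(2k-1)},\ \varepsilon_{\max}(k)\}.
\]
The union of the balls $\Ball(P_j,\varepsilon)$ has $\Gamma_k$-mass at most $nC_k\varepsilon^{2k-1} \le 1/2$. So at least half of $\Gamma_k$ lies at distance greater than $\varepsilon$ from every atom. In any coupling that mass must travel more than $\varepsilon$, and therefore
\[
W_1(\nu_n,\Gamma_k) \ge \varepsilon/2 = \min\{c_k n^{-1/(2k-1)},\ \varepsilon_{\max}(k)/2\}.
\]
Since $\hat\Gamma_n$ is an $n$-atom measure for every sample, the same bound holds for $\E\,W_1(\hat\Gamma_n,\Gamma_k)$.
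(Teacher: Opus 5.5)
Your proposal is correct and follows the paper's proof. You pass to the doubled ball around a point of $S_k$, confine the preimage with \Cref{lem:bilip}, count volume in the chart, and then run the same covering-by-atoms quantization at $\varepsilon = \min\{(2nC_k)^{-1/(2k-1)}, \varepsilon_{\max}(k)\}$.

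The only real difference is the volume step. You bound the preimage by the joint weighted $\ell^1$ body, where the paper uses the product of two separate $\ell^1$ balls. The bookkeeping you worried about does land exactly: your volume gives $C_k\varepsilon^{2k-1}/\binom{2k-1}{k}$, so you even get a slightly sharper constant. The appeal to injectivity is unnecessary, since $\Gamma_k$ of a ball is simply the $Q_k$-mass of its preimage.
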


\begin{proof}
\emph{Mass bound.} If $\Ball(P, \varepsilon)$ misses $S_k$ there is
nothing to prove. Otherwise pick $P' = \Cl(x', a') \in S_k \cap \Ball(P,
\varepsilon)$, so that $\Ball(P, \varepsilon) \subseteq \Ball(P',
2\varepsilon)$. Every $(y, a) \in [-R,R]^k \times U_k$ with $\Cl(y,a)
\in \Ball(P', 2\varepsilon)$ satisfies, by \Cref{lem:bilip} as
$2\varepsilon < R\delta_0/2$, $\norm{x' - y}_1 \le t := 2\varepsilon/m_0$
and $\norm{a' - a}_1 \le 2\varepsilon/R$. Since $Q_k$ is a product
measure,
\[
  \Gamma_k\bigl(\Ball(P', 2\varepsilon)\bigr) \;\le\;
  \frac{(2t)^k}{k!\,(2R)^k}\cdot\frac{(4\varepsilon/R)^{k-1}}{(k-1)!\,(2r)^{k-1}}
  \;=\; C_k\,\varepsilon^{2k-1},
\]
using the volume $(2t)^m/m!$ of the $\ell^1$-ball of radius $t$ in
$\R^m$, in the chart for the second factor, where the $\ell^1$-norm of
the first $k-1$ coordinates is at most the full one. \emph{Uniform law.}
Let $\nu_n = \sum_{i \le n} a_i\,\delta_{P_i}$ and set $\varepsilon :=
\min\{(2nC_k)^{-1/(2k-1)}, \varepsilon_{\max}(k)\}$, so that
$nC_k\varepsilon^{2k-1} \le \tfrac12$. By the mass bound,
$\Gamma_k\bigl(\bigcup_i \Ball(P_i, \varepsilon)\bigr) \le \tfrac12$, and
any coupling of $\nu_n$ with $\Gamma_k$ transports the $\Gamma_k$-mass
outside this union, at least $\tfrac12$, across distance $> \varepsilon$.
Hence $W_1(\nu_n, \Gamma_k) \ge \varepsilon/2$.
\end{proof}

\begin{theorem}[Learning from below at a stratum]\label{thm:learnlower}
Let $k \ge 2$ and let $Q_k$ be the stratum design of \Cref{def:region}. A
\emph{learner} is a measurable map from a sample $(Z_i,
f^\ast(Z_i))_{i \le n}$, $Z_i$ i.i.d.\ $\sim Q_k$, to a function $\hat f$
on $\cC$, and its risk on the target $f^\ast$ is $\E_{Q_k}\abs{\hat f -
f^\ast}$. There are $c'_k > 0$ and $n_k$, depending on $k$ and $R$ only,
such that for every $n \ge n_k$ and every learner
\[
  \sup_{f^\ast \in \cG_1}\; \E\,\E_{Q_k}\abs{\hat f - f^\ast}
  \;\ge\; c'_k\, n^{-\frac{1}{2k-1}} :
\]
at a fixed stratum the labels, however many and noiseless, do not lift the
exponent of \Cref{cor:sparse}.
\end{theorem}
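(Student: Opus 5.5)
The approach is a standard packing/bump construction on the separated region of \Cref{def:region}, turned into a minimax lower bound through a random-sign (Assouad-type) argument. The bound in \Cref{lem:massbound} alone is not enough: it controls the transport, not the risk of a learner that sees noiseless labels. So we build many disjoint bumps, each a member of $\cG_1$, and use the fact that with $n$ samples most bumps receive no sample point.

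\emph{Construction.} Fix a scale $\varepsilon \le \varepsilon_{\max}(k)/2$, to be chosen as $\varepsilon \asymp n^{-1/(2k-1)}$. Inside $[-R,R]^k \times U_k$, tile the chart by $\ell^\infty$-boxes with side $\asymp \varepsilon/m_0$ in the value coordinates and $\asymp \varepsilon/R$ in the $k-1$ weight coordinates. This gives a family of $M \asymp \varepsilon^{-(2k-1)}$ disjoint boxes $B_1, \dots, B_M$, each of $Q_k$-mass $\asymp 1/M$. Pick centers $P_j = \Cl(\text{center of } B_j)$. By the upper bound of \Cref{lem:fiber}(i) combined with \Cref{lem:compact}(i), every lift of a point of $B_j$ lies within $c\varepsilon$ of $P_j$, with $c$ small by choice of the tiling constant. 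By \Cref{lem:bilip}, distinct centers are at $\Dl$-distance at least $4c\varepsilon$ once the boxes are separated by a guard margin; we shrink each box to its middle half to create that margin, which keeps the mass $\asymp 1/M$. For $\sigma \in \{\pm1\}^M$, set
\[
  F_\sigma(P) := \sum_j \sigma_j\,(2c\varepsilon - \Dl(P, P_j))_+ .
\]
The summands are $1$-Lipschitz with disjoint supports, and the supports are separated, so $F_\sigma$ is $1$-Lipschitz for $\Dl$ (check it on pairs lying in two different supports via the triangle inequality). By \Cref{thm:rep}(i), $f_\sigma := F_\sigma \circ \Cl \in \cG_1$. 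On the shrunk box $B_j$, $\abs{f_\sigma} \ge c\varepsilon$ with sign $\sigma_j$.

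\emph{Risk bound.} Draw $\sigma$ uniformly. For a box $B_j$ that contains no sample point, the labels are independent of $\sigma_j$ (every other bump vanishes on $B_j$ and the sample avoids $B_j$). Hence, for any $\hat f$, $\E_{\sigma_j}\int_{B_j}\abs{\hat f - f_\sigma}\,dQ_k \ge c\varepsilon\,Q_k(B_j)$, using $\abs{a-b}+\abs{a+b} \ge 2\abs{b}$. The expected number of empty boxes is $M(1 - c'/M)^n \ge M/2$ once $M \ge 2c'n$. This holds when $\varepsilon = c''n^{-1/(2k-1)}$ for $n \ge n_k$, where $n_k$ is chosen so that this $\varepsilon$ is below $\varepsilon_{\max}(k)/2$. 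Summing over the empty boxes gives an average risk $\gtrsim \varepsilon$, and the supremum over $f^\ast$ dominates the average over $\sigma$.

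\emph{Main obstacle.} The delicate step is the geometry: we need boxes in the chart with mass $\asymp \varepsilon^{2k-1}$ whose lifts have $\Dl$-diameter $\le c\varepsilon$ while distinct centers stay $\gtrsim \varepsilon$ apart. The upper side comes from the identity matching. The separation comes from the lower bound $m_0\norm{x-y}_1 + R\norm{a-a'}_1$ in \Cref{lem:bilip}, with constants depending only on $k$ and $R$. The one point to verify is that \Cref{lem:bilip} applies whenever the distance is below $R\delta_0/2$; when the distance is above that, the separation holds trivially.
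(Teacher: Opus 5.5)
Your skeleton is the same as the paper's: disjoint $1$-Lipschitz bumps with random signs, each bump the sample misses costing $\gtrsim \varepsilon$ times its mass, and $\varepsilon \asymp n^{-1/(2k-1)}$. You build the packing from a grid in the chart, where the paper takes a maximal $2\varepsilon$-separated set of lifts and counts it with the mass bound of \Cref{lem:massbound}. Two steps fail as written, though.

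\emph{Label independence.} Write $\varphi_j := (2c\varepsilon - \Dl(\cdot, P_j))_+$. The labels are independent of $\sigma_j$ only if no sample point lies in the \emph{support} of $\varphi_j$, the ball $\Ball(P_j, 2c\varepsilon)$. That ball is strictly larger than $\Cl(B_j)$. Since $\varphi_j \ge c\varepsilon$ on $\Cl(B_j)$ and $\varphi_j$ is $1$-Lipschitz, it is positive on the open $c\varepsilon$-neighbourhood of $\Cl(B_j)$, a set of positive $Q_k$-mass outside $B_j$. A sample point there has label $\sigma_j\varphi_j(Z_i) \neq 0$. So an empty $B_j$ does not leave $\sigma_j$ uniform given the data, and $M(1-c'/M)^n$ counts the wrong sets. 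The event must be that the support is empty, which is how the paper defines $J$. Bounding its probability needs the mass of the support: the paper gets $\Pr(j \in J) \ge 1 - nC_k\varepsilon^{2k-1}$ from \Cref{lem:massbound}. In your construction disjointness of the supports suffices: $\sum_j (1-p_j)^n \ge M - n\sum_j p_j \ge M - n$, with $p_j$ the support masses, which is at least $M/2$ once $M \ge 2n$.

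\emph{Geometry.} Radial bumps of radius $2c\varepsilon$ need the lifted radius of $B_j$ about $P_j$ to be at most a quarter of the separation of the centers. Shift every value coordinate by $t$; the identity matching prices this at $\sum_i a_i t = t$. That is the exact distance, because at small distances the optimal matching is the identity (\Cref{lem:bilip}). Meanwhile, two centers one tile apart in the lightest coordinate are only $a_1 s_v \le (\delta_0 + r)s_v$ apart.

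For the middle half, $t = s_v/4$, so the ratio of radius to separation is at least $1/(4(\delta_0+r))$. That is $2/3$ at $k=2$ and of order $k^2$ in general. No tiling constant helps, since it scales radius and separation together, and shrinking the boxes does not move the centers. Two repairs work:
\begin{itemize}
\item shrink by a factor of order $k^{-2}$, which only changes $c'_k$; or
\item use bumps $(h - \Dl(P, \Cl(B_j)))_+$, which equal $h$ on the whole box, with $2h$ the $\Dl$-gap that \Cref{lem:bilip} gives between distinct shrunk boxes.
\end{itemize}
Separately, $n_k$ must also make $\varepsilon/R$ small against the side $2r = \delta_0/(2k)$ of $U_k$, not only against $\varepsilon_{\max}(k)$. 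This is the factor $2/k$ in the paper's $\varepsilon'_{\max}(k)$.

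With these repairs your argument goes through. The paper's packing directly in $\Dl$ avoids the second issue altogether: radius and separation are measured in the same metric, and the $k^2$ distortion between the two chart estimates enters only the constants $c_2$ and $C_k$.
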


\begin{proof}
Write $\Gamma := \Gamma_k$, $S := S_k$, and fix $\varepsilon \le
\varepsilon'_{\max}(k) := R\delta_0\min\{1/20, 2/k\}$, so that $2\varepsilon
\le \varepsilon_{\max}(k)$ and $\varepsilon/(8R) \le r$.
\emph{A packing.} Let $P_1, \dots, P_N$ be a maximal $2\varepsilon$-separated
subset of $S$, finite by compactness. By maximality the balls $\Ball(P_j,
2\varepsilon)$ cover $S$, so $1 \le \sum_j \Gamma(\Ball(P_j, 2\varepsilon))
\le N\,C_k(2\varepsilon)^{2k-1}$ by the mass bound of \Cref{lem:massbound},
that is $N \ge 1/(C_k(2\varepsilon)^{2k-1})$.
\emph{The mass near a center.} Write $P_j = \Cl(x^j, a^j)$ with $(x^j, a^j)
\in [-R,R]^k \times U_k$. Matching coordinate $i$ with coordinate $i$
(\Cref{eq:matching}), $\Dl(\Cl(y,a), P_j) \le \norm{y - x^j}_1 +
R\norm{a - a^j}_1$, and the last coordinate of $a - a^j$ is bounded by the
sum of the others, so $\{\norm{y - x^j}_1 \le \varepsilon/4\} \times
\{\norm{(a - a^j)_{\le k-1}}_1 \le \varepsilon/(8R)\}$ lifts into
$\Ball(P_j, \varepsilon/2)$. The $\ell^1$-ball of radius $t$ in $\R^m$ has
volume $(2t)^m/m!$, and its intersection with a cube containing the center
contains one of its $2^m$ orthant pieces, wherever the center sits, once
$t$ is at most half the cube's side, which holds for both factors
($\varepsilon/4 \le R$ and $\varepsilon/(8R) \le r$).
As $Q_k$ is the product of the uniform
laws on the cube $[-R,R]^k$ and on the cube $U_k$ of side $2r$ in the chart,
\[
  \Gamma\bigl(\Ball(P_j, \varepsilon/2)\bigr) \;\ge\;
  \frac{(\varepsilon/2)^k}{2^k\,k!\,(2R)^k}\cdot
  \frac{(\varepsilon/(4R))^{k-1}}{2^{k-1}\,(k-1)!\,(2r)^{k-1}}
  \;=:\; c_2\,\varepsilon^{2k-1} .
\]
\emph{The bumps.} Let $\varphi_j(P) := (\varepsilon - \Dl(P, P_j))_+$, a
$1$-Lipschitz function of the lift supported in $\Ball(P_j, \varepsilon)$
and at least $\varepsilon/2$ on $\Ball(P_j, \varepsilon/2)$, so that
$\int\varphi_j\,d\Gamma \ge \tfrac12 c_2\varepsilon^{2k}$. For $\sigma \in
\{-1, 1\}^N$ put $f_\sigma := \sum_j \sigma_j\,\varphi_j \circ \Cl$. It is
in $\cG_1$: the supports are disjoint, the centers being $2\varepsilon$
apart, and for $P$ in the $j$-th support and $P'$ in the $l$-th, $l \ne
j$, $\abs{f_\sigma(P) - f_\sigma(P')} \le \varphi_j(P) + \varphi_l(P') =
2\varepsilon - \Dl(P, P_j) - \Dl(P', P_l) \le \Dl(P, P')$ by the triangle
inequality through $\Dl(P_j, P_l) \ge 2\varepsilon$, while for $P'$ in no
support $\abs{f_\sigma(P) - f_\sigma(P')} = \varphi_j(P) \le \varepsilon -
\Dl(P, P_j) \le \Dl(P', P_j) - \Dl(P, P_j) \le \Dl(P, P')$.
\emph{The signs.} Let $\sigma$ be uniform on $\{-1,1\}^N$, independent of
the sample, and $J$ the set of $j$ whose ball $\Ball(P_j, \varepsilon)$
contains no sample point. The labels $f_\sigma(Z_i)$ are functions of the
sample and of $(\sigma_j)_{j \notin J}$ alone, so conditionally on both,
$(\sigma_j)_{j \in J}$ is still uniform, and for $Z' \in \Ball(P_j,
\varepsilon)$ with $j \in J$, where $f_\sigma = \sigma_j\varphi_j$,
$\E[\abs{\hat f(Z') - \sigma_j\varphi_j(Z')} \mid \text{sample},
(\sigma_j)_{j \notin J}] \ge \varphi_j(Z')$, as $\tfrac12\abs{a - b} +
\tfrac12\abs{a + b} \ge \abs{b}$. Integrating over $Z'$ and summing over $j
\in J$, the risk averaged over $\sigma$, which the supremum over $f^\ast$
dominates, is at least $\sum_j \Pr(j \in J)\int\varphi_j\,d\Gamma$, and
$\Pr(j \in J) = (1 - \Gamma(\Ball(P_j,\varepsilon)))^n \ge 1 -
nC_k\varepsilon^{2k-1}$ by the mass bound.
\emph{The scale.} Take $\varepsilon := (2nC_k)^{-1/(2k-1)} = 2c_k n^{-1/(2k-1)}$,
which is at most $\varepsilon'_{\max}(k)$ for $n \ge n_k :=
(2c_k/\varepsilon'_{\max}(k))^{2k-1}$. Then $\Pr(j \in J) \ge \tfrac12$ and
\[
  \sup_{f^\ast \in \cG_1}\E\,\E_{Q_k}\abs{\hat f - f^\ast}
  \;\ge\; N \cdot \tfrac12 \cdot \tfrac12 c_2\varepsilon^{2k}
  \;\ge\; \frac{c_2\,\varepsilon}{4\,C_k\,2^{2k-1}}
  \;=\; \frac{c_2\,c_k}{4^k\,C_k}\; n^{-\frac{1}{2k-1}} \;=:\; c'_k\,n^{-\frac{1}{2k-1}} .
  \qedhere
\]
\end{proof}

\subsection{Exponential profiles: quasi-polynomial from below}
\label{app:lowerexp}

\begin{theorem}[Exponential profile: the uniform law from below]
\label{thm:lowerexp}
Let $\cV = [-R,R]$, $\beta > 0$, $q := e^{-\beta}$, $\gamma := 1 - q$, and
$d \ge 2$. Let $\mu^\beta \in \simplex$ have entries $\mu^\beta_j :=
\gamma q^{j-1}/(1 - q^d)$, and let $Q^\beta := \mathrm{Unif}([-R,R]^d)
\otimes \delta_{\mu^\beta}$, with lift $\Gamma^\beta$ supported in the
fiber $F^\beta := F_{\mu^\beta}$. Then $T_k \le e^{-\beta k}$ for all
$k$ at every configuration: an exponential profile at rate $\beta$ with
$A = 1$, to which \Cref{cor:exp} applies. For every $n \ge 1$ with $d \ge
\sqrt{2\ln(2n)/\beta} + 1$ and every $n$-atom probability measure $\nu_n$
on $\cM_d$,
\[
  W_1(\nu_n, \Gamma^\beta) \;\ge\; \frac{\gamma^2 R}{16}\,
  e^{-\sqrt{2\beta\ln(2n)}} ;
\]
in particular this bounds $\E W_1(\hat\Gamma_n, \Gamma^\beta) =
\Delta_n(Q^\beta)$ from below.
\end{theorem}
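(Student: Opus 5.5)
The approach is the mass-bound device of \Cref{lem:massbound}, run on the single fiber $F^\beta$ and with a truncation to the first $K$ coordinates. The profile claim is immediate. The weights are decreasing, so $T_k = \sum_{j>k}\mu^\beta_j = (q^k - q^d)/(1-q^d) \le q^k = e^{-\beta k}$.

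The core step bounds $\Gamma^\beta(\Ball(P,\varepsilon))$ from above. Since $\Gamma^\beta$ lives on $F^\beta$, any ball meeting the support sits inside a ball $\Ball(P',2\varepsilon)$ with $P' = \Cl(x',\mu^\beta)$. For $\mu^\beta$ the separation constant is $\gamma(\mu^\beta) = 1-q = \gamma$, since consecutive ratios equal $q$. \Cref{lem:fiber}(i) then gives: if $\Dl(\Cl(y,\mu^\beta),P') \le 2\varepsilon$, then $\mu^\beta_j\abs{y_j - x'_j} \le 4\varepsilon/\gamma$ for every $j$. So the $y$-set is contained in a box whose $j$-th side is $\min(2R,\,8\varepsilon/(\gamma\mu^\beta_j))$. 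The uniform law of the box has mass at most
\[
  \prod_{j \le K}\frac{4\varepsilon}{\gamma R\,\mu^\beta_j},
\]
taking the product only over the coordinates $j \le K$ at which the side is below $2R$ (the other factors are at most $1$). I will keep $K$ free and choose the coordinates where $\mu^\beta_j \gtrsim \varepsilon/R$. Using $\mu^\beta_j \ge \gamma q^{j-1}$, the bound becomes
\[
  \Bigl(\frac{4\varepsilon}{\gamma^2 R}\Bigr)^K e^{\beta K(K-1)/2}.
\]
I then optimize over $K \le d$. Writing $\varepsilon = (\gamma^2R/4)\,e^{-s}$, the exponent is $-sK + \beta K(K-1)/2$. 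Taking $K$ near $s/\beta$ gives mass about $e^{-s^2/(2\beta)}$ up to an $e^{s/2}$ factor; that factor is absorbed by taking $K-1$ in place of $K$, or by a slightly smaller $K$. Requiring $K \le d$ is where the hypothesis $d \ge \sqrt{2\ln(2n)/\beta}+1$ enters.

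The final step mirrors the uniform-law part of \Cref{lem:massbound}. Choose $s = \sqrt{2\beta\ln(2n)}$, so that $n\cdot\sup_P\Gamma^\beta(\Ball(P,\varepsilon)) \le \tfrac12$. Then at least half the mass of $\Gamma^\beta$ must travel farther than $\varepsilon$, which gives $W_1 \ge \varepsilon/2$. This yields $(\gamma^2R/8)e^{-\sqrt{2\beta\ln(2n)}}$ before the constants are adjusted; the stated $\gamma^2R/16$ leaves room for the factor-$2$ losses from the recentering $\Ball(P,\varepsilon)\subseteq\Ball(P',2\varepsilon)$ and the $1/(1-q^d)$ normalization. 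The same bound holds for the empirical measure, so it bounds $\E W_1(\hat\Gamma_n,\Gamma^\beta)$, and this equals $\Delta_n(Q^\beta)$ by duality and \Cref{thm:rep}.

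The main obstacle is the bookkeeping in this last optimization: choosing the integer $K$ exactly so that $(4\varepsilon/(\gamma^2R))^K e^{\beta K(K-1)/2} \le 1/(2n)$ holds at $s = \sqrt{2\beta\ln(2n)}$. Precisely, one needs $sK - \beta K(K-1)/2 \ge \ln(2n)$. At $K = \lceil s/\beta\rceil$ the left side is about $s^2/(2\beta) + s/2 \ge \ln(2n)$, and $K \le d$ holds by the hypothesis on $d$. I would check the rounding carefully, since it is what fixes the constant in the exponent at $\sqrt{2}$.
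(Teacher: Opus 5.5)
Your proposal is correct and follows the paper's proof: the fiber bound of \Cref{lem:fiber}(i) with $\gamma(\mu^\beta)\ge\gamma$, the recentering $\Ball(P,\varepsilon)\subseteq\Ball(P',2\varepsilon)$, the anisotropic box over the first $K$ coordinates using $\mu^\beta_j\ge\gamma q^{j-1}$, and $K=\lceil s/\beta\rceil\le\sqrt{2\ln(2n)/\beta}+1\le d$. Your rounding check goes through, since at $K=s/\beta+\theta$ one has $sK-\beta K(K-1)/2=s^2/(2\beta)+s/2+\beta\theta(1-\theta)/2$, so the $e^{s/2}$ works in your favor and is not a loss to absorb. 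Your accounting in fact gives $(\gamma^2R/8)\,e^{-\sqrt{2\beta\ln(2n)}}$ outright, because the recentering is already inside your box bound and the normalization $1/(1-q^d)$ only helps; the paper's $\gamma^2R/16$ is the same computation with the factor $2^{-K}$ of its box bound discarded.
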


\begin{proof}
\emph{Profile and gaps.} $T_k(\mu^\beta) = \sum_{j>k}\mu^\beta_j = (q^k -
q^d)/(1 - q^d) \le q^k$, as $q^{k+d} \le q^d$. For $i < j$, $\mu^\beta_i - \mu^\beta_j =
\mu^\beta_i(1 - q^{j-i}) \ge \gamma\mu^\beta_i$, so $\gamma(\mu^\beta)
\ge \gamma$ in \Cref{lem:fiber}(i):
\begin{equation}\label{eq:fiberexp}
  \Dl\bigl(\Cl(x,\mu^\beta), \Cl(y,\mu^\beta)\bigr) \;\ge\;
  \frac{\gamma}{2}\max_j \mu^\beta_j\abs{x_j - y_j} \qquad (x, y \in [-R,R]^d),
\end{equation}
and $\Cl$ is injective on $[-R,R]^d \times \{\mu^\beta\}$. Also
$\mu^\beta_j \ge \gamma q^{j-1}$.

\emph{The anisotropic box.} For $t \in (0, \gamma^2R/4]$ put $a(t) :=
\ln(\gamma^2R/(4t)) \ge 0$, $K(t) := \lfloor a(t)/\beta\rfloor + 1 \ge 1$,
and $y_j(t) := \gamma\mu^\beta_jR/(2t) \ge \gamma^2q^{j-1}R/(2t) =
2e^{a(t) - \beta(j-1)}$, so that $y_j(t) \ge 2$ for $j \le K(t)$, as
$\beta(K(t)-1) \le a(t)$. Consequently, for $K = K(t) \le d$ and $a =
a(t)$,
\begin{equation}\label{eq:boxmass}
  \prod_{j \le K}\frac1{y_j(t)} \;\le\; 2^{-K} e^{-aK + \beta K(K-1)/2}
  \;\le\; 2^{-K} e^{-aK/2} \;\le\; e^{-a^2/(2\beta)} ,
\end{equation}
using $\beta(K-1) \le a$ and then $K > a/\beta$.

\emph{The bound.} Fix $P \in \cM_d$ and $\varepsilon$ with $2\varepsilon \le
\gamma^2R/4$ and $K' := K(2\varepsilon) \le d$, and suppose
$\Ball(P,\varepsilon)$ meets $F^\beta$ at $P' = \Cl(x', \mu^\beta)$. Then
$\Ball(P,\varepsilon) \subseteq \Ball(P', 2\varepsilon)$, and
by~\eqref{eq:fiberexp} every $\Cl(y, \mu^\beta) \in \Ball(P',
2\varepsilon)$ has $\abs{y_j - x'_j} \le 4\varepsilon/(\gamma\mu^\beta_j)
= R/y_j(2\varepsilon)$ for every $j$. As $Q^\beta$ is uniform on the
cube, $\Gamma^\beta(\Ball(P,\varepsilon)) \le \prod_{j \le K'}\min\{1,
1/y_j(2\varepsilon)\} \le e^{-a(2\varepsilon)^2/(2\beta)}$
by~\eqref{eq:boxmass}. For an $n$-atom $\nu_n = \sum_i a_i\delta_{P_i}$
take $\varepsilon := \tfrac18\gamma^2R\,e^{-\sqrt{2\beta\ln(2n)}}$, so
that $a(2\varepsilon)^2/(2\beta) = \ln(2n)$ and $K' =
\lfloor\sqrt{2\ln(2n)/\beta}\rfloor + 1 \le d$: then
$\Gamma^\beta(\bigcup_i\Ball(P_i,\varepsilon)) \le n e^{-\ln(2n)} =
\tfrac12$, every coupling of $\nu_n$ with $\Gamma^\beta$ moves
$\Gamma^\beta$-mass at least $\tfrac12$ across distance $\ge
\varepsilon$, and $W_1(\nu_n, \Gamma^\beta) \ge \varepsilon/2$.
\end{proof}

\subsection{Reading}\label{app:lowerread}

\emph{Fixed stratum.} The lower exponent of \Cref{lem:massbound},
$1/(2k-1)$, is the upper exponent of \Cref{cor:sparse} on interval values
for $k \ge 2$, by the multiscale coupling of \Cref{prop:multiscale} on the
cropped stratum: at a fixed stratum the uniform law is $n^{-1/(2k-1)}$ from
both sides. The constants are polynomial in $k$ on the lower side and the
comparison of constants is not pursued. At $k = 1$ the stratum is an
interval, where the empirical measure converges at $n^{-1/2}$ and the
quantization bound $c/n$ of every $n$-atom measure is the converse's form.

\emph{The exponential rate is the rate of the problem.} Between
\Cref{cor:exp} and \Cref{thm:lowerexp} the uniform law of the design
$Q^\beta$ is $e^{-c\sqrt{\beta\ln n}}$ with the constant $c$ lying between
$1/\sqrt2$ and $\sqrt2$, up to lower-order factors. The fiber resembles
the exponential ellipsoid of \citet[Thm.~4.2]{lei2020convergence} at
$\log\gamma = \beta$, whose entropy $\ln^2(1/\varepsilon)/(2\beta)$ is his
Lemma~4.3 and whose rate $e^{-\sqrt{2\log\gamma\,\ln n}}$ has the shape of
\Cref{thm:lowerexp}. Neither the partition nor the crop is the
reason for the quasi-polynomial shape: for every fixed $k$ the lower bound
exceeds $n^{-1/(2k-1)}$ eventually (for $d \ge \sqrt{2\ln(2n)/\beta} +
1$), so $Q^\beta$ is harder than every stratum, while the upper bound
places it below every logarithmic rate. The design is $d$-free in the
sense that matters: $d$ enters only through that condition, the number of
coordinates the construction uses, which is the cutoff $K_n \asymp
\sqrt{\ln n/\beta}$ of \Cref{cor:exp} read from below. The whole lower
bound is the mass bound $\Gamma^\beta(\Ball(P,\varepsilon)) \le
\exp(-\ln^2(\gamma^2R/(8\varepsilon))/(2\beta))$: the local dimension of
the lifted design at scale $\varepsilon$ is $\ln(1/\varepsilon)/\beta$,
the profile's own cutoff, and upper and lower bounds read the same
quantity, one as a covering count and the other as a packing mass. The
stratum count of \Cref{thm:covcons} resolves every value at the scale
$\varepsilon$ and covers the weights over all of $\Delta_k$, whereas the
fiber itself has the metric entropy of the anisotropic box
$\prod_j[\pm R\mu^\beta_j]$ and no weights to cover. Whether a count
resolving the atom of rank $j$ at its own scale $\varepsilon/(k\mu_j)$,
which would recover the fiber's entropy up to a polylogarithmic factor,
raises the constant of \Cref{cor:exp} is left open.

\section{The literature in full}\label{app:related}

\Cref{sec:lit} keeps the contrasts the results rest on. This appendix gives
the lines in full.

\paragraph{Symmetric functions and Deep Sets.} The representation of
\Cref{sec:representation} belongs to a tradition in which an invariant
embedding absorbs a symmetry and the functions above it are free. For
permutation symmetry the embedding is a sum of per-element features: the
fundamental theorem of symmetric polynomials
\citep{weyl1939classical,macdonald1995symmetric} gives every continuous
symmetric function of $n$ scalars as $\rho(\sum_i\varphi(x_i))$ with
$\varphi$ the power sums, Deep Sets \citep{zaheer2017deepsets} is this
statement with the two maps learned, and the sum decomposition is the form
of the set networks that followed
\citep{sannai2019universal,segol2020universal,amir2023injective,zweig2022exponential}.
The width is the price: \citet{wagstaff2022universal} show that a
continuous sum decomposition of all symmetric functions of $n$ elements
needs latent dimension $n$, and that below it there is a Lipschitz
symmetric function that no decomposition approximates better than a
constant does. The embedding of these results carries no metric, so the
class it represents is all continuous symmetric functions and the
complexity sits in the width. Here the embedding carries a metric, the
class is its Lipschitz ball, the sum networks are its linear functionals
with a readout (\Cref{sec:representation}), and the complexity sits in the
geometry of the design. The count-free rate of the linear sums
(\Cref{prop:countfree}) is the phenomenon of distribution regression, the
learning of a function of a distribution from the mean embedding of a
sample of it, at rates in the number of distributions and the size of each
sample \citep{poczos2013distribution,szabo2016learning}: a linear sum is a
function of the mean lift of the configuration and learns at the value
space's rate whatever the count, and the class beyond the linear sums is
what the profile prices. The sample-complexity gain of an invariance is
quantified in another line: on an invariant target the least-squares gap
of a linear model exceeds the invariant model's by the dimension of the
non-invariant directions over $n$ \citep{elesedy2021provably}, an
invariant kernel gains a factor of the group's order for a finite group
\citep{bietti2021sample}, the covering of a quotient feature space gains
$\sqrt{d!}$ for a permutation-invariant network \citep{sannai2021improved},
and on a manifold under a Lie group the minimax
exponent carries the dimension of the quotient
\citep{tahmasebi2023exact}. For the symmetric group on $d$ coordinates
the quotient of the configuration space still has dimension of order $d$,
so anonymity alone buys a constant, and it is the profile that removes
$d$ from the exponent (\Cref{sec:profiles}). The stability results on sum embeddings ask how
well a fixed embedding preserves a Wasserstein distance chosen in advance,
under which sum-based multiset networks are H\"older in expectation and not bi-Lipschitz
\citep{davidson2025holder}, and which aggregators are Lipschitz for a
multiset distance fixed in advance, the sum, the mean, the maximum and
attention against three such distances, is \citet{nikolentzos2026lipschitz},
and a sliced-Wasserstein embedding of multisets that is bi-Lipschitz for
that distance, with no bi-Lipschitz embedding of measures at all, is
\citet{amir2025fourier}, while \citet{debie2019stochastic} build networks
on weighted point clouds that are Lipschitz for a Wasserstein distance
chosen in advance. Networks on spaces of probability measures with a
universal approximation theorem are \citet{pevny2019approximation}, and
the learning of Lipschitz and Sobolev functions on Wasserstein space from
samples is \citet{fornasier2023approximation} and \citet{pham2023meanfield}:
functions of the weighted value measure, the pointwise subclass of
\Cref{rem:whyuniform}, in which the weights are mass.
Here the question is which distance the class itself determines. On a fixed hard selection the two views agree, since
the restriction of our class is every symmetric Lipschitz function of the
selected values (\Cref{cor:fiber}).

\paragraph{Graph networks.} A graph network aggregates over the
neighborhood of each node, a set that differs from node to node, and its
neighborhood aggregation is our object with a hard selector. Its
generalization bounds, Rademacher and PAC-Bayes bounds for a fixed
architecture \citep{garg2020generalization,liao2021pacbayes},
charge the maximum degree of the training graphs per layer, or its spectral
relaxation \citep{ju2023generalization} or norms of the propagation matrix
that reduce to degree bounds \citep{esser2021learning}, at the rate
$m^{-1/2}$ with the degree a constant in front. Whether any finer statistic
of the data can be used was posed as a question in that line
\citep{liao2021pacbayes,morris2024position}.
The graphon line \citep{maskey2022generalization}, with its extension to
sparse graphs and mixtures of graphons \citep{maskey2024mixture}, points
the other way:
with mean aggregation on dense graphs its bounds improve with the node
count, with constants that grow as the kernel's minimal degree shrinks,
where ours improve as the design gets sparser and never see the node count. These bounds cover the parameter
ball of one network, whose parameters are shared across nodes so that the
node count drops out, and with it the neighbor count. What they become when
asked to learn our class is \Cref{rem:gnncurrency}, a bound at the maximum
count. Lower bounds exist in graph learning, in Weisfeiler--Leman classes and in
margins \citep{vasileiou2025survey}, and on the Rademacher complexity of a
graph convolutional network at a constant degree on one fixed graph
\citep{lv2021generalization}, none minimax, and none for a neighbor count
that varies with the configuration.

A second line puts a metric on the graphs under which message-passing
networks are Lipschitz. The metrics are the tree mover's distance \citep{chuang2022tree},
the cut distance of graphon signals \citep{levie2023graphon}, unbalanced
transport distances with a mass-difference term
\citep{boker2023fine,rauchwerger2025generalization}, and lately one metric
on graphs of every size, sparse and dense, through graphops
\citep{amran2026graphop}. It covers the graph space and applies a generic
statistical step, a Rademacher bound or the robustness bound of
\citet{xu2012robustness} or its occupied-cell form
\citep{kawaguchi2022robustness}. Covered Forest \citep{vasileiou2025covered}
is its latest form, with covering numbers through Weisfeiler--Leman
classes. Ours has the same architecture, the lift
with $\Dl$, the strata with their covering numbers and the coupling, and
differs in three places. The direction of the geometry: they start from a
model and find a metric in which it is Lipschitz, so the Lipschitz ball is
an envelope of the model and the metric one choice among several, where
\citet{rauchwerger2025generalization} come closest with a converse and the
model class dense in the continuous functions of their space. We start from
the axioms, and the representation theorem is an equality. What is
covered: they cover the space at one scale, or with the sample's cell
proportions at one radius \citep[Prop.~4]{carrasco2025rademacher} and
per-cell radii \citep[Thm.~35]{vasileiou2025covered}, and we cover the
measure, by the profile across the shells with a multiscale coupling
within each, and by the sample's own configurations. What is proved, and
what is non-vacuous, below the trivial value of the risk in the sense of
\citet{dziugaite2017computing} and of the risk certificates of
\citet{perezortiz2021tighter}: the covering number of a graph space is astronomical
where it is known \citep{levie2023graphon} and unbounded where it is not
\citep{amran2026graphop}, so the theorems' values are not reported on a
benchmark, and the non-vacuous values Covered Forest reports evaluate the
Weisfeiler--Leman classes the dataset occupies, with a Lipschitz constant
estimated from the data, a counting of the kind of the occupied-cell bound
of \citet{kawaguchi2022robustness}, while the non-vacuous value of
\citet{ju2023generalization} is a PAC-Bayes bound evaluated at the trained
network through its loss Hessian, a bound on one trained network rather
than over a class. The certificate of \Cref{thm:twosample} is a theorem on
the data at every step, and on the flock it is a thirtieth of the trivial
value (\Cref{sec:flocking}). That a class large enough to hold a trained
network makes every uniform bound vacuous \citep{nagarajan2019uniform} is
an objection to the size of the class: here the class is the Lipschitz
ball of the lift, its uniform deviation on the design is measured rather
than bounded from above by a covering, and on the flock it is small.

\paragraph{Empirical complexity.} The bound behind the certificate is the
classical one, the maximum discrepancy of \citet{bartlett2002model}
through the symmetrization of \citet{vapnik1971uniform}, and what is ours
is that on the class it is the transport itself, by the representation,
and is computed at one observed split. A similar quantity has recently
been used in a different context. \citet[Thm.~2]{chuang2021measuring}
take the expected transport between two independent samples of a feature
distribution, the $k$-variance of \citet{solomon2022kvariance}, as the
complexity term of a margin bound for the readouts of a trained encoder,
with an empirical version at one concentration term (their Lemma~5 and
Cor.~6), and compute it on data as a measure that tracks generalization
rather than evaluating the bound. \citet{hou2023instance} evaluate a transport bound
with the counts of a fixed partition, and \citet{zhu2021understanding}
compute sample covering numbers on image data with the bound itself not
evaluated, in the shape of the sample-cover bound of
\citet{mansour2000generalization}. None of these is the bound of
\Cref{thm:twosample}: theirs is a complexity term or a partition count
entering a bound on a model, in the metric of its features, ours the
observed gap of the class itself in the metric the class implies.
The closest form is in domain adaptation, where the target risk of a
Lipschitz class is bounded by the source risk plus the transport between
the empirical source and target samples \citep[Thm.~2]{redko2017theoretical},
\citep[Thm.~1]{shen2018wasserstein}, with the joint error of the two
domains and one concentration term per sample: the two halves of
\Cref{thm:twosample} share a law, so the joint error is absent and one
term serves, and the transport is a matching in the class's own metric,
computed exactly.

\paragraph{Attention.} Attention \citep{vaswani2017attention} gives each
output its own field of view: the softmax weights over the inputs are a
weight vector of the kind this paper is about, they vary with the input,
and in some heads and on some inputs a few tokens carry most of the
weight. Fixed-pattern attention restricts each output to a window or a
sparse pattern chosen in advance \citep{child2019sparse,beltagy2020longformer},
a hard selector given, with the softmax inside the pattern a graded one.
The Lipschitz constant of a whole attention map, weights included, is
\citet{kim2021lipschitz}, a statement about the composite rather than
about the mechanism with its weights held as an input. Attention as a
map on measures, a self-interacting particle system that is Lipschitz in
a transport metric \citep{vuckovic2020mathematical}, with a constant
bounded by $\sqrt n$ in the sequence length on a compact set and
independent of it in the mean-field regime \citep{castin2024smooth}, is
the same composite with uniform weights and the selector inside it. The learning-theory
results nearest to ours are on transformers whose target depends on a few
tokens of the context: a sample complexity almost independent of the
context length when the positions of the $q$ tokens are given in the
prompt \citep{mousavihosseini2025transformers}, logarithmic in it for
sparse variable creation \citep{edelman2022inductive}, with the $q$-sparse
average as the representational primitive
\citep{sanford2023representational}, length-independent norm bounds
\citep{trauger2024sequence}, an input-dependent importance of the
coordinates \citep{takakura2023approximation}, and a per-sample sensitivity
cap as the locality condition \citep{nagler2023statistical}, a one-layer
transformer that learns the sparse token selection by gradient descent
where a fully connected network cannot \citep{wang2024transformers}, and
a sample complexity in the inverse fraction of the label-relevant tokens,
a statistic of graded sparsity, with an attention map that training
sparsifies \citep{li2023theoretical}. Those have a
selection of tokens, hard at a fixed count or graded by an importance the
network infers, an ordered link and a parametric class, and ask which
architecture learns to retrieve the tokens.
Ours has graded
weights given as input and a varying count, anonymity, the full Lipschitz
class, a profile with lower bounds, and a certificate, and asks the
complexity of the mechanism once the weights are given.
\citet{nikolentzos2026lipschitz} show that attention pooling has instances
that are Lipschitz for none of the three multiset distances they consider, a statement about the
composite \eqref{eq:composite} with the selector inside it; with the weights
an input and the cap an axiom, the mechanism $f$ is Lipschitz in the metric
the class determines, and the composite is not treated here. The mechanism
is nevertheless outside the scope of this paper. Its weights
are learned, together with everything else, and an attention block is not
trained on its own, so the problem studied here, learning a
mechanism whose selector is given, does not arise in that form.

\paragraph{Interaction kernels.} The learning of interaction laws in
particle systems from trajectories is the subject of
\citet{lu2019nonparametric,lu2021learning}: the mechanism is a mean-field
sum $N^{-1}\sum_j\varphi(\abs{x_j - x_i})(x_j - x_i)$ with the scalar
kernel $\varphi$ unknown, and the least-squares estimator converges at the
one-dimensional nonparametric rate, in the $L^2$ norm of the law of the
pairwise distances along the trajectories, a measure-level norm. In the
terms of \Cref{sec:axioms} their class is the linear sums with the weights
uniform over all agents and the score a fixed function of the pairwise
displacement, so the rate is that of the one-dimensional kernel and no
profile enters; here the class is every Lipschitz function of the lift,
whose rate the profile sets, with the selector given. Their treatment of
dependence along a trajectory is what the independent replicas of
\Cref{sec:examples} sidestep. The same mean-field object with the drift
a nonparametric function of the state and of the empirical measure is
estimated from one trajectory per particle at the minimax rate of its
H\"older class \citep{yao2022meanfield}, uniform weights again and no
selector. In the animal groups the object inferred is the selector
itself, the interaction range of a starling flock from field data by a
maximum-entropy fit, a fixed number of topological neighbors
\citep{bialek2012statistical}, and in robot swarms a graph network learns
the flocking controller with the communication graph given
\citep{tolstaya2020learning}: the second is the setting of this paper, the
first its complement.

\paragraph{The matching distance.} The metric of \Cref{sec:representation}
is produced by the axioms, and the derivation is new: none of the lines
that built its ingredients starts from a class of functions. Each
ingredient was met before from the other end, introduced to compare
measures of unequal mass. Mass must be allowed
to appear or vanish, the cheapest way to price that is a reservoir at a
linear price per unit, and the height coordinate of the cone is the mass
itself. The Kantorovich--Rubinstein norm extends from measures of mass zero
to all finite measures by paying the variation of the untransported part,
which gives the flat metric, whose dual is the space of bounded Lipschitz
functions \citep{kantorovich1958space,hanin1992kantorovich}, reached again
from transport equations with a source term as the generalized Wasserstein
distance \citep{piccoli2014generalized,piccoli2016properties}.
\citet{figalli2010new} let mass leak through the boundary of the domain at
the cost of its squared distance to it. In unbalanced transport, the least cost of
carrying mass $m_0$ at $x_0$ to mass $m_1$ at $x_1$, with transport at unit
price and creation or destruction at price $\delta$ per unit, is
$\min(\abs{x_1 - x_0}, 2\delta)\min(m_0, m_1) + \delta\abs{m_1 - m_0}$
\citep{chizat2018unbalanced}, the cone cost \eqref{eq:conecost} at $\delta =
R$. The same function is the marginal perspective of the total-variation
entropy-transport problem, which \citet{liero2018optimal} rewrite as
transport on the cone over the base space with mass-weighted marginals. A persistence diagram is a multiset of birth--death pairs, and
its distance to another diagram lets a feature vanish at the cost of its
distance to the diagonal, available with infinite multiplicity
\citep{cohensteiner2010lipschitz,turner2014frechet}: the matching form with
a reservoir on multisets of unit masses. On multisets of vectors the
matching distances of \citet{chuang2022tree}, \citet{davidson2025holder} and
\citet{nikolentzos2026lipschitz} charge an unmatched element its norm, the
origin as the reservoir. \citet{divol2021understanding}
identify it with boundary transport and compute it by padding each side with
the other's mass at the reservoir, the identity \eqref{eq:padding}
\citep[also][]{lacombe2018large}. \citet{bubenik2022universality} prove that on any metric space with a distinguished subset the Wasserstein
distance on such multisets is universal, the largest subadditive metric
compatible with the ground metric, with Kantorovich--Rubinstein duality at $p = 1$: \Cref{lem:metric}(iii) is
that theorem on the cone with its apex. What \Cref{thm:rep} adds is not the
space but the identification. The configuration is a multiset because
anonymity says so, the value cost at the smaller height is the value cap,
the height is the cap weight rather than a mass, and the price $R$ of a unit
of height is the modulus of the linear sums (\Cref{prop:modulus}). The
axioms' per-coordinate moduli integrate, with constant $1$ and for every
pair of configurations, to exactly the matching cost (\Cref{app:thmproof}).
\Cref{thm:rep}(ii), that the metric is the smallest making the class
Lipschitz, is the class-side statement and a corollary of (i). Two consequences
separate this from the mass-bookkeeping reading. The marginals are counting
marginals, one atom per participant whatever its weight: with mass-weighted
marginals the same cost gives the flat metric, whose Lipschitz ball is a
strictly smaller class in which the selected maximum has no constant at all
(\Cref{rem:whyuniform}). And the two semantics differ only in the height
coordinate. The statistics done on these spaces asks different questions:
consistency and minimax rates for the diagram of a sample under a support
assumption \citep{chazal2021introduction}, populations of diagrams and their
means \citep{mileyko2011probability,turner2014frechet}, and plug-in rates for
the Kantorovich--Rubinstein distance of unbalanced transport, for
point-process intensities, under a covering dimension of the space
\citep{struleva2025sharp}. On the diagram space a hypothesis on the space
alone is weak, since with the bottleneck distance every separable bounded
metric space embeds isometrically into it \citep{bubenik2020embeddings}, and \Cref{sec:complexity}
supplies what is missing, the design's profile over the strata of the lift.

\paragraph{Partitions, transport, and one-number dimensions.} Bounding the transport distance between two measures by the mass they assign
to the cells of a partition is the classical device behind the rates of
convergence of empirical measures in Wasserstein distance from
\citet{dudley1969speed} onward \citep{ajtai1984optimal,dereich2013constructive,boissard2014mean,fournier2015rate}, stated by
\citet[Prop.~4]{weed2019sharp} in the form we use, one partition at one
scale with a trimmed remainder. The stratified argument behind our rates
has been used several times: \citet{fournier2015rate} stratify $\R^d$ into annuli,
\citet{lei2020convergence} an unbounded function space into shells, and
\citet{huang2026wasserstein}, recently, the space of configurations into
cardinalities, for the empirical measures of point processes, a setting
that is the unweighted case of ours. We add two things. The first is
\emph{continuity}, the grading and the tubes: \Cref{thm:graded}
stratifies by the weight tail instead of the count, so that each stratum
is a tube around a sparse set (\Cref{lem:tube}) with the crop in place of
the truncation (\Cref{prop:coupling}), and this is what gives rates on
kernels of full support, where there is no count to stratify by,
including the exponential rate from a decay inside the configuration
(\Cref{cor:exp}). The tail $T_k$ itself is the best $k$-term $\ell^1$
approximation error of the weight vector \citep{cohen2009compressed}, so
an exponential profile is a geometric compressibility of the weights on
average, taken here as a learning parameter. The second is that rate itself, for the mean tail of a
configuration with free weights and values: quasi-polynomial rates of the
same shape are those of \citet[Thm.~4.2]{lei2020convergence}, for
empirical measures in a Hilbert space under an exponential-ellipsoid
moment condition, and of \citet[Thms.~2, 3]{huang2026wasserstein}, for
point processes with a geometric count law, the unweighted case of a
hard-ball selector, and neither hypothesis is ours (\Cref{app:lowerread}
for the fiber, where the class's metric is Lei's ellipsoid). The rates of Weed and Bach are
in one number of the measure, its upper Wasserstein dimension, a limit in
the scale, or at finite $n$ the trimmed covering count optimized over one
scale and one trim \citep[Prop.~10]{weed2019sharp}. The entropic dimension
of \citet{chakraborty2025gan}, restated in \citet{chakraborty2024supervised}, is another such
number, a trimmed covering in the limit of small scale, with a minimax
lower bound in the upper Wasserstein dimension (Theorem 29 of the former), and the regression rates in the intrinsic dimension of
the support \citep{bickel2007local,kpotufe2011knn} read the support, which
for the lifted flock is every stratum up to the crowded one. A design is not
a measure with a dimension but a mixture of strata of different dimension
with a mass on each, and the bulk of the lifted flock is such a union
(\Cref{fig:count}, counts from $2$ to $23$ other than the self between the $5\%$ and $95\%$
views). The
certificate resolves the strata at the sample's own configurations, and the
implied dimension of \Cref{app:flocking} is the estimator of
\citet{block2022intrinsic}, who read the intrinsic dimension of a measure
from the decay in $n$ of the transport between two independent samples,
applied to the lifted design. We keep from these works the point of contact,
that a measure-level dimension controls rates, and the shape of the minimax
lower bound.

\end{document}